\documentclass{article}
\usepackage[utf8]{inputenc}

\usepackage{graphicx}
\usepackage{amssymb}                       
\usepackage{amsmath}
\usepackage{color}
\usepackage{times}
\usepackage{bm,xcolor} % Math packages
\usepackage{mathtools}
\usepackage{bbm}
\usepackage{wrapfig}
\usepackage{subcaption}

\usepackage{booktabs}

\newcommand{\bU}{U}
\newcommand{\bS}{S}

\newcommand{\cW}{\mathcal{W}}
\newcommand{\cA}{\mathcal{A}}
\newcommand{\cB}{\mathcal{B}}

\newcommand{\pathvar}{p}

\definecolor{mahogany}{cmyk}{0, 0.77, 0.87, 0}
\definecolor{salmon}{cmyk}{0, 0.53, 0.38, 0}
\definecolor{melon}{cmyk}{0, 0.46, 0.50, 0}
\definecolor{yellowgreen}{cmyk}{0.44, 0, 0.74, 0}
\definecolor{brickred}{cmyk}{0, 0.89, 0.94, 0.28}
\definecolor{OliveGreen}{cmyk}{0.64, 0, 0.95, 0.40}
\definecolor{RawSienna}{cmyk}{0, 0.72, 1.0, 0.45}
\definecolor{ZurichRed}{rgb}{1, 0, 0} % Red of svgnames

\usepackage{fancyhdr}
\pagestyle{fancy}

\usepackage{amsmath,amstext,amssymb,amsopn,amsthm}
\usepackage{amsmath,amssymb,amsthm}
\usepackage[mathscr]{eucal}
\usepackage[colorlinks,citecolor=cyan]{hyperref}
\usepackage{url}

\pagestyle{headings}

\newtheorem{definition}{Definition}

\newtheorem{corollary}{Corollary}
\newtheorem{proposition}{Proposition}
\newtheorem{remark}{Remark}
\newtheorem{theorem}{Theorem}
\newtheorem{lemma}{Lemma}

\newcommand{\rev}[1]{\textcolor{black}{{#1}}}

\title{Geometric Scattering on Measure Spaces}
\author{Joyce Chew \and Matthew Hirn \and Smita Krishnaswamy\thanks{Correspondence to: smita.krishnaswamy@yale.edu} \and Deanna Needell \and Michael Perlmutter\thanks{Correspondence to: mperlmutter@boisestate.edu}\and Holly Steach \and Siddharth Viswanath \and Hau-Tieng Wu}
\date{\today}

\begin{document}

\maketitle
\abstract{The scattering transform is a multilayered, wavelet-based transform initially introduced  as a mathematical model of convolutional neural networks (CNNs) that has played a foundational role in our understanding of these networks' stability and invariance properties. In subsequent years, there has been widespread interest in extending the success of CNNs  
to data sets with non-Euclidean structure, such as graphs and manifolds, leading to the emerging field of geometric deep learning.   
In order to improve our understanding of the architectures used in this new field, several papers have proposed generalizations of the scattering transform for non-Euclidean data structures such as undirected graphs and compact Riemannian manifolds without boundary. Analogous to the original scattering transform, these works prove that these variants of the scattering transform have desirable stability and invariance properties and aim to improve our understanding of the neural networks used in geometric deep learning.

In this paper, we introduce a general, unified model for geometric scattering on measure spaces. Our proposed framework includes previous work on compact Riemannian manifolds without boundary and undirected graphs as special cases but also applies to more general settings such as directed graphs, signed graphs, and manifolds with boundary. We propose a new criterion that identifies to which groups a useful representation 
should be invariant and show that this criterion is sufficient to guarantee that the scattering transform has desirable stability and invariance properties. Additionally, we consider finite measure spaces that are obtained from randomly sampling an unknown manifold. We propose two methods for constructing a data-driven graph on which the associated graph scattering transform approximates the scattering transform on the underlying manifold. Moreover, we use a diffusion-maps based approach to prove quantitative estimates on the rate of convergence of one of these approximations as the number of sample points tends to infinity. Lastly, we showcase the utility of our method on spherical images, a directed graph stochastic block model, and on high-dimensional single-cell  data.}

\section{Introduction}

Many popular machine learning algorithms and architectures either explicitly or implicitly rely on producing a hidden, or transformed, representation of the input data. For example, popular algorithms such as word2vec\cite{church2017word2vec}, node2vec\cite{grover2016node2vec}, and graph2vec\cite{narayanan2017:graph2vec}  explicitly associate each input in a text corpus, network, or collection of networks to a point in a high-dimensional vector space. This transformed representation can then be used for a variety of tasks such as clustering or classification. Deep neural networks, on the other hand, use multilayered architectures to classify an input signal. In this case, the early layers of the network may be viewed as producing a transformed representation of the input and the final layer may be viewed as a classifier acting on the transformed data. In either case, there is a fundamental question. What properties should these hidden representations satisfy in order to be useful for downstream tasks?

In order to help answer this question,  Mallat introduced the scattering transform \cite{mallat:scattering2012}, a wavelet-based architecture which models the hidden representation produced by the early layers of a convolutional neural network (CNN).
Given a function $f\in\mathbf{L}^2(\mathbb{R}^N)$ and a scale parameter $J$, the windowed scattering transform of \cite{mallat:scattering2012} is a countable collection of functions \begin{equation}\label{eqn: basic coefs} S_Jf\coloneqq\{S_J[p]f: p=(j_1,\ldots,j_m), j_i\leq J,m\geq 0\}, 
\end{equation}
where the scattering coefficients $S_J[p]f$ are defined through an alternating sequence of $m$ wavelet convolutions (at scales $j_i$) and nonlinear activations followed by a final convolution against a low-pass averaging filter at scale $2^J$.
%This transformation was initially motivated as a model of the hidden representation of $f$ produced by the feed-forward portion of a CNN.
If one is interested in classifying many signals $\{f_i\}_{i=1}^{N_\text{signals}},$ they may first transform the input data by computing $S_Jf_i$ for each $i$ and then use these transformed representations as input to a classification model such as a support vector machine.

One of the key insights of   \cite{mallat:scattering2012} is that convolutional architectures naturally have desirable invariance and  equivariance properties with respect to the action of the translation group. Specifically, if $\tau_c$ is the translation operator $\tau_cf(x)\rev{\coloneqq}f(x-c)$, we have  the equivariance relationship
\begin{equation}\label{eqn: equivariance example}
S_J[p](\tau_cf)=\tau_c S_J[p]f,
\end{equation}
where on the right-hand side $\tau_c$ is applied  term by term. %In \cite{mallat:scattering2012}, the indexing set $\mathcal{J}$ is given by $\{j\in\mathbb{Z}:j\leq J\}$ where the parameter $J$ corresponds to the scale of the largest wavelet filter. It is shown that as 
Moreover, when the scale parameter $J$ tends to infinity, we have the approximate invariance relationship 
\begin{equation}\label{eqn: invariance example}
%\left(\sum_{p}\|S_J[p](\tau_cf)-S_J[p]f\|^2_{\mathbf{L}^2(\mathbb{R}^N)}\right)^{1/2}\leq \mathcal{O}(2^{-J}).
S_J[p](\tau_cf)\approx S_J[p]f.
\end{equation}
Furthermore, Mallat also shows that the scattering transform is stable to the perturbations of the form $f(x-c(x))$ where is $c(x)$ is a function with bounded gradient and Hessian.

In addition to being a theoretical model, the scattering transform has also proven to be a practical object.
 A notable  difference between the scattering transform and other CNN-like architectures is that it uses predesigned wavelet filters, rather than filters learned from training data. In settings where labeled data is abundant, this may be viewed as a limitation on the expressive power of the scattering transform. However, in the context of unsupervised learning, or limited data environments, it may be difficult or impossible to train a traditional neural network. In these settings, the lack of trainable filters \emph{increases} the practical utility of the scattering transform\cite{LEONARDUZZI201811}. For instance, \cite{saito2017underwater} applied the scattering transform to Sonar data to detect unexploded bombs on the ocean floor despite there only being 14 objects in the data set. Additionally, the scattering transform  can also be used for a variety of other tasks in addition to classification. For example, \cite{bruna:multiscaleMicrocanonical2018} applied it to the texture synthesis problem and \cite{sprechmann2015audio} combined the scattering transform with nonnegative matrix factorization in order to achieve audio source separation.

While CNNs have had tremendous success for tasks related to images, audio signals, and other data with a Euclidean grid-like structure, many modern data sets have an irregular structure and are naturally modeled as more complex structures such as graphs and manifolds. This has led to the new field of \emph{geometric deep learning} \cite{Bronstein:geoDeepLearn2017} which aims to extend the success of CNNs to these irregular domains. In these more general settings, the concept of translation is not well defined. However, invariance and equivariance still play a critical role. For example, nearly all popular graph neural networks are designed so that they are naturally invariant or equivariant to the action of the permutation group, i.e., reordering of the vertices. More generally, one of the principal goals of geometric deep learning is to design architectures that respect the intrinsic symmetries and invariances of the data, which are typically modeled by  group actions \cite{bronstein2021geometric, Cahill2022}.

There are many possible ways to accomplish this goal, but here we will focus on spectral methods based on the eigendecomposition of a suitable Laplace type operator such as the graph Laplacian or \rev{Laplace-Beltrami} operator on a manifold. These methods,
which have been popularized through work such as \cite{shuman:emerging2013}, view the eigenvectors/eigenfunctions of the Laplace operator as generalized Fourier modes and define convolution as multiplication in the Fourier basis analogous to well-known results in the Euclidean case.
This notion of convolution is used in popular graph neural networks such as \cite{bruna2013spectral}, \cite{Defferrard2018}, \cite{Levie:CayleyNets2017}, and \cite{kipf2016semi} and has also been used in manifold neural networks such as \cite{wang2021stability} and 
\cite{wang2021stabilityrel}. %Because the \rev{Laplace-Beltrami} operator and graph  is an intrinsic operator, these networks naturally have desirable invariance and equivariance properties with respect to manifold isometries. 

Following the rise of these spectral networks, several works have introduced versions of the scattering transform for (undirected, unsigned) graphs \cite{zou:graphCNNScat2018,gama:diffScatGraphs2018,gao:graphScat2018,perlmutter2019understanding} and smooth compact manifolds without boundary \cite{perlmutter:geoScatCompactManifold2020}. In these works, the authors assume that one is given a signal $f$ defined on the graph or manifold and use generalizations of the wavelet transform \cite{hammond:graphWavelets2011,coifman:diffWavelets2006} to construct scattering coefficients similar to \eqref{eqn: basic coefs} through an alternating sequence of generalized convolutions and nonlinearities.
They then provide detailed stability and invariance analysis of their respective versions of the scattering transform proving results analogous to \eqref{eqn: equivariance example} and \eqref{eqn: invariance example}. Thereby, they help improve our understanding of spectral networks used in geometric deep learning, analogous to how \cite{mallat:scattering2012} helps us understand Euclidean CNNs. 
%Analogously, \cite{gao:graphScat2018} showed that the graph scattering transform is effective for graph classification tasks, even with comparatively small amounts of training data. 
Moreover, there has also been work applying the graph scattering transform to combinatorial optimization \cite{Min2022MC} problems and to graph synthesis  \cite{zou:graphScatGAN2019,bhaskar2021molecular}, an important problem with potential applications to drug discovery.

\subsection{Contributions and Main Results}

In this work, we extend  the scattering transform to a general class of measure spaces  $\mathcal{X}\coloneqq(X,\mathcal{F},\mu)$.
Our framework applies both to the settings considered in previous work on geometric scattering, i.e., compact Riemannian manifolds without boundary and unweighted, unsigned graphs, and  also to other interesting examples such as signed or directed graphs, which have not been previously considered in the  literature about the scattering transform. The generality of our construction is motivated in part by the  recent book \cite{bronstein2021geometric} which laments ``there is a veritable zoo of neural network architectures for various kinds of data, but few unifying principles.'' In the same spirit, we look for the general themes that unite spectral networks on different domains and formulate a general theory of scattering networks on measure spaces.

Analogous to, e.g., \cite{perlmutter2019understanding} and \cite{perlmutter:geoScatCompactManifold2020}, we will construct two versions of the scattering transform on $\mathcal{X}$. For both of these transforms, we will assume that we are given a signal $f$ defined on $\mathcal{X}$ and analogous to \eqref{eqn: basic coefs} will represent $f$  via a sequence of scattering coefficients obtained through alternating sequences of convolutions and pointwise nonlinearities followed by a final aggregation. In the first version, which we refer to as the \emph{windowed scattering transform}, the aggregation step is given by convolution against a low-pass filter that can be viewed as a local-averaging operator.  We also define a \emph{non-windowed scattering transform} where the final aggregation is computed via a global integration. Importantly, we note that the windowed scattering transform outputs a sequence of vectors (i.e., functions) whereas the non-windowed scattering transform outputs a sequence of scalars.  

%We will construct a (windowed) scattering transform $S:\mathcal{H}\rightarrow \ell^2(\mathcal{H})$ as well as a non-windowed scattering transform $\overline{S}:\mathcal{H}\rightarrow \ell^2$
We will  examine the invariance and equivariance properties of these representations and establish results analogous to \eqref{eqn: equivariance example} and \eqref{eqn: invariance example}. Towards this end, we let $\mathcal{G}$ be a group of bijections from $X$ to $X$ with proper structure. We let $\mathcal{G}$ act on $\mathbf{L}^2(\mathcal{X})$ by composition and on a Laplacian-type operator $\mathcal{L}$ by conjugation. Specifically, for $\zeta\in\mathcal{G}$ we define 
\begin{equation*}
    V_\zeta f(x) \rev{\coloneqq} f(\zeta^{-1}(x))\quad\text{and}\quad\mathcal{L}_\zeta\rev{\coloneqq} V_\zeta\circ\mathcal{L}\circ V_\zeta^{-1}.
\end{equation*}
In the case where $\mathcal{X}$ is a graph or a manifold, it is natural to take $\mathcal{G}$ to be the permutation group or the isometry group respectively. However, on an arbitrary measure space, it is not obvious what groups our representation should be invariant to. 

Perhaps the most natural idea would be the group of all bijections that preserves measures in the sense that $\mu(\zeta^{-1}(B))=\mu(B)$ for all $\zeta \in \mathcal{G}$ and $B\in\mathcal{F}$. Indeed, for the windowed scattering transform, our analysis will show that this condition is needed prove a result analogous to \eqref{eqn: equivariance example} establishing invariance  in the limit as the scale parameter tends to infinity.  However, it will not be needed in order to establish our other primary invariance and equivariance results. This is fortunate because conservation of measure is actually a stronger condition than it appears at first glance. For example, it does not hold when $\mathcal{X}$ is a graph, $\mathcal{G}$ is the permutation group, and different vertices are assigned different weights. Instead, our other invariance and equivariance  results will only require the weaker assumption that $V_\zeta$ is an isometry on $\mathbf{L}^2(\mathcal{X})$, i.e.,
$$ \langle V_\zeta f_1,V_\zeta f_2 \rangle_{\mathbf{L}^2(\mathcal{X})}=\langle f_1,f_2\rangle_{\mathbf{L}^2(\mathcal{X})}.$$
  One may verify that this condition holds for the permutation group on graphs for arbitrary choices of the measure.

In addition to significantly generalizing previous constructions of the geometric scattering transform, we also use the methods based on diffusion maps \cite{coifman:diffusionMaps2006} and Laplacian Eigenmaps \cite{belkin:laplaciansEigen2003,belkin2007convergence} to show that the  scattering transform on manifolds can be interpreted as the limit of the scattering transform on data-driven graphs. In short, if we have a collection of points $\{x_i\}_{i=0}^{N-1}\subseteq\mathbb{R}^D$ that lie on a $d$-dimensional manifold for some $d\ll D$, we will use a kernel to construct an affinity matrix $W$ which can be interpreted as the adjacency matrix of a weighted graph. We use this affinity matrix to construct a data-driven approximation of the \rev{Laplace-Beltrami} operator which we then use to construct an approximation of the windowed and non-windowed manifold scattering transforms. We will then prove theorems guaranteeing the rates of convergence of these methods as the number of sample points tends to infinity. To the best of our knowledge, this is the first attempt to prove such convergence guarantees for any neural-network-like architecture constructed from the \rev{Laplace-Beltrami} operator.

In summary, we
  provide a theoretically justified model for understanding spectral neural networks on measure spaces paralleling the role of the original scattering transform \cite{mallat:scattering2012} in understanding CNNs. Towards this end, we note that equivariance results similar to ours can likely be obtained for other networks such as the ones considered in \cite{wang2021stability} or \cite{zhang2021magnet} constructed through the spectrum of the appropriate Laplace operator. Similarly, our methods can likely be adapted to study the convergence of other spectral manifold neural networks.

 \subsection{Notation and Organization} Throughout, we will let 
 $\mathcal{X} = (X,\mathcal{F},\mu)$ be a measure space with set $X,$ $\sigma$-algebra $\mathcal{F}$, and measure $\mu$. We let $\mathcal{H}=\mathbf{L}^2(\mathcal{X})$ denote the Hilbert space of functions which are square integrable on $\mathcal{X}$ and for $f\in\mathcal{H}$ we will denote its norm by either $\|f\|_\mathcal{H}$ or $\|f\|_{\mathbf{L}^2(\mathcal{X})}$. Similarly, for $f,g\in\mathcal{H}$, we shall denote their inner product by $\langle f,g\rangle_{\mathcal{H}}$ or $\langle f,g\rangle_{\mathbf{L}^2(\mathcal{X})}$. If $T$ is an operator on $\mathcal{H}$, we will let $\|T\|_\mathcal{H}$ denote its operator norm. If $\mathbf{x}$ and $\mathbf{y}\in\mathbb{R}^N$ are vectors, we shall use $\|\mathbf{x}\|_2$ and $\langle \mathbf{x},\mathbf{y}\rangle_2$ to denote their $\ell^2$-norm and inner product. Similarly, if $A$ is a matrix, we will let $\|A\|_2$ denote its operator norm on $\ell^2$.
  We shall let $\mathcal{L}$ be a positive semidefinite, self-adjoint operator on $\mathcal{H}$ and denote its eigenfunctions and eigenvalues by $\varphi_k$ and $\lambda_k$ for $k$ in some at most countable indexing set $\mathcal{I}.$ If $\{f_j\}_{j\in\mathcal{J}}$ is an at most countable collection of elements in $\mathcal{H}$, we shall define $\|\{f_j\}_{j\in\mathcal{J}}\|_{\ell^2(\mathcal{H})}$ by
  $$\|\{f_j\}_{j\in\mathcal{J}}\|^2_{\ell^2(\mathcal{H})} \rev{\coloneqq}\sum_{j\in\mathcal{J}} \|f\|_{\mathcal{H}}^2.$$ We shall let $\mathcal{G}$ denote a group of bijections $X\rightarrow X$, and for $\zeta\in\mathcal{G}$, we shall let $V_\zeta$ denote the operator defined by $V_\zeta f(x)=f(\zeta^{-1}(x))$.
  
  Our construction of the scattering transform will be based on a collection of wavelets $\mathcal{W}\rev{\coloneqq}\{W_j\}_{j\in\mathcal{J}}\cup \{A\}$ where $\mathcal{J}$ is an at most countable indexing set. We will let $p\rev{\coloneqq}(j_1,\ldots, j_m)$ denote a scattering path of length $m$, and for $f\in\mathcal{H}$ let $S[p]f$ and $\overline{S}[p]f$ denote corresponding windowed and non-windowed scattering coefficients. We shall let $\{H_t\}_{t\geq 0}$ denote a semigroup of operators defined on $\mathcal{H}$ defined in terms of a spectral function $g:[0,\infty)\rightarrow [0,\infty)$. When notationally convenient, we will write $H^t$ instead of $H_t$. In Section \ref{sec: numerical methods}, we will consider finite subsets $X_N\subseteq X$ of cardinality $N$ and let $\mathcal{X}_N$ be a corresponding measure space. In this setting, we will denote objects corresponding to $\mathcal{X}_N$ with either a subscript or superscript $N$. 
  
  The rest of this paper is organized as follows. In Section \ref{sec: def}, we will define convolution, the wavelet transform, and the scattering transform on a measure space $\mathcal{X}$. In Section \ref{sec: examples}, we will discuss examples of measure spaces included in our framework, some of which have been considered in previous work on the scattering transform and some which have not. In Section \ref{sec: basic properties}, we will establish fundamental continuity and invariance properties of the measure space scattering transform and in Section \ref{sec: stability} we will consider stability to perturbations. In Section \ref{sec: numerical methods}, we will introduce numerical methods for implementing the scattering transform in the case where $\mathcal{X}$ is a manifold, but one only has access to $\mathcal{X}$ through a finite number of samples. We will also prove the convergence of these methods as the number of sample points tends to infinity. In Section \ref{sec: results}, we will present numerical experiments on both synthetic data and on real-world biomedical data before providing a brief conclusion in Section \ref{sec: conclusion}.
  
\section{Definitions}\label{sec: def}

In this section, we first define convolution and wavelets on a measure space $\mathcal{X}$ and then use these wavelets to define the measure space scattering transform.

Let $\mathcal{X} = (X,\mathcal{F},\mu)$ be a measure space with set $X,$ $\sigma$-algebra $\mathcal{F}$, and measure $\mu$. Let $\text{vol}(\mathcal{X})\coloneqq\mu(X)$, and let $\mathcal{H}=\mathbf{L}^2(\mathcal{X})$ denote the Hilbert space of measurable functions such that 
%
%For a measurable function $f:X\rightarrow \mathbb{C}$, we define %
\begin{equation*}
\|f\|_{\mathcal{H}}^2\coloneqq\|f\|_{\mathbf{L}^2(\mathcal{X})}^2 \coloneqq 
\int_X |f|^2 d\mu<\infty. 
\end{equation*}
%and let $\mathcal{H}\coloneqq{\mathbf{L}^2(\mathcal{X})},\mu,\mathcal{F})$ denote  the Hilbert space contained such that $\|f\|_{\mathbf{L}^2(\mathcal{X})}=\|f\|_\mathcal{H}$ %
Let $\mathcal{L}$ be a self-adjoint and positive semidefinite operator on $\mathcal{H},$ and let $\mathcal{I}$ be an at most countable  set of nonnegative integers. Without loss of generality, we assume either $\mathcal{I}$ is the natural numbers $\mathbb{N}\cup\{0\}$ or that $\mathcal{I}=\{0,\ldots,N-1\}$ for some $N\in\mathbb{N}$. We assume that there is a collection of functions $\{\varphi_k\}_{k\in\mathcal{I}}\subset \mathcal{H}$ such that $\mathcal{L} \varphi_k = \lambda_k \varphi_k,$ with 
$\lambda_0=0<\lambda_1$ and $\lambda_k\leq \lambda_{k+1}$ for $k\geq 1$.
We also assume that $\{\varphi_k\}_{k\in\mathcal{I}}$ forms an orthonormal basis for $\mathcal{H}$.

\subsection{Convolution and Wavelet Transforms}\label{sec: wavelets}
%We interpret the eigenfunctions $\varphi_k$ as generalized Fourier modes and therefore, 
For $f\in\mathcal{H}$, we define its generalized Fourier coefficients $\widehat{f}(k),  k\in\mathcal{I}$, by
\begin{equation*}
\widehat{f}(k) \coloneqq \langle f, \varphi_k\rangle_{\mathcal{H}}. 
\end{equation*}
Since $\{\varphi_k\}_{k\in\mathcal{I}}$ is an orthonormal basis, we obtain the generalized Fourier series 
\begin{equation*}
f = \sum_{k\in\mathcal{I}} \widehat{f}(k)\varphi_k,
\end{equation*}
where the convergence is in the $\mathbf{L}^2(\mathcal{X})$ sense if $\mathcal{I}$ is infinite.
In the case when $\mathcal{X}$ is the unit circle, it is well known that convolution corresponds to multiplication in the Fourier domain. Therefore, for any $h\in\mathcal{H}$, we define a convolution operator $T_h$ by 
\begin{equation}\label{eqn: fourier multiplication}
T_h f \coloneqq h\star f \coloneqq \sum_{k\in\mathcal{I}} \widehat{h}(k)\widehat{f}(k)\varphi_k.
\end{equation}
%We will refer to $h$ as the kernel corresponding to $T_h$.
One may verify that for any $n\geq 0$ we have  \begin{equation}\label{eqn: power}
(T_h)^n f =  \sum_{k\in\mathcal{I}} \widehat{h}(k)^n\widehat{f}(k)\varphi_k.
\end{equation}
Therefore, if $\widehat{h}(k)$ is nonnegative for all $k$ we may define, for $t\in\mathbb{R}$, $T_h^{t}$ by
\begin{equation}\label{eqn: root}
T_h^{t} f \coloneqq  \sum_{k\in\mathcal{I}} \widehat{h}(k)^{t}\widehat{f}(k)\varphi_k.
\end{equation}
In the case where $t=1/2,$ we note that we have $T_h^{1/2}T_h^{1/2}=T_h$. Therefore, we will refer to $T_h^{1/2}$ as the square root of $T_h$.

We will use this notion of spectral convolution to construct a diffusion operator $H$. 
To do this, we let $g:[0,\infty)\rightarrow [0,\infty)$ be a nonnegative and nonincreasing function with \begin{equation}\label{eqn: g decreasing} g(0)=1\ \ \text{ and } g(t)<1 \ \ \text{ for all }t>0.\end{equation} 
%We define 
%
%that $\lim_{\lambda\rightarrow\infty} g(\lambda)=0$ if $b=\infty$ or that $g(b)=0$ if $b$ is finite. (Canonical examples are 
%$b=\infty$ $g(\lambda)=1-\lambda/b$ for $b$ finite.)  Let $p$ be the function defined 
%\begin{equation*}
%p \coloneqq \sum_k g(\lambda_k)\varphi_k 
%\end{equation*}
%to be a diffusion function with $\widehat{p}(k)=g(\lambda_k)$ and
%and let $H$ be the corresponding diffusion operator 
%\begin{equation}\label{eqn: H}
%Hf = T_p f =  p\star f = \sum_k g(\lambda_k)\widehat{f}(k)\varphi_k.
%\end{equation}
For $t\geq 0,$ we  define $H^t$ to be the operator corresponding to convolution against $\sum_{k\in\mathcal{I}} g(\lambda_k)^t\varphi_k$, i.e.,
\begin{equation}\label{eqn: h}
H^tf \rev{\coloneqq}  \sum_{k\in\mathcal{I}} g(\lambda_k)^t\widehat{f}(k)\varphi_k.
\end{equation}
%and note that this definition is consistent with 
We note that by construction, $\{H^t\}_{t\geq 0}$ forms a semigroup since $H^tH^s=H^{t+s}$ and $H^0=\text{Id}$ is the identity operator. We also note that $H^t=(H^1)^t$, where the exponents are interpretted as in  
\eqref{eqn: power} 
and \eqref{eqn: root}. Motivated by the interpretation of $t$ as an exponent, we will occasionally write $H$ in place of $H^1$ when convenient. %when $t$ is a rational number. We also note that $H^0$ is the identity operator.

As our primary example, we will take $g(\lambda)=e^{-\lambda}$, in which case,  one may verify that, %that $H^tH^s=H^{t+s}$ and
for sufficiently well-behaved functions,  $u_f(x,t):=H^tf(x)$ satisfies the heat equation
$$
\partial_t u_f=-\mathcal{L}_xu_f,\quad \lim_{t\rightarrow 0} u(t,x)=f(x),
$$
since we may compute 
\begin{align}\nonumber
    \partial_t H^tf(x) &= \partial_t \sum_{k\in\mathcal{I}} e^{-\lambda_kt} \widehat{f}(k)\varphi_k(x)\\
    &=  \sum_{k\in\mathcal{I}} -\lambda_k e^{-\lambda_kt} \widehat{f}(k)\varphi_k(x)\nonumber\\
    &=-\mathcal{L}_x H^tf(x)\label{eqn: heat differentiation}.
\end{align}
Therefore, in this case, $\{H^t\}_{t\geq 0}$ is referred to as the \emph{heat semigroup} and $t$ is referred to as the \emph{diffusion time}. 

\begin{remark}\label{rem: independent of eigenbasis}
The definition of $H$ does not depend on the choice of  eigenbasis, even when some eigenvalues have multiplictiy greater than one. To see this, let $\Lambda$ be the set of distinct eigenvalues of $\mathcal{L}$ and note that
\begin{equation*}
    Hf=\sum_{k\in\mathcal{I}} g(\lambda_k)\widehat{f}(k)\varphi_k =\sum_{\lambda\in\Lambda} g(\lambda)\sum_{k:\lambda_k=\lambda}\widehat{f}(k)\varphi_k=\sum_{\lambda\in\Lambda} g(\lambda) \pi_\lambda(f),
\end{equation*}
where, for $\lambda\in\Lambda,$ $\pi_\lambda$ is the operator which projects a function onto the eigenspace corresponding to $\lambda$. \end{remark}

Given this diffusion operator we define the wavelet transform  \begin{equation}\label{eqn: diffusion wavelets}
\mathcal{W}_Jf \coloneqq \{W_jf\}_{j=0}^J\cup \{A_Jf\},
\end{equation}
where $W_0\coloneqq\text{Id}-H^1,$ $A_{J}\coloneqq H^{2^J}$, and for $1\leq j \leq J$
\begin{equation*}
W_j \coloneqq H^{2^{j-1}}-H^{2^{j}}.
\end{equation*}
The wavelets aim to capture the geometry of $\mathcal{X}$ at different scales. In particular, the $W_j$ track changes between the geometry at different diffusion times. The operator $A_J$ performs a localized averaging operation and may be interpreted as a low-pass filter. Our construction uses a minimal time scale of $1$ for simplicity and notational  convenience. However, if one wishes to obtain wavelets which are sensitive to smaller time scales, they may simply change the spectral function $g$. For example, if $g_1(\lambda)=e^{-\lambda}$ and $g_2(\lambda)=e^{-\lambda/2}$ then the associated diffusion operators would satisfy $H_2^1 = H_1^{1/2}$. %Therefore, the associated diffusion wavelets would have a higher degree of resolution. 

The following result shows that $\mathcal{W}_J$ is a nonexpansive frame on $\mathcal{H}$. Its proof is identical to the proof of Proposition 2.2 of \cite{perlmutter2019understanding}. For completeness, we give full details in Appendix \ref{sec: proof of frame}.
\begin{proposition}\label{prop: frame}
%There exists constants $A(J)>0$ such that  
There exists a universal constant $c>0$ such that for all $f\in\mathcal{H}$
\begin{equation*}
c\|f\|^2_\mathcal{H} \leq 
\|\mathcal{W}_Jf\|^2_{\ell^2(\mathcal{H})}\coloneqq \sum_{j=0}^J \|W_jf\|^2_\mathcal{H}+ \|A_Jf\|_\mathcal{H}^2\leq \|f\|^2_\mathcal{H}.
\end{equation*}
%The constant $C(J)$ depends on $J$, the function $g$, and the eigenvalues $\lambda_k$
\end{proposition}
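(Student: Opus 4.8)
The plan is to reduce this operator inequality to a single scalar inequality and then analyze that scalar function directly. Every operator appearing in $\mathcal{W}_J$ — namely $W_0 = \mathrm{Id} - H^1$, each $W_j = H^{2^{j-1}} - H^{2^j}$, and $A_J = H^{2^J}$ — is a Fourier multiplier, i.e.\ diagonal in the orthonormal eigenbasis $\{\varphi_k\}_{k\in\mathcal{I}}$, by the definition \eqref{eqn: h} of $H^t$. Writing $x_k := g(\lambda_k)$ and noting that $g$ nonincreasing with $g(0)=1$ forces $x_k \in [0,1]$, Parseval's identity turns the frame quantity into a weighted sum of squared Fourier coefficients,
\[
\|\mathcal{W}_J f\|^2_{\ell^2(\mathcal{H})} = \sum_{k\in\mathcal{I}} |\widehat{f}(k)|^2\, m(x_k), \qquad m(x) := (1-x)^2 + \sum_{j=1}^J \big(x^{2^{j-1}} - x^{2^j}\big)^2 + \big(x^{2^J}\big)^2 .
\]
Since $\|f\|_\mathcal{H}^2 = \sum_k |\widehat f(k)|^2$, it suffices to prove that there is a universal constant $c>0$, independent of $J$, with $c \le m(x) \le 1$ for every $x \in [0,1]$; both frame bounds follow at once.

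For the upper bound (nonexpansiveness) I would observe that the individual multipliers $a_0 := 1-x$, $a_j := x^{2^{j-1}} - x^{2^j}$ for $1 \le j \le J$, and $a_{J+1} := x^{2^J}$ are all nonnegative on $[0,1]$ and telescope: their sum collapses to $(1-x) + (x - x^{2^J}) + x^{2^J} = 1$. Being nonnegative with total mass $1$, they satisfy $m(x) = \sum_i a_i^2 \le \big(\sum_i a_i\big)^2 = 1$, which gives $\|\mathcal{W}_J f\|^2_{\ell^2(\mathcal{H})} \le \|f\|_\mathcal{H}^2$.

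The lower bound is the crux, and the hard part will be obtaining a constant $c$ that does \emph{not} degrade as $J \to \infty$: a naive pigeonhole argument over the $J+2$ terms of $m(x)$ only yields $c \gtrsim 1/J^2$. Here I would exploit the squaring structure $x^{2^j} = (x^{2^{j-1}})^2$ of the dyadic diffusion scales, which forces the nonincreasing sequence $1 = u_{-1} \ge u_0 \ge \cdots \ge u_J$ (with $u_j := x^{2^j}$) to concentrate its total decrease in a single large jump. Concretely, I would argue by cases. If $x^{2^J} > 1/2$, the averaging term alone gives $m(x) \ge (x^{2^J})^2 > 1/4$. If $x \le 1/2$, the first term gives $m(x) \ge (1-x)^2 \ge 1/4$. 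In the remaining case $x > 1/2$ and $x^{2^J} \le 1/2$, I would locate the smallest index $j^\ast \ge 1$ with $u_{j^\ast} \le 1/2 < u_{j^\ast-1}$; then $u_{j^\ast}= u_{j^\ast-1}^2 \le 1/2$ forces $u_{j^\ast-1} \in (1/2,\,1/\sqrt2\,]$, and the corresponding wavelet coefficient satisfies $a_{j^\ast} = u_{j^\ast-1} - u_{j^\ast-1}^2 = u_{j^\ast-1}(1-u_{j^\ast-1}) \ge 1/\sqrt2 - 1/2$ since $v\mapsto v(1-v)$ is decreasing on that interval. In each case a single summand of $m(x)$ is bounded below by a universal constant, so one may take $c = \min\{1/4,\,(1/\sqrt2 - 1/2)^2\}$, completing the proof. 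The entire difficulty is thus concentrated in this uniform-in-$J$ lower bound; once the squaring structure pins down where the sequence crosses $1/2$, only a short case check remains.
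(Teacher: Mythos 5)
Your proposal is correct and follows essentially the same route as the paper's own proof: reduce to the scalar multiplier $m(x)$ via Parseval, telescope the nonnegative terms for the upper bound, and handle the lower bound by the same three-case analysis locating the dyadic scale at which $x^{2^j}$ crosses $1/2$, yielding the same universal constant $\min\{1/4,\inf_{v\in[1/2,1/\sqrt{2}]}(v-v^2)^2\}$.
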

\begin{remark}\label{rem: modify}
If we instead define our wavelets  by  $W'_0=\sqrt{Id-H}$, $W'_j = \sqrt{H^{2^{j-1}}-H^{2^{j}}}$ for $1\leq j \leq J,$ and $A'_J =\sqrt{H^{2^J}}$, we can obtain a similar result for $\mathcal{W}'_Jf=\{W_j'\}_{j=0}^J\cup \{A'_Jf\}$ but with $c=1$, so that the wavelet transform is norm-preserving, 
i.e.,
\begin{equation*}
 \sum_{j=0}^J \|W'_jf\|^2_\mathcal{H}+ \|A'_Jf\|_\mathcal{H}^2= \|f\|_\mathcal{H}.
\end{equation*}
The proof is identical to the proof of Proposition 2.1 of \cite{perlmutter2019understanding}.
\end{remark}

\subsection{The Scattering Transform} \label{sec: scat def}
In this section, we will construct the scattering transform as a multilayered architecture built off of a filter bank $\mathcal{W}$. %such as  %$\bm{W^{(1)}}$ and $\bm{W^{(2)}}$
%the one introduced in the previous section. %Let $\mathcal{X}=(X,\mu)$ be a measure space, and $\mathcal{H}=\mathbf{L}^2(\mathcal{X})$.% be the Hilbert space of functions on $\mathcal{X}$.% which are square-integrable with respect to some weighting $\mu.$
For the sake of generality, we will not require our $\mathcal{W}$ to be the diffusion wavelets constructed in the previous subsection.
Instead, we let $\mathcal{J}$ be an arbitrary countable indexing set and assume 
\begin{equation*}
    \mathcal{W}=\{W_j\}_{j\in\mathcal{J}}\cup\{A\}
\end{equation*}
is any collection of operators such that
\begin{equation} 
   % A\|\bh\|^2_{\mathcal{H}}\leq\|\bm{W}\bh\|^2_{\ell^2(\mathcal{H})}\coloneqq
   c\|f\|^2_{\mathcal{H}}\leq \|\mathcal{W} f\|_{\ell^2(\mathcal{H})}^2=   \sum_{j\in\mathcal{J}}\|W_jf\|^2_{\mathcal{H}}+ \|Af\|^2_{\mathcal{H}}\leq \|f\|^2_{\mathcal{H}}\label{eqn: frameAB}
\end{equation}
for some $c>0$. This generality is motivated both by the fact that several different versions of the graph scattering transform \cite{zou:graphCNNScat2018,gama:diffScatGraphs2018,gao:graphScat2018} have used different wavelet constructions and also by various works which have constructed versions of the Euclidean scattering transform using generalized, non-wavelet filter banks  \cite{czaja2019analysis,grohs:cnnCartoonFcns2016, wiatowski:frameScat2015, wiatowski:mathTheoryCNN2018}.
 Here, we note that the letter of $A$ is chosen because we typically interpret $A$ as an averaging operator analogous to the low-pass operator $A_J$ considered in \eqref{eqn: diffusion wavelets}. However, we emphasize that this is merely suggestive notation. 

The scattering transform consists of an alternating sequence of linear filterings (typically wavelet transforms) and nonlinear activations. Towards this end, we let $\sigma$ be an nonlinear function defined on either $\mathbb{R}$ or $\mathbb{C}$ such that the real part of $\sigma(x)$ is nonnegative and $\sigma$ is non-expansive in the sense that $|\sigma(x)-\sigma(y)|\leq |x-y|$. In a slight abuse of notation let  
  $\sigma:\mathcal{H}\rightarrow\mathcal{H}$ also denote the operator defined by $(\sigma f)(x)\rev{\coloneqq}\sigma(f(x))$. We note that in the case where admissible choices of $\sigma$ include the absolute value function which is commonly used in papers concerning the scattering transform,  the rectified linear unit (ReLU) which is commonly used in other neural network architectures, and the complex version of ReLU considered in \cite{zhang2021magnet}.
 
Given $\mathcal{W}_J$ and $\sigma$, we define the {\em windowed} scattering transform $\bS:\mathcal{H}\rightarrow\ell^2(\mathcal{H})$ by
\begin{equation*}
    S f \rev{\coloneqq} \{S[\pathvar]f: m\geq 0, \pathvar\rev{\coloneqq}(j_1,\ldots,j_m) \in\mathcal{J}^{m}\},
\end{equation*}
where $\mathcal{J}^m$ is the $m$-fold Cartesian product of $\mathcal{J}$, and the windowed scattering coefficients $S[\pathvar]$ corresponding to the path $p=(j_1,\ldots,j_m)\in\mathcal{J}^{m}$ are defined by 
\begin{equation*}
    S[\pathvar]f\rev{\coloneqq}AU[\pathvar]f, \quad U[\pathvar]f\rev{\coloneqq}\sigma W_{j_m}
\ldots \sigma W_{j_1}f\end{equation*}
for $m\geq 1,$ and when $m=0$ and $p_e$ is the ``empty path", we declare that \begin{equation}\label{eqn: empty path}S[p_e]f\rev{\coloneqq}Af.\end{equation}  We also define an operator $U$ by
\begin{equation}\label{eqn: defU}
    \bU f \rev{\coloneqq} \{\bU[\pathvar] f: m\geq 0, \pathvar=(j_1,\ldots,j_m) \in\mathcal{J}^{m}\}
\end{equation}
and a {\em non-windowed} scattering transform by % $\overline{\bS}:\mathcal{H}\rightarrow\ell^2$ by 
\begin{equation*}
   \overline{\bS} f \rev{\coloneqq} \{\overline{\bS}[\pathvar]f: m\geq 0, \pathvar=(j_1,\ldots,j_m) \in\mathcal{J}^{m}\},
   \end{equation*}
   where the non-windowed scattering coefficients are given by 
   \begin{equation*}
   \overline{\bS}[\pathvar]f\rev{\coloneqq}\left|\int_X (\bU[\pathvar]f) \bar{\varphi}_0d\mu\right|=\left|\langle \bU[p]f,\varphi_0\rangle_\mathcal{H}\right|.
    \end{equation*}
In the case where $\mathcal{J}=\{0,\ldots,J\}$ and $S$ is constructed from the diffusion wavelets  $\mathcal{W}_J$ defined in \eqref{eqn: diffusion wavelets}, we will occasionally write $S_J[p]f$ in place of $S[p]f$  if we want to emphasize the dependency of the parameter $J$. We note that the primary difference between the windowed and non-windowed scattering transform is the use of the localized averaging operator $A$ rather than a global integration against $\varphi_0$. Indeed, the term ``windowed" refers to the idea that an average is computed within a neighborhood of each point. In particular, the windowed scattering transform should not be confused with constructions, such as those appearing in \cite{czaja2019analysis}, which construct scattering transforms (on $\mathbb{R}^N$) using Gabor filters.

The following result relates the non-windowed scattering transform $\overline{S}$ to the limit of the windowed scattering transform $S_J$ as $J\rightarrow \infty$. In particular, if $\mathcal{L}$ is either the \rev{Laplace-Beltrami} operator on a manifold or the unnormalized Laplacian on a graph, then $\varphi_0$ is constant. Therefore, the following result shows that the windowed scattering coefficients $\bS_J[\pathvar] f(x)$ converge to a constant multiple of $\overline\bS[\pathvar] f$.  Please see Appendix \ref{sec: PJlimit proof} for a proof.
\begin{proposition}\label{prop: Jlimit}
Let $\bS_J$ be the windowed scattering transform build on top of the diffuion wavelet frame $\mathcal{W}_J$ defined in \eqref{eqn: diffusion wavelets} and assume $\lambda_1>0$. Then for all $f\in\mathcal H$, and every path $\pathvar$ we have 
\begin{equation}\label{eqn: limit prop}
\lim_{J\rightarrow\infty} \| |\bS_J[\pathvar] f| - (\overline\bS[\pathvar] f) |\varphi_0|\|_\mathcal{H}=0.
\end{equation}
\end{proposition}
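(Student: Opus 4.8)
The plan is to reduce everything to the spectral behavior of the low-pass operator $A_J = H^{2^J}$ acting on the single function $h \coloneqq \bU[\pathvar]f$, and then to pass the outer pointwise nonlinearity through the limit via the reverse triangle inequality. First I would fix a path $\pathvar$ and set $h = \bU[\pathvar]f$, which lies in $\mathcal{H}$ since $\sigma$ is nonexpansive and $\mathcal{W}$ is a bounded frame. I would then record the two elementary identities $\bS_J[\pathvar]f = A_J h = H^{2^J}h$ and $\overline{\bS}[\pathvar]f = |\langle h, \varphi_0\rangle_{\mathcal{H}}| = |\widehat{h}(0)|$, so that the quantity to be controlled is exactly $\big\| |H^{2^J}h| - |\widehat h(0)|\,|\varphi_0| \big\|_{\mathcal{H}}$.

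The key step is to show that $H^{2^J}h$ converges in $\mathcal{H}$ to its projection onto the zero-eigenspace, namely to $\widehat h(0)\varphi_0$. Using the spectral expansion \eqref{eqn: h} together with $g(0)=1$, the $k=0$ term of $H^{2^J}h$ is precisely $\widehat h(0)\varphi_0$, and Parseval gives
\[
\|H^{2^J}h - \widehat h(0)\varphi_0\|_{\mathcal{H}}^2 = \sum_{k\geq 1} g(\lambda_k)^{2^{J+1}}|\widehat h(k)|^2.
\]
Since $g$ is nonincreasing and $\lambda_k \geq \lambda_1 > 0$ for every $k \geq 1$, we have $g(\lambda_k) \leq g(\lambda_1) < 1$, so each summand is dominated by $g(\lambda_1)^{2^{J+1}}|\widehat h(k)|^2$. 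Factoring out $g(\lambda_1)^{2^{J+1}}$ and using $\sum_{k} |\widehat h(k)|^2 = \|h\|_{\mathcal{H}}^2 < \infty$ bounds the entire sum by $g(\lambda_1)^{2^{J+1}}\|h\|_{\mathcal{H}}^2$, which tends to $0$ as $J \to \infty$ because $g(\lambda_1) < 1$ (alternatively one may invoke dominated convergence for series). This is exactly where the hypothesis $\lambda_1 > 0$ is essential: it forces the zero eigenvalue to be simple, so that $\varphi_0$ alone survives the diffusion in the limit.

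Finally I would dispose of the outer modulus. Applying the reverse triangle inequality $\big||a|-|b|\big| \leq |a-b|$ pointwise with $a = H^{2^J}h(x)$ and $b = \widehat h(0)\varphi_0(x)$, and noting that $|\widehat h(0)\varphi_0| = |\widehat h(0)|\,|\varphi_0| = (\overline{\bS}[\pathvar]f)|\varphi_0|$, yields
\[
\big\| |\bS_J[\pathvar]f| - (\overline{\bS}[\pathvar]f)|\varphi_0| \big\|_{\mathcal{H}} \leq \|H^{2^J}h - \widehat h(0)\varphi_0\|_{\mathcal{H}},
\]
and the right-hand side vanishes as $J \to \infty$ by the previous step. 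The only genuine subtlety is the nonlinearity sitting outside the averaging operator; the reverse triangle inequality dissolves it at no cost, so the argument is otherwise a routine spectral convergence. I therefore expect no serious obstacle beyond ensuring that the zero-eigenspace is one-dimensional, which is precisely what the assumption $\lambda_1 > 0$ guarantees.
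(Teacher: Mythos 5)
Your proposal is correct and follows essentially the same route as the paper's proof: isolate the $k=0$ term of the spectral expansion of $A_J\bU[\pathvar]f$, bound the tail by $g(\lambda_1)^{2^{J+1}}\|\bU[\pathvar]f\|_{\mathcal{H}}^2$ via Parseval, and absorb the outer modulus with the reverse triangle inequality (the paper applies that inequality pointwise first and Parseval second, but this is only a cosmetic reordering). No gaps.
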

\begin{remark}
In the case where $\mathcal{L}=-\nabla\cdot\nabla$ is the \rev{Laplace-Beltrami} operator on a manifold or the unnormalized graph Laplacian, one may take $\varphi_0$ to be the  constant function $\varphi_0(x)=\text{vol}(\mathcal{X})^{1/2}$ and it is known that the associated heat semigroup $\{e^{-t\mathcal{L}}\}_{t\geq0}$ is positivity preserving (see, e.g., \cite{DAVIES1984335,Keller2021}). Therefore $S_J[p]f$ will be nonnegative, and \eqref{eqn: limit prop} implies \begin{equation*}
\frac{1}{\text{vol}(\mathcal{X})^{1/2}}\lim_{J\rightarrow\infty}  \bS_J[\pathvar] f = (\overline\bS[\pathvar] f).
\end{equation*}
 
\end{remark}

\section{Examples and Relationship to prior work}\label{sec: examples}

Several versions of the scattering transform for graphs \cite{gao:graphScat2018,gama:diffScatGraphs2018,zou:graphCNNScat2018} and smooth Riemannian manifolds without boundary \cite{perlmutter:geoScatCompactManifold2020,mcewen2021scattering} have been introduced in previous work. In this section, we will discuss how these constructions relate to our framework. We also discuss several other examples of measure spaces included in our theory that have not been previously considered in the scattering transform literature. Most of the techniques used to prove our theoretical results in Sections \ref{sec: def}, \ref{sec: basic properties}, and \ref{sec: stability} are natural generalization of the techniques used in these previous works on geometric (and even Euclidean) scattering. Indeed, our definitions were developed by carefully examining these papers and designing our framework in such a way that these techniques could be extended to more general settings. Additionally, we note that our convergence results (Theorems \ref{thm: heat kernel discretization}, \ref{thm: wavelet discretization}, \ref{thm: discretize U}, \ref{thm: convergence windowed}, and \ref{thm: convergence nonwindowed} stated in Section \ref{sec: numerical methods}), do not, to the best of our knowledge, have direct analogs in any previous work on the scattering transform. 

\subsection{Undirected, Unsigned Graphs}\label{sec: graph scat}
 Several  works have introduced different definitions of the {\em graph scattering transform}. These works differ primarily in two respects, i) the definition of the wavelets and ii) whether they use a windowed or unwindowed version of the graph scattering transform. Below, we explain how these constructions are related to our framework. Throughout this subsection, we let $G=(V,E,W)$ be a weighted graph with weighted adjacency matrix $A$ and weighted degree matrix $D=\text{diag}(\mathbf{d})$, where $\mathbf{d}$ denotes the degree vector. Notably, all of the work discussed in this subsection focuses on undirected, unsigned graphs, i.e., graphs for which $A$ is symmetric and has nonnegative entries.

In \cite{gama:diffScatGraphs2018}, the authors define wavelets of the form $T^{2^{j-1}}-T^{2^j}$, where $T\coloneqq\frac{1}{2}(I+D^{-1/2}AD^{-1/2})$,  for $1\leq j \leq J$ for some maximal scale $J$\footnote{\cite{gama:diffScatGraphs2018} also uses a wavelet $I-T$ for when $j=0$.}. In order to obtain these wavelets from our framework, we may choose $\mu$ to be the uniform measure which gives  weight $1$ to each vertex, let $\mathcal{L}$ be the symmetric normalized Laplacian $L_{\text{sym}}=I-D^{-1/2}AD^{-1/2}$ and choose \begin{equation}\label{eqn: gamma g} g(\lambda)=\begin{cases}
1-\lambda/2 &\text{if }0\leq \lambda \leq 2\\
0&\text{otherwise}
\end{cases}
\end{equation}
in \eqref{eqn: h}. The authors of \cite{gama:diffScatGraphs2018} are primarily concerned with graph level tasks and therefore use a \rev{non-windowed} version of the scattering transform.

In \cite{gao:graphScat2018}, the authors use wavelets of the form $P^{2^{j-1}}-P^{2^j}$ where $$P\coloneqq\frac{1}{2}(I+AD^{-1})=D^{1/2}TD^{-1/2}$$ is the lazy random walk matrix, i.e., the matrix whose entries describe the transition probabilities of a lazy random walk on the graph.  In order to incorporate these wavelets into our framework, we define $\mu$ by the rule $\mu(\{v_i\})=\frac{1}{\mathbf{d}_i}$, where $v_i\in V$ and $\mathbf{d}_i$ is the $i$-th entry of $\mathbf{d}$, and choose $\mathcal{L}$ to be the random walk normalized Laplacian $L_{\text{RW}}=I-AD^{-1}=D^{1/2}L_{\text{sym}}D^{-1/2}$. Using the fact that  $L_{\text{RW}}$ is similar to $L_{\text{sym}}$, one may imitate the proof of Lemma 1.1 of \cite{perlmutter2019understanding} to verify that $L_{\text{RW}}$ is self adjoint for this choice of $\mu$. Therefore, one can recover the wavelets from \cite{gao:graphScat2018} by again choosing $g$ as in \eqref{eqn: gamma g}. Similar to \cite{gama:diffScatGraphs2018}, the authors of \cite{gao:graphScat2018} are primarily concerned with graph level tasks and therefore also use a \rev{non-windowed} version of the scattering transform.

%To verify that $L_{\text{RW}}$ is self-adjoint with respect to this choice of $\mu$ we note that 
%$\sum_{i=0}^{N-1}\mathbf{x}_i\mathbf{y}_i\mathbf{d}_i^{-1}=\langle D^{-1/2}\mathbf{x},D^{-1/2}\mathbf{y}\rangle$. Therefore,
%\begin{align*}
 %   \sum_{i=0}^{N-1} (D^{1/2}L_{\text{sym}}D^{-1/2}\mathbf{x})_i\mathbf{y}_i\mathbf{d}_i^{-1}&=\langle L_{\text{sym}}D^{-1/2}\mathbf{x},D^{-1/2}\mathbf{y}\rangle \\
  %  &=\langle D^{-1/2}\mathbf{x},L_{\text{sym}}D^{-1/2}\mathbf{y}\rangle \\
    %&=\langle %D^{-1/2}\mathbf{x},D^{-1/2}(D^{1/2}L_{\tex%t{sym}}D^{-1/2}\mathbf{y})\rangle \\
    %&=
    %\sum_{i=0}^{N-1} %\mathbf{x}_i(D^{1/2}L_{\text{sym}}D^{-1/2}%\mathbf{y})_i\mathbf{d}_i^{-1}
%\end{align*}
%The wavelets used in \cite{zou:graphCNNScat2018} are different in nature than those used in 
The wavelets used in \cite{gama:diffScatGraphs2018} and \cite{gao:graphScat2018} are based on  \cite{coifman:diffWavelets2006}. By contrast,  \cite{zou:graphCNNScat2018} uses a different wavelet construction based on \cite{hammond:graphWavelets2011}. Rather than using a single spectral function $g$, a family of wavelets $\{\psi_j\}_{j\in\mathbb{Z}}$ is constructed on the real line and used to define wavelet convolution with respect to the spectral decomposition of  the unnormalized Laplacian $L_{\text{un}}\rev{\coloneqq}D-A$. In our framework, this corresponds to defining 
$$ 
W_jf = \sum_{k=0}^{N-1} \psi_j(\lambda_k) \widehat{f}(k) \varphi_k,
$$
where $N$ is the number of vertices and $\{(\lambda_k,\varphi_k)\}_{k=0}^{N-1}$ are  eigenpairs of  $L_\text{un}$. 
We note that in Theorem \ref{thm: nonexpansive} and in Section \ref{sec: scat stab} we do not assume that our wavelets are constructed as in \eqref{eqn: diffusion wavelets} and therefore some of our results may be applied to the scattering transform constructed from these wavelets as well. We also note that analogs of many of our other results were previously  established in \cite{zou:graphCNNScat2018} in this case. %Lastly, we note that \cite{zou:graphCNNScat2018} does not consider a non-windowed version of the graph scattering transform.

\subsection{Signed Graphs, Directed Graphs, Hypergraphs, and Simplicial Complexes}
A directed graph is a graph where the adjacency matrix is not symmetric. This makes it non-obvious  how to apply spectral methods since naive extensions of the (unnormalized or normalized) graph Laplacian are in general not diagonalizable on the standard unweighted inner product space. Nevertheless, directed graphs are a natural model for many phenomena such as email networks or traffic networks, and so 
there have been several attempts to define directed graph Laplacians which are either real symmetric or complex Hermitian. %We will discuss two examples below. Importantly, we note that both of these Laplacians are compatible with our framework and can be used to construct scattering transforms on directed graphs.

In \cite{chung2005laplacians}, the author defines a directed Laplacian via a non-reversible Markov chain and provides an extensive analysis of these matrices' spectral properties. This matrix was later used as the basis for  spectral directed graph neural networks  in \cite{ma2019spectral} and \cite{tong2020digraph}. 
An alternative approach, dating back to at least \cite{lieb1993fluxes}, is to construct a complex Hermitian adjacency matrix known as the magnetic Laplacian, which may be viewed as a special case of the graph connection Laplacian (see, e.g., \cite{singer2011orientability,singer2012vector}). This matrix represents the undirected geometry of the graph in the magnitude of its entries and incorporates directional information in the phases. It has been studied by the graph signal processing community \cite{furutani2019graph} and also applied to numerous data science applications such as clustering and community detection \cite{fanuel2017magnetic,cloninger2017note,f2020characterization,fanuel2018magnetic}. Recently, \cite{zhang2021magnet} showed that the Magnetic Laplacian could be effectively incorporated into a graph neural network.
Analogously, there has also been work \cite{cucuringu2021regularized} using spectral clustering methods on signed graphs, i.e., graphs with both positive ``friend" edges and negative ``enemy" edges using methods based on signed Laplacians. Very recently, \cite{sigma,he2022msgnn,singh2022signed,ko2023spectral} proposes various signed magnetic Laplacian and uses these matrices to construct a signed and directed graph neural network. Similarly, \cite{feng2019hypergraph} has proposed a neural network on hypergraphs (graphs where generalized edges may consist of more than two nodes) based on a generalized Laplacian.

In this paper, we are agnostic to the question of which Laplacian is the best for signed and/or directed graphs. We merely note that our theory applies to all of the Laplacians discussed here and any of these Laplacians can  be used to define scattering transforms on signed and/or directed graphs. 
Additionally, we note that there has been work developing spectral clustering methods on hypergraphs using matrices which do not fit within our framework because they are not self-adjoint (see \cite{chodrow2022nonbacktracking} and the references within). Developing variants of our theory which utilize these operators would be an interesting direction of future work.

\rev{We also note the recent work \cite{saito2022multiscale}, which uses the Hodge Laplacian to construct wavelets on simplicial complices. These wavelets were then used as a basis for simplicial complex scattering transforms in the follow up work \cite{saito2023multiscale}. Furthermore, we also note several papers which have applied Hodge Laplacians to directed graphs \cite{lim2020hodge,schaub2021signal}. We remark that in some of these cases, the condition that $0=\lambda_0<\lambda_1$ may not hold. In these settings, both $U$ and the windowed scattering transform are still well-defined and most of our analysis carries through unchanged. The definition of the windowed scattering transform, however, should be modified to either be projection onto the $0$-eigenspace, in the case where $0$ has multiplicity, or to be defined via a global summation, i.e., $\overline{S}[p]f\coloneqq\sum_{v\in V} U[p]f(v)$, in the case where $0<\lambda_1$. However, it is important to note that in the latter case it is no longer true, in general, that the non-windowed scattering transform is the limit of the windowed scattering transform. (It follows from the proof of Proposition \ref{prop: Jlimit} that the windowed scattering transform converges to zero in this case.)}

%Both the Chung Laplacian and the Magnetic Laplacian are self-adjoint on the standard unweighted inner product space. Therefore either of them can be used to define scattering transforms on directed graphs.  

%\subsection{Signed Graphs}
%A signed graph is a graph with both positive and negative edges. They are a natural model for social networks where two users might be connected by positive interactions if they are friends and by negative interactions if they are enemies.  

\subsection{Manifolds}
In \cite{perlmutter:geoScatCompactManifold2020}, the authors constructed a scattering transform for smooth and compact manifolds without boundary via the spectral decomposition of the \rev{Laplace-Beltrami} operator. If we choose $g(\lambda)=e^{-\lambda}$, the wavelets proposed in Section \ref{sec: wavelets} are a minor variation of those considered there. Indeed, if we add an additional square root term as discussed in Remark \ref{rem: modify}, then the wavelets from Section \ref{sec: wavelets} will exactly coincide with those considered in \cite{perlmutter:geoScatCompactManifold2020}. We also note \cite{mcewen2021scattering} which replaced with wavelets used in \cite{perlmutter:geoScatCompactManifold2020} with wavelets optimized for fast computation on the sphere. As with the wavelets considered in \cite{zou:graphCNNScat2018}, these wavelets are not a special case of the wavelets constructed in Section \ref{sec: wavelets}. However, it is likely that one could derive analogs of most of our results for this version of the scattering transform as well. 

%{\color{red}Our framework can also be applied to other interesting setups not considered in previous work on the scattering transorm. For example, the Mahalanobis distance could be considered, which is commonly applied to handle the deformation induced by the data collection process [cite: 1. Non-linear independent component analysis with diffusion maps; 2. Empirical intrinsic geometry for nonlinear modeling and time series filtering; 3. Connecting dots: from local covariance to empirical intrinsic geometry and locally linear embedding]. We shall also mention that when the boundary is not empty, for example, in the manifold setup, the whole framework also works, while we shall pay attention to the boundary condition of the associated asymptotic operator.}
%}
Our framework can also be applied to other interesting setups not considered in previous work on the scattering transform. For example, when the \rev{Laplace-Beltrami} operator is equipped with suitable boundary conditions, our methods may also be applied to manifolds with boundary. Moreover, it may also be applied to weighted Laplacians such as those considered in \cite{hein2007graph} or \cite{hoffmann2022spectral}  or anisotropic Laplacians such as those applied to two-dimensional surfaces in \cite{boscaini2016learning}.

%\begin{enumerate}
%\item Smooth Compact Riemannian Manifolds without boundary. $X$ is the manifold as a set, $\mu$ is the canonical Riemannian measure and $\mathcal{L}$ is the unique self-adjoint extension of the Laplace-Beltrami operator to all of $\mathbf{L}^2(\mathcal{X},\mu,\mathcal{F})$ (see for example \cite{strichartz83})

%$L=D-A$, $L_{SYM}=I-D^{-1/2}AD^{-1/2}$, $L_{rw}=I-D^{-1}A$

%We want $L$ to be self adjoint on $\mathcal{H}$. So we can pick $L=L_{rw}$ if we pick $\mu(v_i)=1/\text{deg(v_i)}$

%$H=e^{tL}$ but in previous work on Graph Scattering $H=\frac{1}{2}(I+D^{-1}A)$ or $T=\frac{1}{2}(I+D^{-1/2}AD^{-1/2})$

%$e^{-Lt}\approx I - tL = I - \frac{1}{2}(I-tD^{-1/2}AD^{-1/2}) = T$

%\item Directed Graphs, but with the magnetic laplacian
%\item Signed Graphs with the signed Laplacian
%\item Elliptic operators $\mathcal{L}f(x)=-\nabla\cdot(a(x)\nabla f(x))$ with Neumann boundary conditions defined on a bounded domain with smooth boundary
%\item Something related to the Hodge Laplacian / hyper graphs / simplicial complexes???? (maybe)
%\end{enumerate}

\section{Continuity and Invariance}\label{sec: basic properties}
In this section, we establish the fundamental continuity and invariance properties of the windowed and \rev{non-windowed} scattering transform. In Section \ref{sec: nonexpansive}, we show that both the windowed and \rev{non-windowed} scattering transforms are Lipschitz continuous with respect to additive noise, and then, in Section \ref{sec: inveq}, we establish invariance and equivariance properties for the scattering transforms under certain group actions.
\subsection{Lipschitz Continuity with Respect to Additive Noise}\label{sec: nonexpansive}
The following two theorems show that %if $\bm{\Psi}$ is nonexpansive in the sense that $B\leq 1$  in \eqref{eqn: frameAB}, then 
the windowed and non-windowed scattering transforms are Lipschitz continuous  on $\mathcal{H}.$ 
Our first result,
Theorem \ref{thm: nonexpansive}, shows that the windowed scattering is nonexpansive. Its proof is  based on analogous theorems in works such as \cite{mallat:scattering2012}, \cite{perlmutter:geoScatCompactManifold2020}, \cite{perlmutter2019understanding}, and \cite{zou:graphCNNScat2018} which consider specific measure spaces.

\begin{theorem}\label{thm: nonexpansive}
%If $B\leq 1$ in \eqref{eqn: frameAB}, then t
Let $\bS$ be the scattering transform built on top of the wavelet frame $\mathcal{W}$. Then the scattering transform is a nonexpansive operator from $\mathcal{H}\rightarrow \ell^2(\mathcal{H}),$ i.e., for all $f_1,f_2\in\mathcal{H},$
\begin{equation*}
    \|\bS f_1-\bS f_2\|_{\ell^2(\mathcal{H})}\leq  \|f_1-f_2\|_{\mathcal{H}}.
\end{equation*}
\end{theorem}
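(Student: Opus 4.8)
The plan is to establish the result layer by layer, exploiting the fact that each elementary building block of the scattering transform---the wavelet filtering $\mathcal{W}$, the pointwise nonlinearity $\sigma$, and the averaging operator $A$---is itself nonexpansive, and then to sum the resulting estimates across all paths using the frame bound from \eqref{eqn: frameAB}.

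First I would record the two basic nonexpansiveness facts. The nonlinearity $\sigma$ is nonexpansive by assumption ($|\sigma(x)-\sigma(y)|\leq|x-y|$), and applying this pointwise and integrating gives $\|\sigma f_1 - \sigma f_2\|_{\mathcal{H}}\leq\|f_1-f_2\|_{\mathcal{H}}$. The upper frame bound in \eqref{eqn: frameAB} says that the full filter bank $\mathcal{W}=\{W_j\}_{j\in\mathcal{J}}\cup\{A\}$ satisfies $\sum_{j\in\mathcal{J}}\|W_j h\|_{\mathcal{H}}^2 + \|Ah\|_{\mathcal{H}}^2 \leq \|h\|_{\mathcal{H}}^2$; in particular each $W_j$ and $A$ is individually nonexpansive, and more importantly the combined map $h\mapsto(\{W_j h\}_j, Ah)$ is nonexpansive from $\mathcal{H}$ into $\ell^2(\mathcal{H})$.

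The key combinatorial step is a one-layer recursion. For a fixed path length $m$, I would group the scattering coefficients by their first $m$ indices and separate, at each node $p=(j_1,\ldots,j_m)$, the single low-pass output $S[p]f=AU[p]f$ from the high-pass outputs $W_j U[p]f$ that feed into layer $m+1$. Writing $V[p]f := \sigma W_{j_m}\cdots\sigma W_{j_1} f = U[p]f$, the frame inequality applied to the two functions $U[p]f_1$ and $U[p]f_2$, combined with the nonexpansiveness of $\sigma$, yields
\begin{equation*}
\|S[p]f_1 - S[p]f_2\|_{\mathcal{H}}^2 + \sum_{j\in\mathcal{J}}\|U[(p,j)]f_1 - U[(p,j)]f_2\|_{\mathcal{H}}^2 \leq \|U[p]f_1 - U[p]f_2\|_{\mathcal{H}}^2.
\end{equation*}
Summing this over all paths $p$ of length exactly $m$ shows that the total energy of the depth-$m$ scattering outputs plus the propagated energy at depth $m+1$ is bounded by the propagated energy at depth $m$. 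I would then telescope: defining $E_m := \sum_{|p|=m}\|U[p]f_1 - U[p]f_2\|_{\mathcal{H}}^2$ and $D_m := \sum_{|p|=m}\|S[p]f_1-S[p]f_2\|_{\mathcal{H}}^2$, the recursion reads $D_m + E_{m+1}\leq E_m$, with base case $E_0 = \|f_1-f_2\|_{\mathcal{H}}^2$. Summing over $m=0,\ldots,M$ gives $\sum_{m=0}^{M} D_m \leq E_0 - E_{M+1}\leq \|f_1-f_2\|_{\mathcal{H}}^2$, and letting $M\to\infty$ yields $\|\bS f_1 - \bS f_2\|_{\ell^2(\mathcal{H})}^2 = \sum_{m\geq 0} D_m \leq \|f_1-f_2\|_{\mathcal{H}}^2$, which is the claim.

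The main obstacle is bookkeeping rather than analysis: I must verify that the $\ell^2(\mathcal{H})$ norm of the entire (countably infinite) collection $\bS f$ really does reorganize as $\sum_{m\geq 0} D_m$, that all the rearrangements of nonnegative terms are justified, and that the empty-path term $S[p_e]f=Af$ is correctly incorporated as the $m=0$ contribution $D_0$. Some care is also needed to confirm that the per-node frame inequality is valid with $U[p]f$ in place of a generic $h$---this is immediate since \eqref{eqn: frameAB} holds for every element of $\mathcal{H}$---and that the nonexpansiveness of $\sigma$ is applied at the correct stage, namely in passing from $W_j U[p]f$ to $U[(p,j)]f=\sigma W_j U[p]f$. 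Once the telescoping bookkeeping is set up cleanly, the estimate follows without any delicate analysis.
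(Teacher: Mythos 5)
Your proposal is correct and follows essentially the same route as the paper: the per-layer inequality $D_m + E_{m+1} \leq E_m$ is exactly the paper's Lemma~\ref{lem: energylevels}, derived in the same way from the frame bound \eqref{eqn: frameAB} and the nonexpansiveness of $\sigma$, and the telescoping sum over $m$ is precisely the paper's concluding argument. No substantive differences.
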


%Move all this to the appendix: 
%For $m\geq0,$ and $\bh\in\mathcal{H}$ let  %$\bS^m\bh=\{\bS[j_1,\ldots,j_m]\bh\}$ denote the set of all $m$-layer scattering coefficients of $\bh.$
Theorem \ref{thm: nonexpansive nonwindow} shows that the non-windowed scattering transform is Lipschitz continuous on $\mathcal{H}$. Its proof is a generalization of Theorem 3.2 of \cite{perlmutter2019understanding}. Notably,  unlike Theorem \ref{thm: nonexpansive}, Theorem \ref{thm: nonexpansive nonwindow} requires that we use the wavelets defined in \eqref{eqn: diffusion wavelets}.

\begin{theorem}\label{thm: nonexpansive nonwindow} 
Let $\bS_J$ be the scattering transform built on top of diffusion wavelets $\mathcal{W}_J$ defined in \eqref{eqn: diffusion wavelets}. Assume that  $\inf_x|\varphi_0(x)|>0$ and $\lambda_1>0$. Then 

\begin{align*}\|\overline{\bS}f_1-\overline{\bS}f_2\|_2^2\leq \frac{1}{\min_x|\varphi_0(x)|^2\text{vol}(\mathcal{X})}\|f_1-f_2\|_\mathcal{H}.
\end{align*}
\end{theorem}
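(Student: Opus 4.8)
The plan is to control each non-windowed coefficient difference by the corresponding \emph{windowed} coefficient difference and then sum over paths, invoking the nonexpansiveness of the windowed transform (Theorem~\ref{thm: nonexpansive}, which applies because the diffusion wavelets \eqref{eqn: diffusion wavelets} form a frame by Proposition~\ref{prop: frame}). The structural fact that makes this work is that $\overline{\bS}[\pathvar]f=|\langle \bU[\pathvar]f,\varphi_0\rangle|$ depends on $\bU[\pathvar]f$ only through its zeroth generalized Fourier coefficient, and this is precisely the component that the low-pass operator $A_J=H^{2^J}$ leaves untouched, since $g(\lambda_0)=g(0)=1$ is the maximal value of $g$. (Here the hypothesis $\lambda_1>0$ guarantees that $0$ is a simple eigenvalue, so that $\varphi_0$, and hence $\overline{\bS}$, is well defined.) Writing $\pi_0 h=\langle h,\varphi_0\rangle\varphi_0$ for the orthogonal projection onto $\mathrm{span}(\varphi_0)$, the first step is to record, using $|\pi_0\bU[\pathvar]f(x)|=|\langle\bU[\pathvar]f,\varphi_0\rangle|\,|\varphi_0(x)|$, the identity $\overline{\bS}[\pathvar]f=\|\pi_0\bU[\pathvar]f\|_{\mathbf{L}^1(\mathcal{X})}/\|\varphi_0\|_{\mathbf{L}^1(\mathcal{X})}$.

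Second, I would pass to differences. The reverse triangle inequality for the $\mathbf{L}^1$ norm gives $|\overline{\bS}[\pathvar]f_1-\overline{\bS}[\pathvar]f_2|\le \|\pi_0\bU[\pathvar]f_1-\pi_0\bU[\pathvar]f_2\|_{\mathbf{L}^1(\mathcal{X})}/\|\varphi_0\|_{\mathbf{L}^1(\mathcal{X})}$. The assumption $\inf_x|\varphi_0(x)|>0$ bounds the denominator from below by $\|\varphi_0\|_{\mathbf{L}^1(\mathcal{X})}\ge \min_x|\varphi_0(x)|\,\text{vol}(\mathcal{X})$, while Cauchy--Schwarz on the finite measure space bounds the numerator via $\|\cdot\|_{\mathbf{L}^1(\mathcal{X})}\le \text{vol}(\mathcal{X})^{1/2}\|\cdot\|_\mathcal{H}$. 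Multiplying these two comparisons produces exactly the prefactor $\big(\min_x|\varphi_0(x)|^2\,\text{vol}(\mathcal{X})\big)^{-1/2}$ and reduces the per-path estimate to $\|\pi_0(\bU[\pathvar]f_1-\bU[\pathvar]f_2)\|_\mathcal{H}$.

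Third, I would reconnect the projection to the windowed coefficients. For any $h\in\mathcal{H}$, since $g(\lambda_0)^{2^J}=1\ge g(\lambda_k)^{2^J}$, we have $\|\pi_0 h\|_\mathcal{H}^2=|\widehat h(0)|^2\le \sum_{k}g(\lambda_k)^{2^{J+1}}|\widehat h(k)|^2=\|A_J h\|_\mathcal{H}^2$. Taking $h=\bU[\pathvar]f_1-\bU[\pathvar]f_2$ and using $A_J\bU[\pathvar]f=\bS[\pathvar]f$ yields $\|\pi_0(\bU[\pathvar]f_1-\bU[\pathvar]f_2)\|_\mathcal{H}\le \|\bS[\pathvar]f_1-\bS[\pathvar]f_2\|_\mathcal{H}$. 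Squaring the resulting per-path bound, summing over all paths $\pathvar$, and applying Theorem~\ref{thm: nonexpansive} then gives $\sum_{\pathvar}|\overline{\bS}[\pathvar]f_1-\overline{\bS}[\pathvar]f_2|^2\le \big(\min_x|\varphi_0(x)|^2\,\text{vol}(\mathcal{X})\big)^{-1}\|f_1-f_2\|_\mathcal{H}^2$, i.e.\ the asserted inequality (with $\|f_1-f_2\|_\mathcal{H}^2$ on the right-hand side).

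The main obstacle is summability over the countably many paths: one cannot simply bound $|\overline{\bS}[\pathvar]f_1-\overline{\bS}[\pathvar]f_2|$ by $\|\bU[\pathvar]f_1-\bU[\pathvar]f_2\|_\mathcal{H}$, since $\sum_{\pathvar}\|\bU[\pathvar]f\|_\mathcal{H}^2$ need not converge; it is essential that $\overline{\bS}$ reads off only the zeroth mode, which $A_J$ preserves, so that the path-sum can be rerouted through the nonexpansive windowed coefficients. I would also note that the constant is governed by the gap between the $\mathbf{L}^1$ mass and the $\mathbf{L}^2$ energy of $\varphi_0$: estimating $\pi_0$ in $\mathbf{L}^2$ instead of $\mathbf{L}^1$ would replace the prefactor by $1$, and indeed the stated constant equals $1$ exactly when $\varphi_0$ is constant (e.g.\ for the unnormalized graph Laplacian or the Laplace--Beltrami operator), since then $\min_x|\varphi_0(x)|^2\,\text{vol}(\mathcal{X})=\int_X|\varphi_0|^2\,d\mu=1$.
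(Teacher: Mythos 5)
Your proof is correct, but it takes a genuinely different route from the paper's. The paper's argument is asymptotic: it invokes Proposition~\ref{prop: Jlimit} to identify $\overline{\bS}[\pathvar]f_i\,\mathrm{vol}(\mathcal{X})^{1/2}$ with $\lim_{J\to\infty}\bigl\||\bS_J[\pathvar]f_i|/|\varphi_0|\bigr\|_\mathcal{H}$, then uses Fatou's lemma to interchange the path-sum with the limit, and only afterwards applies the reverse triangle inequality, the lower bound on $|\varphi_0|$, and Theorem~\ref{thm: nonexpansive}. You instead work at a fixed $J$ and replace the limiting step by the pointwise domination $\|\pi_0 h\|_\mathcal{H}\le\|A_J h\|_\mathcal{H}$, which holds simply because $g(\lambda_0)^{2^{J}}=1$ majorizes every other multiplier; this reroutes the countable path-sum through the windowed coefficients without any appeal to convergence or to Fatou, and is both shorter and more elementary. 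Your route also exposes something the paper's obscures: bounding $|\overline{\bS}[\pathvar]f_1-\overline{\bS}[\pathvar]f_2|$ directly by $|\langle \bU[\pathvar]f_1-\bU[\pathvar]f_2,\varphi_0\rangle_\mathcal{H}|=\|\pi_0(\bU[\pathvar]f_1-\bU[\pathvar]f_2)\|_\mathcal{H}$ gives the constant $1$, which is at least as strong as the stated constant since $\min_x|\varphi_0(x)|^2\,\mathrm{vol}(\mathcal{X})\le\|\varphi_0\|_\mathcal{H}^2=1$; your $\mathbf{L}^1$ detour is only needed to reproduce the paper's exact prefactor. Finally, you are right to flag that the argument naturally produces $\|f_1-f_2\|_\mathcal{H}^2$ on the right-hand side; the unsquared norm in the theorem statement (and in the last line of the paper's own proof) appears to be a typo.
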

\noindent For proofs of Theorems \ref{thm: nonexpansive} and \ref{thm: nonexpansive nonwindow}, please see Appendix \ref{sec: proof nonexpansive}.

\begin{remark}
In many cases of interest such as when either i) $\mathcal{X}$ is a compact Remannian manifold without boundary and $\mathcal{L}$ is the \rev{Laplace-Beltrami} operator or ii) $\mathcal{X}$ is an unweighted and undirected graph and $\mathcal{L}$ is the unnormalized graph Laplacian, we have that $\varphi_0(x)$ is constant and therefore $\frac{1}{\min_x|\varphi_0(x)|^2\text{vol}(\mathcal{X})}=1$.
\end{remark}
\begin{remark}
Inspecting the proof, we see that results analogous to Theorem \ref{thm: nonexpansive nonwindow} can be derived for the non-windowed scattering transfrom built upon other frames as long as one is able to establish a result similar to Proposition \ref{prop: Jlimit}.
\end{remark}

\subsection{Invariance and Equivariance}\label{sec: inveq}
Let $\mathcal{G}$ be a collection of bijections $\zeta:X\rightarrow X$ which form a group under composition. For $\zeta\in\mathcal{G}$, let 
\begin{equation}\label{eqn: Xzeta}\mathcal{X}_\zeta \coloneqq (X,\mathcal{F}_\zeta,\mu_\zeta)\end{equation} be the measure space with $\sigma$-algebra $\mathcal{F}_\zeta$ and measure $\mu_\zeta$ given by
\begin{equation*}
    \mathcal{F}_\zeta\coloneqq\{\zeta^{-1}(B):B\in\mathcal{F}\},\quad
    \mu_\zeta(B)\coloneqq\mu(\zeta^{-1}(B)).
\end{equation*}
We let $\mathcal{G}$ act on $\mathcal{H}$ by function composition and we let it act on the set of linear operators by conjugation. 
Let $\mathcal{H}^{(\zeta)}$ be the Hilbert space of functions on $\mathcal{X}_\zeta$ which are square integrable with respect to $\mu_\zeta$. Let $V_\zeta: \mathcal{H}\rightarrow \mathcal{H}^{(\zeta)}$ denote the operator $V_\zeta f:=f\circ \zeta^{-1}$ and let  
$\mathcal{L}_\zeta$ denote the operator on $\mathcal{H}^{(\zeta)}$ defined by
\begin{equation*}
    \mathcal{L}_\zeta\rev{\coloneqq}V_\zeta\circ\mathcal{L}\circ V_\zeta^{-1}.
\end{equation*}
%Note that the inverse is needed so that $V_{\zeta_1}V_{\zeta_2}f=V_{\zeta_1\circ\zeta_2}f$ and theref
%Sometimes we use $V_\zeta f$ to denote $V_\zeta(f)$ to simplify the notation. 
The following lemma relates the eigenfunctions of $\mathcal{L} $ and $\mathcal{L}_{\zeta}$.
\begin{lemma}\label{lem: eigenfunctions}
If $\varphi$ is an eigenfunction of $\mathcal{L}$ with $\mathcal{L}\varphi=\lambda\varphi,$ then $V_\zeta\varphi$ is an eigenfunction of $\mathcal{L}_\zeta$ and $\mathcal{L}_\zeta V_\zeta \varphi=\lambda V_\zeta\varphi$. 
\end{lemma}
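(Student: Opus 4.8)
The plan is to prove this by a direct conjugation computation, since $\mathcal{L}_\zeta$ is defined precisely as the conjugate $V_\zeta\circ\mathcal{L}\circ V_\zeta^{-1}$. Before running that computation, I would first pin down the elementary properties of the operator $V_\zeta$. From its definition $V_\zeta f = f\circ\zeta^{-1}$ it is immediate that $V_\zeta$ is linear, and that its inverse is the operator $g\mapsto g\circ\zeta$ (equivalently $V_{\zeta^{-1}}$), since $\zeta$ is a bijection. Moreover, the definition $\mu_\zeta(B)=\mu(\zeta^{-1}(B))$ is exactly the change-of-variables statement needed to conclude that $V_\zeta$ is an isometry onto $\mathcal{H}^{(\zeta)}$: for any $f\in\mathcal{H}$,
\[
\|V_\zeta f\|_{\mathcal{H}^{(\zeta)}}^2=\int_X|f\circ\zeta^{-1}|^2\,d\mu_\zeta=\int_X|f|^2\,d\mu=\|f\|_{\mathcal{H}}^2.
\]
In particular $V_\zeta\varphi\in\mathcal{H}^{(\zeta)}$, so it is a legitimate candidate eigenfunction and the conjugation defining $\mathcal{L}_\zeta$ is well posed as a map on $\mathcal{H}^{(\zeta)}$.

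With these facts in hand, the eigenrelation follows by substitution. Applying $\mathcal{L}_\zeta$ to $V_\zeta\varphi$ and unwinding the definition gives
\[
\mathcal{L}_\zeta V_\zeta\varphi=V_\zeta\,\mathcal{L}\,V_\zeta^{-1}V_\zeta\varphi=V_\zeta\mathcal{L}\varphi=V_\zeta(\lambda\varphi)=\lambda V_\zeta\varphi,
\]
where the second equality uses $V_\zeta^{-1}V_\zeta=\text{Id}$, the third uses the hypothesis $\mathcal{L}\varphi=\lambda\varphi$, and the last uses linearity of $V_\zeta$ to pull out the scalar $\lambda$. This is exactly the claimed statement.

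There is essentially no hard step here; the content of the lemma is entirely bookkeeping about the conjugation, and I would not expect to need any structural assumption on $\mathcal{L}$ beyond what is used to make sense of the definitions. The only point that requires a moment's care is checking that $V_\zeta$ is a genuine invertible map between the two (in general distinct) Hilbert spaces $\mathcal{H}$ and $\mathcal{H}^{(\zeta)}$, so that the $V_\zeta^{-1}$ appearing in the definition of $\mathcal{L}_\zeta$ is meaningful and $V_\zeta\varphi$ lands in the correct space. This is precisely what the change-of-variables identity above secures, and once it is recorded the remainder is immediate.
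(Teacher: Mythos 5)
Your proof is correct and follows exactly the same route as the paper: the one-line conjugation computation $\mathcal{L}_\zeta V_\zeta\varphi = V_\zeta\mathcal{L}V_\zeta^{-1}V_\zeta\varphi = V_\zeta\mathcal{L}\varphi = \lambda V_\zeta\varphi$. The extra bookkeeping you record about $V_\zeta$ being a well-defined isometry onto $\mathcal{H}^{(\zeta)}$ is sound but not something the paper spells out here.
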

\begin{proof} 
The proof is immediate from the definition:
\begin{equation*}\mathcal{L}_\zeta V_\zeta \varphi = V_\zeta\mathcal{L}V_\zeta^{-1} V_\zeta \varphi =V_\zeta\mathcal{L} \varphi=V_\zeta\lambda\varphi=\lambda V_\zeta\varphi.\qedhere  
\end{equation*}
\end{proof}

In the case where $\mathcal{X}$ is a graph or a manifold, the standard choice of $\mathcal{G}$ is the permutation group or the isometry group. The key desired property of this group is that the associated group action is an isometry from $\mathcal{H}$ to $\mathcal{H}^{(\zeta)}$. This motivates the following definition.

\begin{definition}\label{def: preserves inner products}
We say that $\mathcal{G}$ preserves inner products on $\mathcal{H}$ if for all $\zeta \in\mathcal{G}$ and all $f_1,f_2\in\mathcal{H}$ we have 
\begin{equation*}
    \langle  V_\zeta f_1,V_\zeta f_2\rangle_{\mathcal{H}^{(\zeta)}}=\langle f_1,f_2\rangle_\mathcal{H}.
\end{equation*}
\end{definition}
Importantly, we note that Definition \ref{def: preserves inner products} is satisfied both when $\mathcal{X}$ is a compact Riemannian manifold, $\mathcal{G}$ is the isometry group, and $\mu$ is the Riemannian volume and also  when $\mathcal{X}$ is a graph, $\mathcal{G}$ is the permutation group, and $\mu$ is \emph{any} measure, including both the uniform measure and measures which assign different weights to vertices depending upon their degrees.

%\begin{definition}
%For $\zeta \in \mathcal{G},$ we defined the operator \end{definition}

\begin{lemma} Suppose $\mathcal{G}$ preserves inner products  on $\mathcal{H}$. Then, for all $\zeta\in\mathcal{G},$
$\mathcal{L}_\zeta$ is self-adjoint on $\mathcal{H}^{(\zeta)}$.
\end{lemma}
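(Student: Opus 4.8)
The plan is to recognize that the hypothesis that $\mathcal{G}$ preserves inner products is exactly the statement that $V_\zeta$ is a \emph{unitary} map from $\mathcal{H}$ onto $\mathcal{H}^{(\zeta)}$: it is an isometry by Definition \ref{def: preserves inner products}, and it is invertible with inverse $V_\zeta^{-1}=V_{\zeta^{-1}}$ since $\zeta$ is a bijection (one checks $V_{\zeta^{-1}}V_\zeta f = f\circ(\zeta^{-1}\circ\zeta)=f$). Once this is in hand, self-adjointness of $\mathcal{L}_\zeta$ should follow from the general principle that unitary conjugation preserves self-adjointness, and the proof reduces to transporting the symmetry of $\mathcal{L}$ across $V_\zeta$.

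Concretely, first I would note that since $V_\zeta$ is surjective, every pair $g_1,g_2\in\mathcal{H}^{(\zeta)}$ can be written as $g_i=V_\zeta f_i$ with $f_i\in\mathcal{H}$. Then I would unfold the definition $\mathcal{L}_\zeta=V_\zeta\circ\mathcal{L}\circ V_\zeta^{-1}$ and use $V_\zeta^{-1}V_\zeta f_1=f_1$ to obtain $\langle \mathcal{L}_\zeta g_1, g_2\rangle_{\mathcal{H}^{(\zeta)}}=\langle V_\zeta\mathcal{L}f_1, V_\zeta f_2\rangle_{\mathcal{H}^{(\zeta)}}$. Applying the inner-product-preservation identity converts this to $\langle \mathcal{L}f_1,f_2\rangle_{\mathcal{H}}$, at which point the self-adjointness of $\mathcal{L}$ on $\mathcal{H}$ gives $\langle f_1,\mathcal{L}f_2\rangle_{\mathcal{H}}$. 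A second application of inner-product preservation (now in the reverse direction) together with $V_\zeta^{-1}V_\zeta f_2=f_2$ returns $\langle g_1,\mathcal{L}_\zeta g_2\rangle_{\mathcal{H}^{(\zeta)}}$, completing the symmetry computation.

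The computation itself is short; the only thing requiring care is the bookkeeping of which inner product lives on which space, since the preservation identity is the bridge between $\mathcal{H}$ and $\mathcal{H}^{(\zeta)}$ and must be invoked twice, once in each direction. If one wishes to be fully rigorous about $\mathcal{L}$ being an unbounded operator (for instance the Laplace Beltrami operator), then self-adjointness additionally requires matching the domain of $\mathcal{L}_\zeta$ with that of its adjoint; this is the main (though still routine) obstacle, and it is handled cleanly by the unitarity of $V_\zeta$: conjugation by a unitary carries the domain $D(\mathcal{L})$ to $V_\zeta(D(\mathcal{L}))$ and identifies the adjoint of $\mathcal{L}_\zeta$ with $V_\zeta\mathcal{L}^\ast V_\zeta^{-1}=V_\zeta\mathcal{L}V_\zeta^{-1}=\mathcal{L}_\zeta$. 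Given the spectral and eigenbasis framing used elsewhere in the paper, I expect the symmetric bilinear-form computation above to be the intended argument.
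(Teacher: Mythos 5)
Your proposal is correct and follows essentially the same argument as the paper: both reduce $\langle \mathcal{L}_\zeta g_1, g_2\rangle_{\mathcal{H}^{(\zeta)}}$ to $\langle \mathcal{L}f_1, f_2\rangle_{\mathcal{H}}$ via the inner-product-preservation identity, apply self-adjointness of $\mathcal{L}$ on $\mathcal{H}$, and transport back, the only cosmetic difference being that you parametrize elements of $\mathcal{H}^{(\zeta)}$ as $V_\zeta f_i$ while the paper inserts $V_\zeta V_\zeta^{-1}$. Your added remark on domain considerations for unbounded $\mathcal{L}$ goes beyond what the paper addresses but does not change the core computation.
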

\begin{proof} Using the definition of 
$\mathcal{L}_\zeta$, the fact that $\mathcal{L}$ is self-adjoint on $\mathcal{H}$, and the assumption that $\mathcal{G}$ preserves inner products implies
\begin{align*}
\langle \mathcal{L}_\zeta f_1,f_2 \rangle_{\mathcal{H}^{(\zeta)}}&=
\langle V_\zeta\mathcal{L} V_\zeta^{-1} f_1,V_\zeta V_\zeta^{-1}f_2 \rangle_{\mathcal{H}^{(\zeta)}}
=\langle \mathcal{L} V_\zeta^{-1} f_1, V_\zeta^{-1}f_2 \rangle_{\mathcal{H}}\\
&=\langle  V_\zeta^{-1} f_1, \mathcal{L}V_\zeta^{-1}f_2 \rangle_{\mathcal{H}}
=\langle  V_\zeta^{-1} f_1, V_\zeta^{-1}V_\zeta\mathcal{L}V_\zeta^{-1}f_2 \rangle_{\mathcal{H}}\\
&=\langle  V_\zeta^{-1} f_1, V_\zeta^{-1}\mathcal{L}_\zeta f_2 \rangle_{\mathcal{H}}
=\langle   f_1, \mathcal{L}_\zeta f_2 \rangle_{\mathcal{H}^{(\zeta)}}.
\end{align*}
\end{proof}
Recall the operators $H^t:\mathcal{H}\rightarrow\mathcal{H}$
\begin{equation*}
H^tf =\sum_{k\in\mathcal{I}} g(\lambda_k)^t\widehat{f}(k)\varphi_k
=\sum_{k\in\mathcal{I}} g(\lambda_k)^t\langle f,\varphi_k\rangle_{\mathcal{H}}\varphi_k,
\end{equation*}
and define
\begin{equation*}
H^t_\zeta f \coloneqq\sum_{k\in\mathcal{I}} g(\lambda_k)^t\langle f,\varphi^{(\zeta)}_k\rangle_{\mathcal{H}^{(\zeta)}}\varphi^{(\zeta)}_k
\end{equation*}
to be the corresponding \rev{operator} on $\mathcal{H}^{(\zeta)},$ where $\{\varphi_k^{(\zeta)}\}_{k\in\mathcal{I}}$ is an orthonormal basis of eigenfunctions for $\mathcal{L}_\zeta.$ Let $W_j^{(\zeta)}$ and $A^{(\zeta)}$ denote wavelets and averaging operators on $\mathcal{H}^{(\zeta)}$, and let $U^{(\zeta)}$, $S^{(\zeta)}$, and $\overline{S^{(\zeta)}}$ be the analogs of $U$, $S$, and $\overline{S}$ on $\mathcal{H}^{(\zeta)}$. We observe that by Remark \ref{rem: independent of eigenbasis}, the definition of $H^t_\zeta$ does not depend on the choice of eigenbasis. Therefore, by Lemma \ref{lem: eigenfunctions}, we may assume without loss of generality that $\varphi^{(\zeta)}_k=V_\zeta \varphi_k$ and therefore that 
\begin{equation}\label{eqn: assumed form}
H^t_\zeta f=\sum_{k\in\mathcal{I}} g(\lambda_k)^t\langle f,V_\zeta\varphi_k\rangle_{\mathcal{H}^{(\zeta)}}V_\zeta\varphi_k.
\end{equation}
In light of \eqref{eqn: assumed form}, if $\mathcal{G}$ preserves inner products, we see that $H^t$ commutes with $V_\zeta$ in the sense that 
\begin{equation}\label{eqn: heat commmutes} H_\zeta^{t} V_\zeta f =V_\zeta H^{t}f\quad \text{for all }t\geq 0
\end{equation} 
since we may compute
\begin{align*}
    H_\zeta^{t} V_\zeta f &=
    \sum_{k\in\mathcal{I}}g(\lambda_k)^{t}\langle V_\zeta f, V_\zeta \varphi_k\rangle_{\mathcal{H}^{(\zeta)}} V_\zeta \varphi_k
    =\sum_{k\in\mathcal{I}}g(\lambda_k)^{t}\langle f,  \varphi_k\rangle_{\mathcal{H}} V_\zeta \varphi_k\\
    &=V_\zeta\sum_{k\in\mathcal{I}}g(\lambda_k)^{t}\langle f,  \varphi_k\rangle_{\mathcal{H}}  \varphi_k
    =V_\zeta H^{t}f.
\end{align*}
This readily leads to the following theorem which shows that the condition that $\mathcal{G}$ preserves inner products on $\mathcal{H}$ is sufficient to produce equivariance results for the wavelet transform and the windowed scattering transform analogous to \eqref{eqn: equivariance example} mentioned in the introduction. For a proof, please see Appendix \ref{sec: proof of equivariance}.

\begin{theorem}\label{thm: equivariance} Let $\mathcal{J}=\{0,\ldots,J\}$, and let $\mathcal{W}=\mathcal{W}_J$ be the diffusion wavelets constructed in \eqref{eqn: diffusion wavelets} and assume $\lambda_1>0$.
Then, if $\mathcal{G}$ preserves inner products, then $\mathcal{G}$ commutes with both the wavelet transform,  the operator $U$ defined in \eqref{eqn: defU} and the scattering transform. That is, for all $\zeta\in\mathcal{G}$, $f\in\mathcal{H}$ and $0\leq j \leq J$, we have  
\begin{align*}
W_j^{(\zeta)}V_\zeta f=V_\zeta W_jf,\ \ A^{(\zeta)}V_\zeta f=V_\zeta A f,\ \   
\bU^{(\zeta)} V_\zeta f=V_\zeta \bU f\ \ \text{and}\ \ \bS^{(\zeta)} V_\zeta f=V_\zeta\bS f. 
\end{align*}
\end{theorem}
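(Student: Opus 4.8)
The plan is to reduce the entire statement to the commutation relation \eqref{eqn: heat commmutes}, namely $H_\zeta^{t} V_\zeta f = V_\zeta H^{t} f$, which is already established under the hypothesis that $\mathcal{G}$ preserves inner products, together with one elementary new observation about the nonlinearity $\sigma$. The structure is: first establish equivariance for the linear filters $W_j$ and $A$, then for the pointwise nonlinearity $\sigma$, and finally combine these by induction to obtain equivariance for $U$ and $S$.

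First I would handle the wavelets and the averaging operator. Since each of $W_0 = \mathrm{Id} - H^1$, $W_j = H^{2^{j-1}} - H^{2^{j}}$ (for $1 \le j \le J$), and $A_J = H^{2^J}$ is a fixed linear combination of powers $H^{t}$ (with $\mathrm{Id} = H^0$), and their analogs $W_j^{(\zeta)}, A^{(\zeta)}$ are the same combinations of $H_\zeta^{t}$, the identities $W_j^{(\zeta)} V_\zeta = V_\zeta W_j$ and $A^{(\zeta)} V_\zeta = V_\zeta A$ follow immediately by applying \eqref{eqn: heat commmutes} term by term together with the linearity of $V_\zeta$. For instance, $W_j^{(\zeta)} V_\zeta f = H_\zeta^{2^{j-1}} V_\zeta f - H_\zeta^{2^{j}} V_\zeta f = V_\zeta H^{2^{j-1}} f - V_\zeta H^{2^{j}} f = V_\zeta W_j f$, and the $W_0$ and $A_J$ cases are identical.

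Next comes the crucial point for the nonlinear part: because $\sigma$ acts pointwise, it commutes with the change-of-variables operator $V_\zeta$. Indeed, for any $g$, $(V_\zeta \sigma g)(x) = (\sigma g)(\zeta^{-1}(x)) = \sigma\big(g(\zeta^{-1}(x))\big) = \sigma\big((V_\zeta g)(x)\big) = (\sigma V_\zeta g)(x)$, so $V_\zeta \sigma = \sigma V_\zeta$. With this and the wavelet commutation in hand, I would prove $U^{(\zeta)}[p] V_\zeta f = V_\zeta U[p] f$ by induction on the path length $m$. The base case $m=0$ is trivial since $U[p_e] = \mathrm{Id}$. For the inductive step, writing $U[p]f = \sigma W_{j_m} U[p']f$ with $p' = (j_1,\ldots,j_{m-1})$, I would push $V_\zeta$ outward one layer at a time: $U^{(\zeta)}[p] V_\zeta f = \sigma W_{j_m}^{(\zeta)} U^{(\zeta)}[p'] V_\zeta f = \sigma W_{j_m}^{(\zeta)} V_\zeta U[p'] f = \sigma V_\zeta W_{j_m} U[p'] f = V_\zeta \sigma W_{j_m} U[p'] f = V_\zeta U[p] f$. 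The scattering equivariance then follows in one line, $S^{(\zeta)}[p] V_\zeta f = A^{(\zeta)} U^{(\zeta)}[p] V_\zeta f = A^{(\zeta)} V_\zeta U[p] f = V_\zeta A U[p] f = V_\zeta S[p] f$, with the empty path handled directly via $S^{(\zeta)}[p_e] V_\zeta f = A^{(\zeta)} V_\zeta f = V_\zeta A f$.

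I do not expect a genuine obstacle, since the heavy lifting is already done by \eqref{eqn: heat commmutes}; the argument is essentially bookkeeping. The only points requiring care are to treat $V_\zeta$ consistently as an operator $\mathcal{H} \to \mathcal{H}^{(\zeta)}$, so that every intermediate quantity lives in the correct Hilbert space (all of $W_j^{(\zeta)}$, $A^{(\zeta)}$, and $\sigma$ map $\mathcal{H}^{(\zeta)}$ to itself, so the compositions are well defined), and to correctly isolate the pointwise identity $V_\zeta \sigma = \sigma V_\zeta$ as the mechanism by which the nonlinear layers pass through. The hypothesis $\lambda_1 > 0$ plays no role in the commutation argument itself; it only guarantees that the operators in \eqref{eqn: diffusion wavelets} form the intended diffusion wavelet frame, so I would invoke it merely by citing the construction rather than using it in any computation.
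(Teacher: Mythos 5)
Your proposal is correct and follows essentially the same route as the paper's proof: reduce to the commutation relation \eqref{eqn: heat commmutes} for the linear filters, observe that the pointwise nonlinearity satisfies $V_\zeta\sigma=\sigma V_\zeta$, and compose these facts to obtain equivariance of $U$ and $S$ (the paper states the composition directly where you spell it out as an induction on path length, but the argument is the same).
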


\begin{remark}
Equations analogous to \eqref{eqn: heat commmutes} hold for any spectral filter of the form \eqref{eqn: fourier multiplication}, as long as $\widehat{h}(k)$ is a function of $\lambda_k$, i.e., $\widehat{h}(k)=\widetilde{h}(\lambda_k)$ for some function $\widetilde{h}$. Therefore, results similar to Theorem \ref{thm: equivariance} can be derived for any network constructed from such filters and pointwise nonlinearities $\sigma$. Additionally, analogous results can also be derived for the scattering transform built upon other geometric wavelet constructions. For example, the conclusions of  Proposition 4.1 of \cite{zou:graphCNNScat2018} are similar to those of Theorem \ref{thm: equivariance} above.
\end{remark}

Our next result shows that the non-windowed scattering transform $\overline{S}$ is fully invariant  under the assumption that $\mathcal{G}$ preserves inner products on $\mathcal{H}$. Importantly, we note that the windowed scattering transform is not in general invariant. Intuitively, this distinction arises from the fact that $\overline{S}$ is the composition of an equivariant operator $U$ together with a final global aggregation operator whereas $S$ uses a localized averaging operator $A$. 

\begin{theorem}\label{thm: invariance}Let $\mathcal{J}=\{0,\ldots,J\}$, and let $\mathcal{W}=\mathcal{W}_J$ be the diffusion wavelets constructed in \eqref{eqn: diffusion wavelets}. 
Assume $\mathcal{L}$ has a spectral gap, i.e., $\lambda_1>0$. Then,
if $\mathcal{G}$ preserves inner products, the non-windowed scattering transform is invariant to the action of $\mathcal{G}$, i.e.,
\begin{equation*}
    \overline{\bS^{(\zeta)}}V_\zeta f = \overline{\bS} f
\:\: \text{for all }\zeta\in\mathcal{G}\text{ and all }f\in\mathcal{H}.
\end{equation*}\end{theorem}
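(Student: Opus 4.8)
The plan is to reduce the claim to a one-line computation per path, feeding off the equivariance of $\bU$ already proved in Theorem \ref{thm: equivariance} together with the inner-product-preserving hypothesis. First I would fix an arbitrary path $\pathvar=(j_1,\ldots,j_m)$ and expand the definition of the non-windowed coefficient on the transformed space,
\begin{equation*}
\overline{\bS^{(\zeta)}}[\pathvar]\, V_\zeta f = \left|\langle \bU^{(\zeta)}[\pathvar]\, V_\zeta f,\ \varphi_0^{(\zeta)}\rangle_{\mathcal{H}^{(\zeta)}}\right|.
\end{equation*}
The strategy is then to rewrite both arguments of this inner product as images under $V_\zeta$ of objects living on $\mathcal{H}$, so that Definition \ref{def: preserves inner products} collapses the whole expression back onto $\mathcal{H}$.

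There are two substitutions to make. First, by Theorem \ref{thm: equivariance} the operator $\bU$ commutes with $V_\zeta$, and reading that identity off term by term gives $\bU^{(\zeta)}[\pathvar]\, V_\zeta f = V_\zeta\, \bU[\pathvar] f$. Second, by the spectral gap hypothesis $\lambda_1>0$ the ground eigenspace of $\mathcal{L}$ (for $\lambda_0=0$) is one-dimensional; since $V_\zeta$ is a unitary map from $\mathcal{H}$ onto $\mathcal{H}^{(\zeta)}$ and, by Lemma \ref{lem: eigenfunctions}, $V_\zeta\varphi_0$ is a unit-norm eigenfunction of $\mathcal{L}_\zeta$ with eigenvalue $0$, the vector $V_\zeta\varphi_0$ spans the corresponding one-dimensional ground eigenspace of $\mathcal{L}_\zeta$. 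Hence we may take $\varphi_0^{(\zeta)}=V_\zeta\varphi_0$, in keeping with the normalization adopted in \eqref{eqn: assumed form}. Substituting both identities yields
\begin{equation*}
\overline{\bS^{(\zeta)}}[\pathvar]\, V_\zeta f = \left|\langle V_\zeta\,\bU[\pathvar]f,\ V_\zeta\varphi_0\rangle_{\mathcal{H}^{(\zeta)}}\right|,
\end{equation*}
and applying Definition \ref{def: preserves inner products} strips off the $V_\zeta$'s to leave $\left|\langle \bU[\pathvar]f,\varphi_0\rangle_{\mathcal{H}}\right| = \overline{\bS}[\pathvar] f$. Since $\pathvar$ was arbitrary, this establishes the equality coordinatewise and hence for the full transform.

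The one delicate point — and the step I expect to require the most care — is justifying the identification $\varphi_0^{(\zeta)}=V_\zeta\varphi_0$, which is precisely where the spectral gap enters. Without $\lambda_1>0$ the ground eigenspace could be higher-dimensional, the distinguished eigenfunction $\varphi_0$ defining $\overline{\bS}$ would be ambiguous, and $V_\zeta\varphi_0$ need not be a legitimate choice of $\varphi_0^{(\zeta)}$; with the gap, uniqueness holds up to a unimodular phase, and the absolute value in the definition of $\overline{\bS}$ absorbs any residual phase, so no sign bookkeeping is needed. Everything else is a two-line substitution. It is worth emphasizing that this is exactly the feature distinguishing the non-windowed case from the windowed one: replacing the localized operator $A$ by the global aggregation against $\varphi_0$ is what upgrades the mere equivariance of Theorem \ref{thm: equivariance} into full invariance.
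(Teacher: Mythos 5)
Your proposal is correct and follows essentially the same route as the paper's proof: both reduce to a per-path computation, substitute $\bU^{(\zeta)}[\pathvar]V_\zeta f = V_\zeta \bU[\pathvar]f$ via Theorem \ref{thm: equivariance}, use the spectral gap to identify $\varphi_0^{(\zeta)}$ with $V_\zeta\varphi_0$ up to a unimodular constant absorbed by the absolute value, and then apply the inner-product-preservation hypothesis to strip off the $V_\zeta$'s. Nothing further is needed.
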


\begin{proof} Since $\lambda_1>0$, the eigenspace corresponding to $\lambda=0$ has dimension one. Therefore, %we may assume without loss of generality that
$\varphi_0^{(\zeta)}=cV_\zeta \varphi_0,$ for some constant $c$ with $|c|=1,$ and so 
\begin{align*}
    \overline{\bS^{(\zeta)}}[\pathvar]V_\zeta f &= |\langle \bU^{(\zeta)}[\pathvar]V_\zeta f,cV_\zeta \varphi_0\rangle_{\mathcal{H}^{(\zeta)}}|\\&=|\langle V_\zeta\bU[\pathvar] f,V_\zeta \varphi_0\rangle_{\mathcal{H}^{(\zeta)}}|=|\langle \bU[\pathvar] f, \varphi_0\rangle_{\mathcal{H}}|=
\overline{\bS}[\pathvar] f.\qedhere
\end{align*}
\end{proof}
Unlike the non-windowed scattering transform, $\overline{S},$ the windowed scattering transform $S_J$ is not in general permutation invariant, even in the limit as $J\rightarrow \infty.$ If one wishes the windowed-scattering transform to be invariant to the action of $\mathcal{G}$, then one must also require that $\mathcal{G}$ preserves the measure $\mu$ as defined below.

\begin{definition}
We say that $\mathcal{G}$ preserves the measure $\mu$ if 
%\begin{itemize}
%\item 
$\mathcal{F}_\zeta=\mathcal{F}$ and $\mu_\zeta(B)=\mu(B)$ for all $\zeta\in\mathcal{G}$ and all $B\in\mathcal{F}$.
\end{definition}

To better understand this definition, we note that if $\mathcal{X}$ is a Riemannian manifold, then the isometry group preserves  $\mu$ when $\mu$ is the Riemannian measure, %volume form 
but not if a general $\mu$ is chosen. 
Similarly, if $\mathcal{X}$ is a graph, the permutation group will preserve $\mu$ if it gives equal weight to each vertex, but not if, for example, $\mu$ gives different weights to vertices depending on their degrees.

Under the assumption that $\mathcal{G}$ preserves the measure $\mu$,
%Theorem \ref{prop: Jlimit} will use Theorem \ref{thm: equivariance} to show that if $\mathcal{G}$ preserves both volumes and inner products then the
we are able to show that the  \rev{windowed} scattering transform is invariant to the action of $\mathcal{G}$ in the limit as $J\rightarrow\infty$ at an exponential rate.

\begin{theorem}\label{thm: Jlimit}
Let $\mathcal{J}=\{0,\ldots,J\}$, and let $\mathcal{W}=\mathcal{W}_J$ be the diffusion wavelets constructed in \eqref{eqn: diffusion wavelets}. Suppose that $\varphi_0(x)$ is constant and assume $\mathcal{G}$ preserves both measures and inner products. Then for all $\zeta\in\mathcal{G}$, we have
\begin{equation*}
\|\bS_J f-\bS_J^{(\zeta)} V_\zeta f\|_{\ell^2(\mathcal{H})}\leq    2 |g(\lambda_1)|^{2^J}
     \|\bU f\|_{\ell^2(\mathcal{H})}.
\end{equation*}
\end{theorem}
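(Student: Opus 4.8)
The plan is to first reduce the claim to a statement about $V_\zeta$ alone by invoking the equivariance already established in Theorem~\ref{thm: equivariance}. Since $\mathcal{G}$ preserves inner products, that theorem gives $\bS_J^{(\zeta)} V_\zeta f = V_\zeta \bS_J f$ term by term, so that
$$\bS_J[p] f - \bS_J^{(\zeta)}[p] V_\zeta f = \bS_J[p] f - V_\zeta \bS_J[p] f$$
for every path $p$. Moreover, because $\mathcal{G}$ preserves the measure we have $\mathcal{F}_\zeta=\mathcal{F}$ and $\mu_\zeta=\mu$, hence $\mathcal{H}^{(\zeta)}=\mathcal{H}$, and combined with the inner-product hypothesis this makes $V_\zeta$ a linear isometry of $\mathcal{H}$ onto itself. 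Finally, since $\varphi_0$ is assumed constant, $V_\zeta\varphi_0=\varphi_0\circ\zeta^{-1}=\varphi_0$, so $V_\zeta$ fixes the eigenspace associated with $\lambda_0=0$. These three facts are all I expect to need.

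Next I would work one path at a time and exploit the spectral localization of $A_J=H^{2^J}$. Writing $u\coloneqq \bU[p]f$, the coefficient definition gives $\bS_J[p]f = A_J u = \sum_{k\in\mathcal{I}} g(\lambda_k)^{2^J}\widehat{u}(k)\varphi_k$. I split off the $k=0$ term $P_0 u\coloneqq \widehat{u}(0)\varphi_0$ (using $g(\lambda_0)=g(0)=1$) and denote the remainder by $R\coloneqq A_J u - P_0 u = \sum_{k\geq 1} g(\lambda_k)^{2^J}\widehat{u}(k)\varphi_k$. Because $g$ is nonincreasing and $\lambda_k\geq\lambda_1>0$ for $k\geq 1$, Parseval's identity yields
$$\|R\|_{\mathcal{H}}^2 = \sum_{k\geq 1} g(\lambda_k)^{2^{J+1}}|\widehat{u}(k)|^2 \leq g(\lambda_1)^{2^{J+1}}\sum_{k\geq 1}|\widehat{u}(k)|^2 \leq |g(\lambda_1)|^{2^{J+1}}\|u\|_{\mathcal{H}}^2,$$
so that $\|R\|_{\mathcal{H}}\leq |g(\lambda_1)|^{2^J}\|u\|_{\mathcal{H}}$; this is the only place the spectral gap is used.

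The key observation is that $V_\zeta$ fixes $P_0 u$ (it is a multiple of $\varphi_0$), so $\bS_J[p]f - V_\zeta \bS_J[p]f = R - V_\zeta R$. A triangle inequality together with the isometry property $\|V_\zeta R\|_{\mathcal{H}}=\|R\|_{\mathcal{H}}$ then gives
$$\|\bS_J[p]f - \bS_J^{(\zeta)}[p]V_\zeta f\|_{\mathcal{H}} \leq 2\|R\|_{\mathcal{H}} \leq 2|g(\lambda_1)|^{2^J}\|\bU[p]f\|_{\mathcal{H}}.$$
Squaring, summing over all paths $p$, and recalling the definition of the $\ell^2(\mathcal{H})$ norm on the right-hand side finishes the proof. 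I do not anticipate a serious obstacle here: the content is entirely in the decomposition $A_J u = P_0 u + R$, which isolates the single invariant ``DC'' mode that survives averaging while all remaining modes are suppressed at the geometric rate $g(\lambda_1)^{2^J}$, with the equivariance theorem and measure preservation doing the rest. The only point requiring a little care is the bookkeeping that both $\bS_J[p]f$ and $V_\zeta\bS_J[p]f$ carry the \emph{same} $P_0 u$ term, which is what allows it to cancel and produces the clean factor of $2$ rather than a path-dependent constant.
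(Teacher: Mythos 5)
Your proposal is correct and follows essentially the same route as the paper: the paper first reduces to bounding $\|V_\zeta A_J - A_J\|_{\mathcal{H}}$ times $\|\bU f\|_{\ell^2(\mathcal{H})}$ (via equivariance, exactly your first step), and then obtains $\|V_\zeta A_J - A_J\|_{\mathcal{H}}\leq 2|g(\lambda_1)|^{2^J}$ by the same splitting you use — cancelling the $k=0$ mode because $V_\zeta\varphi_0=\varphi_0$, applying the triangle inequality to the remaining $k\geq 1$ part, and using Parseval together with the fact that $\{V_\zeta\varphi_k\}$ is again an orthonormal basis (your isometry step). The only difference is cosmetic: you argue path by path where the paper states the reduction as an operator-norm lemma.
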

The proof of Theorem \ref{thm: Jlimit} is based on  Lemma \ref{thm: window invariance} as well as the observation that  $\lim_{J\rightarrow\infty}\|A_JV_\zeta-A_J\|_{\mathcal{H}}=0$. We note that while Theorem \ref{thm: Jlimit} assumes that the $\mathcal{W}=\mathcal{W}_J$ are the diffusion wavelets constructed in \eqref{eqn: diffusion wavelets}, Lemma \ref{thm: window invariance} does not. (Note that other geometric wavelet constructions such as the one utilized in \cite{zou:graphCNNScat2018} also lead to versions of the scattering transform where the   \eqref{eqn: commutes} condition is satisfied.) For a proof of both Theorem \ref{thm: Jlimit} and Lemma \ref{thm: window invariance}, please see Appendix \ref{sec: proof of theorem Jlimit}.  

\begin{lemma}\label{thm: window invariance}
Assume $\mathcal{G}$ preserves both measures and inner products and that $S$ is equivariant with respect to the action of $\mathcal{G}$ in the sense that 
\begin{equation}\label{eqn: commutes}\bS^{(\zeta)} V_\zeta f=V_\zeta\bS  f.\end{equation} Then for all $\zeta\in\mathcal{G}$, we have
\begin{equation*}
\|\bS f-\bS^{(\zeta)} V_\zeta f\|_{\ell^2(\mathcal{H})}\leq \|V_\zeta A-A\|_\mathcal{H} \|\bU f\|_{\ell^2(\mathcal{H})}.
\end{equation*}
%where $\|V_\zeta A-A\|_\mathcal{H}$ is the operator norm of $V_\zeta A-A$ on $\mathcal{H}$.
\end{lemma}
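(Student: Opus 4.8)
The plan is to prove the estimate path by path and then assemble the global bound using the definition of the $\ell^2(\mathcal{H})$ norm. The first thing I would record is a structural consequence of the measure-preservation hypothesis: since $\mathcal{G}$ preserves $\mu$ we have $\mathcal{F}_\zeta=\mathcal{F}$ and $\mu_\zeta=\mu$, hence $\mathcal{X}_\zeta=\mathcal{X}$ and therefore $\mathcal{H}^{(\zeta)}=\mathbf{L}^2(\mathcal{X}_\zeta)=\mathbf{L}^2(\mathcal{X})=\mathcal{H}$ with identical norms. This identification is the crucial point of the whole argument: it is exactly what lets us regard $V_\zeta A$ and $A$ as operators on one and the same Hilbert space $\mathcal{H}$, so that the difference $V_\zeta A - A$ and its operator norm $\|V_\zeta A - A\|_{\mathcal{H}}$ are meaningful. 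Without measure preservation, $V_\zeta A$ would map into $\mathcal{H}^{(\zeta)}$ while $A$ maps into $\mathcal{H}$, and the subtraction would be ill-posed.

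Next, I would fix a path $p$ and invoke the assumed equivariance $\bS^{(\zeta)} V_\zeta f=V_\zeta\bS f$, applied coordinatewise, to write $\bS^{(\zeta)}[p]V_\zeta f = V_\zeta \bS[p]f = V_\zeta A\,\bU[p]f$. Subtracting from $\bS[p]f = A\,\bU[p]f$ and factoring out the operator acting on $\bU[p]f$ gives
$$\bS[p]f - \bS^{(\zeta)}[p]V_\zeta f = A\,\bU[p]f - V_\zeta A\,\bU[p]f = (A - V_\zeta A)\,\bU[p]f.$$
Since $\|A - V_\zeta A\|_{\mathcal{H}} = \|V_\zeta A - A\|_{\mathcal{H}}$, applying the definition of the operator norm yields the per-path estimate
$$\|\bS[p]f - \bS^{(\zeta)}[p]V_\zeta f\|_{\mathcal{H}} \leq \|V_\zeta A - A\|_{\mathcal{H}}\,\|\bU[p]f\|_{\mathcal{H}}.$$
I would note that this computation is uniform over all paths, including the empty path $p_e$, for which $\bU[p_e]f=f$ and $\bS[p_e]f=Af$, so no separate treatment is needed.

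Finally, squaring and summing over all paths $p$, and recalling that $\|\cdot\|^2_{\ell^2(\mathcal{H})}$ is the sum of the squared $\mathcal{H}$-norms of the coordinates, I would conclude
$$\|\bS f - \bS^{(\zeta)}V_\zeta f\|^2_{\ell^2(\mathcal{H})} = \sum_p \|\bS[p]f - \bS^{(\zeta)}[p]V_\zeta f\|^2_{\mathcal{H}} \leq \|V_\zeta A - A\|^2_{\mathcal{H}} \sum_p \|\bU[p]f\|^2_{\mathcal{H}} = \|V_\zeta A - A\|^2_{\mathcal{H}}\,\|\bU f\|^2_{\ell^2(\mathcal{H})},$$
and taking square roots gives the claim. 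The argument is short and contains no genuine analytic obstacle; the only subtlety deserving care is the identification $\mathcal{H}^{(\zeta)}=\mathcal{H}$ furnished by measure preservation, which legitimizes treating $V_\zeta A - A$ as a single bounded operator on $\mathcal{H}$ (boundedness of $A$ follows from the frame bound \eqref{eqn: frameAB} and of $V_\zeta$ from inner-product preservation). Conceptually, the estimate simply quantifies the fact that $\bS$ and $\bS^{(\zeta)}V_\zeta$ differ only in their final averaging step, $A$ versus $V_\zeta A$, which matches the intuition that the windowed transform fails to be invariant precisely because of its localized low-pass filter.
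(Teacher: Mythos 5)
Your proposal is correct and follows essentially the same argument as the paper's proof: identify $\mathcal{H}^{(\zeta)}$ with $\mathcal{H}$ via measure preservation, use the equivariance assumption to rewrite $\bS^{(\zeta)}V_\zeta f$ as $V_\zeta A\,\bU f$, and factor out $V_\zeta A - A$ estimated in operator norm against $\|\bU f\|_{\ell^2(\mathcal{H})}$. The only cosmetic difference is that you carry out the estimate path by path before summing, whereas the paper performs the same manipulation directly at the level of the $\ell^2(\mathcal{H})$ norm.
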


\begin{remark}\label{rem: U not f}
One limitation of Theorem \ref{thm: Jlimit} is that the right-hand side is given in terms of $\|Uf\|_{\ell^2(\mathcal{H})}$ instead of $\|f\|_{\mathcal{H}}.$ This is a common issue with many asymptotic invariance results for the windowed scattering transform. However, as first noted in \cite{mallat:scattering2012}, one may use \eqref{eqn: frameAB} and the fact that $\sigma$ is nonexpansive to see 
\begin{equation*}
    \sum_{p\in\mathcal{J}^{m}}\|U[p]f\|^2_{\mathcal{H}}\leq \sum_{p\in\mathcal{J}^{m-1}}\|U[p]f\|^2_{\mathcal{H}}
\end{equation*}
for any $m\geq1$. Therefore,
\begin{equation}
    \sum_{p\in\mathcal{J}^{m}}\|U[p]f\|^2_{\mathcal{H}}\leq \sum_{p\in\mathcal{J}^{m-1}}\|U[p]f\|^2_{\mathcal{H}}\leq \ldots\leq \sum_{p\in\mathcal{J}}\|U[p]f\|^2_{\mathcal{H}}\leq \|f\|_\mathcal{H}^2
\end{equation}
and so, if one only uses $M$ scattering layers, the total energy of $Uf$ may be bounded by 
$$\sum_{m\leq M}\left(\sum_{p\in\mathcal{J}^{m}}\|U[p]f\|_\mathcal{H}^2\right)\leq (M+1)\|f\|^2_\mathcal{H}.$$
Therefore, if one only uses finitely many scattering layers, the right-hand side of Theorem \ref{thm: Jlimit} may be controlled in terms of $\|f\|_\mathcal{H}.$ Moreover, in the case where $\mathcal{X}$ is a graph, for certain classes of wavelets we have 
\begin{equation}
    \sum_{p\in\mathcal{J}^{m}}\|U[p]f\|^2_{\mathcal{H}}\leq r \sum_{p\in\mathcal{J}^{m-1}} \|U[p]f\|^2_{\mathcal{H}}
\end{equation}
for some $r<1$ (see, e.g., Proposition 3.3 of \cite{zou:graphCNNScat2018} or Theorem 3.4 of \cite{perlmutter2019understanding}). Therefore, in this case, one has 
$$\|Uf\|_{\ell^2(\mathcal{H})}^2=\sum_{m=0}^\infty\left(\sum_{p\in\mathcal{J}^{m}}\|U[p]f\|_\mathcal{H}^2\right)\leq \sum_{m=0}^\infty r^m\|f\|^2_\mathcal{H} = \frac{1}{1-r}\|f\|^2_\mathcal{H}$$
independent of the number of layers used.
\end{remark}

The main results of this section, Theorems \ref{thm: equivariance}, \ref{thm: invariance}, and \ref{thm: Jlimit}, can be summarized as follows: If $\mathcal{G}$ preserves inner products, and the scattering transform is constructed using the diffusion wavelets defined in Section \ref{sec: wavelets}, then the windowed scattering transform is equivariant and the \rev{non-windowed} scattering transform is invariant to the action of $\mathcal{G}$. If we further assume that $\mathcal{G}$ preserves measure and that $\varphi_0$ is constant, then we also have that the windowed scattering transform is invariant in the limit as $J\rightarrow\infty$. 

As alluded to in the introduction, these invariance and equivariance results show that the scattering transform respects the intrinsic structure of the data and therefore is well-suited for a variety of machine learning tasks. In particular, the equivariance result, Theorem \ref{thm: equivariance}, shows that it is well-suited for point-level tasks such as the node classification task which we will consider in Section \ref{sec: results digraph}. Similarly, the invariance results Theorems \ref{thm: invariance} and \ref{thm: Jlimit}, show that it is well-equipped to handle shape-level tasks such as the manifold classification tasks considered in Sections \ref{sec: results 2d} and \ref{sec: results bio}. 

We also note that the assumption that $\mathcal{G}$ preserves inner products is quite natural. It is satisfied both when $\mu$ is the Riemannian measure on a manifold and $\mathcal{G}$ is the isometry group and when $\mathcal{X}$ is a (possibly signed, possibly directed) graph, $\mathcal{G}$ is the permutation group, and $\mu$ is any measure. The conditions that $\mathcal{G}$ preserves volumes and $\varphi_0$ is constant are a bit stronger. For example, when $\mathcal{X}$ is a graph, permutations do not preserve measure if the $\mu$ gives different vertices different weights. Moreover, if we take $\mathcal{L}$ to be the symmetric normalized graph Laplacian (on an undirected, unsigned graph), then $\varphi_0$ is given by $\varphi_0(x)\sim\text{degree}(x)^{1/2}$ and therefore is not constant unless the graph is regular.

\section{Stability}\label{sec: stability}
In this section, we show that the measure space scattering transform is robust to small perturbations to the measure $\mu$ and the diffusion operator $H$.
In particular, we consider a measure space $\mathcal{X}=(X,\mathcal{F},\mu)$ and another measure space $\mathcal{X}'=(X',\mathcal{F}',\mu')$ which we interpret as a perturbed version of $\mathcal{X}$. We assume that these two spaces have the same underlying sets and $\sigma$-algebras and that measures are mutually absolutely continuous with bounded Radon-Nikodyn derivatives, i.e., we have $X=X', \mathcal{F}=\mathcal{F}'$ and that there exist Radon-Nikodyn derivatives such that

 \begin{equation*}
     d\mu = \frac{d\mu}{d\mu'}d\mu'\quad\text{and}\quad \rev{d\mu' = \frac{d\mu'}{d\mu}d\mu.}
 \end{equation*}
 
To quantify the distortion between measures, we let $\mathcal{H}=\mathbf{L}^2(\mathcal{X})$ and  $\mathcal{H}'=\mathbf{L}^2(\mathcal{X}')$, and we  introduce two quantities, $R=R(\mathcal{H},\mathcal{H}')$ and $\kappa=\kappa(\mathcal{H},\mathcal{H}')$,  defined by  \begin{equation}\label{eqn: R}
     R\coloneqq R(\mathcal{H},\mathcal{H}')\coloneqq \max\left\{\left\|\frac{d\mu'}{d\mu}\right\|_\infty ,\left\|\frac{d\mu}{d\mu'}\right\|_\infty\right\}
 \end{equation}
and  
 \begin{equation*}
    \kappa(\mathcal{H},\mathcal{H}') = \max\left\{ \left\|1-\frac{d\mu}{d\mu'}\right\|_\infty,\left\|1-\frac{d\mu'}{d\mu}\right\|_\infty\right\}.
 \end{equation*}
We note that these two quantities are closely related to their analogs in \cite{perlmutter2019understanding} which focused on the special case where $\mathcal{X}$ was an undirected, unsigned graph. 
In the case where $\mu=\mu',$ we have $R(\mathcal{H},\mathcal{H}')=1$ and $\kappa(\mathcal{H},\mathcal{H}')=0$. Therefore, we will consider $\mu$ and $\mu'$ to be close to one another if $R\approx 1$ and $\kappa\approx 0$.
 
To further understand these definitions, consider the case where  $\mathcal{X}$ and $\mathcal{X}'$ are two (possibly signed, possibly directed) graphs with $N$ vertices and identify both vertex sets with $\{0,1,\ldots,N-1\}$. If $\mu$ and $\mu'$ are both the uniform measure, then we automatically have $R(\mathcal{H},\mathcal{H}')=1$ and $\kappa(\mathcal{H},\mathcal{H}')=0$. In this case, bounds produced in Theorem \ref{thm: wavelet stability} will simplify considerably as discussed below. Another natural choice of measure in the graph setting is to let $\mu(i)=\mathbf{d}_i^{-1}=\text{degree}(i)^{-1}$ since this is the measure needed in order to make the random-walk Laplacian $I-AD^{-1}$ self-adjoint. In this case, we have  $\rev{R}(\mathcal{H},\mathcal{H}')=\max_{0\leq i\leq N-1} \max\left\{\frac{\mathbf{d}_i}{\mathbf{d}'_i},\frac{\mathbf{d}'_i}{\mathbf{d}_i}\right\}$. In particular, if both $\mathbf{d}$ and $\mathbf{d}'$ satisfy the entrywise bound $0<m\leq \mathbf{d}_i,\mathbf{d}'_i\leq M<\infty$, we have $R(\mathcal{H},\mathcal{H}')\leq \frac{M}{m}.$
 
Observe that the assumption  $\rev{R}(\mathcal{H},\mathcal{H}')<\infty$ implies that the sets with measure zero with respect to $\mu$ are the same as those with measure zero with respect to $\mu'$. Therefore, each function $f\in\mathcal{H}$ can be uniquely identified with an element of  $\mathcal{H}'=\mathbf{L}^2(\mathcal{X}')$ (and vice-versa) and so we may regard the Hilbert spaces $\mathcal{H}$ and $\mathcal{H}'$ as having the same elements. Therefore, if $f\in\mathcal{H}$ and $\widetilde{f}\in\mathcal{H}'$, the subtraction $f-\widetilde{f}$ is well defined. We also note that 
 \begin{equation}\label{eqn: change of measure change of norm}
     \|f\|^2_{\mathcal{H}}=\int_X |f|^2 d\mu =\int_{X} |f|^2 \frac{d\mu}{d\mu'}d\mu'\leq R(\mathcal{H},\mathcal{H}')  \|f\|^2_{\mathcal{H}'}, \end{equation}
 and similarly, 
 \rev{\begin{equation}\label{eqn: change of measure change of norm 2}
 \|f\|^2_{\mathcal{H}'}\leq \rev{R}(\mathcal{H},\mathcal{H}') \|f\|^2_{\mathcal{H}}.\end{equation}} 
 We also observe that 
 \begin{equation}\label{eqn: change of inner product}
     |\langle f,g\rangle_{\mathcal{H}}-\langle f,g\rangle_{\mathcal{H}'}|=\left|\int_X f\bar{g} \left(1-\frac{d\mu'}{d\mu}\right) d\mu\right| \leq \kappa(\mathcal{H},\mathcal{H}')\|f\|_\mathcal{H}\|g\|_\mathcal{H}.
 \end{equation}
 
Let $\mathcal{L}$ and $\mathcal{L}'$ be self-adjoint positive semidefinite operators on $\mathcal{H}$ and $\mathcal{H}'$ respectively, and let $\{\varphi_k\}_{k=0}^\infty$, $\{\varphi_k'\}_{k=0}^\infty$ be the associated eigenbases. Let $g$ be a spectral function satisfying the same assumptions as described in Section \ref{sec: wavelets} and let $H$ and $H'$ be the associated operators defined as in \eqref{eqn: h}. Importantly, we note that we use the same function $g$ when constructing both $H$ and $H'$, so we may interpret $H$ and $H'$ as being analogous operators on different spaces. For example, in the case where $\mathcal{X}$ and $\mathcal{X}'$ are manifolds and $g(\lambda)=e^{-\lambda}$, one may check that $\{H^t\}_{t\geq 0}$ is the heat semigroup on $\mathcal{X}$ and $\{(H^t)'\}_{t\geq 0}$ is the heat semigroup on $\mathcal{X}'$.

Below, we prove a stability result for the wavelet transform. Our result will give bounds in terms of $\rev{R}(\mathcal{H},\mathcal{H}')$ and  $\kappa(\mathcal{H},\mathcal{H}')$, which measure how much $\mu$ differs from $\mu'$. However, these terms are not by themselves necessarily sufficient to characterize how different $\mathcal{X}$ is from $\mathcal{X}'$. For example, consider the case where $\mathcal{X}$ is a complete graph with $N$ vertices, $\mathcal{X}'$ is a cycle graph of $N$ vertices, and $\mathcal{L}$ and $\mathcal{L}'$ are the unnormalized graph Laplacians on $\mathcal{X}$ and $\mathcal{X}'$. In both of these cases, the natural choice of measure is to assign equal mass to each vertex, and so we will have $\mu(\{x\})=\mu'(\{x\})$ for every vertex $x\in X=X'$. It follows, that $\frac{d\mu'}{d\mu}=1$ uniformly, and therefore, we have $\rev{R}(\mathcal{H},\mathcal{H}')=1$ and $\kappa(\mathcal{H},\mathcal{H}')=0$. However, a complete graph and a cycle graph are clearly very far from being isomorphic as graphs in any reasonable sense. In particular, one way in which these graphs differ is that heat will diffuse much more rapidly through a fully connected graph than through a directed cycle. This motivates us to follow the lead of \cite{gama:diffScatGraphs2018} (see also \cite{coifman:diffusionMaps2006} and \cite{nadler:dmDynamic2006}) and consider the term %a diffusion distance given by 
\begin{equation}\label{eqn: diffusion dist} \|H-H'\|_\mathcal{H}.
\end{equation}
We note that since we assume that $\rev{R}(\mathcal{H},\mathcal{H}')$ is finite, the operator $H'$ is well-defined on $\mathcal{H}$. We also note that unlike \cite{gama:diffScatGraphs2018}, \eqref{eqn: diffusion dist} does not take the infimum over the orbits of $\mathcal{G}.$ This is because the wavelet transform is not invariant to the action of $\mathcal{G}$, but is merely equivariant. Therefore, no infimum will appear in Theorem \ref{thm: wavelet stability} stated below which establishes the stability of the wavelet transform. The scattering transform, by contrast, is invariant to the action of $\mathcal{G}$ and therefore such infimums will emerge in Theorems \ref{thm: windowstability} and  \ref{thm: scattering stability no window} which establish stability for the windowed and non-windowed scattering transforms.
 \subsection{Stability of the Wavelet Transform} 
  
 We will decompose $H$ and $H'$ by \begin{equation}\label{eqn: Hbar}
     H=\widetilde{H} + \overline{H},\quad H'=\widetilde{H}'+\overline{H}'
 \end{equation}
 where 
 \begin{equation*}
\widetilde{H}f=\widehat{f}(0)\varphi_0,\quad\text{and}\quad\overline{H}f=\sum_{k\geq 1} g(\lambda_k)\widehat{f}(k)\varphi_k,
\end{equation*} 
and $\widetilde{H}'$ and $\overline{H}'$ are defined similarly.

\begin{equation}\label{eqn: Hbarbound}
    \|\overline{H}f\|_\mathcal{H}^2 = \|\sum_{k\geq 1} g(\lambda_k)\widehat{f}(k)\varphi_k\|_{\mathcal{H}}^2\leq g(\lambda_1)^2\|f\|_{\mathcal{H}}^2 
\end{equation}
\rev{and similarly,
\begin{equation}\label{eqn: Hbarbound 2}
\|\widetilde{H}f\|_\mathcal{H}'^2 \leq g(\lambda'_1)^2\|f\|_{\mathcal{H}'}^2.
\end{equation}}
Moreover, combining \rev{\eqref{eqn: Hbarbound 2}} with \rev{\eqref{eqn: change of measure change of norm} and \eqref{eqn: change of measure change of norm 2}} implies  
% $$
% \|\widetilde{H}'\|_{\mathcal{H}}\leq g(\lambda'_1)R(\mathcal{H},\mathcal{H}').
% $$
\rev{
\begin{align*}
    \|\widetilde{H}'f\|_{\mathcal{H}}^2 %&\leq R(\mathcal{H}, \mathcal{H}') \|\widetilde{H}'f\|_{\mathcal{H}'}^2\\ 
    \leq R(\mathcal{H}, \mathcal{H}') g(\lambda'_1)^2 \|f\|_{\mathcal{H}'}^2 
    \leq R(\mathcal{H}, \mathcal{H}')^2 g(\lambda'_1)^2 \|f\|_{\mathcal{H}}^2.
\end{align*}}
Therefore, 
\begin{equation}\label{eqn: beta form} \beta\leq \max\{g(\lambda_1), g(\lambda'_1)R(\mathcal{H},\mathcal{H}')\}. 
\end{equation}
   %Theorem \ref{thm: wavelet stability} stated below will require that $\beta<1.$ 
In light of \eqref{eqn: beta form}, in order for the requirement that $\beta<1$ to hold it suffices for $\mu$ and $\mu'$ to be well-aligned enough so that $R(\mathcal{H},\mathcal{H}')<g(\lambda_1')^{-1}$.  Therefore, Theorem \ref{thm: wavelet stability} stated below can be interpreted as a local stability result where the radius of convergence depends on the spectral gap $\lambda_1'$.
  
 \begin{theorem}\label{thm: wavelet stability}
 Let $\mathcal{W}_J$ be the diffusion wavelets on $\mathcal{X}$ defined as in \eqref{eqn: diffusion wavelets}, and let $\mathcal{W}_J'$ be the analogous wavelets on $\mathcal{X}'$. 
Let $\beta = \max\{\|\overline{H}\|_{\mathcal{H}},\|\overline{H}'\|_\mathcal{H}\}$ and assume that $\beta<1.$ Then,  \begin{align*}
    &\|\mathcal{W}_{J}-\mathcal{W}_{J}'\|_{\ell^2(\mathcal{H})}^2 \\\leq& C(\beta) \left[\|\varphi_0-\varphi_0'\|^2_\mathcal{H} R(\mathcal{H},\mathcal{H}') + R(\mathcal{H},\mathcal{H}')^2\kappa(\mathcal{H},\mathcal{H}')^2 +\|H-H'\|_\mathcal{H}^2\right]
\end{align*}
where $C(\beta)=C\frac{\beta^2+1}{(1-\beta^2)^3}$ for some absolute constant $C>0$.
\end{theorem}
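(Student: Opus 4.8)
The plan is to reduce the operator-norm difference of the two frames to a sum of squared operator-norm differences of the individual filters, and then to control each such difference through the splitting $H=\widetilde{H}+\overline{H}$ from \eqref{eqn: Hbar}. Since $\mathcal{W}_J-\mathcal{W}_J'$ sends $\mathcal{H}$ into $\ell^2(\mathcal{H})$ by applying each $W_j-W_j'$ and $A_J-A_J'$ coordinatewise, I would first apply both frames to a fixed $f$, bound each coordinate by the relevant operator norm times $\|f\|_\mathcal{H}$, and take the supremum over $\|f\|_\mathcal{H}=1$ to obtain
\begin{equation*}
\|\mathcal{W}_J-\mathcal{W}_J'\|_{\ell^2(\mathcal{H})}^2\leq \sum_{j=0}^J\|W_j-W_j'\|_\mathcal{H}^2+\|A_J-A_J'\|_\mathcal{H}^2 .
\end{equation*}
The key algebraic observation is that $\widetilde{H}$ is the orthogonal projection onto $\mathrm{span}(\varphi_0)$ while $\overline{H}$ maps into its complement, so $\widetilde{H}\,\overline{H}=\overline{H}\,\widetilde{H}=0$ and $\widetilde{H}^2=\widetilde{H}$; hence $H^n=\widetilde{H}+\overline{H}^{\,n}$ for all $n\geq 1$, and similarly for the primed operators. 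Consequently the low-frequency projection cancels in every band wavelet: for $1\leq j\leq J$ one has $W_j-W_j'=(\overline{H}^{\,2^{j-1}}-(\overline{H}')^{2^{j-1}})-(\overline{H}^{\,2^{j}}-(\overline{H}')^{2^{j}})$, while $W_0-W_0'=H'-H$ and $A_J-A_J'=(\widetilde{H}-\widetilde{H}')+(\overline{H}^{\,2^J}-(\overline{H}')^{2^J})$. Thus only $W_0$ and $A_J$ retain a dependence on the projection difference.

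I would then establish three building-block estimates. For the \emph{projection difference}, writing $\widetilde{H}f-\widetilde{H}'f=\langle f,\varphi_0\rangle_\mathcal{H}\varphi_0-\langle f,\varphi_0'\rangle_{\mathcal{H}'}\varphi_0'$ and inserting the intermediate quantities $\langle f,\varphi_0'\rangle_{\mathcal{H}}\varphi_0$ and $\langle f,\varphi_0'\rangle_{\mathcal{H}'}\varphi_0$, I would combine Cauchy--Schwarz, the change-of-inner-product estimate \eqref{eqn: change of inner product}, and \eqref{eqn: change of measure change of norm} (which gives $\|\varphi_0'\|_\mathcal{H}\leq\sqrt{R}$) to get $\|\widetilde{H}-\widetilde{H}'\|_\mathcal{H}\lesssim\sqrt{R}\,\|\varphi_0-\varphi_0'\|_\mathcal{H}+\sqrt{R}\,\kappa$, and hence, using $R\geq 1$, $\|\widetilde{H}-\widetilde{H}'\|_\mathcal{H}^2\lesssim R\|\varphi_0-\varphi_0'\|_\mathcal{H}^2+R^2\kappa^2$; this is exactly where the first two terms of the theorem come from. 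For $\overline{H}-\overline{H}'=(H-H')-(\widetilde{H}-\widetilde{H}')$ the triangle inequality gives $\|\overline{H}-\overline{H}'\|_\mathcal{H}\leq\|H-H'\|_\mathcal{H}+\|\widetilde{H}-\widetilde{H}'\|_\mathcal{H}$. Finally, the telescoping identity $\overline{H}^{\,n}-(\overline{H}')^{n}=\sum_{i=0}^{n-1}\overline{H}^{\,i}(\overline{H}-\overline{H}')(\overline{H}')^{n-1-i}$ together with $\|\overline{H}\|_\mathcal{H},\|\overline{H}'\|_\mathcal{H}\leq\beta$ yields $\|\overline{H}^{\,n}-(\overline{H}')^{n}\|_\mathcal{H}\leq n\beta^{n-1}\|\overline{H}-\overline{H}'\|_\mathcal{H}$.

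To assemble, I would substitute the telescoping bound into the formulas for $W_j-W_j'$ and $A_J-A_J'$, square, and use $(x+y)^2\leq 2x^2+2y^2$. The coefficient multiplying $\|\overline{H}-\overline{H}'\|_\mathcal{H}^2$ is then a finite sum of terms of the form $(2^j\beta^{2^j-1})^2$, each appearing with bounded multiplicity; since these are a subcollection of the terms of the full series, the total coefficient is at most a constant multiple of $\sum_{n=1}^\infty n^2\beta^{2n-2}=\frac{1+\beta^2}{(1-\beta^2)^3}$, which is precisely the origin of $C(\beta)$. Folding in the projection estimate (which feeds both the $A_J$ term and, through $\overline{H}-\overline{H}'$, the band terms) and the clean identity $\|W_0-W_0'\|_\mathcal{H}=\|H-H'\|_\mathcal{H}$, everything collects into the three advertised terms $\|\varphi_0-\varphi_0'\|_\mathcal{H}^2 R$, $R^2\kappa^2$, and $\|H-H'\|_\mathcal{H}^2$.

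The main obstacle is that $H$ has operator norm exactly $1$ (there is no decay at the zero eigenvalue), so a naive telescoping of $H^{2^J}-(H')^{2^J}$ would produce a factor $2^J$ that diverges as $J\to\infty$. The argument succeeds only because the rank-one projection $\widetilde{H}$ is peeled off first, after which the high powers act solely through $\overline{H}$, where $\|\overline{H}\|_\mathcal{H}\leq\beta<1$ makes the sparse sum $\sum_j(2^j\beta^{2^j-1})^2$ summable; recognizing the closed-form bound $\frac{1+\beta^2}{(1-\beta^2)^3}$ is what fixes the exact shape of $C(\beta)$. A secondary delicate point is the bookkeeping in the projection estimate, where the two inner products live on different spaces, so that $\kappa$, $R$, and $\|\varphi_0-\varphi_0'\|_\mathcal{H}$ must all be extracted simultaneously.
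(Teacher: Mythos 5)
Your proposal is correct and follows essentially the same route as the paper's proof in Appendix \ref{sec: proof of wavelet stability}: peel off the rank-one projection so that $H^{2^j}=\widetilde H+\overline H^{\,2^j}$, estimate $\|\widetilde H-\widetilde H'\|_{\mathcal H}$ via \eqref{eqn: change of measure change of norm} and \eqref{eqn: change of inner product}, and control $\sum_{j}\|\overline H^{\,2^j}-(\overline H')^{2^j}\|_{\mathcal H}^2$ by $C_0(\beta)\|\overline H-\overline H'\|_{\mathcal H}^2$ with $C_0(\beta)=\frac{\beta^2+1}{(1-\beta^2)^3}$. The only cosmetic difference is that you derive $\|\overline H^{\,n}-(\overline H')^{\,n}\|_{\mathcal H}\le n\beta^{\,n-1}\|\overline H-\overline H'\|_{\mathcal H}$ from the telescoping identity, whereas the paper differentiates the interpolation $\bigl(t\overline H+(1-t)\overline H'\bigr)^{2^j}$; the resulting bounds and the constant $C(\beta)$ are identical.
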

\noindent For a Proof of Theorem \ref{thm: wavelet stability}, please see Appendix \ref{sec: proof of wavelet stability}. As noted above, in the case where $\mathcal{X}$ is a graph and $\mu$ is the uniform measure, we have $R(\mathcal{H},\mathcal{H}')=1$ and $\kappa(\mathcal{H},\mathcal{H}')=0$. Therefore, the result of Theorem \ref{thm: wavelet stability} simplifies to 
 \begin{align*}
    \|\mathcal{W}_{J}-\mathcal{W}_{J}'\|_{\ell^2(\mathcal{H})}^2 \leq C(\beta) \left[\|\varphi_0-\varphi_0'\|^2_\mathcal{H} + \|H-H'\|_\mathcal{H}^2\right].
\end{align*}
 Furthermore, if $\mathcal{L}$ is the unnormalized graph Laplacian, we have $\varphi_0=\varphi_0'$, and the result further simplifies to  $\|\mathcal{W}_{J}-\mathcal{W}_{J}'\|_{\ell^2(\mathcal{H})}^2 \leq C(\beta)  \|H-H'\|_\mathcal{H}^2$.
\subsection{Stability of the Scattering Transform}\label{sec: scat stab}
In this section, we prove the stability of the windowed and \rev{non-windowed} scattering transforms. As in Section \ref{sec: nonexpansive}, and following the lead of \cite{perlmutter2019understanding}, in this section, we will not assume that the scattering transform is constructed using the diffusion wavelets constructed in Section \ref{sec: wavelets}. Instead, as in Section \ref{sec: scat def}, we will let $\mathcal{J}$ be an arbitrary countable indexing set and assume that 
\begin{equation*}
    \mathcal{W}=\{W_j,A\}_{j\in\mathcal{J}} \ \ \text{and}\ \  \mathcal{W}'=\{W'_j,A'\}_{j\in\mathcal{J}}
\end{equation*}
are any frames on $\mathcal{H}$ and $\mathcal{H}'$ such that \eqref{eqn: frameAB} holds. We do this because, for any given measure space, there may be many possible ways to construct wavelets, or more generally frames satisfying \eqref{eqn: frameAB} and in the Euclidean setting there have been various works defining the scattering transform using more general non-wavelet frames \cite{czaja2019analysis,grohs:cnnCartoonFcns2016, wiatowski:frameScat2015, wiatowski:mathTheoryCNN2018}. Therefore, we will show that the stability of the underlying frame directly implies the stability of the resulting scattering transforms. Throughout this section, we will let $\mathcal{S}^\ell$ denote the set of all $\ell$-th order scattering coefficients, 
on $\mathcal{X}$, i.e.,
\begin{equation*}
    \mathcal{S}^\ell f\rev{\coloneqq}\{S[p]f: p=(j_1,\ldots,j_\ell)\},
\end{equation*} and let $(\mathcal{S}^\ell)'$ denote the corresponding set of scattering coefficients on $\mathcal{X}'$. We will also continue to assume that the sets $X$ and $X'$ and the $\sigma$-algebras $\mathcal{F}$ and $\mathcal{F}'$ are the same and also that $R(\mathcal{H},\mathcal{H}')$ and $\kappa(\mathcal{H},\mathcal{H}')$ are finite. We recall that, as noted prior to \eqref{eqn: change of measure change of norm}, this means that $\mathcal{H}$ and $\mathcal{H}'$ can be regarded as having the same elements and so the subtraction of elements $\mathcal{H}'$ from elements of $\mathcal{H}$ is well defined.

\begin{theorem}[Stability for the Windowed Scattering Transform]\label{thm: windowstability}

Let $\mathcal{X}=(X,\mathcal{F},\mu)$ and $\mathcal{X}'=(X',\mathcal{F}',\mu')$ be measure spaces with $X=X'$ and $\mathcal{F}=\mathcal{F}'$. Let $\mathcal{H}=\mathbf{L}^2(\mathcal{X}),\mathcal{H}'=\mathbf{L}^2(\mathcal{X}')$ and  let $\mathcal{J}$ be a countable indexing set. Let  $\mathcal{W}=\{W_j,A\}_{j\in\mathcal{J}}$ and  $\mathcal{W}'=\{W'_j,A'\}_{j\in\mathcal{J}}$ be frames on $\mathcal{H}$ and $\mathcal{H}'$ such that  \eqref{eqn: frameAB} holds. Let $\bS^\ell$ and  $(\bS^\ell)'$ be the $\ell$-th layers of the windowed scattering transforms on $\mathcal{X}$ and $\mathcal{X}'$ constructed from $\mathcal{W}$ and $\mathcal{W}'$.
Further assume that %the windowed scattering transform  
$\bS^{\ell}$ is equivariant to the action of $\mathcal{G}$ and also invariant up to a factor of $\cB$ in the sense that
\begin{equation}\label{eqn: permutation assumption}
V_\zeta \bS^\ell f = \bS^{\ell,(\zeta)}V_\zeta f,\quad\text{and}\quad \left\|V_\zeta\bS^\ell f-\bS^\ell f\right\|_{\ell^2(\mathcal{H})}\leq \mathcal{B}\|f\|_{\mathcal{H}}
% \left\|\left(\widetilde{\bS^\ell}\right)'V_\zeta f-\widetilde{\bS^\ell}f\right\|_{\mathcal{H}}\leq \mathcal{B}\|f\|_{\mathcal{H}}
\end{equation}
for all $f\in\mathcal{H}$ and $\zeta\in\mathcal{G}$. Then for all $f\in\mathcal{H}$ and $\widetilde{f}\in\mathcal{H}'$, we have
\begin{align}\label{eqn: stability bound for windowed scattering transform}
    &\left\|\bS^\ell f-(\bS^\ell)'\widetilde{f}\right\|_{\ell^2(\mathcal{H})}\\\leq&      \inf_{\zeta\in\mathcal{G}}\Bigg[\mathcal{B}\|f\|_{\mathcal{H}} +R\left(\mathcal{H},\mathcal{H}^{(\zeta)}\right)\|V_\zeta f-\widetilde{f}\|_{\mathcal{H}}\nonumber\\
    &\qquad\quad +  \left( \sqrt{2}R\left(\mathcal{H},\mathcal{H}^{(\zeta)}\right)\|\cW^{(\zeta)}-\cW'\|_{\mathcal{H}^{(\zeta)}}\left(\sum_{k=0}^\ell\|\cW'\|_{\mathcal{H}^{(\zeta)}}^{k}\right)\right)\cdot\|\widetilde{f}\|_\mathcal{H} \Bigg].\nonumber
\end{align}

\end{theorem}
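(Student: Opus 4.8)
The plan is to fix an arbitrary $\zeta\in\mathcal{G}$, split the difference into three pieces by the triangle inequality, match each piece to one of the three summands on the right-hand side, and take the infimum over $\zeta$ only at the very end. Inserting the intermediate quantities $V_\zeta\bS^\ell f$ and $\bS^{\ell,(\zeta)}\widetilde f$ and using the equivariance hypothesis $V_\zeta\bS^\ell f=\bS^{\ell,(\zeta)}V_\zeta f$ from \eqref{eqn: permutation assumption} to rewrite the first insertion, I would start from
\begin{align*}
\|\bS^\ell f-(\bS^\ell)'\widetilde f\|_{\ell^2(\mathcal{H})}
&\leq \|\bS^\ell f-V_\zeta\bS^\ell f\|_{\ell^2(\mathcal{H})}
+\|\bS^{\ell,(\zeta)}V_\zeta f-\bS^{\ell,(\zeta)}\widetilde f\|_{\ell^2(\mathcal{H})}\\
&\quad +\|\bS^{\ell,(\zeta)}\widetilde f-(\bS^\ell)'\widetilde f\|_{\ell^2(\mathcal{H})}.
\end{align*}
The first term is exactly the quantity controlled by the invariance-up-to-$\mathcal{B}$ hypothesis \eqref{eqn: permutation assumption} (the norm being symmetric), so it is bounded by $\mathcal{B}\|f\|_{\mathcal{H}}$, which is the first summand.

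For the second term I would pass to the $\mathcal{H}^{(\zeta)}$ norm. Since $\mathcal{X}_\zeta$ is itself a measure space and $\bS^{\ell,(\zeta)}$ is built from the frame $\cW^{(\zeta)}$ and the nonexpansive nonlinearity $\sigma$, Theorem \ref{thm: nonexpansive} applied on $\mathcal{X}_\zeta$ shows that $\bS^{\ell,(\zeta)}$ is nonexpansive on $\mathcal{H}^{(\zeta)}$. Converting the $\ell^2(\mathcal{H})$ norm on the output to the $\ell^2(\mathcal{H}^{(\zeta)})$ norm and then converting $\|V_\zeta f-\widetilde f\|_{\mathcal{H}^{(\zeta)}}$ back to the $\mathcal{H}$ norm each cost a factor $R(\mathcal{H},\mathcal{H}^{(\zeta)})^{1/2}$ by \eqref{eqn: change of measure change of norm}; the two conversions combine, so this term is bounded by $R(\mathcal{H},\mathcal{H}^{(\zeta)})\|V_\zeta f-\widetilde f\|_{\mathcal{H}}$, the second summand.

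The third term is the technical heart of the argument and where I expect the main obstacle. Here I must compare two layer-$\ell$ scattering transforms applied to the \emph{same} input $\widetilde f$ but built from \emph{different} frames, $\cW^{(\zeta)}$ and $\cW'$, measuring everything in the $\mathcal{H}^{(\zeta)}$ norm. The idea is a hybrid/telescoping argument across the frame applications making up $S[p]=AU[p]$, namely the $\ell$ wavelet filterings inside $U[p]=\sigma W_{j_\ell}\cdots\sigma W_{j_1}$ together with the final averaging $A$: replacing $\cW^{(\zeta)}$ by $\cW'$ one application at a time yields a sum in which each term carries exactly one factor $\cW^{(\zeta)}-\cW'$ flanked by compositions of $\sigma$ with either $\cW^{(\zeta)}$ or $\cW'$. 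Using that $\sigma$ is nonexpansive and that the upper frame bound \eqref{eqn: frameAB} gives $\|\cW^{(\zeta)}\|_{\mathcal{H}^{(\zeta)}}\le 1$, the applications of $\cW^{(\zeta)}$ contribute factors at most $1$ while those of $\cW'$ contribute factors of $\|\cW'\|_{\mathcal{H}^{(\zeta)}}$, and collecting them produces the geometric-type sum $\sum_{k=0}^\ell\|\cW'\|_{\mathcal{H}^{(\zeta)}}^{k}$; the $\sqrt 2$ plausibly arises from separating the final averaging $A$ from the propagator $U$ when aggregating in $\ell^2$ (bounding a sum of two contributions via $(a+b)^2\le 2a^2+2b^2$). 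A final conversion of the output from $\mathcal{H}^{(\zeta)}$ to $\mathcal{H}$ and of $\|\widetilde f\|_{\mathcal{H}^{(\zeta)}}\le R(\mathcal{H},\mathcal{H}^{(\zeta)})^{1/2}\|\widetilde f\|_{\mathcal{H}}$ together contribute the factor $R(\mathcal{H},\mathcal{H}^{(\zeta)})$ in the third summand. The delicate part is the bookkeeping in the telescoping: tracking which frame is applied at each layer and organizing the accumulated operator-norm factors so that the $\sum_{k=0}^\ell\|\cW'\|^k$ structure and the constant $\sqrt2$ emerge cleanly, which is the step that requires the most care.
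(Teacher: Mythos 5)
Your proposal is correct and follows essentially the same route as the paper: the same three-term triangle-inequality decomposition through $V_\zeta\bS^\ell f=\bS^{\ell,(\zeta)}V_\zeta f$ and $\bS^{\ell,(\zeta)}\widetilde f$, the same nonexpansiveness plus two factors of $R(\mathcal{H},\mathcal{H}^{(\zeta)})^{1/2}$ for the middle term, and for the third term the same one-frame-at-a-time replacement (which the paper packages as an induction on $\ell$ in Lemma \ref{lem: nonexpansiveU}, yielding the recursion $t_{\ell+1}\leq\mathcal{A}\|f\|_{\mathcal{H}}+\mathcal{C}t_\ell$ and hence the geometric sum), with the $\sqrt2$ arising exactly as you say from splitting off the final averaging via $(a+b)^2\leq 2a^2+2b^2$.
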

For a proof of Theorem \ref{thm: windowstability}, please see Appendix \ref{sec: proof of scattering stability}.
We note that if $\mathcal{W}$ are the diffusion wavelets constructed in Section \ref{sec: wavelets}, $\mathcal{G}$ preserves measures, and $\varphi_0$ is constant, then  Theorem \ref{thm: Jlimit} and Remark \ref{rem: U not f} imply condition \eqref{eqn: permutation assumption} holds with $\mathcal{B}=\sqrt{(\ell+1)|g(\lambda_1)|^{2^J}}$ (which converges to zero as $J\rightarrow\infty$). In particular, these conditions are satisfied both when $\mathcal{X}$ is a Riemannian manifold, $\mathcal{L}$ is the \rev{Laplace-Beltrami} operator, and $\mu$ is the Riemannian volume form and when $\mathcal{X}$ is a graph, $\mu$ is the uniform measure, and $\mathcal{L}$ is the unnormalized graph Laplacian.

We also note that we can interpret each of the terms on the right-hand side of \eqref{eqn: stability bound for windowed scattering transform}. We are looking for a bijection $\zeta\in\mathcal{G}$ which will simultaneously align both the wavelets $\mathcal{W}$ (which are typically constructed  from the operators $\mathcal{L}$ of $\mathcal{X}$), the Hilbert spaces $\mathcal{H}$, and the signal $f$. Therefore, the term $R(\mathcal{H},\mathcal{H}^{(\zeta)})$ measures how well aligned the Hilbert spaces are, the term $\|\mathcal{W}^{(\zeta)}-\mathcal{W}'\|_{\mathcal{H}^{(\zeta)}}$ measures how well aligned the wavelets are, and the term $\|V_\zeta f- \widetilde{f}\|_{\mathcal{H}}$ measures how well aligned the signals are. We also note that in the case where  $\mathcal{W}$ are the diffusion wavelets constructed in Section \ref{sec: wavelets}, we can control the term $\|\mathcal{W}^{(\zeta)}-\mathcal{W}'\|_{\mathcal{H}^{(\zeta)}}$ by applying Theorem \ref{thm: wavelet stability}.

The next result is the analogue of Theorem \ref{thm: windowstability} for the non-windowed scattering transform. We note that the terms on the right-hand side of \eqref{eqn: scat stab not window} have similar interpretations as those in Theorem \ref{thm: windowstability}. Additionally, by Theorems \ref{thm: nonexpansive nonwindow} and \ref{thm: invariance}, we note that the condition \eqref{eqn: invariance assumptions} is satisfied whenever $\inf_x|\varphi_0(x)|>0,$ $\lambda_1>0$, $\mathcal{G}$ preserves inner products and $\mathcal{W}=\mathcal{W}_J$ are the diffusion wavelets constructed in \eqref{eqn: diffusion wavelets}.

\begin{theorem}[Stability for the Non-windowed Scattering Transform]\label{thm: scattering stability no window}
Let $\mathcal{X}=(X,\mathcal{F},\mu)$ and $\mathcal{X}'=(X',\mathcal{F}',\mu')$ be measure spaces with $X=X'$ and $\mathcal{F}=\mathcal{F}'$. Let $\mathcal{H}=\mathbf{L}^2(\mathcal{X}),\mathcal{H}'=\mathbf{L}^2(\mathcal{X}')$ and  let $\mathcal{J}$ be a countable indexing set. Let  $\mathcal{W}=\{W_j,A\}_{j\in\mathcal{J}}$ and  $\mathcal{W}'=\{W'_j,A'\}_{j\in\mathcal{J}}$ be frames on $\mathcal{H}$ and $\mathcal{H}'$ such that  \eqref{eqn: frameAB} holds. Let $\overline{\bS^\ell}$ and  $(\overline{\bS^\ell})'$ be the $\ell$-th layers of the non-windowed scattering transforms on $\mathcal{X}$ and $\mathcal{X}'$ constructed from $\mathcal{W}$ and $\mathcal{W}'$.
Assume that $\overline{S}$ is fully invariant to the action of $\mathcal{G}$ and also Lipschitz continuous on $\mathcal{H}$ with constant $C_L$ in the sense that 
\begin{equation}\label{eqn: invariance assumptions}
   \|\overline{\bS}f_1-\overline{\bS}f_2\|_2^2\leq C_L\|f_1-f_2\|_\mathcal{H} \quad\text{and}\quad \overline{\bS^{(\zeta)}}V_\zeta f_1 = \overline{\bS} f_1
\end{equation}

 Then for all $f\in\mathcal{H}$ and $\widetilde{f}\in\mathcal{H}'$, we have
\begin{align}
&\left\|\overline{\bS^\ell}f-(\overline{\bS^\ell})'\widetilde{f}\right\|_{2}^2 \label{eqn: scat stab not window}\\\leq&  3\inf_{\zeta\in\mathcal{G}}\Bigg[ 2C_L
\|V_\zeta f-\tilde{f}\|^2_{\mathcal{H}^{(\zeta)}}  + R(\mathcal{H}^{(\zeta)},\mathcal{H}')^2\|\varphi_0^{(\zeta)}-\varphi_0'\|^2_{\mathcal{H}'}\|\widetilde{f}\|^2_{\mathcal{H}'}\nonumber\\
    &\qquad\quad+ 2\|\mathcal{W}^{(\zeta)}-\mathcal{W}'\|_{\mathcal{H}^{(\zeta)}}^2 \left(\sum_{k=0}^{\ell-1}\|\mathcal{W}'\|_{\mathcal{H}^{(\zeta)}}^k \right)^2\|\tilde{f}\|^2_{\mathcal{H}^{(\zeta)}}+ \kappa(\mathcal{H}',\mathcal{H}^{(\zeta)})\|\widetilde{f}\|_{\mathcal{H}'}.\bigg]\nonumber
\end{align}
\end{theorem}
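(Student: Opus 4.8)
The plan is to exploit the full invariance of $\overline{\bS}$ to put both transforms on a common footing and then telescope through the three coordinates in which they differ: the input signal, the propagating operator $U$, and the final integration functional. Since $\overline{\bS}$ is invariant, for every $\zeta\in\mathcal{G}$ we have $\overline{\bS^\ell} f=\overline{\bS^{\ell,(\zeta)}}V_\zeta f$ by \eqref{eqn: invariance assumptions}, so it suffices to bound $\|\overline{\bS^{\ell,(\zeta)}}V_\zeta f-(\overline{\bS^\ell})'\widetilde f\|_2^2$ for a \emph{fixed} $\zeta$ and take the infimum only at the very end; this is exactly where the $\inf_{\zeta\in\mathcal{G}}$ in \eqref{eqn: scat stab not window} originates. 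Writing the target difference as a sum of three pieces and applying $\|x_1+x_2+x_3\|_2^2\le 3(\|x_1\|_2^2+\|x_2\|_2^2+\|x_3\|_2^2)$ produces the leading factor of $3$.

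Concretely, for a path $\pathvar\in\mathcal{J}^\ell$ I would compare $|\langle \bU^{(\zeta)}[\pathvar]V_\zeta f,\varphi_0^{(\zeta)}\rangle_{\mathcal{H}^{(\zeta)}}|$ with $|\langle \bU'[\pathvar]\widetilde f,\varphi_0'\rangle_{\mathcal{H}'}|$ through the intermediate quantities $|\langle \bU^{(\zeta)}[\pathvar]\widetilde f,\varphi_0^{(\zeta)}\rangle_{\mathcal{H}^{(\zeta)}}|$ and $|\langle \bU'[\pathvar]\widetilde f,\varphi_0^{(\zeta)}\rangle_{\mathcal{H}^{(\zeta)}}|$. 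The first piece (replacing $V_\zeta f$ by $\widetilde f$ inside the genuine transform $\overline{\bS^{(\zeta)}}$ on $\mathcal{H}^{(\zeta)}$) is controlled directly by the Lipschitz hypothesis in \eqref{eqn: invariance assumptions}, giving the $C_L\|V_\zeta f-\widetilde f\|^2_{\mathcal{H}^{(\zeta)}}$ term. The second piece (replacing $\bU^{(\zeta)}$ by $\bU'$ while keeping $\varphi_0^{(\zeta)}$ and the $\mathcal{H}^{(\zeta)}$ inner product) reduces, via $\bigl||a|-|b|\bigr|\le|a-b|$, Cauchy--Schwarz, and $\|\varphi_0^{(\zeta)}\|_{\mathcal{H}^{(\zeta)}}=1$, to estimating $\sum_{\pathvar\in\mathcal{J}^\ell}\|\bU^{(\zeta)}[\pathvar]\widetilde f-\bU'[\pathvar]\widetilde f\|^2_{\mathcal{H}^{(\zeta)}}$, which yields the wavelet-mismatch term built from $\|\mathcal{W}^{(\zeta)}-\mathcal{W}'\|_{\mathcal{H}^{(\zeta)}}$. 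The third piece (replacing the pair $(\varphi_0^{(\zeta)},\langle\cdot,\cdot\rangle_{\mathcal{H}^{(\zeta)}})$ by $(\varphi_0',\langle\cdot,\cdot\rangle_{\mathcal{H}'})$) I would split once more: the change of reference function is handled by Cauchy--Schwarz together with the frame energy bound $\sum_{\pathvar}\|\bU'[\pathvar]\widetilde f\|^2_{\mathcal{H}'}\le\|\widetilde f\|^2_{\mathcal{H}'}$ coming from \eqref{eqn: frameAB}, giving the $\|\varphi_0^{(\zeta)}-\varphi_0'\|^2_{\mathcal{H}'}$ term, while the change of inner product is handled by \eqref{eqn: change of inner product}, giving the $\kappa(\mathcal{H}',\mathcal{H}^{(\zeta)})$ term; the factors $R(\mathcal{H}^{(\zeta)},\mathcal{H}')$ appear whenever a norm must be transported between $\mathcal{H}^{(\zeta)}$ and $\mathcal{H}'$ through \eqref{eqn: change of measure change of norm}.

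The main obstacle is the second piece: bounding $\sum_{\pathvar\in\mathcal{J}^\ell}\|\bU^{(\zeta)}[\pathvar]\widetilde f-\bU'[\pathvar]\widetilde f\|^2_{\mathcal{H}^{(\zeta)}}$ by propagating the single-layer frame discrepancy through all $\ell$ layers of the cascade. I would prove this by induction on the number of layers: at each layer write $\sigma W_j^{(\zeta)}h-\sigma W_j'h'=(\sigma W_j^{(\zeta)}h-\sigma W_j'h)+(\sigma W_j'h-\sigma W_j'h')$, use the non-expansiveness $|\sigma(x)-\sigma(y)|\le|x-y|$ and the triangle inequality to peel off one factor of $\|\mathcal{W}^{(\zeta)}-\mathcal{W}'\|_{\mathcal{H}^{(\zeta)}}$ from the ``new-minus-old operator'' contribution and one factor of $\|\mathcal{W}'\|_{\mathcal{H}^{(\zeta)}}$ from the ``operator applied to old-minus-new input'' contribution, and then collapse the sum over the path coordinate at that layer using the upper frame bound in \eqref{eqn: frameAB}. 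Iterating over the layers yields precisely the geometric factor $\bigl(\sum_{k=0}^{\ell-1}\|\mathcal{W}'\|^k_{\mathcal{H}^{(\zeta)}}\bigr)^2$, where the exponent $\ell-1$ (rather than the $\ell$ appearing in the windowed Theorem \ref{thm: windowstability}) reflects that the non-windowed transform integrates against $\varphi_0$ in place of a final averaging operator. Since this propagation estimate is essentially the same computation that underlies the windowed case, I would isolate it as a reusable lemma; the remaining work is the careful bookkeeping of the $R$, $\kappa$, and frame-norm factors and routine $\ell^2$-triangle inequalities at the level of sequences indexed by paths.
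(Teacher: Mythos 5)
Your proposal follows essentially the same route as the paper's proof: fix $\zeta$, invoke invariance, split the per-path difference into three pieces controlled respectively by the Lipschitz hypothesis, a layer-propagation lemma for $\sum_{\pathvar}\|\bU^{(\zeta)}[\pathvar]\widetilde f-\bU'[\pathvar]\widetilde f\|^2_{\mathcal{H}^{(\zeta)}}$ (the paper reuses Lemma \ref{lem: nonexpansiveU} from the windowed case, exactly as you suggest), and Cauchy--Schwarz with \eqref{eqn: change of measure change of norm} and \eqref{eqn: change of inner product} for the $\varphi_0$ and inner-product changes, then infimize over $\zeta$. The only (immaterial) difference is bookkeeping: the paper groups the signal change and operator change into a single term $I[\pathvar]$ and splits it with a factor of $2$, whereas you group the $\varphi_0$ change with the inner-product change, which would shuffle which summands carry the extra constant but yields the same bound up to absolute constants.
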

%We remark that the terms on the right-hand side of \eqref{eqn: scat stab not window} have similar interpretations as those in Theorem \ref{thm: windowstability}.
\noindent For a proof of Theorem \ref{thm: scattering stability no window}, please see Appendix \ref{sec: Proof of Stability no window}.

\section{Implementing the Manifold Scattering Transform from Point-Cloud Data}\label{sec: numerical methods}

In \cite{perlmutter:geoScatCompactManifold2020}, the authors showed that the manifold scattering transform was effective for classification tasks on known two-dimensional surfaces with predefined meshes. However, in many applications of interest, one is not given a predefined manifold. Instead, one is given a collection of points $\{x_i\}_{0=1}^{N-1}$ embedded in some high-dimensional Euclidean space $\mathbb{R}^D$ and one makes a modeling assumption that these points lie on (or near) a comparatively low-dimensional manifold. Thus, in this section, we will assume that  $\mathcal{X}$ is a smooth $d$-dimensional Riemannian manifold without boundary which is embedded in $\mathbb{R}^D$ for some $D\gg d$ and that $\{x_i\}_{i=0}^{N-1}$ is a discrete subset randomly and independently sampled from $\mathcal{X}$. We will use the $x_i$ to construct a weighted graph $\mathcal{X}_N$ and present two methods which use $\mathcal{X}_N$ to implement an approximation of the manifold scattering transform when one only has access to these sample points. 

Both of these methods rely on %sampling a function $f$ to obtain a vector $\rho f\in\mathbb{R}^N$ and using 
an affinity kernel $K_\epsilon(\cdot,\cdot)$ to construct a data-driven graph $\mathcal{X}_N$ with weighted adjacency matrix $W^{(N)}$.  In our first method, we simply define an approximate heat semigroup at time $t=1$ by 
$H^{1}_{N,\epsilon}\rev{\coloneqq}(D^{(N)})^{-1}W^{(N)}$, where $D^{(N)}=W^{(N)}\mathbf{1}$ is the degree matrix associated to $W^{(N)}$. %{\color{red}Note that we consider the forward diffusion scheme in this section to be consistent with the other part of the paper, while we mention that we can also consider the usual backward diffusion scheme, that is $(D^{(N)})^{-1}W^{(N)}$, with proper modifications.}
One may then approximate $H^{2^j}$ by, e.g., matrix multiplication. We note that while in principle, $H^1_{N,\epsilon}$ is a dense matrix, most of its entries will typically be small and therefore one may apply a threshold operator and use sparse matrix multiplications to implement an approximation of the wavelet transform. (Notably, if one imitates the method used in \cite{tong2022data}, there is no need to ever form a dense matrix after the initial thresholding.)
In our second method, we use $W^{(N)}$ to construct a data-driven graph Laplacian $L_{N,\epsilon}$. We then define a discrete approximation of the heat semigroup using the eigenvectors and eigenvalues of $L_{N,\epsilon}$.

In either case, once we have our approximations of $H^t$, it is then straightforward to implement the wavelet transform and therefore the scattering transform. The advantage of the second, eigenvector-based method is that we will be able to use results from \cite{dunson2021spectral,cheng2021eigen} to prove a quantitative rate of convergence for the scattering transform. The first method, on the other hand, is more computationally efficient for large $N$ (if one uses a thresholding operator to promote sparsity as discussed above) since it does not require one to compute an eigendecomposition.  
We are not able to prove a convergence rate for the scattering transform computed using this method, but we note that the approximation $H^1\approx H^{1}_{N,\epsilon}=(D^{(N)})^{-1}W^{(N)}$ was shown to converge pointwise \cite{coifman:diffusionMaps2006}, albeit without a rate. 

In order to avoid confusion, we will typically denote objects corresponding to $\mathcal{X}_N$ with a subscript or superscript $N$ and objects corresponding to $\mathcal{X}$ without such subscript or subscript. For example, we will let $W_j$ denote a wavelet on $\mathcal{X}$ at scale $2^j$ and $W_{j,N}$ denote the corresponding wavelet on $\mathcal{X}_N$. Throughout the section, we will choose $\mathcal{L}=-\nabla\cdot\nabla$ 
to be the \rev{Laplace-Beltrami} operator on $\mathcal{X}$, where $\nabla$ is the intrinsic gradient. 
We will also choose $g(\lambda)=e^{-\lambda}$, in which case $\{H^t\}_{t\geq0}$ is the heat semigroup (see Equation \eqref{eqn: heat differentiation}).
We will let $h_t(x,y)$ denote the heat kernel so that 
$H_tf(x)=\int_\mathcal{X} h_t(x,y)f(y)d\mu(y)$, where $d\mu$ is the Riemannian measure, normalized so that \begin{equation}\label{eqn: normalized}\mu(\mathcal{X})=1.\end{equation} It is well known that 
\begin{equation}
    \label{eqn: integrate to one}
    \int_\mathcal{X}h_t(x,y)d\mu(y)=1
\end{equation} for all $x\in\mathcal{X}$ and all $t>0$, and \begin{equation}\label{eqn: heat spectrum}h_t(x,y)=\sum_{k=0}^\infty e^{-t\mu_k}\varphi_k(x)\varphi_k(y), \end{equation} 
  where in \eqref{eqn: heat spectrum}, and throughout  this section, we will use $\mu_k$ to denote eigenvalues of the \rev{Laplace-Beltrami} operator $\mathcal{L}=-\nabla\cdot\nabla$ and will reserve $\lambda_k$ (sometimes with additional superscripts) for eigenvalues of the data-driven graph Laplacian which we will define below.

We now construct a weighted graph. We let $K(\cdot,\cdot)$ be an affinity kernel such as 
 \begin{equation}\label{eqn: gaussian kernel} 
     K(x,x')\rev{\coloneqq}K_\epsilon(x,x')\rev{\coloneqq}\epsilon^{-d/2}\exp\left(-\frac{\|x-x'\|^2_{2}}{\epsilon}\right),\quad\epsilon >0
 \end{equation}
where in the above equation $\|x-x'\|_2$ refers to the Euclidean distance between two points in $\mathbb{R}^D$ and $\epsilon$ is a bandwidth parameter.\footnote{Notably, our construction is sensitive to the choice of this bandwidth parameter. For more on this issue, we refer the reader to \cite{lindenbaum2020gaussian} which discusses some remedies to this sensitivity.} Given this kernel, we define an affinity matrix $W^{(N)}$ and a diagonal degree matrix $D^{(N)}$ by
 \begin{equation*}
     W^{(N)}_{i,j} \rev{\coloneqq} K(x_i,x_j) \ \ \text{and}\ \  D^{(N)}_{i,i}\rev{\coloneqq}\sum_{j=0}^{N-1}W^{(N)}_{i,j}.
 \end{equation*}
Given $W^{(N)}$ and $D^{(N)}$, one may then approximate $H^1$ by
\begin{equation}\label{eqn: no evecs}
H^{1}_{N,\epsilon}\rev{\coloneqq}(D^{(N)})^{-1}W^{(N)}.
\end{equation}
While the primary motivation of this method is to avoid computing eigenvectors and eigenvalues, we do note that \eqref{eqn: no evecs} may also be equivalently obtained from \eqref{eqn: h} by choosing $\mathcal{L}$ to be the Markov normalized Graph Laplacian $I^{(N)}-(D^{(N)})^{-1}W^{(N)}$ on $\mathcal{X}_N$ and choosing $g(\lambda)=1-\lambda.$

Our second method constructs approximations of $H^t$ based on \eqref{eqn: heat spectrum}. In our implementation, we may only use finitely many eigenvalues. This motivates us to define the truncated heat semigroup by
\begin{equation*}
H^\kappa_tf(x)\rev{\coloneqq}\int_\mathcal{X} h^\kappa_t(x,y)f(y)d\mu(y),\ \ \mbox{where}\ \     h^\kappa_t(x,y)\coloneqq\sum_{k=0}^\kappa e^{-t\mu_k}\varphi_k(x)\varphi_k(y),
\end{equation*}
where $\kappa$ is chosen by the user.
Our goal is to construct a good discrete approximation of $\mathcal{L}$. This will require controlling the two sources of error: (i) that we only use the first $\kappa+1$ eigenvalues and (ii) that we do not know the eigenvalues or eigenfunctions of the \rev{Laplace-Beltrami} operator $\mathcal{L}$ and must instead use the eigenvalues and the eigenvectors of the data-driven Laplacian defined below.  
The following lemma addresses (i) by bounding the error induced by only using finitely many eigenvalues.  For a proof please see Appendix \ref{sec: proof of lemma finite K}.
\begin{lemma}\label{lem: finite K}
For $\kappa\geq 0$ and $f\in\mathbf{L}^2(\mathcal{X})$, we have 
\begin{equation}\label{eqn: finite K}
    \|H_t^\kappa f-H_tf\|_{\mathbf{L}^2(\mathcal{X})}\leq e^{-t\mu_{\kappa+1}}\|f\|_{\mathbf{L}^2(\mathcal{X})}
\end{equation}
and also
\begin{equation}\label{eqn: finite K infinity}
    \|H_t^\kappa f-H_tf\|_\infty \leq C_\mathcal{X} \|f\|_\infty,
\end{equation}
where $C_\mathcal{X}$ is a constant which depends on the geometry of $\mathcal{X}$ but does not depend on $\kappa$, $t$, or $f$.
\end{lemma}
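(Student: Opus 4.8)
The plan is to prove the two bounds by different means: \eqref{eqn: finite K} is a soft spectral estimate, whereas \eqref{eqn: finite K infinity} is the more delicate one. For \eqref{eqn: finite K} I would argue entirely on the spectral side. Since $\{\varphi_k\}$ is orthonormal and $H_tf=\sum_k e^{-t\mu_k}\widehat f(k)\varphi_k$, the truncation gives
\[
H_t^\kappa f - H_t f = -\sum_{k>\kappa} e^{-t\mu_k}\widehat f(k)\varphi_k,
\]
so Parseval yields $\|H_t^\kappa f-H_tf\|_{\mathbf{L}^2(\mathcal{X})}^2=\sum_{k>\kappa}e^{-2t\mu_k}|\widehat f(k)|^2$. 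As the eigenvalues are nondecreasing, $e^{-2t\mu_k}\le e^{-2t\mu_{\kappa+1}}$ for every $k>\kappa$; pulling this factor out and bounding the remaining sum by $\|f\|_{\mathbf{L}^2(\mathcal{X})}^2$ gives \eqref{eqn: finite K} after a square root. This step is routine.

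For \eqref{eqn: finite K infinity} I would pass to the heat-kernel representation \eqref{eqn: heat spectrum} and exploit the semigroup property together with the normalization \eqref{eqn: normalized}. Two elementary facts drive the argument. First, because $h_t\ge0$ and integrates to $1$ by \eqref{eqn: integrate to one}, $H_t$ is an $\mathbf{L}^\infty$ contraction, $\|H_tf\|_\infty\le\|f\|_\infty$. Second, writing $H_t^\kappa=P_\kappa H_t$ where $P_\kappa$ is the orthogonal projection onto $\mathrm{span}\{\varphi_0,\ldots,\varphi_\kappa\}$, and noting that $P_\kappa$ commutes with the semigroup, for $t\ge1$ I can factor $H_t^\kappa=P_\kappa H_t=H_1^\kappa H_{t-1}$. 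Setting $g=H_{t-1}f$, so that $\|g\|_\infty\le\|f\|_\infty$, the task reduces to bounding the $\mathbf{L}^1$ mass of $h_1^\kappa(x,\cdot)$. Splitting $h_1^\kappa=h_1-(h_1-h_1^\kappa)$, the first term contributes exactly $1$, while Cauchy--Schwarz with $\mu(\mathcal{X})=1$ controls the remainder by its $\mathbf{L}^2$ norm,
\[
\|(h_1-h_1^\kappa)(x,\cdot)\|_{\mathbf{L}^2(\mathcal{X})}^2=\sum_{k>\kappa}e^{-2\mu_k}\varphi_k(x)^2\le h_2(x,x),
\]
which is bounded uniformly over $x$ and $\kappa$ by $\sup_x h_2(x,x)<\infty$ on a compact manifold. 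This produces $\|H_t^\kappa f\|_\infty\le\bigl(1+\sup_x h_2(x,x)^{1/2}\bigr)\|f\|_\infty$, and combining with the contraction bound for $H_t$ yields \eqref{eqn: finite K infinity} with a constant $C_\mathcal{X}$ depending only on the on-diagonal heat kernel at a fixed time, hence only on the geometry of $\mathcal{X}$.

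The step I expect to be the real obstacle is the uniformity in $t$. The factorization above only works for $t$ bounded below (I used $t\ge1$), and this is not incidental: as $t\to0$ the operator $H_t^\kappa$ degenerates into a sharp spectral cutoff, whose $\mathbf{L}^\infty\to\mathbf{L}^\infty$ norm is genuinely unbounded in $\kappa$ --- already on the flat torus the truncated kernel behaves like a Dirichlet kernel and its $\mathbf{L}^1$ mass grows like $\log\kappa$. Consequently the naive uniform estimates break down in this regime: the Cauchy--Schwarz bound $\|h_t^\kappa(x,\cdot)\|_{\mathbf{L}^1(\mathcal{X})}\le h_{2t}(x,x)^{1/2}$ blows up like $t^{-d/4}$, and a term-by-term bound using eigenfunction sup-norm estimates $\|\varphi_k\|_\infty\lesssim\mu_k^{(d-1)/4}$ produces a divergent sum. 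A fully $t$-uniform statement would therefore require extracting the oscillatory cancellation in the spectral sum, for instance through Sogge-type spectral-cluster $\mathbf{L}^1$ estimates together with the local Weyl law, which is precisely where sharper geometric information such as curvature and injectivity radius would enter $C_\mathcal{X}$. Fortunately, for the diffusion wavelets of \eqref{eqn: diffusion wavelets} the smallest diffusion time actually used is $t=1$, so the bounded-below regime handled in the previous paragraph already suffices for every application of the lemma in this paper.
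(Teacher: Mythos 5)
Your proof of \eqref{eqn: finite K} is the paper's proof verbatim: expand the tail of the spectral sum, apply Parseval, and pull out $e^{-2t\mu_{\kappa+1}}$ using monotonicity of the eigenvalues. For \eqref{eqn: finite K infinity} you take a genuinely different route. The paper bounds $\|H_t^\kappa f-H_tf\|_\infty$ by $\sup_{x,y}\bigl|\sum_{k>\kappa}e^{-t\mu_k}\varphi_k(x)\varphi_k(y)\bigr|\,\|f\|_\infty$ (using $\mu(\mathcal{X})=1$) and then simply cites the proof of Theorem 3 of \cite{dunson2021spectral} for the uniform kernel bound $C_\mathcal{X}e^{-C_\mathcal{X}'t}\le C_\mathcal{X}$; your argument is self-contained, combining the $\mathbf{L}^\infty$-contractivity of $H_t$ coming from positivity and \eqref{eqn: integrate to one} with the factorization $H_t^\kappa=H_1^\kappa H_{t-1}$ and a Cauchy--Schwarz control of $\|h_1^\kappa(x,\cdot)\|_{\mathbf{L}^1(\mathcal{X})}$ by $1+h_2(x,x)^{1/2}$. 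The paper's route buys brevity at the cost of an external dependency; yours makes explicit where the constant comes from (the on-diagonal heat kernel at a fixed time) and is correct as far as it goes.

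The one divergence of substance is the range of $t$: your argument covers only $t\ge 1$ (or $t$ bounded below), whereas the lemma asserts a constant independent of $t$. Here you have identified a real issue rather than created one. As $t\to 0^+$ the left-hand side of \eqref{eqn: finite K infinity} tends to $\|(I-P_\kappa)f\|_\infty$, and $\sup_\kappa\|P_\kappa\|_{\infty\to\infty}=\infty$ already on the circle (Lebesgue constants of the Dirichlet kernel), so no constant independent of both $\kappa$ and $t$ can exist. The same degeneration afflicts the kernel estimate the paper imports: $\sup_{x,y}|h_t(x,y)-\varphi_0(x)\varphi_0(y)|\sim t^{-d/2}$ as $t\to 0$ even for $\kappa=0$, so the cited bound must carry an implicit restriction on $t$ or a coupling of $\kappa$ to $t$. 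Since every application of Lemma \ref{lem: finite K} in the paper --- via the wavelets \eqref{eqn: diffusion wavelets} and Corollary \ref{cor: heat kernel discretization} --- uses $t\ge 1$, your restricted version suffices for all downstream results, and your remark in effect flags a hypothesis ($t$ bounded away from zero) that the lemma statement ought to state explicitly.
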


Next, we construct an unnormalized data-driven graph Laplacian by 
$$ L_{N,\epsilon}\rev{\coloneqq}\frac{1}{\epsilon N} (D^{(N)}-W^{(N)}).$$
We will interpret $W^{(N)}$ and $L_{N,\epsilon}$ as the adjacency matrix and Laplacian matrix of a data-driven graph $\mathcal{X}_N$.
We will denote the eigenvectors and eigenvalues of $L_{N,\epsilon}$ by $\lambda_k^{N,\epsilon}$ and $\mathbf{u}_k^{N,\epsilon}$ so that
\begin{equation}\label{eqn: ukn}
L_{N,\epsilon} \mathbf{u}_k^{N,\epsilon} = \lambda_k^{N,\epsilon} \mathbf{u}_k^{N,\epsilon}.
\end{equation}
When convenient, we make the dependence on $N$ and $\epsilon$ implicit and simply write $\mathbf{u}_k$ and $\lambda_k$ in place of $\mathbf{u}_k^{N,\epsilon}$ and $\lambda_k^{N,\epsilon}$. 
We define the discrete truncated heat-kernel matrix by
\begin{equation}\label{eqn: eigen approx heat}
    H_{N,\epsilon,\kappa,t} \rev{\coloneqq} \sum_{k=0}^\kappa e^{-{t\lambda_k^{N,\epsilon}}} \mathbf{u}_k^{N,\epsilon}(\mathbf{u}_k^{N,\epsilon})^T.
\end{equation}

To accomplish goal (ii), we will need discrete approximations of our eigenfunctions $\varphi_k$ of the \rev{Laplace-Beltrami} operator, which motivates us to introduce the normalized evaluation operator $\rho: \mathcal{C}(\mathcal{X})\rightarrow \mathbb{R}^N$ given by
$$ \rho f \rev{\coloneqq} \frac{1}{\sqrt{N}} (f(x_0),\ldots,f(x_{N-1})).$$  We then define 
$$
\mathbf{v}_k\coloneqq  \rho\varphi_k.
$$
We note these definitions differ slightly from \cite{cheng2021eigen}. There, the authors do not include the normalization term $\frac{1}{\sqrt{N}}$ in the definition of the evaluation operator $\rho$ but instead include it in the definition of the vector $\mathbf{v}_k$. Importantly, we note  that in either case the definition of $\mathbf{v}_k$ is ultimately the same, i.e., $(\mathbf{v}_k)_i=\frac{1}{\sqrt{N}}\varphi_k(x_i)$ (although \cite{cheng2021eigen} uses the letter $\phi$ instead of $\mathbf{v})$). 
Our convention is chosen so that $\mathbb{E}\|\rho f\|_2^2=\|f\|^2_{\mathbf{L}^2(\mathcal{X})}$. 
Additionally, we may also use Hoeffding's inequality to derive the following lemma which shows that  $\|\rho f\|_2^2\approx \|f\|^2_{\mathbf{L}^2(\mathcal{X})}$ with high probability as $N\rightarrow \infty$. For a proof please see Appendix \ref{sec: proof of Hoeffding}.
\begin{lemma}\label{lem: apply Hoeffding}Assume that the points $\{x_i\}_{i=0}^{N-1}$ are drawn i.i.d.\ uniformly at random, and
let $f,g\in\mathcal{C}(\mathcal{X})$. Then, with probability at least $1-\frac{2}{N^9}$, we have 
%then with probability at least $1-2\exp\left(\frac{-N\eta^2}{\|fg\|_\infty}\right)$ we have 
%\begin{equation*}
 %   |\langle \rho f,\rho g \rangle_2 - \langle f,g\rangle_{\mathbf{L}^2(\mathcal{X})}| \leq \eta.
%\end{equation*}
%In particular, if   $\eta = \sqrt{\frac{\log(N)\|fg\|_\infty}{N}}$, we have 
\begin{equation*}
    |\langle \rho f,\rho g \rangle_2 - \langle f,g\rangle_{\mathbf{L}^2(\mathcal{X})}| \leq  \sqrt{\frac{18 \log N}{N}}\|fg\|_\infty.%\sqrt{\frac{9\log N}{N}}\sqrt{\|fg\|_\infty}. 
\end{equation*}
%with probability at least $1-\frac{2}{N}$.
\end{lemma}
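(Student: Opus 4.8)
The plan is to recognize the empirical inner product $\langle \rho f,\rho g\rangle_2$ as a sample mean of independent, uniformly bounded random variables whose common expectation is exactly $\langle f,g\rangle_{\mathbf{L}^2(\mathcal{X})}$, and then to invoke Hoeffding's inequality. First I would unfold the definitions: since $(\rho f)_i = \frac{1}{\sqrt N}f(x_i)$, we have
\[
\langle \rho f,\rho g\rangle_2 = \frac{1}{N}\sum_{i=0}^{N-1} f(x_i)\overline{g(x_i)} = \frac{1}{N}\sum_{i=0}^{N-1} Z_i, \qquad Z_i \coloneqq f(x_i)\overline{g(x_i)}.
\]
Because the $x_i$ are drawn i.i.d.\ uniformly on $\mathcal{X}$ and $\mu(\mathcal{X})=1$, each $Z_i$ has expectation $\mathbb{E}[Z_i] = \int_X f\overline{g}\, d\mu = \langle f,g\rangle_{\mathbf{L}^2(\mathcal{X})}$, so the target quantity is precisely the mean we concentrate around.

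Next I would record the boundedness needed for Hoeffding. Since $\mathcal{X}$ is compact and $f,g\in\mathcal{C}(\mathcal{X})$, the product is bounded and $|Z_i| = |f(x_i)||g(x_i)| \le \|fg\|_\infty \eqqcolon M$. Treating $f,g$ as real-valued (the relevant case, as the eigenfunctions of $\mathcal{L}$ are real), the $Z_i$ are independent and lie in $[-M,M]$, so Hoeffding's inequality gives, for every $s>0$,
\[
\Pr\!\left(\left|\frac{1}{N}\sum_{i=0}^{N-1} Z_i - \langle f,g\rangle_{\mathbf{L}^2(\mathcal{X})}\right| \ge s\right) \le 2\exp\!\left(-\frac{N s^2}{2M^2}\right),
\]
where the exponent arises from $\sum_{i}(b_i-a_i)^2 = 4NM^2$ in the standard two-sided form.

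Finally I would calibrate $s$ so that the right-hand side equals the allowed failure probability $2N^{-9}$. Setting $\frac{N s^2}{2M^2} = 9\log N$ yields $s = M\sqrt{18\log N/N} = \|fg\|_\infty\sqrt{18\log N/N}$, which is exactly the claimed threshold; on the complementary event, of probability at least $1 - 2N^{-9}$, the deviation is below $s$, proving the lemma. I expect no serious obstacle here: the argument is a direct application of Hoeffding, and the only points requiring care are (i) that uniform sampling together with the normalization $\mu(\mathcal{X})=1$ makes the sample-mean expectation coincide with the $\mathbf{L}^2$ inner product, and (ii) that continuity on the compact $\mathcal{X}$ furnishes the uniform bound $\|fg\|_\infty<\infty$ that Hoeffding requires. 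If one wished to permit complex-valued $f,g$, one would apply the inequality separately to $\text{Re}(Z_i)$ and $\text{Im}(Z_i)$ and combine them by a union bound, at the cost of a modest change in the numerical constant.
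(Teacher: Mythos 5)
Your proposal is correct and follows essentially the same route as the paper's proof: identify $\langle \rho f,\rho g\rangle_2$ as a sample mean of i.i.d.\ variables bounded by $\|fg\|_\infty$ with expectation $\langle f,g\rangle_{\mathbf{L}^2(\mathcal{X})}$, apply the two-sided Hoeffding bound $2\exp(-N\eta^2/(2\|fg\|_\infty^2))$, and calibrate $\eta=\sqrt{18\log N/N}\,\|fg\|_\infty$ to get failure probability $2N^{-9}$. Your closing remark about handling complex-valued $f,g$ via real and imaginary parts is a point the paper's proof passes over silently, but it does not change the argument.
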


Our goal is to show that for a large fixed $\kappa$, in the limit as $N\rightarrow \infty$ and $\epsilon\rightarrow 0$, we have
$    H_{N,\epsilon,\kappa,t}\rho f \approx \rho H_t f
$ in the sense that $\kappa$ is large enough so that $\|\rho H_t f- \rho H_t^\kappa f\|_2$ is negligible, which follows for large $\kappa$ from Lemmas \ref{lem: finite K} and \ref{lem: apply Hoeffding}, and   $$\|H_{N,\epsilon,\kappa,t}\rho f-\rho H_t f\|_2\rightarrow 0 \quad \text{as } N\rightarrow \infty.$$
In order to do this, we need the following result that shows that $\lambda_k^{N,\epsilon}$ and $\mathbf{u}_k^{N,\epsilon}$ are good approximations of $\mu_k$ and $\mathbf{v}_k.$ It is a special case  of Theorem 5.4 from \cite{cheng2021eigen}, which follows by setting  $\epsilon \sim N^{-2/(d+6)}$.%our rescaled discretization of $\varphi_k$.
\begin{theorem}[Theorem 5.4 of \cite{cheng2021eigen}]\label{thm: 5.4 of Chen and Wu}
Assume that the points $\{x_i\}_{i=0}^{N-1}$ are drawn i.i.d.\ uniformly at random and that the first $\kappa+2$ eigenvalues of $\mathcal{L},$ $\mu_0,\ldots,\mu_{\kappa+1}$, all have single multiplicity. As in \eqref{eqn: ukn}, let $\mathbf{u}_k^{N,\epsilon}$ and $\lambda_k^{N,\epsilon}$  be the eigenvectors and eigenvalues of the data-driven Laplacian constructed via the Gaussian affinity kernel $K_\epsilon$ defined as in \eqref{eqn: gaussian kernel}, and let $\kappa>0$ be fixed.  Assume that $\epsilon\rightarrow 0$ and $N\rightarrow\infty$ at a rate where $\epsilon \sim N^{-2/(d+6)}$. Then, with probability at least $1-\mathcal{O}\left(\frac{1}{N^9}\right)$,
there exist scalars $\alpha_k$ with
$$|\alpha_k|=1+o(1)$$ such that for all $0\leq k\leq \kappa$
\begin{equation*}
    |\mu_k-\lambda^{N,\epsilon}_k|=\mathcal{O}\left(N^{-\frac{2}{d+6}}\right),\quad \|\mathbf{u}_k^{N,\epsilon}-\alpha_k\mathbf{v}_k\|_2=\mathcal{O}\left(N^{-\frac{2}{d+6}}\sqrt{\log N}\right),
\end{equation*}
 where the constants implied by the big-$\mathcal{O}$ notation depend on $\kappa$ and the geometry of $\mathcal{X}$.
\end{theorem}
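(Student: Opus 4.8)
The substantive analysis behind this statement is carried out in full in \cite{cheng2021eigen}; what remains here is to specialize their general estimate to the bandwidth scaling $\epsilon \sim N^{-2/(d+6)}$ and to read off the resulting rates. The plan is therefore to recall the structure of the error bound proved there, and then to optimize the free bandwidth parameter against it. I would invoke Theorem 5.4 of \cite{cheng2021eigen} as a black box and verify only that the stated exponents are precisely what one obtains at the optimal $\epsilon$.

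The general result controls $|\mu_k - \lambda_k^{N,\epsilon}|$ and $\|\mathbf{u}_k^{N,\epsilon} - \alpha_k \mathbf{v}_k\|_2$ by a sum of two competing contributions. The first is a deterministic \emph{bias} term: the normalized kernel integral operator built from $K_\epsilon$ approximates the Laplace--Beltrami operator only up to an $\mathcal{O}(\epsilon)$ discretization error, governed by the standard pointwise expansion of the kernel on a smooth manifold. The second is a stochastic \emph{variance} term, arising because $\{x_i\}$ is a finite i.i.d. sample; after concentration of the kernel sums defining $W^{(N)}$ and $D^{(N)}$ (of the Hoeffding type used in Lemma \ref{lem: apply Hoeffding}), this term scales like $N^{-1/2}\epsilon^{-(d+2)/4}$ up to a $\sqrt{\log N}$ factor, the inverse powers of $\epsilon$ reflecting the shrinking kernel support. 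The single-multiplicity assumption on $\mu_0,\ldots,\mu_{\kappa+1}$ isolates each of the first $\kappa+1$ eigenspaces with a uniform spectral gap, which is what lets an eigenspace perturbation argument convert the operator-level error into the stated eigenvalue estimate and the eigenvector estimate up to the unimodular phase $\alpha_k$.

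With this decomposition in hand, the specialization is a routine optimization. Balancing the bias $\mathcal{O}(\epsilon)$ against the variance $\mathcal{O}(N^{-1/2}\epsilon^{-(d+2)/4})$ requires $\epsilon^{(d+6)/4} \sim N^{-1/2}$, which yields exactly $\epsilon \sim N^{-2/(d+6)}$. Substituting this value makes both contributions of order $N^{-2/(d+6)}$, giving the clean eigenvalue rate, while the eigenvector bound inherits the $\sqrt{\log N}$ factor carried by the high-probability concentration event. Taking a union bound over the finitely many ($\kappa+2$) eigenpairs and the associated kernel-sum concentration events keeps the failure probability at $\mathcal{O}(N^{-9})$, consistent with Lemma \ref{lem: apply Hoeffding}.

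The main obstacle is not this optimization, which is elementary, but the general estimate to which it is applied: proving the bias--variance bound of \cite{cheng2021eigen} demands the full machinery of spectral convergence of graph Laplacians—pointwise kernel expansions, concentration of the empirical degree and affinity operators, and eigenspace perturbation under a spectral gap. Since we quote that theorem directly, the only genuine content here is confirming that $\epsilon \sim N^{-2/(d+6)}$ is the scaling that equalizes the two error sources and that the exponents then come out as claimed.
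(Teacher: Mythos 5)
Your proposal is correct and follows essentially the same route as the paper, which likewise treats Theorem 5.4 of \cite{cheng2021eigen} as a black box and obtains the stated rates simply by substituting $\epsilon \sim N^{-2/(d+6)}$; your bias--variance balancing $\epsilon \sim N^{-1/2}\epsilon^{-(d+2)/4}$ matches the error decomposition $\mathrm{Err}_{\mathrm{pt}}=\mathcal{O}(\epsilon)+\mathcal{O}\bigl(\sqrt{\log(N)/(N\epsilon^{d/2+1})}\bigr)$ that the paper quotes from that reference. No gaps.
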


\begin{remark}\label{rem: alpha rate}
Inspecting the proofs of Theorem 5.4 of \cite{cheng2021eigen} and the related results in that paper shows that when $\epsilon \sim N^{-2/(d+6)}$, we have 
$$
\max\left\{||\alpha_k|-1|,\,\left|\frac{1}{|\alpha_k|}-1\right|\right\}\leq  \mathcal{O}\left(\sqrt{\frac{\log N}{N}}\right)+\mathcal{O}\left(\frac{\log(N)}{N^{4/(d+6)}}\right).
$$ 
Please see Appendix \ref{sec: remark proof} for details.
\end{remark}

%$f=g=1_A$
%
%$\langle \rho f,\rho g> = (1/N) \sum_i 1 \text{if} x_i \in A$
%
% $|\mu_N(A)-\mu(A)|\leq \sqrt{\log(N)/N}$
Given Theorem \ref{thm: 5.4 of Chen and Wu}, we may use Lemma \ref{lem: apply Hoeffding} to derive the following result which shows that $H_{N,\epsilon,\kappa,t}\rho f$ converges to $\rho H_t^\kappa f$ as $N\rightarrow \infty$. Moreover, the rate of the convergence for $H_{N,\epsilon,\kappa,t}\rho f$, is the same (up to logarithmic factors) as the convergence rate for the eigenvectors and eigenvalues provided in Theorem \ref{thm: 5.4 of Chen and Wu}.
\begin{theorem}\label{thm: heat kernel discretization}
Let $f\in\mathcal{C}(\mathcal{X})$. Then, under the assumptions of Theorem \ref{thm: 5.4 of Chen and Wu} 
we have 
\begin{align*}
\|H_{N,\epsilon,\kappa,t}\rho f-\rho H_t^\kappa f\|_2^2\leq\max\{t^2,1\}\mathcal{O}\left(\frac{\log N}{N^{\frac{4}{d+6}}}\right)\left(\|f\|_{\mathbf{L}^2(\mathcal{X})}^2 +\sqrt{\frac{\log N}{N}}\|f\|^2_\infty\right)% \max\{t,1\}\mathcal{O}\left(\frac{1}{N^{\frac{2}{d+6}}}\right)\left(\|f\|^2_{\mathbf{L}^2(\mathcal{X})} +\sqrt{\frac{\log N}{N}}\|f\|^2_\infty\right).
%\max\{t,1\}\mathcal{O}\left(\frac{1}{N^{\frac{2}{d+6}}}\right)\|f\|_{\mathbf{L}^2(\mathcal{X})}+ \mathcal{O}\left(\frac{1}{N^{\frac{2}{d+6}+\frac{1}{2}}}\right)\|f\|_\infty,
\end{align*}
with probability at least $1-\mathcal{O}\left(\frac{1}{N^9}\right)$ if $d\geq 2$. In the case where $d=1,$ if the assumptions of Theorem \ref{thm: 5.4 of Chen and Wu} hold, then we have \begin{align}
    &\|H_{N,\epsilon,\kappa,t}\rho f-\rho H_t^\kappa f\|_2^2\nonumber\\\nonumber
    \leq& \max\{t^2,1\}\left(\mathcal{O}\left(\frac{\log N}{N^{4/7}}\right)\|f\|_{\mathbf{L}^2(\mathcal{X})}^2+ \mathcal{O}\left(\frac{\log N}{N}\right)\|f\|^2_\infty\right).
\end{align} In both cases, the implied constants depend both on $\kappa$ and on the geometry of $\mathcal{X}$.
%In particular, if $t\geq 1,$ we have 
%\begin{align*}
%\|H_{N,\epsilon,K,t}\rho f-\rho H_t^K f\|_2^2\leq K^{3/2}t \mathcal{O}\left(\frac{\log N}{N^{2/(d+6)}}\right)\|f\|_\infty.
%\end{align*}
\end{theorem}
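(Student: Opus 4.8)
The plan is to expand both operators in the finite eigenbasis and compare term by term. From the definitions, $\rho H_t^\kappa f=\sum_{k=0}^\kappa e^{-t\mu_k}\langle f,\varphi_k\rangle_{\mathbf{L}^2(\mathcal{X})}\mathbf{v}_k$ and $H_{N,\epsilon,\kappa,t}\rho f=\sum_{k=0}^\kappa e^{-t\lambda_k^{N,\epsilon}}\langle \mathbf{u}_k^{N,\epsilon},\rho f\rangle_2\,\mathbf{u}_k^{N,\epsilon}$. Since there are only $\kappa+1$ summands, the triangle inequality reduces everything to bounding, for each fixed $k$, the single-term discrepancy $\big\|e^{-t\lambda_k}\langle \mathbf{u}_k,\rho f\rangle_2\mathbf{u}_k-e^{-t\mu_k}\langle f,\varphi_k\rangle_{\mathbf{L}^2(\mathcal{X})}\mathbf{v}_k\big\|_2$; the factor $\kappa+1$ and all $k$-dependent constants (such as $\|\varphi_k\|_\infty$) are absorbed into the implied constants, which are already permitted to depend on $\kappa$ and the geometry of $\mathcal{X}$.

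I would then telescope this single-term discrepancy by swapping one ingredient at a time and invoking the relevant input estimate for each swap: (a) replacing $e^{-t\lambda_k}$ by $e^{-t\mu_k}$, controlled by the elementary inequality $|e^{-t\lambda_k}-e^{-t\mu_k}|\le t|\lambda_k-\mu_k|$ together with the eigenvalue rate $|\mu_k-\lambda_k^{N,\epsilon}|=\mathcal{O}(N^{-2/(d+6)})$ of Theorem \ref{thm: 5.4 of Chen and Wu}; (b) replacing each occurrence of $\mathbf{u}_k$, both the outer vector factor and the copy inside the pairing, by $\alpha_k\mathbf{v}_k$, controlled by the eigenvector rate $\|\mathbf{u}_k-\alpha_k\mathbf{v}_k\|_2=\mathcal{O}(N^{-2/(d+6)}\sqrt{\log N})$; (c) collapsing the resulting scalar factor $\alpha_k^2$ to $1$, controlled by the refined estimate $\big||\alpha_k|-1\big|\le\mathcal{O}(\sqrt{\log N/N})+\mathcal{O}(\log N/N^{4/(d+6)})$ of Remark \ref{rem: alpha rate}; and (d) replacing the discrete pairing $\langle\mathbf{v}_k,\rho f\rangle_2=\langle\rho\varphi_k,\rho f\rangle_2$ by the continuous one $\langle f,\varphi_k\rangle_{\mathbf{L}^2(\mathcal{X})}$, controlled by Lemma \ref{lem: apply Hoeffding} with error $\mathcal{O}(\sqrt{\log N/N}\,\|f\|_\infty)$. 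Throughout, the elementary facts $\|\mathbf{u}_k\|_2=1$, $e^{-t\lambda_k},e^{-t\mu_k}\le 1$, $\|\mathbf{v}_k\|_2\le 1+o(1)$ (again via Lemma \ref{lem: apply Hoeffding}), and $\|\rho f\|_2^2\le\|f\|_{\mathbf{L}^2(\mathcal{X})}^2+\sqrt{18\log N/N}\,\|f\|_\infty^2$ keep every carried factor bounded. All of these input estimates hold simultaneously with probability $1-\mathcal{O}(N^{-9})$ by a union bound over the finitely many indices $k\le\kappa$, so the conclusion holds on the same event.

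The $t$-dependence is clean: only swap (a) produces a factor of $t$, whereas every other swap carries a factor $e^{-t\mu_k}\le 1$, so after squaring each contribution is bounded by $\max\{t^2,1\}$ times a rate, which is the origin of the prefactor $\max\{t^2,1\}$. The squared rates from (a)--(c) are all $\mathcal{O}(\log N/N^{4/(d+6)})$ and each multiplies a factor $\|\rho f\|_2^2\le\|f\|_{\mathbf{L}^2(\mathcal{X})}^2+\sqrt{18\log N/N}\,\|f\|_\infty^2$, which reproduces the stated $d\ge 2$ bound. Swap (d) instead contributes $\mathcal{O}(\log N/N)\,\|f\|_\infty^2$; comparing exponents, $\tfrac{4}{d+6}+\tfrac12\le 1$ holds exactly when $d\ge 2$, so for $d\ge 2$ this term (and the $\sqrt{\log N/N}$ piece of $\|\rho f\|_2^2$) is dominated by the main rate and absorbed, whereas for $d=1$ one has $\tfrac47>\tfrac12$ and the $N^{-1}$ error is no longer dominated, which is precisely why it must be displayed separately in the $d=1$ statement.

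I expect the main obstacle to be the careful bookkeeping in swaps (b)--(c): one must track $\alpha_k$ through both the outer vector factor and the pairing so that a genuine $\alpha_k^2$ (equal to $|\alpha_k|^2$ since $L_{N,\epsilon}$ is real symmetric) appears before being collapsed to $1$, and one must confirm that the cross term produced by replacing $\mathbf{u}_k$ inside the pairing is genuinely controlled by the eigenvector rate times $\|\rho f\|_2$ rather than by some larger quantity. The remaining delicate point is purely the accounting of exponents separating the $d=1$ and $d\ge 2$ regimes, which, while not deep, must be done with care to land exactly on the two stated bounds.
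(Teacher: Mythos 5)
Your proposal is correct and follows essentially the same route as the paper's proof: a telescoping decomposition of $H_{N,\epsilon,\kappa,t}\rho f-\rho H_t^\kappa f$ into an eigenvalue-swap term, an inner-product/eigenvector-swap term, and an $\alpha_k$-normalization term, each controlled by Theorem \ref{thm: 5.4 of Chen and Wu}, Remark \ref{rem: alpha rate}, and Lemma \ref{lem: apply Hoeffding}, with the same $\max\{t^2,1\}$ bookkeeping and the same exponent comparison $\tfrac{4}{d+6}\le\tfrac12$ separating $d\ge 2$ from $d=1$. The only cosmetic difference is that the paper absorbs the sign of $\alpha_k$ into $\tilde{\mathbf{u}}_k=\mathrm{sgn}(\alpha_k)\mathbf{u}_k$ and uses Parseval over this orthonormal family where you instead sum the $\kappa+1$ single-term discrepancies by the triangle inequality, which is harmless since the constants may depend on $\kappa$.
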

If we combine Theorem \ref{thm: heat kernel discretization} with Lemma \ref{lem: finite K}, we may then obtain the following corollary.
\begin{corollary}\label{cor: heat kernel discretization}
Let $f\in\mathcal{C}(\mathcal{X})$. Then, under the assumptions  of Theorem \ref{thm: 5.4 of Chen and Wu}, we have
\begin{align}
&\|H_{N,\epsilon,\kappa,t}\rho f-\rho H_t f\|_2^2\\\leq& \max\{t^2,1\}\bigg[ \hspace{-.03in} \left(\mathcal{O}\left(\frac{\log N}{N^{\frac{4}{d+6}}}\right)+2e^{-2t\mu_{\kappa+1}}\right)\hspace{-.02in}\|f\|^2_{\mathbf{L}^2(\mathcal{X})}\hspace{-.02in}
+ \hspace{-.02in}\mathcal{O}\left(\sqrt{\frac{\log{N}}{N}}\right)\|f\|^2_\infty\bigg]
\end{align}
with probability at least $1-\mathcal{O}\left(\frac{1}{N^9}\right)$, 
where the constants implied by the $\mathcal{O}$ notation depend both on $\kappa$ and the geometry of $\mathcal{X}$.
\end{corollary}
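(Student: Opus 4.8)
The plan is to prove the corollary by combining Theorem \ref{thm: heat kernel discretization}, which controls the discretization error relative to the truncated heat semigroup, with the spectral truncation bound of Lemma \ref{lem: finite K}, glued together by a triangle inequality and the norm-comparison estimate of Lemma \ref{lem: apply Hoeffding}. Specifically, I would begin with the decomposition
\begin{equation*}
\|H_{N,\epsilon,\kappa,t}\rho f-\rho H_t f\|_2^2 \leq 2\|H_{N,\epsilon,\kappa,t}\rho f-\rho H_t^\kappa f\|_2^2 + 2\|\rho H_t^\kappa f - \rho H_t f\|_2^2,
\end{equation*}
using the elementary inequality $(a+b)^2\leq 2a^2+2b^2$. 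The first summand is bounded directly by Theorem \ref{thm: heat kernel discretization}, which already provides the $\mathcal{O}(\log N/N^{4/(d+6)})$ rate (with the separate branch for $d=1$) multiplied by $\max\{t^2,1\}$ and applied to $\|f\|^2_{\mathbf{L}^2(\mathcal{X})}$ and $\|f\|_\infty^2$.

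For the second summand I would write $\rho H_t^\kappa f - \rho H_t f = \rho(H_t^\kappa f - H_t f)$ and apply Lemma \ref{lem: apply Hoeffding} with $g = H_t^\kappa f - H_t f$ to transfer the discrete $\ell^2$ norm to the continuum $\mathbf{L}^2(\mathcal{X})$ norm, giving
\begin{equation*}
\|\rho(H_t^\kappa f - H_t f)\|_2^2 \leq \|H_t^\kappa f - H_t f\|_{\mathbf{L}^2(\mathcal{X})}^2 + \sqrt{\tfrac{18\log N}{N}}\,\|H_t^\kappa f - H_t f\|_\infty^2
\end{equation*}
with high probability. The two terms on the right are then controlled by the two conclusions of Lemma \ref{lem: finite K}: the first by \eqref{eqn: finite K}, which yields $e^{-2t\mu_{\kappa+1}}\|f\|^2_{\mathbf{L}^2(\mathcal{X})}$, and the second by \eqref{eqn: finite K infinity}, which yields $C_\mathcal{X}^2\|f\|_\infty^2$. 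This produces a contribution of the form $2e^{-2t\mu_{\kappa+1}}\|f\|^2_{\mathbf{L}^2(\mathcal{X})} + \mathcal{O}(\sqrt{\log N/N})\|f\|_\infty^2$.

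The remaining work is bookkeeping: I would collect the $\|f\|^2_{\mathbf{L}^2(\mathcal{X})}$ contributions (the $\mathcal{O}(\log N/N^{4/(d+6)})$ term from Theorem \ref{thm: heat kernel discretization} and the $2e^{-2t\mu_{\kappa+1}}$ term from the truncation), and absorb the various $\|f\|_\infty^2$ contributions into a single $\mathcal{O}(\sqrt{\log N/N})$ factor, noting that $\sqrt{\log N/N}$ dominates the smaller $\frac{\log N}{N^{4/(d+6)}}\sqrt{\log N/N}$ term coming from Theorem \ref{thm: heat kernel discretization}. Since $\max\{t^2,1\}\geq 1$, the truncation term $2e^{-2t\mu_{\kappa+1}}$ can be harmlessly placed inside the $\max\{t^2,1\}$ prefactor to match the stated form. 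The probability bound follows from a union bound over the two high-probability events (each holding with probability at least $1-\mathcal{O}(1/N^9)$), which is again $1-\mathcal{O}(1/N^9)$. The main obstacle is purely organizational rather than conceptual: one must track the two separate $d\geq 2$ and $d=1$ branches of Theorem \ref{thm: heat kernel discretization} simultaneously and verify that the $\|f\|_\infty^2$ terms from all sources are genuinely of lower order than $\sqrt{\log N/N}$, so that the final grouping into the stated coefficients is valid without hidden loss.
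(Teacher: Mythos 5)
Your proposal is correct and follows essentially the same route as the paper's proof: the identical splitting $\|H_{N,\epsilon,\kappa,t}\rho f-\rho H_t f\|_2^2\leq 2\|H_{N,\epsilon,\kappa,t}\rho f-\rho H_t^\kappa f\|_2^2+2\|\rho(H_tf-H_t^\kappa f)\|_2^2$, with the first term handled by Theorem \ref{thm: heat kernel discretization} and the second by Lemma \ref{lem: apply Hoeffding} followed by both parts of Lemma \ref{lem: finite K}. The bookkeeping you describe matches what the paper does implicitly.
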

\noindent For proofs of Theorem \ref{thm: heat kernel discretization} and Corollary \ref{cor: heat kernel discretization}, please see Appendix \ref{sec: heat convergence}.

In our eigenvector based method, where we approximate the heat semigroup via \eqref{eqn: eigen approx heat}, we next define a data-driven wavelet transform \begin{equation*}
\mathcal{W}_{J,N}\mathbf{x} \coloneqq \{W_{j,N}\mathbf{x}, A_{J,N}\mathbf{x}\}_{j=0}^J,
\end{equation*}
where  $W_{0,N}\mathbf{x}=(I_N-H_{N,\epsilon,\kappa,1})\mathbf{x}$, $A_{J,N}\mathbf{x} = H_{N,\epsilon,\kappa,2^{J}}\mathbf{x}$ and for $1\leq j\leq J$,
\begin{equation*}
W_{j,N}\mathbf{x} = H_{N,\epsilon,\kappa,2^{j-1}}\mathbf{x}-H_{N,\epsilon,\kappa,2^j}\mathbf{x}.\end{equation*}
We note that these wavelets implicitly depend on both $\kappa$ and $\epsilon$  in addition to $N$, but we suppress these dependencies in order to avoid cumbersome notation.
Analogously to Section \ref{sec: scat def}, for a path $p=(j_1,\ldots,j_m)$ we define
$$\bU_N[\pathvar]\mathbf{x}\rev{\coloneqq}\sigma W_{j_m,N}
\ldots \sigma W_{j_1,N} \mathbf{x}
$$
and define data-driven scattering coefficients
by 
$$S_{J,N}[\pathvar]\mathbf{x}\rev{\coloneqq}A_{J,N}U_N[\pathvar]\mathbf{x} \quad\text{and}\quad \overline{S}_{N}[\pathvar]\mathbf{x}\rev{\coloneqq}|\langle U_N[p]\mathbf{x},\mathbf{u}_0\rangle_2|.
$$
In the case where we approximate the heat semigroup via \eqref{eqn: no evecs} rather than \eqref{eqn: eigen approx heat}, we define $W_{j,N},U_N[p],$ and $S_{J,N}$ similarly, but with $H^{2^j}_{N,\epsilon}$ in place of $H_{N,\epsilon,\kappa,2^{j}}$ and we define the non-windowed scattering transform by $\overline{S}_{N}[\pathvar]\mathbf{x}=\|U_N[p]\mathbf{x}\|_1$ in order to avoid needing to compute any eigenvalues.

The following theorem uses Corollary \ref{cor: heat kernel discretization} to bound the discretization error of the wavelets.
\begin{theorem}\label{thm: wavelet discretization}

Let $f\in\mathcal{C}(\mathcal{X})$ and assume that the heat semigroup is approximated as in \eqref{eqn: eigen approx heat}. Then, under the assumptions  of Theorem \ref{thm: 5.4 of Chen and Wu}, we have that 

\begin{align*}
    &\|W_{j,N}\rho f - \rho W_j f\|_2^2
\\\leq&  2^{2j}\bigg[\left(\mathcal{O}\left(\frac{\log N}{N^{\frac{4}{d+6}}}\right)+\mathcal{O}\left(e^{-2^{j}\mu_{\kappa+1}}\right)\right)\|f\|^2_{\mathbf{L}^2(\mathcal{X})}+ \mathcal{O}\left(\sqrt{\frac{\log{N}}{N}}\right)\|f\|_\infty^2\bigg],\end{align*}
with probability at least $1-\mathcal{O}\left(\frac{1}{N^9}\right)$, where the constants implied by the big-$\mathcal{O}$ notation depend both on $\kappa$ and on the geometry of $\mathcal{X}$.

\end{theorem}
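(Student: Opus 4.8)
The plan is to reduce the statement to Corollary \ref{cor: heat kernel discretization}, exploiting that each wavelet is a difference of two heat operators and that the sampling operator $\rho$ is linear. First I would treat the generic case $1\leq j\leq J$. Writing $W_j = H^{2^{j-1}}-H^{2^j}$ and $W_{j,N}=H_{N,\epsilon,\kappa,2^{j-1}}-H_{N,\epsilon,\kappa,2^j}$, linearity of $\rho$ gives
\[
W_{j,N}\rho f-\rho W_j f=\left(H_{N,\epsilon,\kappa,2^{j-1}}\rho f-\rho H_{2^{j-1}}f\right)-\left(H_{N,\epsilon,\kappa,2^j}\rho f-\rho H_{2^j}f\right).
\]
Applying the triangle inequality and then the elementary bound $(a+b)^2\leq 2(a^2+b^2)$ reduces everything to controlling the two heat-kernel discretization errors at diffusion times $t=2^{j-1}$ and $t=2^j$, each of which is handled by Corollary \ref{cor: heat kernel discretization}.

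Next I would invoke the corollary with these two choices of $t$ on a common high-probability event (a union bound over the two instances changes only the implied constant, so the probability remains $1-\mathcal{O}(N^{-9})$). At $t=2^{j-1}$ the prefactor is $\max\{t^2,1\}=2^{2(j-1)}\leq 2^{2j}$ and at $t=2^j$ it is $2^{2j}$, so both are dominated by $2^{2j}$, which is pulled out in front exactly as in the statement. The exponential contributions become $2e^{-2^{j}\mu_{\kappa+1}}$ and $2e^{-2^{j+1}\mu_{\kappa+1}}$; since $\mu_{\kappa+1}\geq 0$, both are $\mathcal{O}(e^{-2^{j}\mu_{\kappa+1}})$ (here I use that $g(\lambda)=e^{-\lambda}$ is decreasing, so the larger diffusion time yields the smaller, negligible exponential). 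Collecting the $\|f\|^2_{\mathbf{L}^2(\mathcal{X})}$ and $\|f\|_\infty^2$ terms and absorbing the harmless numerical constants (the factor $2$ from $(a+b)^2\leq 2(a^2+b^2)$ and the $2$'s in front of the exponentials) into the big-$\mathcal{O}$ constants yields precisely the claimed bound.

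The case $j=0$ is even simpler: since $\rho$ commutes with the identity, $W_{0,N}\rho f-\rho W_0 f=-\bigl(H_{N,\epsilon,\kappa,1}\rho f-\rho H_1 f\bigr)$, so a single application of Corollary \ref{cor: heat kernel discretization} at $t=1$ (where $\max\{t^2,1\}=1=2^{2\cdot 0}$ and $e^{-2\mu_{\kappa+1}}\leq e^{-\mu_{\kappa+1}}$) recovers the bound with $j=0$. There is no genuine obstacle in this argument; it is essentially bookkeeping built on top of Corollary \ref{cor: heat kernel discretization}. The only point requiring a little care is verifying that the various $t$-dependent prefactors all collapse into the single advertised factor $2^{2j}$ and that the two distinct exponential scales combine into $\mathcal{O}(e^{-2^{j}\mu_{\kappa+1}})$.
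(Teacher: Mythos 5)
Your proposal is correct and follows essentially the same route as the paper's proof: decompose $W_{j,N}\rho f-\rho W_jf$ into the two heat-kernel discretization errors at times $2^{j-1}$ and $2^j$, apply $(a+b)^2\leq 2(a^2+b^2)$, and invoke Corollary \ref{cor: heat kernel discretization}, with the $j=0$ case handled by a single application at $t=1$. Your extra bookkeeping on the prefactors, the two exponential scales, and the union bound is just a more explicit version of what the paper leaves implicit.
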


\begin{proof} 
For $j\geq 1$,
\begin{align*}
    \|W_{j,N}\rho f - \rho W_j f\|_2^2 &\leq \|(H_{N,\epsilon,\kappa,2^{j-1}}\rho f-H_{N,\epsilon,\kappa,2^{j}}\rho f) - (\rho H^{2^{j-1}}f - \rho H^{2^{j}}f) \|_2^2\\
    &\leq 2 \|H_{N,\epsilon,\kappa,2^{j-1}}\rho f- \rho H^{2^{j-1}}f \|_2^2 + 2 \|H_{N,\epsilon,\kappa,2^{j}}\rho f- \rho H^{2^{j}}f \|_2^2. 
\end{align*}
Therefore, the result follows from Corollary \ref{cor: heat kernel discretization}.
%we have 
%\begin{align*}
 %   &\|W_{j,N}\rho f - \rho W_j f\|^2 \\
%&\leq \|f\|_{\mathbf{L}^2(\mathcal{X})}^2\left(\mathcal{O}(N^{-\frac{2}{d+6}})2^j+\kappa\mathcal{O}\left(N^{-\frac{4}{d+6}}\log N\right)+Ce^{-t\mathbf{u}_k}\right)\\
%&\quad+\|f\|_\infty\left(\mathcal{O}\left(N^{-\frac{1}{2} -\frac{2}{d+6}}\sqrt{\log N}\right)2^j+\kappa\mathcal{O}(\sqrt{\log N)}N^{-\frac{1}{2} -\frac{4}{d+6}})+K^{\frac{3d-1}{2d}}\mathcal{O}\left(\frac{\log N}{N}\right)+\mathcal{O}\left(\frac{\log N}{N}\right)\right). 
%\end{align*}
For the case where $j=0,$ we note that $I_N\rho f = \rho\text{Id} f$. Therefore, $$W_{j,N}\rho f - \rho W_j f=H_{N,\epsilon,K,1}\rho f- \rho H^{1}f$$
and we may again conclude by applying Corollary \ref{cor: heat kernel discretization}. %Lastly, the final claim follows from the fact that the modulus operator is non-expansive and commutes with the sampling operator.
\end{proof}
Iteratively applying Theorem \ref{thm: wavelet discretization}, one may obtain the following bound for the discretization error of $U_N[p]\rho f$. For a proof, please see Appendix \ref{sec: U conv}.

\begin{theorem}\label{thm: discretize U} 

 Let $f\in\mathcal{C}(\mathcal{X})$ and assume that the heat semigroup is approximated as in \eqref{eqn: eigen approx heat}. Let $p=(j_1,\ldots,j_m)$ be a path of length $m$ for some $m\geq 1,$ and let $j_{\max}=\max_{1\leq i \leq m} j_i$. Then,
under the assumptions  of Theorem \ref{thm: 5.4 of Chen and Wu}, we have that 
    \begin{align*}
    &\|U_{N}[p]\rho f - \rho U[p] f\|_2^2\\\leq &2^{2j_{\max}}\bigg[\hspace{-.02in}\left(\mathcal{O}\left(\frac{
    \log N}{N^{\frac{4}{d+6}}}\right)\hspace{-.02in}+\hspace{-.02in}\mathcal{O}\left(e^{-\mu_{\kappa+1}}\right)\right)\|f\|^2_{\mathbf{L}^2(\mathcal{X})}\hspace{-.02in}+\hspace{-.02in} \mathcal{O}\left(\sqrt{\frac{\log{N}}{N}}\right)\|f\|^2_\infty\bigg], \end{align*}
where the constants implied by the $\mathcal{O}$ notation depend on $m,$ $\kappa$, and the geometry of $\mathcal{X}$. 
\end{theorem}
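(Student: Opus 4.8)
The plan is to induct on the path length $m$ via a telescoping decomposition that reduces the error for $U_N[p]$ to a sum of single-layer wavelet discretization errors, each controlled by Theorem \ref{thm: wavelet discretization}. Write $q=(j_1,\ldots,j_{m-1})$ for $p$ with its last index deleted, so that $U_N[p]\rho f=\sigma W_{j_m,N}U_N[q]\rho f$ and $\rho U[p]f=\rho\,\sigma W_{j_m}U[q]f$. The first observation I would record is that, for the admissible nonlinearities (absolute value, ReLU, complex ReLU), $\sigma$ is positively homogeneous of degree one, so it commutes with the normalized evaluation operator: $\rho\,\sigma h=\sigma\,\rho h$ for every continuous $h$. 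Combined with the pointwise nonexpansiveness $|\sigma(a)-\sigma(b)|\le|a-b|$, which passes to the $\ell^2$ norm, this gives
\begin{equation*}
\|U_N[p]\rho f-\rho U[p]f\|_2=\|\sigma W_{j_m,N}U_N[q]\rho f-\sigma\rho W_{j_m}U[q]f\|_2\le \|W_{j_m,N}U_N[q]\rho f-\rho W_{j_m}U[q]f\|_2.
\end{equation*}

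Next I would insert the intermediate term $W_{j_m,N}\rho U[q]f$ and apply the triangle inequality, splitting the right-hand side into $\|W_{j_m,N}(U_N[q]\rho f-\rho U[q]f)\|_2$ and $\|W_{j_m,N}\rho U[q]f-\rho W_{j_m}U[q]f\|_2$. For the first piece I would use that the discrete wavelets are nonexpansive: since $H_{N,\epsilon,\kappa,t}=\sum_{k=0}^\kappa e^{-\lambda_k^{N,\epsilon}t}\mathbf{u}_k^{N,\epsilon}(\mathbf{u}_k^{N,\epsilon})^T$ is symmetric with spectrum in $[0,1]$, each $W_{j,N}$ (and $W_{0,N}=I_N-H_{N,\epsilon,\kappa,1}$) satisfies $\|W_{j,N}\|_2\le 1$, so this piece is bounded by the error $\|U_N[q]\rho f-\rho U[q]f\|_2$ for the shorter path. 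The second piece is exactly a single-layer wavelet discretization error, now applied to $g=U[q]f$ rather than to $f$, so Theorem \ref{thm: wavelet discretization} bounds it in terms of $\|U[q]f\|_{\mathbf{L}^2(\mathcal{X})}$ and $\|U[q]f\|_\infty$. Unrolling the resulting recursion $e_m\le e_{m-1}+D_{j_m}$ down to the empty path yields $\|U_N[p]\rho f-\rho U[p]f\|_2\le\sum_{i=1}^m D_{j_i}$, where $D_{j_i}$ is the single-layer error applied to $U[(j_1,\ldots,j_{i-1})]f$; squaring via Cauchy--Schwarz produces the stated $2^{2j_{\max}}$ prefactor together with $\mathcal{O}(m)$ constants, after bounding $e^{-2^{j_i}\mu_{\kappa+1}}\le e^{-\mu_{\kappa+1}}$.

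To close the argument I would supply the two a priori norm bounds on the intermediate signals. The $\mathbf{L}^2$ bound $\|U[q]f\|_{\mathbf{L}^2(\mathcal{X})}\le\|f\|_{\mathbf{L}^2(\mathcal{X})}$ follows by composing the nonexpansiveness of $\sigma$ (which fixes $0$) with the frame upper bound of Proposition \ref{prop: frame}, giving $\|W_jg\|_{\mathcal{H}}\le\|g\|_{\mathcal{H}}$ at each layer. The $\mathbf{L}^\infty$ bound uses that the continuous heat semigroup is an $\mathbf{L}^\infty$-contraction, since $h_t\ge 0$ and $\int_{\mathcal{X}}h_t(x,y)\,d\mu(y)=1$ by \eqref{eqn: integrate to one}; hence $\|W_jg\|_\infty\le 2\|g\|_\infty$ and, after $i-1$ layers, $\|U[(j_1,\ldots,j_{i-1})]f\|_\infty\le 2^{i-1}\|f\|_\infty$, a factor absorbed into the $m$-dependent constant. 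I must also confirm each $U[(j_1,\ldots,j_{i-1})]f$ is continuous so that $\rho$ and Theorem \ref{thm: wavelet discretization} apply, which holds since $H_t$ maps $\mathcal{C}(\mathcal{X})$ into itself and $\sigma$ is Lipschitz, and I would take a union bound over the $m$ invocations of Theorem \ref{thm: wavelet discretization}, changing the $\mathcal{O}(1/N^9)$ failure probability only by an $m$-dependent constant.

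The main obstacle I anticipate is the bookkeeping around the intermediate $\mathbf{L}^\infty$ growth: the accumulating factor $2^{i-1}$ must be tracked carefully to verify it is independent of $N$, $\epsilon$, and $\kappa$ and therefore legitimately hidden in the constant depending on $m$, while the remaining steps are routine applications of the triangle inequality and nonexpansiveness. A secondary point worth care is the commutation $\rho\sigma=\sigma\rho$, which is what makes the telescoping clean and which relies on the positive homogeneity of the chosen $\sigma$.
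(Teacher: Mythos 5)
Your proposal is correct and follows essentially the same route as the paper: induction on the path length, insertion of the intermediate term $W_{j_m,N}\rho\,U[q]f$ (the paper inserts $U_N[j_m]\rho\,U[p_{m-1}]f$, which is the same split once $\sigma$ is peeled off), nonexpansiveness of the discrete wavelets to absorb the shorter-path error, the single-layer bound of Theorem \ref{thm: wavelet discretization} applied to $U[q]f$, and the a priori bounds $\|U[q]f\|_{\mathbf{L}^2(\mathcal{X})}\le\|f\|_{\mathbf{L}^2(\mathcal{X})}$ and $\|U[q]f\|_\infty\le 2^{m-1}\|f\|_\infty$ (the paper's Proposition \ref{prop: frame} and Lemma \ref{lem: U infinity}). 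Your explicit remark that $\rho\sigma=\sigma\rho$ requires positive homogeneity of $\sigma$ (because of the $1/\sqrt{N}$ in $\rho$) is a point the paper uses implicitly, and your choice to telescope unsquared norms and apply Cauchy--Schwarz at the end, rather than $(a+b)^2\le 2a^2+2b^2$ at each step, changes only the $m$-dependent constant.
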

Inspecting the proof of Theorem \ref{thm: discretize U}, one may observe that the constants implied by the big-$\mathcal{O}$ notation increase exponentially with respect to $m.$ However, in practice, one typically only uses two or three scattering layers, so we do not view this as a major limitation. We also note that a similar exponential dependence on the number of layers was observed for the generalization bounds for message passing networks arising from the discretization of graphons in \cite{maskey2022generalization}. Additionally, we note that, by inspecting the proof, it is clear that the implied constants in the term  $\mathcal{O}\left(e^{-\mu_{\kappa+1}}\right)$ do not depend on $\kappa$. A similar remark holds for the analogous terms in our subsequent results.

The next  two results establish convergence of the windowed and non-windowed scattering coefficients as $N\rightarrow\infty$. For proofs, please see Appendix \ref{sec: scat conv proofs}.

\begin{theorem}\label{thm: convergence windowed}
Let $f\in\mathcal{C}(\mathcal{X})$ and assume that the heat semigroup is approximated as in \eqref{eqn: eigen approx heat}.
Let $p=(j_1,\ldots,j_m)$ be a path of length $m$ for some $m\geq 1.$ Then, under the assumptions  of Theorem \ref{thm: 5.4 of Chen and Wu},
we have \begin{align*}
    &\|S_{J,N}[\pathvar]\rho f - \rho S_J[\pathvar] f\|_2^2\\\leq& 2^{2J}\bigg[\left(\mathcal{O}\left(\frac{\log N}{N^{\frac{4}{d+6}}}\right)+\mathcal{O}\left(e^{-\mu_{\kappa+1}}\right)\right)\|f\|^2_{\mathbf{L}^2(\mathcal{X})}+ \mathcal{O}\left(\sqrt{\frac{\log{N}}{N}}\right)\|f\|^2_\infty\bigg], \end{align*}
    with probability at least $1-\mathcal{O}\left(\frac{1}{N^9}\right)$,
where the constants implied by the big-$\mathcal{O}$ notation depend on $m,$ $\kappa$, and the geometry of $\mathcal{X}$.

\end{theorem}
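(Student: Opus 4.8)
The plan is to reduce the windowed scattering convergence to the already-established convergence of the operator $U_N[\pathvar]$ in Theorem \ref{thm: discretize U}, together with the discretization of the final averaging operator $A_{J,N} = H_{N,\epsilon,\kappa,2^J}$. First I would recall that $S_{J,N}[\pathvar]\rho f = A_{J,N}U_N[\pathvar]\rho f$ and $\rho S_J[\pathvar]f = \rho A_J U[\pathvar]f = \rho H^{2^J}U[\pathvar]f$, so the quantity to bound is
\begin{equation*}
\|A_{J,N}U_N[\pathvar]\rho f - \rho H^{2^J}U[\pathvar]f\|_2.
\end{equation*}
The natural move is to insert the intermediate term $A_{J,N}\rho U[\pathvar]f$ and apply the triangle inequality, splitting the error into a piece measuring how well $A_{J,N}$ agrees with $\rho H^{2^J}$ on the (continuum) signal $U[\pathvar]f$, and a piece measuring how well $U_N[\pathvar]\rho f$ approximates $\rho U[\pathvar]f$ after the averaging $A_{J,N}$ is applied.

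For the second piece, I would use that $A_{J,N} = H_{N,\epsilon,\kappa,2^J}$ is a spectral operator built from the $e^{-\lambda_k^{N,\epsilon}2^J}$ with the projections $\mathbf{u}_k\mathbf{u}_k^T$, hence has operator norm at most $1$ on $\ell^2$ (since all its eigenvalues lie in $[0,1]$). Therefore
\begin{equation*}
\|A_{J,N}(U_N[\pathvar]\rho f - \rho U[\pathvar]f)\|_2 \leq \|U_N[\pathvar]\rho f - \rho U[\pathvar]f\|_2,
\end{equation*}
which is controlled directly by Theorem \ref{thm: discretize U}, contributing a term of the stated form with $j_{\max}\leq J$. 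For the first piece, $\|A_{J,N}\rho U[\pathvar]f - \rho H^{2^J}U[\pathvar]f\|_2 = \|H_{N,\epsilon,\kappa,2^J}\rho h - \rho H_{2^J}h\|_2$ with $h = U[\pathvar]f$, I would apply Corollary \ref{cor: heat kernel discretization} with $t = 2^J$, which yields the term $\max\{2^{2J},1\}$ together with the $\mathcal{O}(\log N / N^{4/(d+6)})$, $\mathcal{O}(e^{-2t\mu_{\kappa+1}})$, and $\|h\|_\infty$ contributions.

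The main obstacle is bookkeeping the norms of the intermediate signal $h = U[\pathvar]f$: Corollary \ref{cor: heat kernel discretization} produces bounds in terms of $\|U[\pathvar]f\|_{\mathbf{L}^2(\mathcal{X})}$ and $\|U[\pathvar]f\|_\infty$, not $\|f\|_{\mathbf{L}^2(\mathcal{X})}$ and $\|f\|_\infty$, so I must re-express these in terms of the original signal. For the $\mathbf{L}^2$ norm this is clean: the wavelet frame bound \eqref{eqn: frameAB} combined with nonexpansiveness of $\sigma$ gives $\|U[\pathvar]f\|_{\mathbf{L}^2(\mathcal{X})}\leq \|f\|_{\mathbf{L}^2(\mathcal{X})}$. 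The $\|\cdot\|_\infty$ bound is more delicate, since each wavelet $W_j = H^{2^{j-1}}-H^{2^j}$ and the nonlinearity must be shown to preserve $\mathbf{L}^\infty$ boundedness; here I would invoke the fact that the heat semigroup is a contraction on $\mathbf{L}^\infty$ (it integrates to one, cf. \eqref{eqn: integrate to one}) so that $\|H^t h\|_\infty\leq\|h\|_\infty$, giving $\|W_j h\|_\infty\leq 2\|h\|_\infty$ and hence $\|U[\pathvar]f\|_\infty\leq 2^m\|f\|_\infty$. This accounts for the constants depending on $m$, and absorbing these $m$-dependent factors into the implied big-$\mathcal{O}$ constants—together with upgrading the $\max\{2^{2J},1\}$ to the clean prefactor $2^{2J}$ stated in the theorem—finishes the estimate.
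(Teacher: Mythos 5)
Your proposal is correct and follows essentially the same route as the paper: the same insertion of the intermediate term $A_{J,N}\rho U[\pathvar]f$, the same use of the nonexpansiveness of $A_{J,N}$ to invoke Theorem \ref{thm: discretize U}, the same application of Corollary \ref{cor: heat kernel discretization} with $t=2^J$ to the signal $U[\pathvar]f$, and the same bookkeeping via the frame bound ($\|U[\pathvar]f\|_{\mathbf{L}^2(\mathcal{X})}\leq\|f\|_{\mathbf{L}^2(\mathcal{X})}$) and the $\mathbf{L}^\infty$ contraction of the heat semigroup ($\|U[\pathvar]f\|_\infty\leq 2^m\|f\|_\infty$, which is the paper's Lemma \ref{lem: U infinity}). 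No substantive differences.
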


The following is the analog of Theorem \ref{thm: convergence windowed} for the non-windowed scattering transform. 

\begin{theorem}\label{thm: convergence nonwindowed}
Let $f\in\mathcal{C}(\mathcal{X})$ and assume that the heat semigroup is approximated as in \eqref{eqn: eigen approx heat}. Let $p=(j_1,\ldots,j_m)$ be a path of length $m$ for some $m\geq 1.$ Then, under the assumptions  of Theorem \ref{thm: 5.4 of Chen and Wu},
we have \begin{flushleft}\begin{flalign*}
    &|\overline{S}_{N}[\pathvar]\rho f -  \overline{S}[\pathvar] f|\\\leq& %\Bigg[\hspace{-.02in}\mathcal{O}\hspace{-.02in}\left(\frac{\sqrt{\log N}}{N^{\frac{2}{d+6}}}\right)2^{2J}\hspace{-.05in}+\hspace{-.02in}\mathcal{O}\hspace{-.02in}\left(\hspace{-.01in}e^{-2\mu_{k+1}}\right)\hspace{-.02in}\Bigg]\|f\|_{\mathbf{L}^2(\mathcal{X})}\hspace{-.02in}+\hspace{-.02in} \mathcal{O}\hspace{-.04in}\left(\hspace{-.02in}\hspace{-.02in}\sqrt{\frac{\log N}{N}}\right)[\|f\|^2_\infty\hspace{-.03in}+\hspace{-.01in}\|f\|_\infty] 
    2^{J}\left [\left(\mathcal{O}\left(\frac{\sqrt{\log N}}{N^{\frac{2}{d+6}}}\right)+\mathcal{O}\left(e^{-\mu_{\kappa+1}/2}\right)\right)\|f\|_{\mathbf{L}^2(\mathcal{X})}+ \mathcal{O}\left(\left(\frac{\log N}{N}\right)^{1/4}\right)\|f\|_\infty\right]\end{flalign*}\end{flushleft}
    with probability at least $1-\mathcal{O}\left(\frac{1}{N^9}\right)$,
where the constants implied by the big-$\mathcal{O}$ notation depend on $m,$ $\kappa$, and the geometry of $\mathcal{X}$.
\end{theorem}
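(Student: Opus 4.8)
The plan is to reduce the statement to the discretization bound for $U[p]$ already established in Theorem \ref{thm: discretize U}, and then to control the one extra source of error coming from replacing the continuous integration against $\varphi_0$ by the discrete inner product against $\mathbf{u}_0$. The crucial structural observation that makes this clean is that, because $\mathcal{L}=-\nabla\cdot\nabla$ is the Laplace Beltrami operator, $\varphi_0$ is the constant function $\mathrm{vol}(\mathcal{X})^{-1/2}\equiv 1$ (using the normalization $\mu(\mathcal{X})=1$), while the zeroth eigenvector $\mathbf{u}_0$ of the \emph{unnormalized} data-driven Laplacian $L_{N,\epsilon}$ is the normalized constant vector $\tfrac{1}{\sqrt{N}}\mathbf{1}$, since $L_{N,\epsilon}\mathbf{1}=0$. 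Hence $\mathbf{u}_0=\pm\,\rho\varphi_0=\pm\mathbf{v}_0$ \emph{exactly}, with $\|\rho\varphi_0\|_2=1$. Consequently the eigenvector phase ambiguity $\alpha_k$ appearing in Theorem \ref{thm: 5.4 of Chen and Wu} for higher modes never enters at the outermost projection, and the sign $\pm$ is killed by the absolute value in the definition of $\overline{\bS}_N$, so that $\overline{\bS}_N[\pathvar]\rho f=|\langle \bU_N[\pathvar]\rho f,\rho\varphi_0\rangle_2|$.

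First I would apply the reverse triangle inequality $\big||a|-|b|\big|\le|a-b|$ with $a=\langle \bU_N[\pathvar]\rho f,\rho\varphi_0\rangle_2$ and $b=\langle \bU[\pathvar]f,\varphi_0\rangle_{\mathcal{H}}$, reducing the claim to bounding $|\langle \bU_N[\pathvar]\rho f,\rho\varphi_0\rangle_2-\langle \bU[\pathvar]f,\varphi_0\rangle_{\mathcal{H}}|$, which I split as
\begin{align*}
\langle \bU_N[\pathvar]\rho f,\rho\varphi_0\rangle_2-\langle \bU[\pathvar]f,\varphi_0\rangle_{\mathcal{H}}
&=\underbrace{\langle \bU_N[\pathvar]\rho f-\rho \bU[\pathvar]f,\,\rho\varphi_0\rangle_2}_{T_1}\\
&\quad+\underbrace{\big(\langle \rho \bU[\pathvar]f,\rho\varphi_0\rangle_2-\langle \bU[\pathvar]f,\varphi_0\rangle_{\mathcal{H}}\big)}_{T_2}.
\end{align*}
For $T_1$, Cauchy--Schwarz together with $\|\rho\varphi_0\|_2=1$ gives $|T_1|\le\|\bU_N[\pathvar]\rho f-\rho \bU[\pathvar]f\|_2$, which is controlled by taking the square root of the bound in Theorem \ref{thm: discretize U} (using $\sqrt{a+b}\le\sqrt{a}+\sqrt{b}$ and $j_{\max}\le J$); this term produces the dominant contributions $\mathcal{O}(\sqrt{\log N}/N^{2/(d+6)})\|f\|_{\mathbf{L}^2(\mathcal{X})}$, $\mathcal{O}(e^{-\mu_{\kappa+1}/2})\|f\|_{\mathbf{L}^2(\mathcal{X})}$, and $\mathcal{O}((\log N/N)^{1/4})\|f\|_\infty$, and supplies the prefactor $2^{J}$. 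For $T_2$ I would invoke Lemma \ref{lem: apply Hoeffding} with the continuous functions $\bU[\pathvar]f$ and $\varphi_0$ (both lie in $\mathcal{C}(\mathcal{X})$ since the heat semigroup and $\sigma$ preserve continuity), yielding $|T_2|\le\sqrt{18\log N/N}\,\|(\bU[\pathvar]f)\varphi_0\|_\infty=\mathcal{O}(\sqrt{\log N/N})\,\|\bU[\pathvar]f\|_\infty$.

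To close, I would bound $\|\bU[\pathvar]f\|_\infty\le 2^{m}\|f\|_\infty$: by \eqref{eqn: integrate to one} and positivity the heat semigroup is an $\mathbf{L}^\infty$ contraction, so each wavelet $W_{j_i}=H^{2^{j_i-1}}-H^{2^{j_i}}$ expands the sup-norm by at most a factor of $2$, while the nonexpansive $\sigma$ with $\sigma(0)=0$ (as for the absolute value and ReLU) does not increase it. Hence $|T_2|=\mathcal{O}(\sqrt{\log N/N})\|f\|_\infty$, which is dominated by the $\|f\|_\infty$ contribution of $T_1$. Collecting the estimates, taking a union bound over the finitely many events on which Theorem \ref{thm: discretize U} and Lemma \ref{lem: apply Hoeffding} hold (each with probability $1-\mathcal{O}(N^{-9})$), and absorbing lower-order terms yields the claim. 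The only genuinely delicate points are the bookkeeping: verifying the exact identity $\mathbf{u}_0=\pm\rho\varphi_0$ so that no eigenvector-perturbation error enters at the outermost projection, and checking that the square root of Theorem \ref{thm: discretize U} reproduces precisely the stated rates (in particular turning $e^{-\mu_{\kappa+1}}$ into $e^{-\mu_{\kappa+1}/2}$ and $\log N/N^{4/(d+6)}$ into $\sqrt{\log N}/N^{2/(d+6)}$). The substantive analytic work having already been carried out in Theorem \ref{thm: discretize U}, the present argument is essentially a careful reduction.
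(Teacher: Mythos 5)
Your proof is correct, and it takes a genuinely simpler route than the paper's at one key point. The paper treats the outermost projection generically: it writes $\overline{\bS}_N[\pathvar]\rho f=|\langle \bU_N[\pathvar]\rho f,\mathbf{u}_0\rangle_2|$ and controls the discrepancy between $\mathbf{u}_0$ and $\mathbf{v}_0=\rho\varphi_0$ through Theorem \ref{thm: 5.4 of Chen and Wu} and Remark \ref{rem: alpha rate}, which forces a three-term decomposition involving the alignment scalar $\alpha_0$, the bound $\|\mathbf{u}_0-\alpha_0\mathbf{v}_0\|_2=\mathcal{O}(N^{-2/(d+6)}\sqrt{\log N})$, a bound on $|1/\alpha_0-1|$, and a Hoeffding estimate for $\|\alpha_0\mathbf{v}_0\|_2=\mathcal{O}(1)$. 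You instead observe that $L_{N,\epsilon}\mathbf{1}=0$ and that the Gaussian kernel yields a connected (indeed complete) weighted graph, so the zero eigenspace is one-dimensional and $\mathbf{u}_0=\pm N^{-1/2}\mathbf{1}=\pm\rho\varphi_0$ \emph{exactly}, with $\|\rho\varphi_0\|_2=1$ exactly; the sign is absorbed by the absolute value. This collapses the argument to two terms — Cauchy--Schwarz against $\rho\varphi_0$ followed by Theorem \ref{thm: discretize U}, plus a single application of Lemma \ref{lem: apply Hoeffding} combined with Lemma \ref{lem: U infinity} — and eliminates the $\alpha_0$ bookkeeping entirely while reproducing the same rates (the square root of Theorem \ref{thm: discretize U} gives precisely the $2^J$, $\sqrt{\log N}/N^{2/(d+6)}$, $e^{-\mu_{\kappa+1}/2}$, and $(\log N/N)^{1/4}$ factors, and your $T_2=\mathcal{O}(\sqrt{\log N/N})\|f\|_\infty$ is dominated). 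What your shortcut buys is brevity and the removal of one source of error; what the paper's generic treatment buys is robustness — it would survive in settings where $\varphi_0$ or $\mathbf{u}_0$ is not constant (e.g., normalized Laplacians), where your exact-identity step would fail. Both arguments are valid here.
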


The convergence guarantees presented in this section may be summarized as follows. Theorem \ref{thm: 5.4 of Chen and Wu} is a result from \cite{cheng2021eigen} which provides convergence rates for the eigenvectors and eigenvalues of $L_{N,\epsilon}$. We then use this result to obtain convergence rates for our discretization of the heat semigroup, the wavelet transform, and the scattering transform. In all of these results, both the assumptions on the manifold and the convergence rate with respect to $N$ are the same as in Theorem \ref{thm: 5.4 of Chen and Wu}. Moreover, inspecting the proofs, one will observe that any future work which builds upon Theorem \ref{thm: 5.4 of Chen and Wu} by, e.g., relaxing the assumption that the $\lambda_k$ have single multiplicity, will readily lead to improved versions of our convergence results for the scattering transform. We also note that, given a point cloud, there are many possible ways to construct a graph Laplacian which approximates the \rev{Laplace-Beltrami} operator. For example, \cite{Calder2019} proves a result analogous to Theorem \ref{thm: 5.4 of Chen and Wu} for nearest neighbor graphs and $\epsilon$ graphs. One could readily modify our method to define approximations of the manifold scattering transform using these graphs, and it is likely that one could imitate the methods presented here in order to obtain convergence results  as $N\rightarrow\infty$.
Additionally, we note that under certain assumptions on the generation of data points $\{x_i\}_{i=0}^{N-1}$, for example, when the sampling is not uniform, the users could add additional terms which account for the density of the data (see, e.g., the $\alpha$-normalization approach \cite{coifman:diffusionMaps2006,dunson2021spectral, Little2022Balancing}) when constructing $W^{(N)}$ or to implement methods based on other data-driven Laplacians such as the longest-leg path distance Laplacian considered in \cite{little2020path}.  

\section{Numerical Results}\label{sec: results}
 The numerical effectiveness of the graph scattering transform for tasks such as node classification, graph classification, and even graph synthesis has been demonstrated in numerous works such as \cite{gao:graphScat2018,gama:diffScatGraphs2018,zou:graphCNNScat2018,wenkel2022overcoming,zou:graphScatGAN2019} and \cite{bhaskar2021molecular}.
 However, the numerical effectiveness of the manifold scattering transform is much less well established. Indeed, the initial work \cite{perlmutter:geoScatCompactManifold2020} only provided numerical experiments on two-dimensional surfaces with predefined meshes. Here, in Sections \ref{sec: results 2d} and \ref{sec: results bio}, we will show that the methods proposed in Section \ref{sec: numerical methods}
are effective for both synthetic and real-world data. 
As in Section \ref{sec: numerical methods}, we assume that we may only access the manifold though a finite collection of random samples $\{x_i\}_{i=0}^{N-1}$ in both Sections \ref{sec: results 2d} and \ref{sec: results bio}. Additionally, in Section \ref{sec: results digraph}, we will show that our proposed method is effective for node classifications on directed graphs.

First, in Section \ref{sec: results 2d} we will show that the manifold scattering transform is effective for learning on two-dimensional surfaces, even without a mesh. In particular, we will consider the same toy data sets that were analyzed with a mesh-based approach in \cite{perlmutter:geoScatCompactManifold2020}. These experiments aim to provide validation for our methods and show that the manifold scattering transform can still produce good results in the more challenging setting where one does not have access to the entire manifold. Having established proof of concept on toy data sets, in Section \ref{sec: results bio} we apply the manifold scattering transform 
to high-dimensional biomedical data where one models the data as lying upon some unknown manifold. In both of these settings, we will follow the lead of \cite{gao:graphScat2018} and \cite{bhaskar2021molecular} and augment the expressive power of the scattering transform by considering higher $q$-th order scattering moments for $1\leq q\leq Q$ defined by 
\begin{equation*}
    \overline{S}[p,q]\rho f=\frac{1}{N}\|U_{N}[p]\rho f\|_q^q.
\end{equation*}
Using these higher-order moments instead of the standard \rev{non-windowed} scattering transform increases the expressive power of our representation and helps compensate for the lack of global knowledge of the manifold. Notably, these scattering moments are invariant to the ordering of the data points, since by Theorem \ref{thm: equivariance} each $U_N[p]$ is equivariant to permutations (i.e., reorderings) and  each $\overline{S}[p,q]$ is defined via a global summation. Additionally, we note that if the kernel $K(x_i,x_j)$ is a function of the Euclidean distance between $x_i$ and $x_j$  then the $\overline{S}_N[p,q]$ will be invariant to rigid motions in the embedded space. Throughout this section, we shall report all accuracies as mean $\pm$ standard deviation. 

\subsection{Two-dimensional surfaces without a mesh}\label{sec: results 2d}

When implementing convolutional networks on a two-dimensional surfaces, it is standard, e.g., \cite{boscaini2015learning,boscaini2016learning} to use triangular meshes.  
In this section, we show that mesh-free methods can also work well in this setting. Importantly, note that we are \emph{not} claiming that mesh-free methods are \emph{better} for two-dimensional surfaces. Instead, we aim to show that these methods can work relatively well thereby justifying their use in higher-dimensional %biomedical applications discussed in the following subsection
settings. 

 We  conduct experiments using both mesh-based and mesh-free methods on a spherical version of MNIST and on the FAUST dataset which were previously considered in \cite{perlmutter:geoScatCompactManifold2020}. In both methods, we use the wavelets defined in Section \ref{sec: wavelets} with \rev{two scattering layers and} $J=8$ and use a radial basis function (RBF) kernel support vector machine (SVM) \rev{see, for example,  \cite{cristianini2000introduction,chang2011libsvm}} with cross-validated hyperparameters as our classifier. For the mesh-based methods, we use the same discretization scheme as in \cite{perlmutter:geoScatCompactManifold2020} and set $Q=1$ which was the setting implicitly assumed there. For our mesh-free experiments, we use the eigenvector-based method discussed in Section \ref{sec: numerical methods} and set $Q=4$. We show that the information captured by the higher-order moments can help compensate for the structure lost by not using a mesh. For all of our experiments on spherical MNIST and FAUST, we used an 80/20 train-test split with 10-fold cross-validation.

\begin{figure}
    \centering
    \includegraphics[width=\textwidth]{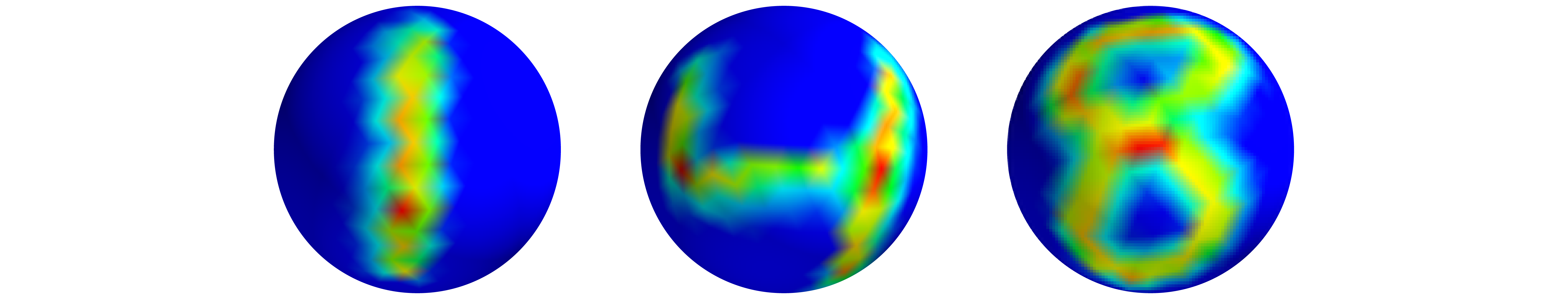}
    \caption{The MNIST dataset projected onto the sphere.}
    \label{fig:MNIST}
    \vskip -0.1in
\end{figure}

We first study the MNIST dataset projected onto the sphere as visualized in Figure \ref{fig:MNIST}. We uniformly sampled $N$ points from the unit two-dimensional sphere, and then applied random rotations to the MNIST dataset and projected each digit onto the spherical point cloud to generate a collection of signals $\{f_i\}$ on the sphere. Table \ref{tab: mnist} shows that for properly chosen $\kappa$, the mesh-free method can achieve similar performance to the mesh-based method. As noted in Section \ref{sec: numerical methods}, the implied constants in our theoretical results depend on $\kappa$. By inspecting the proof of Theorem 5.4 of \cite{cheng2021eigen} we see that for larger values of $\kappa,$ more sample points are needed to ensure the convergence of the first $\kappa$ eigenvectors in Theorem \ref{thm: 5.4 of Chen and Wu}. Thus, we want $\kappa$ to be large enough to get a good approximation of $H^1$, but also not too large. 
\begin{table}[t]
\caption{Classification accuracies for spherical MNIST averaged over 10 realizations.}
\label{tab: mnist}
\vskip 0.15in
\begin{center}
\begin{small}
\begin{sc}
\begin{tabular}{ccccc}
\toprule
Data type & $N$ &$\kappa$ & $Q$ & Accuracy (\%) \\
\midrule
Point cloud    &1200 & 200& 4&  $79 \pm 0.9$\\
Point cloud    &1200 & 400& 4&  $88 \pm 0.2$\\
Point cloud    &1200 & 642& 4&  $84 \pm 0.7$\\
%Point cloud    &1800 & 642& 4&  86\%\\
Mesh & 642 & 642 & 1 & $91 \pm 0.2$\\

\bottomrule
\end{tabular}
\end{sc}
\end{small}
\end{center}
\vskip -0.1in
\end{table}

\begin{figure}
    \centering
    \includegraphics[width=\textwidth]{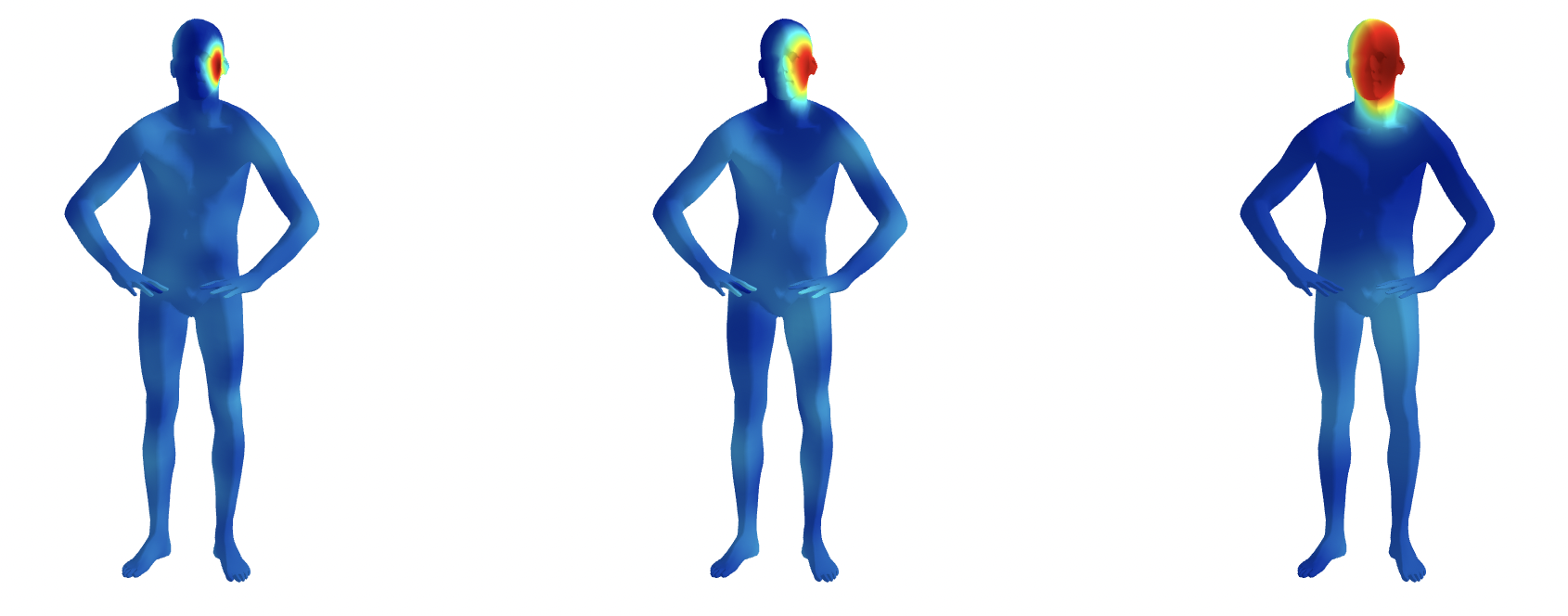}
    \caption{Wavelets on the FAUST dataset with $g(\lambda) = e^{-0.0005\lambda}$, $j = 1, 3, 5$ from left to right. Positive values are  red, while negative values are blue.}
    \label{fig:geometric wavelets faust}
\end{figure}

Next, we consider the FAUST dataset, a collection of surfaces corresponding to scans of ten people in ten different poses \cite{Bogo:CVPR:2014} as shown in Figure \ref{fig:geometric wavelets faust}. As in \cite{perlmutter:geoScatCompactManifold2020}, we use 352 SHOT descriptors \cite{tombari2010unique} as our signals. %However, in this experiment, we consider each scan as a point cloud and therefore do not use the associated triangular mesh. 
We use the first $\kappa = 80$ eigenvectors and eigenvalues of the approximate Laplace-Beltrami operator of each point cloud to generate scattering moments. We achieved $94 \pm 3.7$\% classification accuracy over 10 realizations
for the task of classifying different poses. This is comparable with the 95\% accuracy obtained with meshes in \cite{perlmutter:geoScatCompactManifold2020}.% using the triangular mesh-based manifold scattering transform with $Q = 1$, as described in\cite{perlmutter:geoScatCompactManifold2020}.

\subsection{Single-cell datasets}\label{sec: results bio}

In this section, we present two experiments %leveraging the inherent manifold structure in single-cell data to 
showing the utility of manifold scattering in analyzing single-cell data. We will formulate these experiments as manifold classification tasks, where each patient will correspond to a different manifold and the goal is to predict patient outcomes. In particular, each patient will correspond to a collection of cells, and each cell will correspond to a point in high-dimensional space\footnote{In order to turn the cells into points, we take single-cell protein measurements and apply a logarithmic transformation followed by $\ell^1$ normalization.}. %on immune cells obtained from either melanoma or SARS-COV2 (COVID) patients are featurized
%using the log-transformed L1-normalized protein expression feature data of the cell.} 
Therefore, each patient will be described by a high-dimensional point cloud which we model as lying upon a low-dimensional manifold. In order to classify the patients, we compute the scattering transform on each manifold with signals corresponding to protein expression and then feed this representation into a classifier. For both of the experiments described in this section, we used a 75/25 train-test split. Notably, in both of the data sets we consider, the number of patients in fairly small. Therefore, the fact that the scattering transform uses predesigned filters is particularly advantageous in this setting.

On these datasets, we deviate slightly from our theory and demonstrate that our method can be effectively utilized with different graph constructions. In our first data set, which focuses on data derived from melanona patients, we use a $k$-NN graph with $k=5$. On our second data set, which is derived from COVID-19 patients, we  use a Gaussian kernel with an adaptive bandwidth which is designed to account for non-uniform density of the data points. Specifically, we set
\begin{equation}\label{HS_kernel}
K_{\texttt{k-nn}}(x, x')=
\frac{1}{2}\left(\exp\left(-\frac{\|x-x'\|^2_{2}}{\sigma_{k}(x)^2}\right) + \exp\left(-\frac{\|x-x'\|^2_{2}}{\sigma_{k}(x')^2}\right)\right),
\end{equation}
where $\sigma_k(x)$ is the distance from $x$ to its $k$-th nearest neighbor ($k=3$). We then approximate $H^1$ via \eqref{eqn: no evecs}.
 For the COVID data, we \rev{used three scattering layers} with $J=8$ and $Q=4$, \rev{imitating the settings used in \cite{gao:graphScat2018}} . We then apply principal component analysis (PCA) to the scattering features and train a decision tree classifier on the top 10 principal components. For the melanoma patients, we used $2$ scattering layers with $J=4$ and $Q=4$, followed by a multilayer perceptron with a single hidden layer. Additionally, with the melanoma data, in order increased the effective size of our training data, we subsample 400-point  point clouds and repeat this procedure 10 times for each point cloud (so the data set consists of 540 graphs rather than 54). Importantly, we note that we do this subsampling after splitting the data into train and test in order to ensure that no patient is in both the train and test set. As a baseline comparison, we compare our scattering-based method against a method which first preprocesses the data by using a $k$-means clustering based approach to extract features and then applies a decision tree classifier. For details on this baseline, please see Appendix \ref{sec: kmeans details}.

We first consider data collected in \cite{PtacekA59} on patients with various stages of melanoma.
All patients received checkpoint blockade immunotherapy, a treatment that licenses patient T cells to kill tumor cells. (For details on this therapy, see \cite{huang2022decade}.) 
In this dataset, 11,862 T lymphocytes from core tissue sections were taken from each of 54 patients  
diagnosed with melanoma, and 30 proteins were measured per cell. Therefore, we model our data as consisting of 54 manifolds 
embedded in 30-dimensional space  (with one dimension corresponding to each of the proteins) with 11,862 points per manifold.
We achieved $71\%$ accuracy when using scattering moments based on protein expression feature signals\footnote{Proteins were selected on the basis of having a known functional role in T cell regulation and included CD4, CD8, CD45RO, CD56, FOXP3, Granzyme B, Ki-67, LAG3, PD-1, and TIM-3.} with a decision tree classifier compared to $46\%$ accuracy using our baseline method.

We next consider data previously studied in \cite{RN3} comprised of 209 blood samples from $148$  people\footnote{In \cite{RN3} the data was taken from 168 patients. However, here we focus on the 148 patients for whom sufficient monocyte data was available.}. Of the 209 samples, 61 were taken from healthy controls, 123 were taken from patients who were COVID+ but recovered, and 25 were taken from patients who were COVID+ and died. 
Here, our goal is to predict whether the person corresponding to each blood sample died of COVID, recovered from COVID, or was a control. 
This task is particularly challenging because COVID outcome depends on a wide variety of known and unknown immunoregulatory pathways, unlike response to checkpoint blockade immunotherapy which targets a specific known immunoregulatory axis (T-cell inhibition).
We focus on innate immune (myeloid) cells, a population that has previously been shown to be predictive of patient mortality \cite{RN3}. Fourteen proteins were measured on $1,502,334$ total cells,  approximately $10,000$ cells per patient. To accommodate the size of these data sets lying in  $\mathbb{R}^{14}$, we first aggregate data points for each patient into less than 500 clusters via the diffusion condensation algorithm \cite{RN3}. We treat the centroids (with respect to Euclidean distance) of each cluster as single data points in the high-dimensional immune state space when implementing the manifold scattering transform. As with our melanoma experiments, we used signals related to protein expression, averaged across cells in each cluster. %The diffusion scattering is performed on these feature signals projected onto the centroid graph.  
For the baseline method, $k$-means clustering, we set $k=3$ based on expected monocyte subtypes (classical, non-classical, intermediate). We achieved $48\%$ accuracy with scattering and a decision tree classifier compared to $40\%$ via the baseline method. See Table \ref{melanoma-table} for a summary of the results for both of the data sets discussed in this subsection. 

\begin{table}[t]
\caption{Classification accuracies for patient outcome prediction. }
\label{melanoma-table}
\vskip 0.15in
\begin{center}
\begin{small}
\begin{sc}
\begin{tabular}{lcccr}
\toprule
Data set & $N_{\text{patients}}$ & Baseline & Scattering \\
\midrule
Melanoma    &$54$ & $46.0\pm 7.1\%$ & $71.0\pm 9.0\%$ \\
COVID &$148$ &$40.1\pm 2.2\%$ & $47.7\pm 0.5\%$ \\
\bottomrule
\end{tabular}
\end{sc}
\end{small}
\end{center}
\vskip -0.1in
\end{table}

\subsection{Directed graphs}\label{sec: results digraph}

Next, we apply our framework to weighted and directed graphs $G=(V,E,W)$  with vertices $V$, edges $E$, and edge weights $W$. We turn $G$ into a measure space $\mathcal{X}=(X,\mathcal{F},\mu)$ by setting $X=V$, letting $\mathcal{F}$ be the set of all subsets of $V$, and letting $\mu$ be the uniform measure such that $\mu(\{v\})=1$ for all $v\in V$. In our experiments in this section, we will take $\mathcal{L}$ to be the normalized magnetic Laplacian described in detail below. 

We let $A$ denote the asymmetric, weighted adjacency matrix of $G$, and let $A^{(s)}=\frac{1}{2}(A+A^T)$ be its symmetric counterpart. Next, we define the symmetric, diagonal degree matrix $D^{(s)}$ by $D^{(s)}_{i,i}=\sum_{j=0}^{N-1}A^{(s)}_{i,j}$, where $N=|V|$, and $D^{(s)}_{i,j}=0$ if $i\neq j$. We then let $\Theta=A-A^T$ and define the Hermitian adjacency matrix by 
\begin{equation*}
    H^{(q)}=A^{(s)}\odot \exp(2\pi\mathbbm{i}q\Theta),
\end{equation*}
where $\mathbbm{i}=\sqrt{-1}$, $\odot$ denotes Hadamard product (componentwise multiplication), $q$ is a ``charge" parameter\footnote{This term, as well as the name Magnetic Laplacian originates from the Magnetic Laplaican serving as the quantum mechanical Hamiltonian of a particle under magnetic flux\cite{lieb1993fluxes}.}, and exponentiation is defined componentwisely, i.e.,
$$\exp(2\pi\mathbbm{i}q\Theta)_{i,j}=\exp(2\pi\mathbbm{i}q\Theta_{i,j}).$$
Notably, $H^{(q)}$ encodes the undirected geometry of the graph in the magnitude of its entries and directional information via its phases. The charge parameter $q$ allows one to balance the relationship between directed and undirected information as desired. 

Given $H^{(q)}$, we define the unnormalized and normalized magnetic Laplacians by 
$$L_U^{(q)}=D^{(s)}-H^{(q)}$$ and 
\begin{align*}
L_N^{(q)}&\,=(D^{(s)})^{-1/2}L_U^{(q)}(D^{(s)})^{-1/2}\\
&\,=I-(D^{(s)})^{-1/2}H^{(q)}(D^{(s)})^{-1/2}.
\end{align*}
By construction, both $L_U^{(q)}$ and $L_N^{(q)}$ are Hermitian and one may check (see, e.g., Theorem 1 of \cite{zhang2021magnet}) that they are positive semidefinite. Therefore, both of these matrices fit within our framework as admissible choices of $\mathcal{L}$ and can be used to define scattering transforms on directed graphs.

\begin{figure}
    \centering
    \begin{subfigure}[t]{0.3\textwidth}
    \centering
    \includegraphics[width=\textwidth]{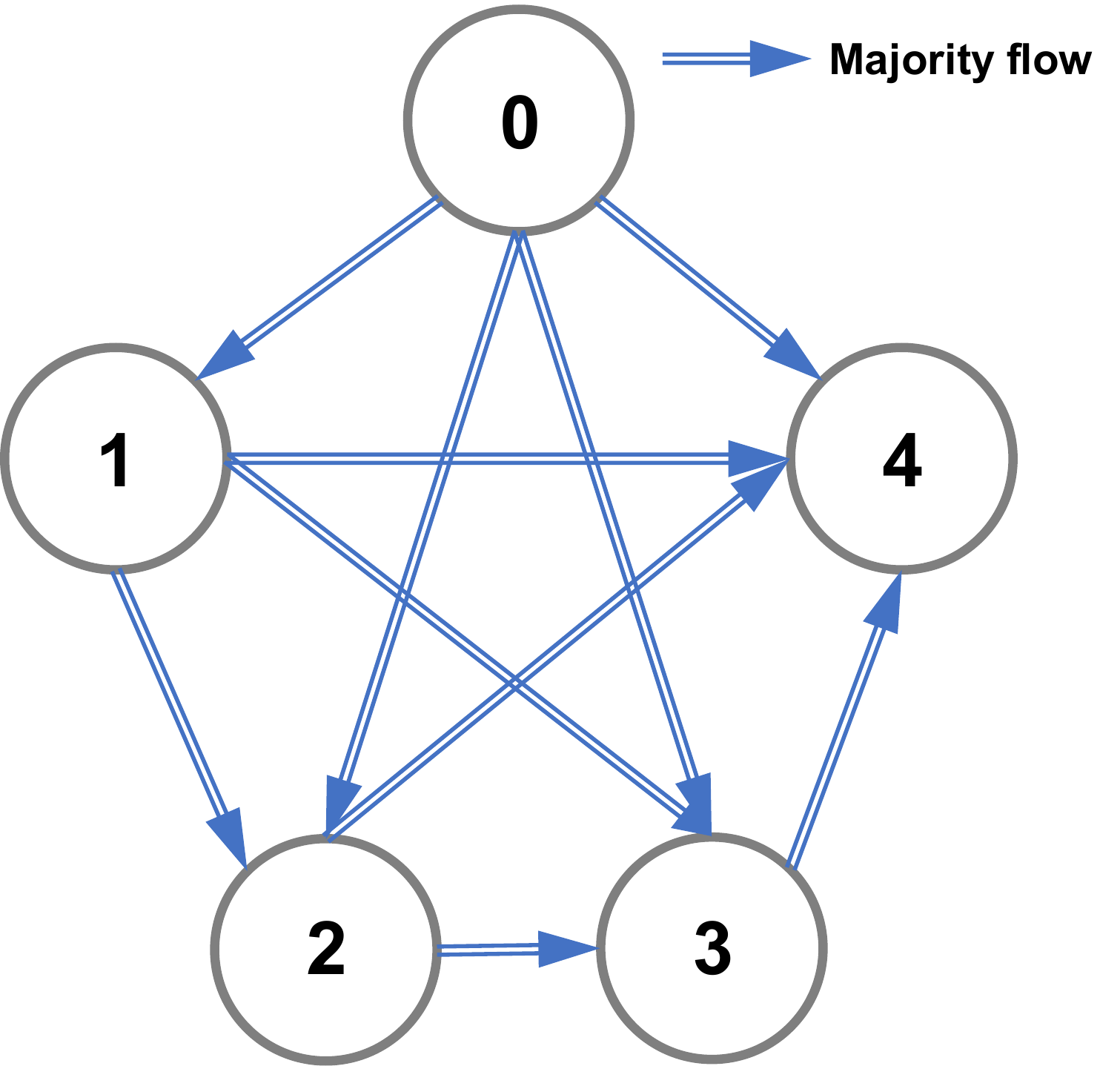}
    \caption{Ordered}
    \label{fig:ordered metagraph}
    \end{subfigure}
    \begin{subfigure}[t]{0.3\textwidth}
    \centering
    \includegraphics[width=\textwidth]{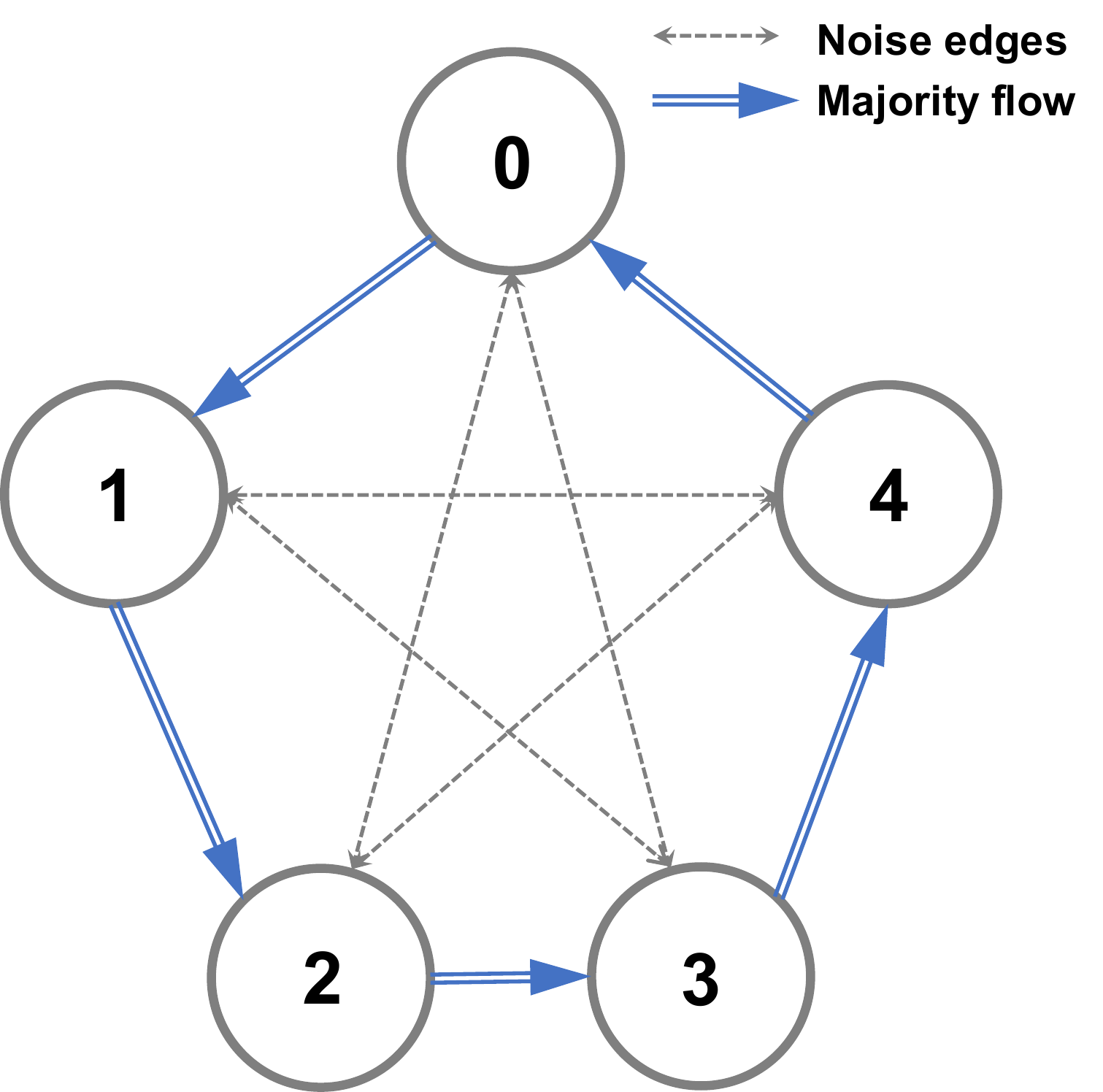}
    \caption{Noisy cyclic}    
    \label{fig: noisy cyclic metagraph}
    \end{subfigure}
    \caption{Meta-graphs for synthetic datasets (Reproduced from \cite{zhang2021magnet}).}
\end{figure}

In our experiments, we will choose $\mathcal{L}=L_N^{(q)}$ and consider the task of node classification on the following directed stochastic block model considered in  \cite{zhang2021magnet}. 
We first divide the $N$ vertices into $n_c$ equally-sized     clusters $C_1,\ldots,C_{n_c}$ for some $n_c$ which divides $N$. We let $\{\alpha_{i,j}\}_{1\leq i,j\leq n_c}$ to be a collection of probabilities, with $\alpha_{i,j}=\alpha_{j,i}$ and  $0<\alpha_{i,j}\leq 1$. For an unordered pair of vertices, $u,v\in V$, $u\neq v$ we create an undirected edge between $u$ and $v$ with probability $\alpha_{i,j}$ if $u\in C_i,v\in C_j$. We then define $\{\beta_{i,j}\}_{1\leq i,j\leq n_c}$ to be a collection of probabilities such that $\beta_{i,j}+\beta_{j,i}=1$ and $0\leq\beta_{i,j}\leq 1$. We then replace each undirected edge $\{u,v\}$, with a directed edge which points from $u$ to $v$ with probability $\beta_{i,j}$ if $u\in C_i$ and $v\in C_j$, and otherwise points from $v$ to $u$. Notably, if $\alpha_{i,j}$ is constant, then the only way to determine the clusters will be from the directional information.

For our experiments, we set $n_c = 5$ and consider three meta-graphs: ordered,  cyclic, and noisy cyclic. For all meta-graphs, we set $\beta_{i, i} = 0.5$. For the ordered meta-graph (Figure \ref{fig:ordered metagraph}), we set $\alpha_{i, j} = 0.1$ for all $i, j$ and set  $\beta_{i, j} = 0.95$ for $i < j$. For the cyclic meta-graph (Figure \ref{fig: noisy cyclic metagraph}, but without the dashed grey edges), we set %the edge and edge direction probabilities as 
$$\alpha_{i, j} = \left \{ \begin{array}{ll}
         0.1 & i = j \\
         0.1 & i = (j \pm 1) \text{ mod } 5 \\
         0 & \text{otherwise}
         \end{array} \right. \text{and } \beta_{i, j} = \left \{ \begin{array}{ll}
         0.5 & i = j \\
         0.95 & i = (j-1) \text{ mod } 5 \\
         0.05 & j = (i-1) \text{ mod } 5 \\
         0 & \text{otherwise}
         \end{array}\right . .$$ Finally, for the noisy cyclic meta-graph (Figure \ref{fig: noisy cyclic metagraph}), we set $\alpha_{i,j} = 0.1$ for all $i, j$ and set the edge direction probabilities as $$\beta_{i,j} = \left \{\begin{array}{ll}
         0.95 & i = (j-1) \text{ mod } 5 \\
         0.05 & j = (i-1) \text{ mod } 5 \\
         0.5 & \text{otherwise}
         \end{array}\right . .$$

Motivated by the so-called residual convolution operators used in \cite{wenkel2022overcoming},  
for improved numerical performance, we use a modified version of the windowed scattering transform given by $\bS^{\text{res}}_J[\pathvar]=H^1\bU[p]$ in our experiments. \rev{We chose our input signals to be i.i.d.\ standard Gaussian random vectors and used paths of length $m \in \{0, 1, 2\}$.}
Following the settings used in \cite{zhang2021magnet}, \rev{we set $N=2500$ and $n_c = 500$ for the ordered and cyclic meta-graphs and $N=500$ and $n_c = 100$ for the noisy meta-graph, and} we used 2\%, 10\%, and 60\% of the nodes in each cluster for training for the ordered, cyclic, and noisy cyclic meta-graphs, respectively. On all three data sets, we used 20\% of the nodes for validation and the remaining nodes were used for testing. Details on our validation procedure are provided in Appendix \ref{sec: hyperparams}. After computing the scattering transform, we used an SVM with an RBF kernel for classification. 
In Table \ref{tab: dsbm}, we report our results for each meta-graph along with the maximum scale $J$ used to compute the scattering coefficients and parameter $q$ used to compute the magnetic Laplacian\footnote{All baseline results taken from \cite{zhang2021magnet}.}. As we can see, scattering performs well on all three versions of the stochastic block model and is the top-performing method on the noisy cyclic stochastic block model\footnote{For methods not designed for di-graphs, the reported accuracies are the maximum of those obtained by a)  symmetrizing the adjacency matrix as a preprocessing step and b) running the algorithm as is.}.
\begin{table}[t]
\vskip 0.15in
\begin{center}
\begin{small}
\begin{sc}
\begin{tabular}{cccc}
\toprule
Method/Meta-graph & Ordered           & Cyclic          & Noisy cyclic      \\
\midrule
MagNet \cite{zhang2021magnet}  & $99.6\pm0.2$      & $100.0 \pm 0.0$ & $80.5 \pm 1.0$    \\ 
ChebNet \cite{Defferrard2018}& $19.9\pm0.7$      & $74.7 \pm 16.5$ & $18.3 \pm 3.1$    \\
GCN \cite{kipf2016semi}& $68.6 \pm 2.2$ & $78.87 \pm 30.0$ & $24.2 \pm 6.8$ \\
APPNP \cite{klicpera2018predict}& $97.4\pm1.8$      & $19.6 \pm 0.5$  & $17.4 \pm 1.8$    \\
SAGE \cite{hamilton2017inductive} & $20.2 \pm 1.2$  & $88.6 \pm 8.3$ & $26.4 \pm 7.7$\\
GIN \cite{xu2018how} &  $57.9 \pm 6.3$ & $75.3 \pm 21.5$ & $24.7 \pm 6.4$\\
GAT \cite{velivckovic2017graph} & $42.0 \pm 4.8$  & $98.3 \pm 2.2$ & $27.4 \pm 6.9$\\
DGCN \cite{tong:directedGCN2020}& $81.4\pm1.1$      & $83.7 \pm 23.1$ & $37.3 \pm 6.1$    \\
DiGraph \cite{tong2020digraph} & $82.5 \pm 1.4$  & $39.1 \pm 33.6$ & $18.0 \pm 1.8$\\
DiGraphIB \cite{tong2020digraph} & $99.2 \pm 0.4$  & $84.8 \pm 17.0$ & $43.4 \pm 10.1$\\
Scattering & $97.8 \pm 1.2$    & $99.8\pm0.2$    & $88.5\pm4.0$      \\
\midrule
Parameters & $J = 9, q = 0.25$ & $J = 9, q = 0$  & $J = 10, q = 0.2$ \\
\bottomrule
\end{tabular}
\end{sc}
\end{small}
\end{center}
\vskip -0.1in
\caption{Node classification accuracy on our directed stochastic block model with different meta-graph structures (ordered, cyclic, or noisy cyclic) and different graph methods. }
\label{tab: dsbm}
\end{table}

\section{Conclusion}\label{sec: conclusion}

In this work, we have extended the geometric scattering transform to a broad class of measure spaces. In particular, our construction extends several previous works defining the scattering transform on undirected, unsigned graphs and smooth compact Riemannian manifolds without boundary as special cases and also includes many other examples as discussed extensively in Section \ref{sec: examples}. Our invariance and equivariance results help clarify the relationship between the  invariance / equivariance of the scattering transform and the group of bijections to which it is invariant or equivariant. Namely, they show that the critical property for $\mathcal{G}$ to possess is that for every $\zeta\in\mathcal{G}$, the operator $V_\zeta$, defined by $V_\zeta f(x)=f(\zeta^{-1}(x))$, is an isometry on $\mathbf{L}^2(\mathcal{X})$. Additionally, we provide two numerical schemes for implementing the manifold scattering transform when one only has access to finite point clouds and provide quantitative convergence rates for one of these schemes as the number of sample points grows to infinity. The proof of this convergence result utilizes previous work showing the convergence of the eigenvectors and eigenvalues of the \rev{Laplace-Beltrami} operator. While we do not know whether or not our convergence rate is optimal, we do note that  both the assumptions of our convergence theorems and our convergence rates are the same as for the previous work on the convergence of the eigenvectors and eigenvalues. Therefore, our convergence results should be interpreted as showing that the number of sample points needed to apply scattering to high-dimensional point cloud data is the same as other manifold learning based methods.

We believe our work opens up several new exciting avenues for future research. The framework presented here provides a theoretical foundation for defining neural networks on manifolds from point-cloud data, a relatively unexplored topic except in the setting of two-dimensional surfaces. Additionally, it would be interesting to extend our methods to higher-order operators such as the connection Laplacian or to implement versions of our method that utilize anisotropic diffusions. 
Lastly, it would be interesting to improve on our convergence results by relaxing the assumptions on the data generation or developing quantitative convergence guarantees that do not require the explicit computation of eigenvectors or eigenvalues.

\appendix
\section{The Proof of Proposition \getrefnumber{prop: frame}}\label{sec: proof of frame}
\begin{proof} To prove the upper bound, we note, 
\begin{align}
%\|\mathcal{W}f\|_{\ell^2(\mathcal{H})}^2 
& \sum_{j=0}^J\|W_jf\|_\mathcal{H}^2+ \|A_Jf\|_\mathcal{H}^2 \nonumber\\
=& \sum_{k\in\mathcal{I}}\sum_{j=0}^J \left(|\widehat{W_jf}(k)|^2 +  |\widehat{A_Jf}(k)|^2\right) \nonumber\\
=& \sum_{k\in\mathcal{I}} \left(|1-g(\lambda_k)|^2+\sum_{j=1}^J \left|g(\lambda_k)^{2^{j-1}}-g(\lambda_k)^{2^{j}}\right|^2+ \left|g(\lambda_k)^{2^J}\right|^2\right) |\widehat{f}(k)|^2\nonumber\\
\leq&  \sum_{k\in\mathcal{I}} \left(1-g(\lambda_k)+\sum_{j=1}^J \left(g(\lambda_k)^{2^{j-1}}-g(\lambda_k)^{2^{j}}\right)+ g(\lambda_k)^{2^J}\right)^2 |\widehat{f}(k)|^2\label{eq: kill the abs}\\
= & \sum_{k\in\mathcal{I}} |\widehat{f}(k)|^2\nonumber\\
=& \|f\|_{\mathcal{H}}^2,\nonumber
\end{align}
where in \eqref{eq: kill the abs}, we used the fact that $g$ is nonnegative and decreasing.% and in \eqref{eq: telescope} we used a telescoping sum

In order to prove the lower bound, we define $p_0(t)\coloneqq(1-t),$ $p_j(t) \coloneqq (t^{2^{j-1}}-t^{2^j})$ if $1\leq j \leq J$, and $p_{J+1}(t)\coloneqq t^{2^{J}}$ and observe that since $g$ is positive and decreasing we have
\begin{align*}
& \sum_{j=0}^J\|W_jf\|_\mathcal{H}^2+ \|A_Jf\|_\mathcal{H}^2 \nonumber\\
=& \sum_{k\in\mathcal{I}} \left(|1-g(\lambda_k)|^2+\sum_{j=1}^J \left|g(\lambda_k)^{2^{j-1}}-g(\lambda_k)^{2^{j}}\right|^2+ \left|g(\lambda_k)^{2^J}\right|^2\right) |\widehat{f}(k)|^2\nonumber\\
=& \sum_{k\in\mathcal{I}} \left(p_0(g(\lambda_k))^2+\sum_{j=1}^J p_j(g(\lambda_k))^2+ p_{J+1}(g(\lambda_k))^2\right) |\widehat{f}(k)|^2\nonumber\\\geq& \min_{0\leq t \leq 1} \sum_{j=0}^{J+1} p_j(t)^2\|f\|_{\mathcal{H}}^2,
\end{align*}
 where in the final inequality we use Plancherel's identity and the fact that $0\leq g(\lambda_k)\leq g(0)=1.$

 Therefore it suffices to show that $\min_{0\leq t \leq 1} \sum_{j=0}^{J+1} p_j(t)^2\geq c>0$.
To do so, we let $0\leq t\leq 1$ and consider three cases. First, if $0\leq t\leq 1/2,$ then
\begin{equation*}
    \sum_{j=0}^{J+1} p_j(t)^2 \geq p_0(t)^2 = (1-t)^2\geq \left(1-\frac{1}{2}\right)^2=1/4.
\end{equation*}
Secondly, if $t^{2^J}\geq1/2$, 
\begin{equation*}
    \sum_{j=0}^{J+1} p_j(t)^2  \geq p_{J+1}(t)^2 = \left(t^{2^J}\right)^2\geq \left(\frac{1}{2}\right)^2=1/4.
\end{equation*}
In the final case where $t^{2^{J}}<\frac{1}{2}<t,$ there exists a unique $j_0,$ $1\leq j_0\leq J,$ such that $t^{2^{j_0}} < 1/2 \leq t^{2^{j_0-1}}.$ Since $t^{2^{j_0-1}} \geq 1/2$ and $t^{2^{j_0-1}} t^{2^{j_0-1}} = t^{2^{j_0}}$ it follows that $1/4\leq t^{2^{j_0}} < 1/2$ and thus $1/2 \leq t^{2^{j_0-1}} < 1/\sqrt{2}$.
% $\frac{1}{8}\leq t^{2^{j_0}-1} \leq \frac{1}{4}$.
Therefore, in this case we have 
% \begin{equation*}
%       \sum_{j=0}^{J+1} p_j(t)^2 \geq p_{j_0}(t)^2 = t^{2^{j_0}-1}-\left(t^{2^{j_0}-1}\right)^2 \geq \min_{\frac{1}{8}\leq x\leq \frac{1}{4}} x-x^2=C>0.
% \end{equation*}
\begin{equation*}
    \sum_{j=0}^{J+1} p_j(t)^2 \geq p_{j_0}(t)^2 = (t^{2^{j_0-1}} - t^{2^{j_0}})^2 \geq \inf_{x \in [\frac{1}{2}, \frac{1}{\sqrt{2}}]} (x - x^2)^2 \eqqcolon c > 0.
\end{equation*}

\end{proof}

    \section{The Proof of Proposition \getrefnumber{prop: Jlimit}}\label{sec: PJlimit proof}
    \begin{proof}
    Using the definition on the scattering transform as well as the fact that $g(0)=1,$ one may compute 
    \begin{align*}
    \left||\bS_J[\pathvar] f(x)| - \overline\bS[\pathvar] f|\varphi_0(x)|\right|&=\left|\left|\sum_{k\geq 0} g(\lambda_k)^{2^J}\langle \bU[\pathvar]f,\varphi_k\rangle_{\mathcal{H}}\varphi_k(x) \right|-|\langle\bU[\pathvar]f,\varphi_0\rangle_{\mathcal{H}}||\varphi_0(x)| \right|\\&\leq\left|\sum_{k\geq 1} g(\lambda_k)^{2^J}\langle \bU[\pathvar]f,\varphi_k\rangle_{\mathcal{H}}\varphi_k(x)\right|.
    \end{align*}
    Therefore, Parseval's identity implies that 
    \begin{align*}
     \left\||\bS_J[\pathvar] f(x)| - \overline\bS[\pathvar] f|\varphi_0(x)|\right\|^2_{\mathcal{H}} &\leq\sum_{k\geq 1}\left|g(\lambda_k)^{2^J}\right|^2|\langle \bU[\pathvar]f,\varphi_k\rangle_\mathcal{H}|^2\\
     &\leq g(\lambda_1)^{2^{J+1}} \sum_{k\geq 1}|\langle \bU[\pathvar]f,\varphi_k\rangle_\mathcal{H}|^2\\
     &\leq g(\lambda_1)^{2^{J+1}}\|\bU[\pathvar]f\|_\mathcal{H}^2.
    \end{align*}
    Since $\lambda_1>0,$ \eqref{eqn: g decreasing} implies $g(\lambda_1)<1,$ and so the right-hand side converges to zero as $J\rightarrow\infty$.
    \end{proof}

\section{The Proof of Theorems \getrefnumber{thm: nonexpansive}
 and \getrefnumber{thm: nonexpansive nonwindow}}\label{sec: proof nonexpansive}
 
The proof of Theorem \ref{thm: nonexpansive}  is based on the 
following lemma. 
\begin{lemma}\label{lem: energylevels}
%Assume  $B\leq 1$ in \eqref{eqn: frameAB}. Then,
For all $f_1,f_2\in\mathcal{H},$ we have
\begin{equation}\label{eqn: energylevelsgeq}
    \sum_{\pathvar\in\mathcal{J}^m}\|\bU[\pathvar]f_1-\bU[\pathvar]f_2\|_{\mathcal{H}}^2\geq\sum_{\pathvar\in\mathcal{J}^{m+1}}\|\bU[\pathvar]f_1-\bU[\pathvar]f_2\|_{\mathcal{H}}^2+\sum_{\pathvar\in\mathcal{J}^{m}}\|\bS[\pathvar]f_1-\bS[\pathvar]f_2\|_{\mathcal{H}}^2.
\end{equation}
Moreover, for all $f\in\mathcal{H}$
\begin{equation}\label{eqn: energylevelsequals}
    \sum_{\pathvar\in\mathcal{J}^m}\|\bU[\pathvar]f\|_{\mathcal{H}}^2\geq\sum_{\pathvar\in\mathcal{J}^{m+1}}\|\bU[\pathvar]f\|_{\mathcal{H}}^2+\sum_{\pathvar\in\mathcal{J}^{m}}\|\bS[\pathvar]f\|_{\mathcal{H}}^2.
\end{equation}
%with equality holding if $A=B=1.$ 

\end{lemma}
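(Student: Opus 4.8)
The plan is to reduce both inequalities to a single ``one-layer'' estimate and then sum over paths. The key structural fact is that every path $p'\in\mathcal{J}^{m+1}$ factors uniquely as $p'=(p,j)$ with $p\in\mathcal{J}^m$ and $j\in\mathcal{J}$, and that by the definition of $U$ in \eqref{eqn: defU} one has $\bU[(p,j)]f=\sigma W_j \bU[p]f$ and $\bS[p]f=A\bU[p]f$.

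First I would fix a prefix $p\in\mathcal{J}^m$ and set $g_1=\bU[p]f_1$ and $g_2=\bU[p]f_2$. Then the combined contribution of all length-$(m+1)$ extensions of $p$ together with the scattering coefficient at $p$ is
\[
\sum_{j\in\mathcal{J}}\|\sigma W_j g_1-\sigma W_j g_2\|_{\mathcal{H}}^2+\|Ag_1-Ag_2\|_{\mathcal{H}}^2.
\]
Because $\sigma$ acts pointwise and satisfies $|\sigma(x)-\sigma(y)|\le|x-y|$, it is nonexpansive on $\mathcal{H}$, so $\|\sigma W_j g_1-\sigma W_j g_2\|_{\mathcal{H}}\le\|W_j(g_1-g_2)\|_{\mathcal{H}}$; the averaging term already equals $\|A(g_1-g_2)\|_{\mathcal{H}}$ since $A$ is linear. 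Applying the frame upper bound \eqref{eqn: frameAB} to the function $g_1-g_2$ then gives
\[
\sum_{j\in\mathcal{J}}\|W_j(g_1-g_2)\|_{\mathcal{H}}^2+\|A(g_1-g_2)\|_{\mathcal{H}}^2\le\|g_1-g_2\|_{\mathcal{H}}^2=\|\bU[p]f_1-\bU[p]f_2\|_{\mathcal{H}}^2.
\]
Summing this one-layer inequality over all $p\in\mathcal{J}^m$ and using the unique factorization of length-$(m+1)$ paths to recombine the left-hand sums into $\sum_{p'\in\mathcal{J}^{m+1}}\|\bU[p']f_1-\bU[p']f_2\|_{\mathcal{H}}^2$ and $\sum_{p\in\mathcal{J}^m}\|\bS[p]f_1-\bS[p]f_2\|_{\mathcal{H}}^2$ yields \eqref{eqn: energylevelsgeq}. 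For \eqref{eqn: energylevelsequals} I would run the identical argument with the single function $f$ in place of $f_1$ and the zero function in place of $f_2$; since $\sigma(0)=0$ for each admissible nonlinearity (absolute value, ReLU, or complex ReLU), we have $\bU[p]0=0$ and $\bS[p]0=0$, so this special case is exactly \eqref{eqn: energylevelsequals}.

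There is no deep obstacle here: the estimate is essentially the telescoping energy-propagation argument of Mallat adapted to our frame. The only points requiring care are (i) correctly promoting the pointwise nonexpansiveness of $\sigma$ to the $\mathcal{H}$-norm bound $\|\sigma h_1-\sigma h_2\|_{\mathcal{H}}\le\|h_1-h_2\|_{\mathcal{H}}$, and (ii) the bookkeeping identifying $\mathcal{J}^{m+1}$ with $\mathcal{J}^m\times\mathcal{J}$ so that no coefficient is double-counted or omitted. For the single-function bound one should additionally note the reliance on $\sigma(0)=0$ (equivalently $\|\sigma h\|_{\mathcal{H}}\le\|h\|_{\mathcal{H}}$), which holds for all the nonlinearities under consideration.
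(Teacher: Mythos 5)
Your proof is correct and follows essentially the same route as the paper's: apply the frame upper bound \eqref{eqn: frameAB} to $\bU[\pathvar]f_1-\bU[\pathvar]f_2$, use the pointwise nonexpansiveness of $\sigma$ to pass to $\bU[(\pathvar,j)]$, identify $A\bU[\pathvar]=\bS[\pathvar]$, sum over prefixes, and obtain \eqref{eqn: energylevelsequals} by taking $f_2=0$. Your explicit remark that the single-function case relies on $\sigma(0)=0$ (so that $\bU[\pathvar]0=0$) is a small point of care that the paper leaves implicit.
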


\begin{proof}[The Proof of Lemma \ref{lem: energylevels}] The assumption \eqref{eqn: frameAB} implies that for all $\pathvar\in\mathcal{J}^m$
we have that 
%First we note that since $\bm{Psi}$ is an isometry,
\begin{align*}
 \|\bU[\pathvar]f_1-\bU[\pathvar]f_2\|_\mathcal{H}^2&\geq\sum_{j_{m+1}\in\mathcal{J}}   \|W_{j_{m+1}}(\bU[\pathvar]f_1-\bU[\pathvar]f_2)\|_\mathcal{H}^2+\|A(\bU[\pathvar]f_1-\bU[\pathvar]f_2)\|_\mathcal{H}^2.
\end{align*}
Therefore,
\begin{align}
 &\sum_{\pathvar\in\mathcal{J}^m}\|\bU[\pathvar]f_1-\bU[\pathvar]f_2\|_{\mathcal{H}}^2\nonumber\\
 &\geq \sum_{\pathvar\in\mathcal{J}^m}\left(\sum_{j_{m+1}\in\mathcal{J}}   \|W_{j_{m+1}}(\bU[\pathvar]f_1-\bU[\pathvar]f_2)\|_{\mathcal{H}}^2+\|A(\bU[\pathvar]f_1-\bU[\pathvar]f_2)\|_{\mathcal{H}}^2\right)\label{eqn: theothergeq}\\
 &=   \sum_{\pathvar\in\mathcal{J}^m}\left(\sum_{j_{m+1}\in\mathcal{J}}   \|W_{j_{m+1}}\bU[\pathvar]f_1-W_{j_{m+1}}\bU[\pathvar]f_2\|_{\mathcal{H}}^2+\|A(\bU[\pathvar]f_1-\bU[\pathvar]f_2)\|_{\mathcal{H}}^2\right)\nonumber\\
 &\geq   \sum_{\pathvar\in\mathcal{J}^m}\left(\sum_{j_{m+1}\in\mathcal{J}}   \|\sigma W_{j_{m+1}}\bU[\pathvar]f_1-\sigma W_{j_{m+1}}\bU[\pathvar]f_2\|_{\mathcal{H}}^2+\|A(\bU[\pathvar] f_1-\bU[\pathvar]f_2)\|_{\mathcal{H}}^2\right)\label{eqn: thegeq}\\
 &=   \sum_{\pathvar\in\mathcal{J}^m}\left(\sum_{j_{m+1}\in\mathcal{J}}   \|\bU[j_{m+1}]\bU[\pathvar]f_1-\bU[j_{m+1}]\bU[\pathvar]f_2\|_{\mathcal{H}}^2+\|A(\bU[\pathvar]f_1-\bU[\pathvar]f_2)\|_{\mathcal{H}}^2\right)\nonumber\\
 &= \sum_{\pathvar\in\mathcal{J}^{m+1}}\|\bU[\pathvar]f_1-\bU[\pathvar]f_2\|_{\mathcal{H}}^2+\sum_{\pathvar\in\mathcal{J}^{m}}\|\bS[\pathvar]f_1-\bS[\pathvar]f_2\|_{\mathcal{H}}^2\nonumber.
 \end{align}
 This completes the proof of \eqref{eqn: energylevelsgeq}. \eqref{eqn: energylevelsequals} follows from setting $f_2=0$ and noting that in this case equality holds in \eqref{eqn: theothergeq} and \eqref{eqn: thegeq}. 
\end{proof}

\begin{proof}[Proof of Theorem \ref{thm: nonexpansive}] Applying Lemma \ref{lem: energylevels}, and recalling that $U[\pathvar_e]f=f,$ we see

\begin{align*}
    \|\bS f_1-\bS f_2\|_{\ell^2({\mathcal{H}})}^2&=\lim_{N\rightarrow\infty}\sum_{m=0}^N\sum_{\pathvar\in\mathcal{J}^m}\|\bS[\pathvar] f_1-\bS[\pathvar] f_2\|_{\mathcal{H}}^2\\
    &\leq \lim_{N\rightarrow\infty}\sum_{m=0}^N  \left(\sum_{\pathvar\in\mathcal{J}^m}\|\bU[\pathvar] f_1-\bU[\pathvar] f_2\|_{\mathcal{H}}^2-\sum_{\pathvar\in\mathcal{J}^{m+1}}\|\bU[\pathvar] f_1-\bU[\pathvar] f_2\|_{\mathcal{H}}^2\right) \\
    &\leq   \| f_1- f_2\|_{\mathcal{H}}^2-\limsup_{N\rightarrow\infty}\sum_{\pathvar\in\mathcal{J}^{N+1}}\|\bU[\pathvar] f_1-\bU[\pathvar] f_2\|_{\mathcal{H}}^2 \\
&\leq   \| f_1- f_2\|_{\mathcal{H}}^2.
\end{align*} 

\end{proof}

\begin{proof}[The Proof of Theorem \ref{thm: nonexpansive nonwindow}] 
Note that 
\begin{equation*}
    \left\|\frac{|\bS_J[\pathvar] f_i|}{|\varphi_0|}-\overline\bS[\pathvar]f_i\right\|_\mathcal{H}\leq \frac{1}{\min_x|\varphi_0(x)|}\||\bS_J[\pathvar]f_i|-\overline{\bS}[\pathvar]f_i|\varphi_0|\|_\mathcal{H}.
\end{equation*}
Therefore, by Proposition \ref{prop: Jlimit},
\begin{equation*}
    \lim_{J\rightarrow \infty}\left\|\frac{|\bS_J[\pathvar] f_i|}{|\varphi_0|}-\overline\bS[\pathvar]f_i\right\|_\mathcal{H}=0
\end{equation*}
which in turn implies that 
\begin{equation*}\lim_{J\rightarrow \infty}\left\|\frac{|\bS_J[\pathvar] f_i|}{|\varphi_0|}\right\|_\mathcal{H}=\overline\bS[\pathvar]f_i\text{vol}(\mathcal{X})^{1/2}.
\end{equation*}
Thus, using Fatou's lemma, we have
\begin{align*}\|\overline{\bS}f_1-\overline{\bS}f_2\|_2^2&=\sum_\pathvar |\overline{\bS}[\pathvar]f_1-\overline{\bS}[\pathvar]f_2|^2\\
&=\frac{1}{\text{vol}(\mathcal{X})}\sum_\pathvar\lim_{J\rightarrow\infty}\left|\left\|\frac{|\bS_J[\pathvar]f_1|}{|\varphi_0|}\right\|_\mathcal{H}-\left\|\frac{|\bS_J[\pathvar]f_2|}{|\varphi_0|}\right\|_\mathcal{H}\right|^2\\
&\leq \frac{1}{\text{vol}(\mathcal{X})}\liminf_{J\rightarrow\infty}\sum_\pathvar\left|\left\|\frac{|\bS_J[\pathvar]f_1|}{|\varphi_0|}\right\|_\mathcal{H}-\left\|\frac{|\bS_J[\pathvar]f_2|}{|\varphi_0|}\right\|_\mathcal{H}\right|^2\\
&\leq \frac{1}{\text{vol}(\mathcal{X})}\liminf_{J\rightarrow\infty}\sum_\pathvar\left\|\frac{\bS_J[\pathvar]f_1-\bS_J[\pathvar]f_2}{\varphi_0}\right\|_\mathcal{H}^2\\
&\leq \frac{1}{\min_x|\varphi_0(x)|^2\text{vol}(\mathcal{X})}\liminf_{J\rightarrow\infty}\sum_\pathvar\left\|\bS_J[\pathvar]f_1-\bS_J[\pathvar]f_2\right\|_\mathcal{H}^2\\
&\leq \frac{1}{\min_x|\varphi_0(x)|^2\text{vol}(\mathcal{X})}\|f_1-f_2\|_\mathcal{H},
\end{align*}
where in the last line we applied Theorem \ref{thm: nonexpansive}.
\end{proof}

\section{The Proof of Theorem \getrefnumber{thm: equivariance}}\label{sec: proof of equivariance}
\begin{proof}

Since $A_j=H^{2^J}$ and $W_j=H^{2^{j-1}}-H^{2^j} $ it follows from \eqref{eqn: heat commmutes} that\begin{align}
A^{(\zeta)}V_\zeta f=V_\zeta A f,\quad \text{and}\quad  W^{(\zeta)}_jV_\zeta f=V_\zeta W_jf. \end{align}
By definition, for all $\zeta\in\mathcal{G}$, we have that  $V_\zeta$ commutes with $\sigma$ since 
\begin{equation*}
    (V_\zeta \sigma f)(x)=(\sigma f)(\zeta^{-1}(x))=\sigma(f(\zeta^{-1}(x))=\sigma(V_\zeta f(x))=(\sigma V_\zeta f)(x).
\end{equation*}
Therefore, since by definition,
$\bU[\pathvar]f=\sigma W_{j_m}f
\ldots \sigma W_{j_1}f$
it follows that $\bU^{(\zeta)} V_\zeta f = V_\zeta \bU f.$ Lastly since $\bS=A\bU$, we have 
\begin{equation*}
    \bS^{(\zeta)} V_\zeta f  = A^{(\zeta)}\bU^{(\zeta)} V_\zeta f = A^{(\zeta)}V_\zeta\bU  f =  V_\zeta A\bU  f = V_\zeta\bS  f. 
\end{equation*}
\end{proof}

\section{The Proof of Theorem \getrefnumber{thm: Jlimit}} \label{sec: proof of theorem Jlimit}
In this Section, we prove both Theorem \ref{thm: Jlimit} and Lemma \ref{thm: window invariance}.
\begin{proof}[Proof of Lemma \ref{thm: window invariance}]
We first note that, under the assumption that $\mathcal{G}$ preserves the measure $\mu$, the Hilbert spaces $\mathcal{H}$ and $\mathcal{H}^{(\zeta)}$ have the same elements and so %as the we have that for all $\zeta\in\mathcal{G}$ $\mathcal{F}=\mathcal{F}_\zeta$ and $\mu_\zeta=\mu$. Therefore, we may identify $\mathcal{X}_\zeta$ with $\mathcal{X}$ and 
the subtraction $\bS f-\bS V_\zeta f$ is well defined. Similarly, %each element of $\mathcal{H}^{(\zeta)}$^{(\zeta)} can be uniquely identified with an element of $\mathcal{H}$ and therefore, 
we may identify $V_\zeta$ with an operator mapping $\mathcal{H}$ into itself. Therefore, by the assumption \eqref{eqn: commutes}, we have
\begin{align*}
\|\bS f-\bS^{(\zeta)} V_\zeta f\|_{\ell^2(\mathcal{H})} &= \|A\bU f- V_\zeta Sf\|_{\ell^2(\mathcal{H})} \\&= \|A\bU f- V_\zeta A\bU f\|_{\ell^2(\mathcal{H})}
\\&\leq \|V_\zeta A-A\|_\mathcal{H} \|\bU f\|_{\ell^2(\mathcal{H})}.
\end{align*}
\end{proof}

\begin{proof}[Proof of Theorem \ref{thm: Jlimit}] Let $\zeta \in\mathcal{G}$. The assumption that $\varphi_0$ is constant implies that $V_\zeta \varphi_0-\varphi_0=0$. Therefore, 
    \begin{align*}
    \|V_\zeta A_Jf - A_Jf\|_{\mathcal{H}}&=\left\|\sum_{k\in\mathcal{I}} g(\lambda_k)^{2^J}\langle f,\varphi_k\rangle_{\mathcal{H}}(V_\zeta\varphi_k-\varphi_k)\right\|_{\mathcal{H}}\\
    &=\left\|\sum_{k\geq 1}g(\lambda_k)^{2^J}\langle f,\varphi_k\rangle_{\mathcal{H}}(V_\zeta\varphi_k-\varphi_k)\right\|_{\mathcal{H}}\\
    &\leq \left\|\sum_{k\geq 1}g(\lambda_k)^{2^J}\langle f,\varphi_k\rangle_{\mathcal{H}}V_\zeta\varphi_k\right\|_{\mathcal{H}}+\left\|\sum_{k\geq 1}g(\lambda_k)^{2^J}\langle f,\varphi_k\rangle_{\mathcal{H}}\varphi_k\right\|_{\mathcal{H}}.
   \end{align*}
 The assumption that $\mathcal{G}$ preserves inner products together with the assumption that it preserves the measure implies that for $f,g\in\mathcal{H}$ 
 \begin{align*}\langle V_\zeta f,V_\zeta g\rangle_{\mathcal{H}}=\int_X V_\zeta f \overline{V_\zeta g} d\mu=\int_X V_\zeta f \overline{V_\zeta g} d\mu^{(\zeta)}=\langle V_\zeta f ,V_\zeta g \rangle_{\mathcal{H}^{(\zeta)}}=\langle f,g\rangle_{\mathcal{H}}
     .
 \end{align*}
 Thus, $\{V_\zeta \varphi_k\}_{k\in\mathcal{I}}$ forms an orthonormal basis for $\mathcal{H}$, and so applying Parseval's identity together with the assumption that $g$ is decreasing implies 
     \begin{align*}
    \|V_\zeta Af - A_Jf\|_{\mathcal{H}}&\leq  2 |g(\lambda_1)|^{2^J}\|f\|_{\mathcal{H}}.
    \end{align*}
    Therefore, the result now follows from Lemma \ref{thm: window invariance}.

\end{proof}

\section{The Proof of Theorem \getrefnumber{thm: wavelet stability}}\label{sec: proof of wavelet stability}
 \begin{proof}

 Recall, from \eqref{eqn: Hbar} the decomposition \begin{equation*}
     H=\widetilde{H} + \overline{H},\quad H'=\widetilde{H}'+\overline{H}'.
 \end{equation*}

The operator $\widetilde{H}$ projects a function onto the zero eigenspace $\text{span}(\varphi_0)$ and the operator $\overline{H}$ maps a function into its orthogonal complement  $\text{span}(\varphi_0)^\perp$. Therefore, we have $\widetilde{H}\overline{H}=\overline{H}\widetilde{H}=0$, and we also have $\widetilde{H}^{2^j}=\widetilde{H}$ for all $j\geq 0$. Therefore, 
\begin{equation*}
    H^{2^j}=\widetilde{H}+\overline{H}^{2^j},
\end{equation*}
which implies
\begin{equation*}
    H^{2^{j+1}}-H^{2^{j}}=\overline{H}^{2^{j+1}}-\overline{H}^{2^{j}}
\end{equation*}
(with similar equations holding  for $H'$ and $\overline{H}'$).
Therefore,

\begin{align}\nonumber
    &\|\mathcal{W}_{J}-\mathcal{W}'_{J}\|^2_{\ell^2(\mathcal{H})}\\\nonumber
    \leq& \|H^{2^{J}}-(H')^{2^{J}}\|_\mathcal{H}^2 + \sum_{j=0}^{J-1} \|H^{2^j}-H^{2^{j+1}}-\left((H')^{2^j}-(H')^{2^{j+1}}\right)\|_\mathcal{H}^2+\|H-H'\|_\mathcal{H}^2\\\nonumber
    =&\|\widetilde{H}-\widetilde{H}'+(\overline{H})^{2^{J}}-(\overline{H})'^{2^{J}}\|_\mathcal{H}^2+ \sum_{j=0}^{J-1} \|(\overline{H})^{2^j}-(\overline{H})^{2^{j+1}}-\left((\overline{H}')^{2^j}-(\overline{H}')^{2^{j+1}}\right)\|_\mathcal{H}^2\nonumber\\&\qquad\qquad+\|\widetilde{H}-\widetilde{H}'+\overline{H}-\overline{H}'\|_\mathcal{H}^2\nonumber \\\nonumber
    \leq&4\|\widetilde{H}-\widetilde{H}'\|_\mathcal{H}^2+2\|(\overline{H})^{2^{J}}-(\overline{H}')^{2^{J}}\|_\mathcal{H}^2 \\\nonumber&\qquad\qquad+ 2\sum_{j=0}^{J-1} \|(\overline{H})^{2^j}-(\overline{H}')^{2^j}\|_\mathcal{H}^2
    +2\sum_{j=0}^{J-1} \|(\overline{H}')^{2^{j+1}}-(\overline{H})^{2^{j+1}}\|_\mathcal{H}^2+2\|\overline{H}-\overline{H}'\|_\mathcal{H}^2
     \\ \leq&4\left(\|\widetilde{H}-\widetilde{H}'\|_\mathcal{H}^2+ \sum_{j=0}^{J} \|(\overline{H})^{2^j}-(\overline{H}')^{2^j}\|_\mathcal{H}^2\right).\label{eqn: Qsum with bars1}
\end{align}

The following Lemma is a variant of Eq.\ (23) in \cite{gama:diffScatGraphs2018} (see also Lemma L.1 of \cite{perlmutter2019understanding}). 

\begin{lemma}\label{lem: infiniteqsum}
Let $\beta = \max \left\{ \|\overline{H}\|_{\mathcal{H}},\|\overline{H}'\|_\mathcal{H} \right\}$  and assume that   $\beta<1.$ Then 
\begin{equation*}
    \sum_{j=0}^{J}\left\|\overline{H}^{2^{j}}-(\overline{H}')^{2^j}\right\|^2_\mathcal{H} \leq C_0(\beta)\left\|\overline{H}-\overline{H}'\right\|^2_\mathcal{H}, 
\end{equation*}
where $C_0(\beta)\coloneqq \frac{\beta^2 + 1}{(1-\beta^2)^3}$.
\end{lemma}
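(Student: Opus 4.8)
The plan is to reduce everything to a single operator-norm estimate on the dyadic powers and then to a numerical series comparison. Write $D_j \coloneqq \overline{H}^{2^j} - (\overline{H}')^{2^j}$, so that the left-hand side is $\sum_{j=0}^J \|D_j\|_\mathcal{H}^2$ and $\|D_0\|_\mathcal{H} = \|\overline{H} - \overline{H}'\|_\mathcal{H}$. First I would establish the bound
\[
\|D_j\|_\mathcal{H} \leq 2^j \beta^{2^j - 1} \|\overline{H} - \overline{H}'\|_\mathcal{H}.
\]
The cleanest route is the recursion obtained from the identity $A^2 - B^2 = A(A-B) + (A-B)B$ applied with $A = \overline{H}^{2^j}$ and $B = (\overline{H}')^{2^j}$, which gives $D_{j+1} = \overline{H}^{2^j} D_j + D_j (\overline{H}')^{2^j}$. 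Since $\|\overline{H}^{2^j}\|_\mathcal{H}, \|(\overline{H}')^{2^j}\|_\mathcal{H} \leq \beta^{2^j}$ by submultiplicativity and the hypothesis $\beta < 1$, this yields $\|D_{j+1}\|_\mathcal{H} \leq 2\beta^{2^j}\|D_j\|_\mathcal{H}$, and unrolling the recursion using $\sum_{i=0}^{j-1} 2^i = 2^j - 1$ produces the displayed estimate. (Alternatively one could expand via the telescoping identity $A^n - B^n = \sum_{k=0}^{n-1} A^k (A-B) B^{n-1-k}$ with $n = 2^j$ and bound each summand by $\beta^{n-1}\|\overline{H}-\overline{H}'\|_\mathcal{H}$.)

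Squaring this estimate and summing over $j$ then gives
\[
\sum_{j=0}^J \|D_j\|_\mathcal{H}^2 \leq \|\overline{H} - \overline{H}'\|_\mathcal{H}^2 \sum_{j=0}^\infty 4^j \beta^{2(2^j - 1)},
\]
so it remains only to prove the numerical bound $\sum_{j=0}^\infty 4^j \beta^{2(2^j-1)} \leq \frac{\beta^2 + 1}{(1-\beta^2)^3}$. Here I would set $x \coloneqq \beta^2 \in [0,1)$ and invoke the generating-function identity $\sum_{n=0}^\infty (n+1)^2 x^n = \frac{1+x}{(1-x)^3}$.

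The key observation, which I expect to be the crux of the argument, is that the dyadic series on the left is dominated term by term by this full power series. Indeed, the $j$-th term $4^j \beta^{2(2^j-1)} = 4^j x^{2^j - 1}$ sits at the exponent $n = 2^j - 1$, and the coefficient of $x^n$ in the right-hand series is exactly $(n+1)^2 = (2^j)^2 = 4^j$. Thus every term of the left-hand series coincides with a single term of $\sum_{n} (n+1)^2 x^n$, while all remaining terms on the right are nonnegative; since $x \geq 0$, the comparison is immediate. Substituting $x = \beta^2$ back in recovers the constant $C_0(\beta) = \frac{\beta^2+1}{(1-\beta^2)^3}$ and finishes the proof. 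The only genuinely delicate point is recognizing this coefficient coincidence between $4^j$ and $((2^j-1)+1)^2$; everything else is routine operator-norm bookkeeping.
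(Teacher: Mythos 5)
Your proof is correct, and it reaches the same key intermediate estimate $\|\overline{H}^{2^j}-(\overline{H}')^{2^j}\|_\mathcal{H}\leq 2^j\beta^{2^j-1}\|\overline{H}-\overline{H}'\|_\mathcal{H}$ as the paper, but by a different mechanism. The paper interpolates: it sets $A_j(t)=(t\overline{H}+(1-t)\overline{H}')^{2^j}$, bounds $\|A_j(1)-A_j(0)\|_\mathcal{H}$ by $\sup_t\|A_j'(t)\|_\mathcal{H}$, and estimates the $2^j$ terms of the derivative. Your doubling recursion $D_{j+1}=\overline{H}^{2^j}D_j+D_j(\overline{H}')^{2^j}$, with $\|D_{j+1}\|_\mathcal{H}\leq 2\beta^{2^j}\|D_j\|_\mathcal{H}$ unrolled via $\sum_{i=0}^{j-1}2^i=2^j-1$, is purely algebraic and avoids differentiating an operator-valued function; it also exploits the dyadic structure of the exponents directly rather than passing through a general $n$-th power identity. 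The two approaches buy essentially the same thing here, though the interpolation argument generalizes more readily to non-integer or non-dyadic powers. The second half of your argument --- term-by-term domination of $\sum_j 4^j x^{2^j-1}$ by $\sum_n (n+1)^2x^n=\frac{1+x}{(1-x)^3}$ with $x=\beta^2$ --- is the same comparison the paper makes (it writes the majorant as $\beta^{-2}\sum_n n^2\beta^{2n}$, which is an index shift of your series) and yields the identical constant $C_0(\beta)$.
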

\begin{proof}
Letting $A_j(t)=(t\overline{H}+(1-t)\overline{H}')^{2^j},$ we may check that 
\begin{equation*}\left\|\overline{H}^{2^{j}}-(\overline{H}')^{2^j}\right\|_\mathcal{H}=\|A_j(1)-A_j(0)\|_\mathcal{H}\leq \int_0^1\|A'_j(t)\|_\mathcal{H} dt \leq \sup_{0\leq t\leq 1}\|A'_j(t)\|_\mathcal{H} dt.
\end{equation*}
Since, 
\begin{equation*}
    A_j'(t) = \sum_{\ell=0}^{2^j-1} \left(t\overline{H}+(1-t)\overline{H}'\right)^\ell  \left(\overline{H} - \overline{H}' \right) \left(t\overline{H}+(1-t)\overline{H}'\right)^{2^j-\ell-1},
\end{equation*}
and
$\|\overline{H}\|_\mathcal{H},\|\overline{H}'\|_\mathcal{H}\leq \beta,$ this implies 
\begin{equation*}
    \|A_j'(t)\|_2\leq 2^j\beta^{2^j-1}\left\|\overline{H}-\overline{H}'\right\|_\mathcal{H}.
\end{equation*}
Therefore,
\begin{equation*} \sum_{j=0}^{J+1}\left\|\overline{H}^{2^{j}}-(\overline{H}')^{2^j}\right\|^2_\mathcal{H} \leq \sum_{j=0}^\infty (2^j\beta^{2^j-1})^2 \left\|\overline{H}-\overline{H}'\right\|^2_\mathcal{H} \eqqcolon C_0(\beta)\left\|\overline{H}-\overline{H}'\right\|^2_\mathcal{H}.\end{equation*}Lastly, one may compute
\begin{equation*}
    C_0(\beta)=\sum_{j=0}^\infty (2^j\beta^{2^j-1})^2=\beta^{-2}\sum_{j=0}^\infty (2^j\beta^{2^j})^2\leq \beta^{-2}\sum_{n=0}^\infty n^2 \beta^{2n}=\beta^{-2}\frac{\beta^2(\beta^2+1)}{(1-\beta^2)^3}=\frac{\beta^2+1}{(1-\beta^2)^3},
\end{equation*}
where we used the Taylor expansion $\frac{x^2(x^2+1)}{(1-x^2)^3}=\sum_{n=0}^\infty n^2x^{2n}$.
\end{proof}
Returning to the proof of the theorem, we note that by the triangle inequality we have 
\begin{equation*}
    \|\overline{H}-\overline{H}'\|_\mathcal{H}\leq \|H-H'\|_\mathcal{H}+\|\widetilde{H}-\widetilde{H}'\|_\mathcal{H}.
\end{equation*} 
Therefore, combining \eqref{eqn: Qsum with bars1} with Lemma \ref{lem: infiniteqsum}, and using the fact that $(a+b)^2\leq 2(a^2+b^2)$ for all $a,b\in\mathbb{R}$, we have
\begin{equation}\label{eqn: Qsum with bars}
    \|\mathcal{W}_{J}-\mathcal{W}'_{J}\|^2_{\ell^2(\mathcal{H})}  \leq C(\beta)\left(\|\widetilde{H}-\widetilde{H}'\|^2_\mathcal{H}+\|H-H'\|^2_\mathcal{H}\right),
\end{equation}
where $C(\beta)=C C_0(\beta)$ for some absolute constant $C$.

To estimate 
$\|\widetilde{H}-\widetilde{H}'\|^2_\mathcal{H}$, we note that for all $f\in\mathcal{H}$ we have 
\begin{align*}
    \|\widetilde{H}f-\widetilde{H}'f\|_\mathcal{H}&=\|\langle f,\varphi_0\rangle_{\mathcal{H}}\varphi_0-\langle f,\varphi'_0\rangle_{\mathcal{H}'}\varphi'_0\|_{\mathcal{H}}\\
    &\leq \|\langle f,\varphi_0-\varphi'_0\rangle_{\mathcal{H}}\varphi_0\|_\mathcal{H}+\|\langle f, \varphi_0'\rangle_{\mathcal{H}}(\varphi_0-\varphi_0')\|_\mathcal{H}+\| |\langle f,\varphi_0'\rangle_{\mathcal{H}}-\langle f,\varphi_0'\rangle_{\mathcal{H}'}|\varphi'_0\|_{\mathcal{H}}\\
    &\leq \|\varphi_0-\varphi_0'\|_\mathcal{H}\|f\|_\mathcal{H}
    +\|\varphi_0-\varphi_0'\|_\mathcal{H}\|\varphi_0'\|_\mathcal{H}\|f\|_\mathcal{H} +|\langle f,\varphi_0'\rangle_{\mathcal{H}}-\langle f,\varphi_0'\rangle_{\mathcal{H}'}|\|\varphi_0'\|_\mathcal{H}.
\end{align*} 
By \eqref{eqn: change of measure change of norm} and by 
\eqref{eqn: change of inner product}
\begin{equation*}
    \|\varphi_0'\|_\mathcal{H}\leq R(\mathcal{H},\mathcal{H}')^{1/2}\quad \text{and}\quad |\langle f,\varphi_0'\rangle_{\mathcal{H}}-\langle f,\varphi_0'\rangle_{\mathcal{H}'}|\leq \kappa(\mathcal{H},\mathcal{H}')R(\mathcal{H},\mathcal{H}')^{1/2}\|f\|_\mathcal{H}.
\end{equation*}
Therefore, we have 
\begin{equation*}
    \|\widetilde{H}-\widetilde{H}'\|_\mathcal{H} \leq \|\varphi_0-\varphi_0'\|_\mathcal{H}(1+R(\mathcal{H},\mathcal{H}')^{1/2}) + R(\mathcal{H},\mathcal{H}')\kappa(\mathcal{H},\mathcal{H}').
\end{equation*}
Thus, 
\begin{equation*}
    \|\mathcal{W}_J-\mathcal{W}_{J}'\|_{\ell^2(\mathcal{H})}^2 \leq C(\beta) \left[\|\varphi_0-\varphi_0'\|^2_\mathcal{H} R(\mathcal{H},\mathcal{H}') + R(\mathcal{H},\mathcal{H}')^2\kappa(\mathcal{H},\mathcal{H}')^2 +\|H-H'\|_\mathcal{H}^2\right]
\end{equation*}
as desired.
 \end{proof}

\section{The Proof of Theorem \getrefnumber{thm: windowstability}} \label{sec: proof of scattering stability}
In order to prove Theorem \ref{thm: windowstability}, we will need the following Lemma.

\begin{lemma}\label{lem: windowstability}

Under the assumptions of Theorem \ref{thm: windowstability},  we have
\begin{equation}\label{eqn: scatteringstabilitynopermwindow}
    \left\|\bS^\ell f-(\bS^\ell)'f\right\|_{\ell^2(\mathcal{H})}\leq  \sqrt{2}\|\cW-\cW'\|_{\mathcal{H}}\left(\sum_{k=0}^\ell\|\cW'\|_{\mathcal{H}}^{k}\right)\|f\|_{\mathcal{H}}\quad\text{for all } f\in\mathcal{H}.
\end{equation}  
\end{lemma}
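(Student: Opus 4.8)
The plan is to peel off the final averaging operator $A$ and then control the propagated difference layer by layer through a telescoping recursion. Throughout I interpret $\cW$ and $\cW'$ as bounded operators $\mathcal{H}\to\ell^2(\mathcal{H})$ sending $g\mapsto\{W_jg\}_{j\in\mathcal{J}}\cup\{Ag\}$, so that $\|\cW'\|_{\mathcal{H}}$ and $\|\cW-\cW'\|_{\mathcal{H}}$ are their operator norms; the upper frame bound in \eqref{eqn: frameAB} gives $\|\cW'\|_{\mathcal{H}}\le 1$, $\|A\|_{\mathcal{H}}\le 1$, and, since $A$ and $A-A'$ are single coordinates of $\cW$ and $\cW-\cW'$, also $\|A-A'\|_{\mathcal{H}}\le\|\cW-\cW'\|_{\mathcal{H}}$. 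The first step is to write, for each $p\in\mathcal{J}^\ell$,
\begin{equation*}
\bS[p]f-\bS'[p]f = A\bigl(\bU[p]f-\bU'[p]f\bigr)+(A-A')\bU'[p]f,
\end{equation*}
square, sum over $p\in\mathcal{J}^\ell$, and apply $(a+b)^2\le 2a^2+2b^2$. Setting $E_\ell^2\coloneqq\sum_{p\in\mathcal{J}^\ell}\|\bU[p]f-\bU'[p]f\|_{\mathcal{H}}^2$, the bound $\|A\|_{\mathcal{H}}\le1$ controls the first group by $2E_\ell^2$, while $\|A-A'\|_{\mathcal{H}}\le\|\cW-\cW'\|_{\mathcal{H}}$ together with the energy estimate $\sum_{p\in\mathcal{J}^\ell}\|\bU'[p]f\|_{\mathcal{H}}^2\le\|\cW'\|_{\mathcal{H}}^{2\ell}\|f\|_{\mathcal{H}}^2$ (a one-step consequence of $\sum_{j}\|W_j'g\|_{\mathcal{H}}^2\le\|\cW'g\|_{\mathcal{H}}^2\le\|\cW'\|_{\mathcal{H}}^2\|g\|_{\mathcal{H}}^2$ and the nonexpansiveness of $\sigma$, cf. Lemma \ref{lem: energylevels}) controls the second group by $2\|\cW-\cW'\|_{\mathcal{H}}^2\|\cW'\|_{\mathcal{H}}^{2\ell}\|f\|_{\mathcal{H}}^2$.

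The heart of the argument is a recursion for $E_\ell$. Writing each length-$\ell$ path as $(p',j)$ with $p'\in\mathcal{J}^{\ell-1}$, $j\in\mathcal{J}$, and abbreviating $a=\bU[p']f$, $a'=\bU'[p']f$, I would add and subtract $\sigma W_j a'$ and use $|\sigma(x)-\sigma(y)|\le|x-y|$ twice to get
\begin{equation*}
\|\sigma W_j a-\sigma W_j'a'\|_{\mathcal{H}}\le\|W_j(a-a')\|_{\mathcal{H}}+\|(W_j-W_j')a'\|_{\mathcal{H}}.
\end{equation*}
Summing the squares over $j$ and invoking Minkowski's inequality in $\ell^2(\mathcal{J})$, the first term contributes at most $\|a-a'\|_{\mathcal{H}}$ (upper frame bound $\sum_j\|W_jg\|_{\mathcal{H}}^2\le\|g\|_{\mathcal{H}}^2$) and the second at most $\|\cW-\cW'\|_{\mathcal{H}}\|a'\|_{\mathcal{H}}$ (definition of the frame operator norm, after reinstating the harmless $(A-A')a'$ coordinate). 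Squaring, summing over $p'\in\mathcal{J}^{\ell-1}$, and applying Minkowski once more yields
\begin{equation*}
E_\ell\le E_{\ell-1}+\|\cW-\cW'\|_{\mathcal{H}}\Bigl(\sum_{p'\in\mathcal{J}^{\ell-1}}\|\bU'[p']f\|_{\mathcal{H}}^2\Bigr)^{1/2}\le E_{\ell-1}+\|\cW-\cW'\|_{\mathcal{H}}\|\cW'\|_{\mathcal{H}}^{\ell-1}\|f\|_{\mathcal{H}}.
\end{equation*}
Since $\bU[p_e]f=\bU'[p_e]f=f$ gives $E_0=0$, iterating produces $E_\ell\le\|\cW-\cW'\|_{\mathcal{H}}\bigl(\sum_{k=0}^{\ell-1}\|\cW'\|_{\mathcal{H}}^k\bigr)\|f\|_{\mathcal{H}}$.

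Finally I would assemble the two pieces. Substituting the bound for $E_\ell$ into the first split gives
\begin{equation*}
\|\bS^\ell f-(\bS^\ell)'f\|_{\ell^2(\mathcal{H})}^2\le 2\|\cW-\cW'\|_{\mathcal{H}}^2\Bigl[\Bigl(\sum_{k=0}^{\ell-1}\|\cW'\|_{\mathcal{H}}^k\Bigr)^2+\|\cW'\|_{\mathcal{H}}^{2\ell}\Bigr]\|f\|_{\mathcal{H}}^2,
\end{equation*}
and the elementary inequality $x^2+y^2\le(x+y)^2$ for $x,y\ge0$, applied with $x=\sum_{k=0}^{\ell-1}\|\cW'\|_{\mathcal{H}}^k$ and $y=\|\cW'\|_{\mathcal{H}}^{\ell}$, collapses the bracket into $\bigl(\sum_{k=0}^{\ell}\|\cW'\|_{\mathcal{H}}^k\bigr)^2$; taking square roots yields exactly \eqref{eqn: scatteringstabilitynopermwindow}. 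The main obstacle is the recursion for $E_\ell$: one must interleave the two-term triangle-inequality split with two \emph{separate} applications of Minkowski's inequality (first over the new index $j$, then over the old paths $p'$) and deploy the upper frame bound and the nonexpansiveness of $\sigma$ at precisely the right places, so that the geometric factors $\|\cW'\|_{\mathcal{H}}^k$ emerge cleanly rather than accumulating into an exponential blow-up across layers.
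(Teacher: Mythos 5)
Your proof is correct and follows essentially the same route as the paper's: peel off the averaging operator with a two-term split, bound $\sum_{p}\|\bU[p]f-\bU'[p]f\|_{\mathcal{H}}^2$ by a layer-by-layer recursion using Minkowski, the upper frame bound and the nonexpansiveness of $\sigma$ (this is the paper's Lemma~\ref{lem: nonexpansiveU}), and recombine via $(a+b)^2\le 2a^2+2b^2$ and $x^2+y^2\le(x+y)^2$. The only cosmetic difference is that you attach the operator differences to the primed quantities, yielding the recursion $E_\ell\le E_{\ell-1}+\|\cW-\cW'\|_{\mathcal{H}}\|\cW'\|_{\mathcal{H}}^{\ell-1}\|f\|_{\mathcal{H}}$, whereas the paper attaches them to the unprimed ones, yielding $t_\ell\le\|\cW'\|_{\mathcal{H}}\,t_{\ell-1}+\|\cW-\cW'\|_{\mathcal{H}}\|f\|_{\mathcal{H}}$; both iterate to the same geometric sum.
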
 
Before proving Lemma \ref{lem: windowstability}, we will show how it is used to prove Theorem \ref{thm: windowstability}.

\begin{proof}[Proof of Theorem \ref{thm: windowstability}] Let $\zeta\in\mathcal{G}$, and let $\mathcal{X}_\zeta$ be defined as in \eqref{eqn: Xzeta}.
By \eqref{eqn: permutation assumption}, we have $V_\zeta\bS^\ell f=\bS^{\ell,(\zeta)}V_\zeta f$. 
Therefore, the triangle inequality implies 
\begin{equation}\label{eqn: threeparts}
    \left\|\bS^\ell f-(\bS^\ell)'\widetilde{f}\right\|_{\ell^2(\mathcal{H})}\leq
    \left\|\bS^\ell f-V_\zeta\bS^\ell f\right\|_{\ell^2(\mathcal{H})}+\|\bS^{\ell,(\zeta)}V_\zeta f-\bS^{\ell,(\zeta)}\widetilde{f}\|_{\ell^2(\mathcal{H})}+\|\bS^{\ell,(\zeta)}\widetilde{f}-(\bS^\ell)'\widetilde{f}\|_{\ell^2(\mathcal{H})}.
\end{equation} The assumption \eqref{eqn: permutation assumption} also implies that 
\begin{equation}
    \|\bS^\ell f-V_\zeta\bS^\ell f\|_{\ell^2(\mathcal{H})}\leq \mathcal{B}\|f\|_\mathcal{H}.\label{eqn: assumption term}
\end{equation}
Similarly, by Theorem \ref{thm: nonexpansive} and \eqref{eqn: change of measure change of norm}, we have that 
\begin{align}
    \|\bS^{\ell,(\zeta)}V_\zeta f-\bS^{\ell,(\zeta)}\widetilde{f}\|_{\ell^2(\mathcal{H})}   &\leq R\left(\mathcal{H},\mathcal{H}^{(\zeta)}\right)^{1/2}\|\bS^{\ell,(\zeta)}V_\zeta f-\bS^{\ell,(\zeta)}\widetilde{f}\|_{\ell^2(\mathcal{H}^{(\zeta)})}  \nonumber\\ &\leq R\left(\mathcal{H},\mathcal{H}^{(\zeta)}\right)^{1/2}\|V_\zeta f-\widetilde{f}\|_{\mathcal{H}^{(\zeta)}}\nonumber\\&\leq R\left(\mathcal{H},\mathcal{H}^{(\zeta)}\right)\|V_\zeta f-\widetilde{f}\|_{\mathcal{H}}.\label{eqn: nonexpansiveterm}
\end{align}
Applying Lemma \ref{lem: windowstability} and \eqref{eqn: change of measure change of norm} yields
\begin{align*}
\|\bS^{\ell,(\zeta)}\widetilde{f}-(\bS^\ell)'\widetilde{f}\|_{\ell^2(\mathcal{H})}
&\leq R\left(\mathcal{H},\mathcal{H}^{(\zeta)}\right)^{1/2}
\|\bS^{\ell,(\zeta)}\widetilde{f}-(\bS^\ell)'\widetilde{f}\|_{\ell^2(\mathcal{H}^{(\zeta)})}\\
&\leq\sqrt{2}R\left(\mathcal{H},\mathcal{H}^{(\zeta)}\right)^{1/2}\|\cW^{(\zeta)}-\cW'\|_{\mathcal{H}^{(\zeta)}}\left(\sum_{k=0}^\ell\|\cW'\|_{\mathcal{H}^{(\zeta)}}^{k}\right)\|\widetilde{f}\|_{\mathcal{H}^{(\zeta)}}\\
&\leq\sqrt{2}R\left(\mathcal{H},\mathcal{H}^{(\zeta)}\right)\|\cW^{(\zeta)}-\cW'\|_{\mathcal{H}^{(\zeta)}}\left(\sum_{k=0}^\ell\|\cW'\|_{\mathcal{H}^{(\zeta)}}^{k}\right)\|\widetilde{f}\|_{\mathcal{H}}.\label{eqn: fourpointtwoterm}
\end{align*}Thus, infimizing over $\zeta$ completes the proof.
\end{proof}

\begin{proof}[The Proof of Lemma \ref{lem: windowstability}]

Let $\cA\coloneqq\|\cW- \cW'\|_{\mathcal{H}}$ and $\mathcal{C}\coloneqq\|\cW'\|_{\mathcal{H}}.$ %By the triangle inequality,

To prove \eqref{eqn: scatteringstabilitynopermwindow}, we need to  show
\begin{equation}\label{eqn: korder}
    \sum_{\pathvar\in\mathcal{J}^\ell}\|\bS[\pathvar]f-\bS'[\pathvar]f\|^2_{\mathcal{H}}\leq 2\mathcal{A}^2\cdot\left(\sum_{k=0}^\ell\mathcal{C}^{k}\right)^2\|f\|_{\mathcal{H}}^2.
\end{equation} 
%We will argue by induction.
For $\ell=0$, we recall from \eqref{eqn: empty path} that the zeroth-order windowed scattering coefficient of $f$
is given by 
%\begin{equation*}
    $\bS[\pathvar_{e}]f=Af$,
%\end{equation*}
where $\pathvar_{e}$ is the empty-index.
%and 
%$\mathcal{W}'=\{W'_j,\Phi'\}_{j\in\mathcal{J}}$
Therefore, by the definition of $\cA$ we have%the assumption \eqref{eqn: waveletstabilityassumption}
\begin{equation*}
    \sum_{\pathvar\in\mathcal{J}^0}\|\bS[\pathvar]f-\bS'[\pathvar]f\|_{\mathcal{H}}^2    =\|Af-A'f\|^2_{\mathcal{H}} \leq\|\mathcal{W}f-\mathcal{W}'f\|_{\mathcal{H}}^2\leq \mathcal{A}^2\|f\|_{\mathcal{H}}^2,
\end{equation*}
and so  \eqref{eqn: korder} holds when $\ell=0.$ For the case where $\ell\geq 1,$ 
%we will prove \eqref{eqn: korder}. 
%By definition, 
we note that for all $\pathvar\in\mathcal{J}^\ell,$  we have 
\begin{align*}
    \|\bS[\pathvar]f-\bS'[\pathvar]f\|_{\mathcal{H}}&=    \|A \bU[\pathvar]f-A' \bU'[\pathvar]f\|_{\mathcal{H}}\\
    &\leq \|(A-A')\bU[\pathvar]f\|_{\mathcal{H}} +  \|A'\bU[\pathvar]f-A'\bU'[\pathvar]f\|_{\mathcal{H}}\\
    &\leq \|A-A'\|_{\mathcal{H}}\|\bU[\pathvar]f\|_{\mathcal{H}} +  \|A'\|_{\mathcal{H}}\|\bU[\pathvar]f-\bU'[\pathvar]f\|_{\mathcal{H}},%\\
      \end{align*}
and so using the fact that $(a+b)^2 \leq 2a^2 + 2b^2$ and summing over $\pathvar$ implies
\begin{align*}    
    \sum_{\pathvar\in\mathcal{J}^\ell}\|\bS[\pathvar]f-\bS'[\pathvar]f\|_{\mathcal{H}}^2
    &\leq2\|A-A'\|_{\mathcal{H}}^2\sum_{\pathvar\in\mathcal{J}^\ell}\|\bU[\pathvar]f\|_{\mathcal{H}}^2 +2\|A'\|_{\mathcal{H}}^2\sum_{\pathvar\in\mathcal{J}^\ell}  \|\bU[\pathvar]f-\bU'[\pathvar]f\|_{\mathcal{H}}^2. 
\end{align*}
Therefore, \eqref{eqn: korder} and thus \eqref{eqn: scatteringstabilitynopermwindow}, follow from applying Lemma \ref{lem: nonexpansiveU} stated below, noting that $\|A-A'\|_{\mathcal{H}}^2\leq \cA^2$ and $\|A'\|_{\mathcal{H}}^2\leq \mathcal{C}^2$, and using the fact that $a^2 + b^2 \leq (a + b)^2$ when $a,b \geq 0$.

\end{proof}

\begin{lemma}\label{lem: nonexpansiveU}
Let $\mathcal{A}\coloneqq\|\mathcal{W}-\mathcal{W}'\|_{\mathcal{H}}$ and $\mathcal{C}\coloneqq \|\mathcal{W}'\|_{\mathcal{H}}$. Then, for all $\ell\geq 1$,
\begin{align*}
    \sum_{\pathvar\in\mathcal{J}^\ell}\|\bU[\pathvar]f\|^2_{\mathcal{H}}   &\leq \|f\|^2_{\mathcal{H}},\quad\text{and}\quad 
    \sum_{\pathvar\in\mathcal{J}^\ell}\|\bU[\pathvar]f-\bU'[\pathvar]f\|^2_{\mathcal{H}} \leq \mathcal{A}^2 \left(\sum_{k=0}^{\ell-1}\mathcal{C}^{k}\right)^2\|f\|^2_{\mathcal{H}}.%\frac{\mathcal{C}^{k}-1}{\mathcal{C}-1}.
\end{align*}
\end{lemma}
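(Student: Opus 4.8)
The plan is to treat the two inequalities separately, deriving the first one quickly and then feeding it into an inductive argument for the second. For the first inequality I would simply invoke the energy-decay estimate \eqref{eqn: energylevelsequals} of Lemma~\ref{lem: energylevels} (equivalently, the chain of inequalities recorded in Remark~\ref{rem: U not f}): iterating $\sum_{p\in\mathcal{J}^{m+1}}\|\bU[p]f\|_\mathcal{H}^2 \le \sum_{p\in\mathcal{J}^{m}}\|\bU[p]f\|_\mathcal{H}^2$ down to $m=0$, where $\mathcal{J}^0$ consists of the empty path and $\bU[p_e]f=f$, yields $\sum_{p\in\mathcal{J}^\ell}\|\bU[p]f\|_\mathcal{H}^2 \le \|f\|_\mathcal{H}^2$.

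For the second inequality, set $E_\ell := \big(\sum_{p\in\mathcal{J}^\ell}\|\bU[p]f-\bU'[p]f\|_\mathcal{H}^2\big)^{1/2}$ and aim to prove $E_\ell \le \mathcal{A}\big(\sum_{k=0}^{\ell-1}\mathcal{C}^k\big)\|f\|_\mathcal{H}$ by induction on $\ell$; squaring then gives the claim. The core is a one-layer peeling step. Writing $p=(p^-,j)$ with $p^-\in\mathcal{J}^{\ell-1}$ and $j\in\mathcal{J}$, we have $\bU[p]f=\sigma W_j\bU[p^-]f$ and $\bU'[p]f=\sigma W'_j\bU'[p^-]f$. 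Since $\sigma$ is non-expansive, and after adding and subtracting $W'_j\bU[p^-]f$,
\begin{equation*}
\|\bU[p]f-\bU'[p]f\|_\mathcal{H} \le \|(W_j-W'_j)\bU[p^-]f\|_\mathcal{H} + \|W'_j(\bU[p^-]f-\bU'[p^-]f)\|_\mathcal{H}.
\end{equation*}

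Next I would square, sum over $p\in\mathcal{J}^\ell$, and apply Minkowski's inequality in $\ell^2$ over the index $p$ to split $E_\ell$ into the $\ell^2$-norms of the two right-hand terms. For the first term, interpreting $\mathcal{A}=\|\mathcal{W}-\mathcal{W}'\|_\mathcal{H}$ as the operator norm of the analysis-operator difference gives $\sum_{j\in\mathcal{J}}\|(W_j-W'_j)g\|_\mathcal{H}^2 \le \mathcal{A}^2\|g\|_\mathcal{H}^2$ for every $g$; summing over $p^-$ and applying the first inequality of the lemma bounds this contribution by $\mathcal{A}\|f\|_\mathcal{H}$. For the second term, the frame bound for $\mathcal{W}'$ gives $\sum_{j\in\mathcal{J}}\|W'_j g\|_\mathcal{H}^2 \le \mathcal{C}^2\|g\|_\mathcal{H}^2$, so summing over $p^-$ bounds this contribution by $\mathcal{C}\,E_{\ell-1}$. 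Combining yields the recursion $E_\ell \le \mathcal{A}\|f\|_\mathcal{H} + \mathcal{C}\,E_{\ell-1}$, which with the base case $E_0=0$ (since $\bU[p_e]f=\bU'[p_e]f=f$) unwinds to the geometric sum $E_\ell\le \mathcal{A}\big(\sum_{k=0}^{\ell-1}\mathcal{C}^k\big)\|f\|_\mathcal{H}$.

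I expect the main obstacle to be careful bookkeeping rather than any deep difficulty: correctly reading $\|\mathcal{W}-\mathcal{W}'\|_\mathcal{H}$ and $\|\mathcal{W}'\|_\mathcal{H}$ as the operator norms of the maps $g\mapsto\{(W_j-W'_j)g\}_j\cup\{(A-A')g\}$ and $g\mapsto\{W'_jg\}_j\cup\{A'g\}$ into $\ell^2(\mathcal{H})$, so that the per-layer bounds $\sum_j\|(W_j-W'_j)g\|^2\le\mathcal{A}^2\|g\|^2$ and $\sum_j\|W'_jg\|^2\le\mathcal{C}^2\|g\|^2$ are justified by simply discarding the averaging coordinate; and applying Minkowski's inequality to the sequences indexed by the full path $p$ (rather than only to the nonlinearity at a single layer), so that the two contributions separate cleanly and the recursion in $E_\ell$ closes.
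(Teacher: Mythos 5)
Your proposal is correct and follows essentially the same argument as the paper: the paper also proves the first inequality by iterating the frame bound with the nonexpansiveness of $\sigma$, and proves the second by the same one-layer peeling, $\ell^2$-Minkowski split over paths, and the recursion $t_{\ell+1}\leq \mathcal{A}\|f\|_{\mathcal{H}}+\mathcal{C}\,t_\ell$. The only cosmetic differences are that you cite Lemma~\ref{lem: energylevels} rather than redoing the induction for the first inequality, and you anchor the recursion at $E_0=0$ instead of at $t_1\leq\mathcal{A}\|f\|_{\mathcal{H}}$; both are fine.
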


\begin{proof}
When $\ell=1,$ the first inequality follows immediately from \eqref{eqn: frameAB} and the fact that $\sigma$ is nonexpansive. Now, suppose by induction that the first inequality holds for $\ell.$ Let $f\in\mathcal{H}.$ Then
\begin{align}
    \sum_{\pathvar\in\mathcal{J}^{\ell+1}}\|\bU[\pathvar]f\|^2_{\mathcal{H}}  &= \sum_{\pathvar\in\mathcal{J}^{\ell+1}}\|\sigma W_{j_{\ell+1}}\cdots \sigma W_{j_1}f\|^2_{\mathcal{H}} \nonumber \\
    &=    \sum_{\pathvar\in\mathcal{J}^{\ell}}\left(\sum_{j_{\ell+1}\in\mathcal{J}}\|\sigma W_{j_{\ell+1}}(\sigma W_{j_{\ell}}\cdots \sigma W_{j_1}f)\|^2_{\mathcal{H}}\right) \nonumber \\
    &\leq    \sum_{\pathvar\in\mathcal{J}^{\ell}}\|\sigma W_{j_{\ell}}\cdots \sigma W_{j_1}f\|^2_{\mathcal{H}} \nonumber \\
    &\leq \|f\|^2_{\mathcal{H}}, \label{eqn: lemma 4.10 proof}
\end{align}
with the last inequality following from the inductive assumption.

To prove the second inequality, let $t_\ell \coloneqq \left(\sum_{\pathvar\in\mathcal{J}^\ell}\|\bU[\pathvar]f - \bU'[\pathvar]f \|^2_{\mathcal{H}}\right)^{1/2}.$ Since $\sigma$ is nonexpansive, %the case where $k=1$ follows immediately from 
the definition of $\cA$ %assumption \eqref{eqn: waveletstabilityassumption}
implies 
$t_1\leq \cA \|f\|_{\mathcal{H}}$. Now, by induction, suppose  the result holds for $\ell.$ Then, recalling that $\bU[\pathvar]=\sigma W_{j_\ell}\cdots \sigma W_{j_1},$ we have
\begin{align*}
t_{\ell+1}&= \left(    \sum_{\pathvar\in\mathcal{J}^{\ell+1}}\|\sigma W_{j_{\ell+1}}\cdots \sigma W_{j_1}f-\sigma W'_{j_{\ell+1}}\cdots \sigma W'_{j_1}f\|_{\mathcal{H}}^2\right)^{1/2}\\ 
&\leq \left(    \sum_{\pathvar\in\mathcal{J}^{\ell+1}}\|(W_{j_{\ell+1}}-W'_{j_{\ell+1}})\sigma W_{j_{\ell}}\cdots \sigma W_{j_1}f\|_{\mathcal{H}}^2\right)^{1/2} \\
&\quad+\left(    \sum_{\pathvar\in\mathcal{J}^{\ell+1}}\|W'_{j_{\ell+1}}(\sigma W_{j_{i}}\cdots \sigma W_{j_1}f-\sigma W'_{j_{\ell}}\cdots \sigma W'_{j_1}f)\|_{\mathcal{H}}^2\right)^{1/2}\\
&\leq \mathcal{A}\left(    \sum_{\pathvar\in\mathcal{J}^{\ell+1}}\|\sigma W_{j_{\ell}}\cdots \sigma W_{j_1}f\|_{\mathcal{H}}^2\right)^{1/2} \\
&\quad+\mathcal{C}\left(    \sum_{\pathvar\in\mathcal{J}^{\ell}}\|\sigma W_{j_{\ell}}\cdots \sigma W_{j_1}f-\sigma W'_{j_{\ell}}\cdots \sigma W'_{j_1}f\|_{\mathcal{H}}^2\right)^{1/2}\\
&\leq \mathcal{A}\|f\|_{\mathcal{H}} +t_\ell\mathcal{C}\|f\|_{\mathcal{H}}  \end{align*}
by %\eqref{eqn: waveletstabilityassumption}, \eqref{eqn: Cassumption}
the definitions of $\cA$ and $\mathcal{C}$ and by \eqref{eqn: lemma 4.10 proof}. By the inductive hypothesis, we have that 
\begin{equation*} t_\ell\leq \mathcal{A}\sum_{k=0}^{\ell-1}\mathcal{C}^{k}\|f\|_{\mathcal{H}}.
\end{equation*}
Therefore,
\begin{equation*}
    t_{\ell+1}\leq \mathcal{A}\|f\|_{\mathcal{H}}+\mathcal{A}\sum_{k=0}^{\ell-1}\mathcal{C}^{k+1}\|f\|_{\mathcal{H}}=\mathcal{A}\sum_{k=0}^{\ell}\mathcal{C}^{k}\|f\|_{\mathcal{H}}.
\end{equation*}
Squaring both sides completes the proof of the second inequality.
\end{proof}

\section{The Proof of Theorem \getrefnumber{thm: scattering stability no window}}\label{sec: Proof of Stability no window}
 \begin{proof}
Let $\zeta\in\mathcal{G}$, and let $\mathcal{X}_\zeta$ be as in \eqref{eqn: Xzeta}.
By \eqref{eqn: invariance assumptions}, we have $    \overline{\bS^{(\zeta)}}V_\zeta f = \overline{\bS} f$. 
Therefore, we 
may use the definitions of the \rev{non-windowed} scattering transform %and the weighted inner product $\langle\cdot,\cdot\rangle_{\bM'}$
to see that for each path $\pathvar$ we have
\begin{align*}
    &|\overline{\bS}[\pathvar]  f - \overline{\bS'}[\pathvar] \widetilde{f}|\\=& |\overline{\bS^{(\zeta)}}[\pathvar] V_\zeta f - \overline{\bS'}[\pathvar] \widetilde{f}|\\ \leq& | \langle \bU^{(\zeta)}[p]V_\zeta f,\varphi^{(\zeta)}_0\rangle_{\mathcal{H}^{(\zeta)}} - \langle \bU'[p]\widetilde{f},\varphi'_0\rangle_{\mathcal{H}'}|\\
    \leq&|\langle \bU^{(\zeta)}[p]V_\zeta f-\bU'[p]\widetilde{f},\varphi_0^{(\zeta)}\rangle_{\mathcal{H}^{(\zeta)}}|+|\langle \bU'[p]\widetilde{f},\varphi_0^{(\zeta)}-\varphi_0'\rangle_{\mathcal{H}^{(\zeta)}}|+|\langle \bU'[p]\widetilde{f},\varphi_0'\rangle_{\mathcal{H}^{(\zeta)}}-\langle \bU'[p]\widetilde{f},\varphi_0'\rangle_{\mathcal{H}'}|\\
%    \leq& | \langle \bm{\mu}', \bU' [\pathvar] V_\zeta f- \widetilde{\bU} [\pathvar] \widetilde{f}\rangle_{\bM'}|  + |
 %   \langle (\bm{\mu}'-\tilde{\bm{\mu}}),\widetilde{\bU} [\pathvar] \widetilde{f} \rangle_{\bM'}| +
  %  |\langle \widetilde{\bm{\mu}}, \widetilde{\bU} [\pathvar] \widetilde{f} \rangle_{\bM'}-\langle \widetilde{\bm{\mu}}, \widetilde{\bU} [\pathvar] \widetilde{f} \rangle_{\widetilde{\bM}}|\\
   %=& | \langle \bm{\mu}', \bU' [\pathvar] V_\zeta f- \widetilde{\bU} [\pathvar] \widetilde{f}\rangle_{\bM'}|  + |
    %\langle (\bm{\mu}'-\tilde{\bm{\mu}}),\widetilde{\bU} [\pathvar] \widetilde{f} \rangle_{\bM'}| +
    %|\langle((\bM')^*\bM'-\widetilde{\bM}^*\widetilde{\bM}) \widetilde{\bm{\mu}}, \widetilde{\bU} [\pathvar] \widetilde{f} \rangle_{2}|\\
    \eqqcolon&\hspace{.02in}  I[\pathvar] + II[\pathvar] + III[\pathvar].
\end{align*}
To bound $I[\pathvar]$, we use the Cauchy Schwarz inequality to observe
\begin{align*}
    |\langle \bU^{(\zeta)}[p]V_\zeta f-\bU'[p]\widetilde{f},\varphi_0^{(\zeta)}\rangle_{\mathcal{H}^{(\zeta)}}| &\leq |\langle\bU^{(\zeta)}[p]V_\zeta f-\bU^{(\zeta)}[p]\widetilde{f},\varphi_0^{(\zeta)}\rangle_{\mathcal{H}^{(\zeta)}}| + |\langle\bU^{(\zeta)}[p]\widetilde{f}-\bU'[p]\widetilde{f},\varphi_0^{(\zeta)}\rangle_{\mathcal{H}^{(\zeta)}}|\\
    &\leq |\overline{S^{(\zeta)}}[p]V_\zeta f-\overline{S^{(\zeta)}}[p]\widetilde{f}|+\|\bU^{(\zeta)}[p]\widetilde{f}-\bU'[p]\widetilde{f}\|_{\mathcal{H}^{(\zeta)}}.
\end{align*}
 Therefore, applying \eqref{eqn: invariance assumptions} and Lemma \ref{lem: nonexpansiveU} yields 
 \begin{align}\label{eqn: Ip bound}
     \sum_{\pathvar\in\mathcal{J}^\ell} I[\pathvar]^2 &\leq 2C_L
\|V_\zeta f-\tilde{f}\|^2_{\mathcal{H}^{(\zeta)}} + 2\|\mathcal{W}^{(\zeta)}-\mathcal{W}'\|_{\mathcal{H}^{(\zeta)}}^2 \left(\sum_{k=0}^{\ell-1}\|\mathcal{W}'\|_{\mathcal{H}^{(\zeta)}}^k \right)^2\|\tilde{f}\|^2_{\mathcal{H}^{(\zeta)}}. % \\
     %&\leq \frac{2 R(\bM, \mathbf{I})^4}{n\min_i |\bu_0(i)|^2 }\|V_\zeta \mathbf{x}-\widetilde{\mathbf{x}}\|^2_2 + 2 R(\bM, \mathbf{I})^2 \|\bmu\|_{\bM}^2\|\mathcal{W}'-\widetilde{\mathcal{W}}\|_{\bM'}^2 \left(\sum_{k=0}^{\ell-1}\|\widetilde{W}\|_{\bM'}^k \right)^2\|\tilde{f}\|^2_2 \label{eqn: Ip bound}
     %\leq \frac{2}{n}\|\mathbf{x}-\widetilde{\mathbf{x}}\|^2_{\bM'} + 2\|\bmu'\|_{\bM'}^2\mathcal{A}_V_\zeta^2 \left(\sum_{k=0}^{\ell-1}\mathcal{C}_V_\zeta^{k}\right)^2\|\tilde{\bx}\|^2_{\bM'}
 \end{align}
For $II[\pathvar]$, we again use the Cauchy Schwarz inequality and \eqref{eqn: change of measure change of norm} 
to see
\begin{align*}
    |\langle \bU'[p]\widetilde{f},\varphi_0^{(\zeta)}-\varphi_0'\rangle_{\mathcal{H}^{(\zeta)}}|\leq R(\mathcal{H}^{(\zeta)},\mathcal{H}')\|\varphi_0^{(\zeta)}-\varphi_0'\|_{\mathcal{H}'}\|U'[p]\widetilde{f}\|_{\mathcal{H}'}
\end{align*}
Therefore, again applying Lemma \ref{lem: nonexpansiveU} implies 
\begin{equation}\label{eqn: IIp bound}
     \sum_{\pathvar\in\mathcal{J}^\ell} II[\pathvar]^2 \leq R(\mathcal{H}^{(\zeta)},\mathcal{H}')^2\|\varphi_0^{(\zeta)}-\varphi_0'\|^2_{\mathcal{H}'}\|\widetilde{f}\|^2_{\mathcal{H}'}.
\end{equation}
Lastly, to bound $III[\pathvar]$, 
we note that by \eqref{eqn: change of inner product} and the Cauchy Schwarz inequality, we have 
\begin{align*}
    |\langle \bU'[p]\widetilde{f},\varphi_0'\rangle_{\mathcal{H}^{(\zeta)}}-\langle \bU'[p]\widetilde{f},\varphi_0'\rangle_{\mathcal{H}'}| &\leq \kappa(\mathcal{H}',\mathcal{H}^{(\zeta)})\|\bU'[p]\widetilde{f}\|_{\mathcal{H}'}\|\varphi_0'\|_{\mathcal{H}'}\\
    &\leq \kappa(\mathcal{H}',\mathcal{H}^{(\zeta)})\|\bU'[p]\widetilde{f}\|_{\mathcal{H}'},
\end{align*}
and so summing over $\pathvar$ and once more applying Lemma \ref{lem: nonexpansiveU} gives 
\begin{equation*}
     \sum_{\pathvar\in\mathcal{J}^\ell} III[\pathvar]^2 \leq \kappa(\mathcal{H}',\mathcal{H}^{(\zeta)})\|\widetilde{f}\|_{\mathcal{H}'}.
\end{equation*}
Therefore, combining this with \eqref{eqn: Ip bound} and \eqref{eqn: IIp bound} yields 
\begin{align*}
    &\sum_{p\in\mathcal{J}^\ell} |\overline{\bS}[\pathvar]  f - \overline{\bS}'[\pathvar] \widetilde{f}|^2\\ \leq&  3\Bigg( 2C_L
\|V_\zeta f-\tilde{f}\|^2_{\mathcal{H}^{(\zeta)}}  + R(\mathcal{H}^{(\zeta)},\mathcal{H}')^2\|\varphi_0^{(\zeta)}-\varphi_0'\|^2_{\mathcal{H}'}\|\widetilde{f}\|^2_{\mathcal{H}'}\\
    &\qquad\quad+ 2\|\mathcal{W}^{(\zeta)}-\mathcal{W}'\|_{\mathcal{H}^{(\zeta)}}^2 \left(\sum_{k=0}^{\ell-1}\|\mathcal{W}'\|_{\mathcal{H}^{(\zeta)}}^k \right)^2\|\tilde{f}\|^2_{\mathcal{H}^{(\zeta)}}+ \kappa(\mathcal{H}',\mathcal{H}^{(\zeta)})\|\widetilde{f}\|_{\mathcal{H}'}\bigg).
\end{align*}
The result follows by taking the infimum over $\zeta\in\mathcal{G}.$
\end{proof}

\section{The Proof of Lemma \getrefnumber{lem: finite K}}
\label{sec: proof of lemma finite K}
\begin{proof}
By definition, we have
\begin{align}
    &H_t f(x)-H_t^\kappa f(x)\nonumber\\=&\int_{X} h_t(x,y)f(y)d\mu(y)-\int_{X} h^\kappa_t(x,y)f(y)d\mu(y)\nonumber\\
    =&\int_{X} \sum_{k=0}^\infty e^{-t\mu_k}\varphi_k(x)\varphi_k(y)f(y)d\mu(y)-\int_{X} \sum_{k=0}^\kappa e^{-t\mu_k}\varphi_k(x)\varphi_k(y)f(y)d\mu(y)\nonumber\\
    =&\int_{X} \sum_{k=\kappa+1}^\infty e^{-t\mu_k}\varphi_k(x)\varphi_k(y)f(y)d\mu(y)\label{eqn: pwQk}\\
    =&\sum_{k=\kappa+1}^\infty e^{-t\mu_k}\langle\varphi_k,f\rangle_{\mathbf{L}^2(\mathcal{X})}\varphi_k(x).\nonumber
\end{align} Therefore, since the $\varphi_k$ form an orthonormal basis, twice applying   Plancherel's theorem implies that 
\begin{align*}
    \|H_t^\kappa f(x)-H_tf(x)\|^2_{\mathbf{L}^2(\mathcal{X})} &= \sum_{k=\kappa+1}^\infty e^{-2t\mu_k}|\langle\varphi_k,f\rangle_{\mathbf{L}^2(\mathcal{X})}|^2\\
    &\leq e^{-2t\mu_{\kappa+1}} \sum_{k=\kappa+1}^\infty |\langle\varphi_k,f\rangle_{\mathbf{L}^2(\mathcal{X})}|^2\\
    &\leq e^{-2t\mu_{\kappa+1}}\|f\|_{\mathbf{L}^2(\mathcal{X})}^2.
\end{align*}
This completes the proof of \eqref{eqn: finite K}. To prove \eqref{eqn: finite K infinity}, we note that
\eqref{eqn: pwQk}  implies 
\begin{align*}
\|H_t^\kappa f-H_tf\|_\infty &\leq \sup_{x,y\in\mathcal{X}}|\sum_{k=\kappa+1}^\infty e^{-t\mu_k} \varphi_k(x)\varphi_k(y)| \|f\|_\infty.
\end{align*}
In \cite{dunson2021spectral}, the proof of Theorem 3, %Eqn. (90),
it is shown that $$\sup_{x,y\in\mathcal{X}}|\sum_{k=\kappa+1}^\infty e^{-t\mu_k} \varphi_k(x)\varphi_k(y)|\leq C_\mathcal{X} e^{-C_{\mathcal{X}}'t}\leq C_\mathcal{X}$$ and so the result follows. 
\end{proof}

\section{The Proof of Lemma \getrefnumber{lem: apply Hoeffding} }
\label{sec: proof of Hoeffding}
\begin{proof}
Let $f,g\in\mathcal{C}(\mathcal{X})$, and define random variables $X_i=f(x_i)g(x_i)$. Since the $x_i$ are sampled i.i.d.\ uniformly at random, we have
\begin{equation*}
    \langle \rho f,\rho g \rangle_2 = \frac{1}{N} \sum_{i=0}^{N-1} X_i
\end{equation*}
and \eqref{eqn: normalized} implies
\begin{equation*}
    \mathbb{E} \left(\frac{1}{N} \sum_{i=0}^{N-1} X_i\right) = \langle f,g\rangle_{\mathbf{L}^2(\mathcal{X})}.
\end{equation*}
Therefore, by Hoeffding's inequality, we have 
\begin{align*}
    \mathbb{P}\left(|\langle \rho f,\rho g \rangle_2 - \langle f,g\rangle_{\mathbf{L}^2(\mathcal{X})}| >\eta\right)   &= 
    \mathbb{P}\left(\left|\frac{1}{N} \left(\sum_{i=0}^{N-1} X_i - \mathbb{E}\sum_{i=0}^{N-1} X_i\right)\right| >\eta\right)\\
    &=\mathbb{P}\left(\left| \left(\sum_{i=0}^{N-1} X_i - \mathbb{E}\sum_{i=0}^{N-1} X_i\right)\right| >N\eta\right)
    \\
    &\leq 2\exp\left(\frac{-2N^2\eta^2}{4N\|fg\|^2_\infty}\right)\\
    &=
     2\exp\left(\frac{-N\eta^2}{2\|fg\|^2_\infty}\right).
\end{align*}
The result now follows by setting $\eta = \sqrt{\frac{18 \log N}{N}}\|fg\|_\infty$.
\end{proof}

\section{The Proof of Remark \getrefnumber{rem: alpha rate}}
\label{sec: remark proof}
To see this we note that the term $\alpha_k$ in Theorem 5.4 of \cite{cheng2021eigen} is first introduced in Proposition 5.2. We observe, by Equation (42), that 
\begin{equation*}
    \left|\left|(\mathbf{u}_k^{N,\epsilon})^T\mathbf{v}_k\right|-1\right|=\left|\frac{1}{|\alpha_k|}-1\right|=\mathcal{O}(|\text{Err}_\text{norm}|+\text{Err}^2_{\text{pt}}).
\end{equation*}
(Please see \cite{cheng2021eigen} for the definitions of  $\text{Err}_\text{norm}$ and $\text{Err}^2_{\text{pt}}.)$  
Since $|\alpha_k|$ converges to 1, for sufficiently large $N$, we have $\frac{1}{2}||\alpha_k|-1|\leq \left|\frac{1}{|\alpha_k|}-1\right|\leq 2 ||\alpha_k|-1|$
and therefore, we also have that 
\begin{align*}
    ||\alpha_k|-1|
    &= \mathcal{O}(|\text{Err}_\text{norm}|+\text{Err}^2_{\text{pt}}).
    \end{align*}
Immediately prior to Equation 42, the authors note 
\begin{equation*}
    \text{Err}_\text{norm}=\mathcal{O}\left(\sqrt{\frac{\log(N)}{N}}\right),
\end{equation*}
    and
Equation 40 shows that
\begin{equation*}
    \text{Err}_\text{pt}=\mathcal{O}(\epsilon)+\mathcal{O}\left(\sqrt{\frac{\log(N)}{N\epsilon^{d/2+1}}}\right)
\end{equation*}
In particular, if we set $\epsilon \sim N^{-2/(d+6)}$  we have 
\begin{align*}
    \text{Err}_\text{pt}=\mathcal{O}(N^{-2/(d+6)})+\mathcal{O}\left(\sqrt{\frac{\log(N)}{N^{4/(d+6)}}}\right)=\mathcal{O}\left(\sqrt{\frac{\log(N)}{N^{4/(d+6)}}}\right)
\end{align*}
%
%In general, we have 
%\begin{align*}
%    |\frac{1}{\alpha_k}-1|&=\mathcal{O}(|\text{Err}_\text{pt}|+\text{Err}^2_{\text{pt}})\\
 %   &=\mathcal{O}(\sqrt{\frac{\log(N)}{N}}) + \left((\mathcal{O}(\epsilon)+\mathcal{O}\left(\sqrt{\frac{\log(N)}{N\epsilon^{d/2+1}}}\right)\right)^2\\
  %  &=\mathcal{O}(\sqrt{\frac{\log(N)}{N}})+\mathcal{O}(\epsilon^2)+ \mathcal{O}\left(\sqrt{\frac{\log(N)}{N\epsilon^{d/2-1}}}\right)+\mathcal{O}\left(\frac{\log(N)}{N\epsilon^{d/2+1}}\right)\\
   % &=\mathcal{O}(\epsilon^2)+\mathcal{O}\left(\sqrt{\frac{\log(N)}{N\epsilon^{d/2+1}}}\right)
%\end{align*}
%L'hopital's rule implies that 
%\begin{equation*}
 %   \lim_{\alpha_k\rightarrow 1}\left|\frac{\frac{1}{\alpha_k}-1}{1-\alpha_k}\right|=1
%\end{equation*}
%and therefore, we also have that 
%\begin{align*}
 %   |\alpha_k-1|
  %  &=\mathcal{O}(\epsilon^2)+\mathcal{O}\left(\sqrt{\frac{\log(N)}{N\epsilon^{d/2+1}}}\right)
%\end{align*}

\section{The Proof of Theorem \getrefnumber{thm: heat kernel discretization} and Corollary \getrefnumber{cor: heat kernel discretization}}\label{sec: heat convergence}
\begin{proof}[Proof of Theorem \ref{thm: heat kernel discretization}]
To avoid cumbersome notation, within this proof we will drop explicit dependence on $N$ and $\epsilon$ and simply write $\lambda_k$ in place of $\lambda_k^{N,\epsilon}$.

Let $\tilde{\mathbf{u}}_k=\text{sgn}(\alpha_k)\mathbf{u}_k$ where $\text{sgn}$ is the standard signum function. Then,
\begin{align}
&H_{N,\epsilon,\kappa,t}\rho f-\rho H_t^\kappa f\nonumber\\
=& \sum_{k=0}^\kappa e^{-{\lambda_k}t} \mathbf{u}_k \mathbf{u}_k^T \rho f - \rho \sum_{k=0}^\kappa e^{-\mu_kt}\langle f, \varphi_k\rangle_{\mathbf{L}^2(\mathcal{X})} \varphi_k\nonumber\\
=& \sum_{k=0}^\kappa e^{-{\lambda_k}t} \tilde{\mathbf{u}}_k \tilde{\mathbf{u}}_k^T \rho f - \rho \sum_{k=0}^\kappa e^{-\mu_kt}\langle f, \varphi_k\rangle_{\mathbf{L}^2(\mathcal{X})} \varphi_k\nonumber\\
%=& \sum_{k=0}^K e^{-{\lambda_k^{N,\epsilon}}t}  \langle \mathbf{u}_k, \rho f\rangle_2 \mathbf{u}_k-  \sum_{k=0}^K e^{-\mathbf{u}_kt}\langle f, \varphi_k\rangle_{\mathbf{L}^2(\mathcal{X})} \rho\varphi_k\\
=& \sum_{k=0}^\kappa e^{-{\lambda_k}t}  \langle \tilde{\mathbf{u}}_k, \rho f\rangle_2 \tilde{\mathbf{u}}_k-  \sum_{k=0}^\kappa e^{-\mu_kt}\langle f, \varphi_k\rangle_{\mathbf{L}^2(\mathcal{X})} \mathbf{v}_k\nonumber\\
%=& \sum_{k=0}^\kappa e^{-{\lambda_k^{N,\epsilon}}t}  \langle \tilde{\mathbf{u}}_k, \rho f\rangle_2 \tilde{\mathbf{u}}_k-  \sum_{k=0}^\kappa e^{-\mu_kt}\langle f, \varphi_k\rangle_{\mathbf{L}^2(\mathcal{X})} \mathbf{v}_k\\
=& \sum_{k=0}^\kappa (e^{-{\lambda_k}t} - e^{-\mu_kt}) \langle \tilde{\mathbf{u}}_k, \rho f\rangle_2 \tilde{\mathbf{u}}_k\nonumber\\
&\qquad + \sum_{k=0}^\kappa e^{-\mu_kt} \left(\langle \tilde{\mathbf{u}}_k, \rho f\rangle_2 - \langle f, \varphi_k\rangle_{\mathbf{L}^2(\mathcal{X})} \right)\tilde{\mathbf{u}}_k\nonumber\\
&\qquad\qquad + \sum_{k=0}^\kappa e^{-\mu_kt}\langle f, \varphi_k\rangle_{\mathbf{L}^2(\mathcal{X})} (\tilde{\mathbf{u}}_k-\mathbf{v}_k).\label{eqn: three terms}
\end{align}
Since $|\text{sgn}(\alpha_k)|=1$, $\{\widetilde{\mathbf{u}}_k\}_{k=0}^\kappa$ is an orthonormal basis for the span of $\{\mathbf{u}_k\}_{k=0}^\kappa$.
Therefore, to bound the first of the above terms, we may apply Parseval's theorem to see
\begin{align}
    &\|\sum_{k=0}^\kappa (e^{-{\lambda_k}t} - e^{-\mu_kt}) \langle \tilde{\mathbf{u}}_k, \rho f\rangle_2 \tilde{\mathbf{u}}_k\|_2^2\nonumber\\ =& \sum_{k=0}^\kappa |e^{-{\lambda_k}t} - e^{-\mu_kt}|^2 |\langle \tilde{\mathbf{u}}_k, \rho f\rangle_2|^2\nonumber\\
    \leq& \max_{0\leq k\leq \kappa}|e^{-{\lambda_k}t} - e^{-\mu_kt}|^2\sum_{k=0}^\kappa  |\langle \tilde{\mathbf{u}}_k, \rho f\rangle_2|^2\nonumber\\
    \leq& \max_{0\leq k\leq \kappa}|e^{-{\lambda_k}t} - e^{-\mu_kt}|^2\|\rho f\|_2^2\label{eqn: reduced T1}.
\end{align}
By Theorem \ref{thm: 5.4 of Chen and Wu}, we have 
\begin{align}
\max_{0\leq k\leq \kappa}|e^{-{\lambda_k}t} - e^{-\mu_kt}|&\leq t\max_{0\leq k\leq \kappa}|\lambda_k -\mu_k|\nonumber\\%&\leq    t %\mathcal{O}\left(\epsilon,\sqrt{\frac{\log{N}}{N\epsilon^{d/2}}}\right)
%)\nonumber\\
&= t\mathcal{O}(N^{-2/(d+6)})\label{eqn: exp ev bound}
\end{align}
with probability at least $1-\mathcal{O}(N^{-9})$.

By Lemma \ref{lem: apply Hoeffding} we have
$$\|\rho f\|_2^2\leq \|f\|^2_{\mathbf{L}^2(\mathcal{X})}+\sqrt{\frac{18\log N}{N}}\|f\|^2_\infty $$
with probability at least $1-2/N^9$.
Therefore, combining \eqref{eqn: reduced T1} and \eqref{eqn: exp ev bound}, yields
\begin{align}
    &\left\|\sum_{k=0}^\kappa (e^{-{\lambda_k}t} - e^{-\mu_kt}) \langle \tilde{\mathbf{u}}_k, \rho f\rangle_2 \tilde{\mathbf{u}}_k\right\|_2^2\nonumber\\
    \leq& \max_{0\leq k\leq \kappa}|e^{-{\lambda_k}t} - e^{-\mu_kt}|^2\|\rho f\|_2^2\nonumber\\
    &\leq t^2 \left(\|f\|^2_{\mathbf{L}^2(\mathcal{X})}+\sqrt{\frac{\log N}{N}}\|f\|^2_\infty \right)\mathcal{O}(N^{-4/(d+6)})\label{eqn: first term bound}
\end{align}
with probability at least $1-\mathcal{O}\left(\frac{1}{N^9}\right)$.

To bound the second term from \eqref{eqn: three terms}, we use Parseval's Identity to see  
\begin{align}
    &\left\| \sum_{k=0}^\kappa e^{-\mu_kt} \left(\langle \tilde{\mathbf{u}}_k, \rho f\rangle_2 - \langle f, \varphi_k\rangle_{\mathbf{L}^2(\mathcal{X})} \right) \tilde{\mathbf{u}}_k\right\|_2^2\nonumber\\
    \leq&\sum_{k=0}^\kappa |\langle \tilde{\mathbf{u}}_k, \rho f\rangle_2 - \langle f, \varphi_k\rangle_{\mathbf{L}^2(\mathcal{X})}|^2 \nonumber\\
    \leq&2\sum_{k=0}^\kappa (|\langle \tilde{\mathbf{u}}_k, \rho f\rangle_2 - \langle \mathbf{v}_k, \rho f\rangle_{2}|^2+|\langle \mathbf{v}_k, \rho f\rangle_2 - \langle f, \varphi_k\rangle_{\mathbf{L}^2(\mathcal{X})}|^2) \nonumber\\
    \leq&2%(\max_{0\leq k \leq \kappa}e^{-\mathbf{u}_kt})
    \sum_{k=0}^\kappa (\|\tilde{\mathbf{u}}_k-\mathbf{v}_k\|_2^2\|\rho f\|_2^2+|\langle \rho\varphi_k, \rho f\rangle_2 - \langle f, \varphi_k\rangle_{\mathbf{L}^2(\mathcal{X})}|^2).\label{eqn: simple second} 
\end{align}
By Remark \ref{rem: alpha rate}, 
%\begin{align*}
 % Both   $||\alpha_k|-1|$ and $|\frac{1}{|\alpha_k|}-1|$
  %  are $\mathcal{O}\left(\frac{\log(N)}{N^{4/(d+6)}}\right)$    
  
$$\max\left\{||\alpha_k|-1|,\left|\frac{1}{|\alpha_k|}-1\right|\right\}\leq  \mathcal{O}\left(\sqrt{\frac{\log N}{N}}\right)+\mathcal{O}\left(\frac{\log(N)}{N^{4/(d+6)}}\right).$$ 
  Therefore,
    \begin{equation*}
        \left(\frac{|\alpha_k|-1}{\alpha_k}\right)^2\leq
 \mathcal{O}\left(\frac{\log N}{N}\right)+\mathcal{O}\left(\frac{\log(N)^2}{N^{8/(d+6)}}\right),
  %\left(1+\mathcal{O}\left(\frac{\log(N)}{N^{4/(d+6)}}\right)\right)^2\mathcal{O}\left(\frac{\log(N)}{N^{4/(d+6)}}\right)^2\leq \mathcal{O}\left(\frac{\log(N)^2}{N^{8/(d+6)}}\right), 
    \end{equation*}
and so we may recall the definition of $\tilde{\mathbf{u}}_k$, and use  Theorem \ref{thm: 5.4 of Chen and Wu} to see
\begin{align}
    \|\tilde{\mathbf{u}}_k-\mathbf{v}_k\|_2^2%&=\|\alpha_k\mathbf{u}_k-\mathbf{v}_k\|\\
    &= \|\text{sgn}(\alpha_k)\mathbf{u}_k-\mathbf{v}_k\|_2^2\nonumber\\
    &= \frac{1}{\alpha^2_k}\| |\alpha_k|\mathbf{u}_k-\alpha_k\mathbf{v}_k\|_2^2\nonumber\\
    &\leq \frac{2}{\alpha_k^2}(\|(|\alpha_k|-1)\mathbf{u}_k\|^2+ \|\mathbf{u}_k-\alpha_k\mathbf{v}_k\|_2^2 )\nonumber\\
    &\leq 2\left(\frac{|\alpha_k|-1}{\alpha_k}\right)^2 + \frac{2}{|\alpha_k|^2}\|\mathbf{u}_k-\alpha_k\mathbf{v}_k\|_2^2\nonumber\\
    &\leq \mathcal{O}\left(\frac{\log N}{N}\right)+\mathcal{O}\left(\frac{\log(N)^2}{N^{8/(d+6)}}\right)
 +\mathcal{O}(N^{-\frac{4}{d+6}}\log(N))\nonumber\\&=\mathcal{O}(N^{-\frac{4}{d+6}}\log N)\label{eqn: ev error}.
\end{align}
As noted earlier, by Lemma \ref{lem: apply Hoeffding}, we have 
$$\|\rho f\|_2^2\leq \|f\|^2_{\mathbf{L}^2(\mathcal{X})}+\sqrt{\frac{18\log N}{N}}\|f\|^2_\infty $$
with probability 
at least $1-\frac{2}{N^9}$
and again applying Lemma \ref{lem: apply Hoeffding}
we have 
\begin{equation*}
    |\langle \rho\varphi_k, \rho f\rangle_2 -\langle f, \varphi_k\rangle_{\mathbf{L}^2(\mathcal{X})}|\leq \sqrt{\frac{18\log{N}}{N}}\|f\varphi_k\|_\infty
\end{equation*}
with probability 
at least $1-\frac{2}{N^9}$.

It is known (see, e.g., \cite{shi:gradEigfcnManifold2010}) that $\|\varphi_k\|_\infty\leq C_{\mathcal{X}}\lambda_k^{(d-1)/4}$. Weyl's asymptotic formula (see, e.g., \cite{canzani:analysisManifoldsLaplacian2013} Theorem 72) implies that 
$\lambda_k\leq C_{\mathcal{X}} k^{2/d}$. Therefore,
\begin{equation*}\|\varphi_k\|_\infty\leq C_{\mathcal{X}}k^{\frac{2}{d}\frac{d-1}{4}}=C_{\mathcal{X}}k^{(d-1)/2d}=\mathcal{O}(1),
%\label{eqn: ellinfinityeigenbound}
\end{equation*}
where the final equality uses the fact that the implied constants depend on $\kappa$ and the geometry of $\mathcal{X}$.
Therefore, by \eqref{eqn: simple second},
\begin{align}
    &\| \sum_{k=0}^\kappa e^{-\mu_kt} \left(\langle \tilde{\mathbf{u}}_k, \rho f\rangle_2 - \langle f, \varphi_k\rangle_{\mathbf{L}^2(\mathcal{X})} \right) \tilde{\mathbf{u}}_k\|_2^2\nonumber\\
    &\leq2\sum_{k=0}^\kappa (\|\tilde{\mathbf{u}}_k-\mathbf{v}_k\|_2^2\|\rho f\|_2^2+|\langle \rho\varphi_k, \rho f\rangle_2 - \langle f, \varphi_k\rangle_{\mathbf{L}^2(\mathcal{X})}|^2)\nonumber\\
    &\leq \kappa  \left(\mathcal{O}(N^{-\frac{4}{d+6}}\log N)  \left(\|f\|^2_{\mathbf{L}^2(\mathcal{X})}+\sqrt{\frac{18\log N}{N}}\|f\|^2_\infty \right) +\mathcal{O}\left(\frac{\log{N}}{N}\right)\max_{0\leq k\leq \kappa}\|f\varphi_k\|^2_\infty \right)\nonumber\\
        &\leq \kappa  \left(\mathcal{O}(N^{-\frac{4}{d+6}}\log N)  \left(\|f\|^2_{\mathbf{L}^2(\mathcal{X})}+\sqrt{\frac{\log N}{N}}\|f\|^2_\infty \right) +\mathcal{O}\left(\frac{\log{N}}{N}\right)\kappa^{(d-1)/d}\|f\|^2_\infty \right)\nonumber\\
        &\leq  \mathcal{O}\left(\frac{\log N}{N^{\frac{4}{d+6}}}\right) \|f\|^2_{\mathbf{L}^2(\mathcal{X})} + \left(\mathcal{O}\left(\frac{(\log N)^{3/2}}{N^{\frac{4}{d+6}+\frac{1}{2}}}\right) +\mathcal{O}\left(\frac{\log N}{N}\right)\right)\|f\|^2_\infty.\label{eqn: second term bound}
\end{align}

Finally, to bound the third term in \eqref{eqn: three terms}, we use \eqref{eqn: ev error} to see
\begin{align}
    &\|\sum_{k=0}^\kappa e^{-\mu_kt}\langle f, \varphi_k\rangle_{\mathbf{L}^2(\mathcal{X})} (\tilde{\mathbf{u}}_k-\mathbf{v}_k)\|_2^2\nonumber\\
    \leq & \kappa\sum_{k=0}^\kappa |\langle f, \varphi_k\rangle_{\mathbf{L}^2(\mathcal{X})}|^2\|\tilde{\mathbf{u}}_k-\mathbf{v}_k\|_2^2\nonumber\\
\leq&     \kappa\max_{0\leq k \leq \kappa} \|\tilde{\mathbf{u}}_k-\mathbf{v}_k\|_2^2 \|f\|^2_{\mathbf{L}^2(\mathcal{X})}\nonumber\\
\leq& \mathcal{O}(N^{-\frac{4}{d+6}}\log(N)) \|f\|^2_{\mathbf{L}^2(\mathcal{X})}.\label{eqn: third term bound}
\end{align}

Combining \eqref{eqn: first term bound}, \eqref{eqn: second term bound}, and \eqref{eqn: third term bound} with \eqref{eqn: three terms} implies that
in the case $d\geq 2$ we have

\begin{align}
&\|H_{N,\epsilon,\kappa,t}\rho f-\rho H_t^\kappa f\|_2^2\nonumber\\
\leq& 3\|\sum_{k=0}^\kappa (e^{-{\lambda_k^{N,\epsilon}}t} - e^{-\mu_kt}) \langle \tilde{\mathbf{u}}_k, \rho f\rangle_2 \tilde{\mathbf{u}}_k\|_2^2\nonumber\\
&\qquad + 3\|\sum_{k=0}^\kappa e^{-\mu_kt} \left(\langle \tilde{\mathbf{u}}_k, \rho f\rangle_2 - \langle f, \varphi_k\rangle_{\mathbf{L}^2(\mathcal{X})} \right)\tilde{\mathbf{u}}_k\|_2^2\nonumber\\
&\qquad\qquad + 3\|\sum_{k=0}^\kappa e^{-\mu_kt}\langle f, \varphi_k\rangle_{\mathbf{L}^2(\mathcal{X})} (\tilde{\mathbf{u}}_k-\mathbf{v}_k)\|_2^2\nonumber
\\
&\leq t^2 \left(\|f\|^2_{\mathbf{L}^2(\mathcal{X})}+\sqrt{\frac{\log N}{N}}\|f\|^2_\infty \right)\mathcal{O}(N^{-4/(d+6)})\nonumber\\
&+ \mathcal{O}\left(\frac{\log N}{N^{\frac{4}{d+6}}}\right) \|f\|^2_{\mathbf{L}^2(\mathcal{X})} + {\left(\mathcal{O}\left(\frac{(\log N)^{3/2}}{N^{\frac{4}{d+6}+\frac{1}{2}}}\right) +\mathcal{O}\left(\frac{\log N}{N}\right)\right)\|f\|^2_\infty}\nonumber\\
&+ \mathcal{O}(N^{-\frac{4}{d+6}}\log(N)) \|f\|^2_{\mathbf{L}^2(\mathcal{X})}
\nonumber\\
&\leq \max\{t^2,1\}\left(\mathcal{O}\left(\frac{\log N}{N^{\frac{4}{d+6}}}\right)\|f\|_{\mathbf{L}^2(\mathcal{X})}^2+ \mathcal{O}\left(\frac{(\log N)^{3/2}}{N^{\frac{4}{d+6}+\frac{1}{2}}}\right)\|f\|^2_\infty\right)\label{eqn: d2spot}\\&=
\max\{t^2,1\}\mathcal{O}\left(\frac{\log N}{N^{\frac{4}{d+6}}}\right)\left(\|f\|_{\mathbf{L}^2(\mathcal{X})}^2 +\sqrt{\frac{\log N}{N}}\|f\|^2_\infty\right)\nonumber
\end{align}
where in \eqref{eqn: d2spot} we used the fact that $d\geq 2.$ Repeating the final string of inequalities in the case where $d=1$, we instead obtain
\begin{align*}
    \|H_{N,\epsilon,\kappa,t}\rho f-\rho H_t^\kappa f\|_2^2\leq \max\{t^2,1\}\left(\mathcal{O}\left(\frac{\log N}{N^{4/7}}\right)\|f\|_{\mathbf{L}^2(\mathcal{X})}^2+ \mathcal{O}\left(\frac{\log N}{N}\right)\|f\|^2_\infty\right)
\end{align*}
as desired.
\end{proof}
\begin{proof}[Proof of Corollary \ref{cor: heat kernel discretization}]
We first note that 
$$ \|H_{N,\epsilon,\kappa,t}\rho f-\rho H_t f\|_2^2\leq 2\|H_{N,\epsilon,\kappa,t}\rho f-\rho H_t^\kappa f\|_2^2 +2 \|\rho H_tf-\rho H_t^\kappa f\|_2^2.$$
Lemma \ref{lem: apply Hoeffding} implies that with probability at least $1-\mathcal{O}\left(\frac{1}{N^9}\right)$
\begin{equation*}
    \|\rho(H_tf -  H^\kappa_tf)\|_2^2  \leq \| H_tf - H^\kappa_tf\|^2 _{\mathbf{L}^2(\mathcal{X})} + \|H_tf - H^\kappa_tf\|^2_\infty\sqrt{\frac{18\log N}{N}}.
\end{equation*}
Therefore, applying Lemma \ref{lem: finite K} implies
\begin{equation*}
    \|\rho(H_tf -  H^\kappa_tf)\|_2^2  \leq e^{-2t\mu_{\kappa+1}} \|f\|^2_{\mathbf{L}^2(\mathcal{X})} + \mathcal{O}\left(\sqrt{\frac{\log N}{N}}\right)\|f\|^2_\infty.
\end{equation*}
Applying Theorem \ref{thm: heat kernel discretization} thus completes the proof.
\end{proof}

\section{The Proof of Theorem \getrefnumber{thm: discretize U}}\label{sec: U conv}

In order to prove Theorem \ref{thm: discretize U}, we will need two lemmas. 
\begin{lemma}\label{lem: U infinity} Let $f\in\mathbf{L}^2(\mathcal{X})$, and 
let $\pathvar=(j_1,\ldots,j_m)$ be a path of length $m$, then 
$$\|U[\pathvar]f\|_\infty\leq 2^m \|f\|_\infty. $$
\end{lemma}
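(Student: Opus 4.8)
The plan is to reduce the bound to two elementary facts: that the heat semigroup is an $\mathbf{L}^\infty$ contraction, and that each wavelet is a difference of two such operators, so that a single layer $\sigma W_{j_i}$ can at most double the supremum norm. Iterating over the $m$ layers of the path then produces the factor $2^m$. (We may assume $\|f\|_\infty<\infty$, since otherwise the inequality is trivial.)

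First I would establish that $\|H^t g\|_\infty \leq \|g\|_\infty$ for every $t \geq 0$ and every bounded $g$. Since we are in the manifold setting of this section with $g(\lambda)=e^{-\lambda}$, the operator $H^t$ acts through the heat kernel, $H^t g(x) = \int_\mathcal{X} h_t(x,y) g(y)\, d\mu(y)$, and the heat kernel is nonnegative and satisfies the normalization \eqref{eqn: integrate to one}. Hence
\[
|H^t g(x)| \leq \int_\mathcal{X} h_t(x,y)\, |g(y)|\, d\mu(y) \leq \|g\|_\infty \int_\mathcal{X} h_t(x,y)\, d\mu(y) = \|g\|_\infty,
\]
which gives the claimed contraction on $\mathbf{L}^\infty(\mathcal{X})$.

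Next I would bound each individual wavelet. For $1 \leq j \leq J$ we have $W_j = H^{2^{j-1}} - H^{2^j}$, so by the triangle inequality and the contraction just established, $\|W_j g\|_\infty \leq \|H^{2^{j-1}} g\|_\infty + \|H^{2^j} g\|_\infty \leq 2\|g\|_\infty$; the same estimate holds for $W_0 = \text{Id} - H^1$ since $\|\text{Id}\,g\|_\infty = \|g\|_\infty$. Moreover, because $\sigma$ is non-expansive and fixes $0$ (as is the case for all the nonlinearities considered, e.g.\ the absolute value and the ReLU), we have $|\sigma(z)| = |\sigma(z) - \sigma(0)| \leq |z|$ pointwise, and therefore $\|\sigma g\|_\infty \leq \|g\|_\infty$.

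Combining these, a single scattering layer satisfies $\|\sigma W_{j_i} g\|_\infty \leq \|W_{j_i} g\|_\infty \leq 2\|g\|_\infty$. The result then follows by a straightforward induction on the path length $m$, applying this estimate with $g = \sigma W_{j_{i-1}} \cdots \sigma W_{j_1} f$ at each step, so that the $L^\infty$ norm is multiplied by at most $2$ in each of the $m$ layers. The one point requiring genuine care is the $\mathbf{L}^\infty$ contraction of $H^t$: it rests essentially on the positivity-preservation of the heat semigroup together with the normalization \eqref{eqn: integrate to one}, and without these the difference structure of the wavelets alone would not give a uniform bound.
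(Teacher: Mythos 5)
Your proof is correct and follows essentially the same route as the paper's: the $\mathbf{L}^\infty$ contraction of $H^t$ via the nonnegative, normalized heat kernel (the paper cites Young's inequality and \eqref{eqn: integrate to one} for this), the factor of $2$ from the wavelet being a difference of two such contractions, the nonexpansiveness of $\sigma$, and induction on the path length. Your explicit remarks about needing $\sigma(0)=0$ and the positivity of the heat kernel are sound points of care that the paper leaves implicit.
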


\begin{proof}[Proof of Lemma \ref{lem: U infinity}]
Young's inequality and \eqref{eqn: integrate to one} implies that for all $t>0$ we have $\|H_tf\|_\infty\leq \|f\|_\infty$. Therefore, the case where $m=1$ follows from the triangle inequality and the fact that $\sigma$ is non-expansive. The general case follows from the fact that $\|U[j_1,\ldots,j_m]=U[j_m]\ldots U[j_1]$.
\end{proof}

\begin{lemma}\label{lem: nonexpansive discrete} For all $\mathbf{x},\mathbf{y}\in\mathbb{R}^{N}$ and all $0\leq j\leq J$ we have 
\begin{equation*}
    \|A_{J,N}\mathbf{x}-A_{J,N}\mathbf{y}\|_2\leq \|\mathbf{x} - \mathbf{y}\|_2\end{equation*}
    and
    \begin{equation*}\|U_{N}[j]\mathbf{x}-U_{N}[j]\mathbf{y}\|_2\leq \|W_{j,N}\mathbf{x}-W_{j,N}\mathbf{y}\|_2\leq \|\mathbf{x}-\mathbf{y}\|_2.
\end{equation*}
\end{lemma}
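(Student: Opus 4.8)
The plan is to reduce every claim to the elementary fact that each of the matrices $A_{J,N}$ and $W_{j,N}$ has spectral norm at most one, after which the nonexpansiveness of $\sigma$ handles the remaining inequality. First I would record that $L_{N,\epsilon}$ is a real symmetric matrix (its adjacency matrix $W^{(N)}$ is symmetric because the kernel $K$ is symmetric in its two arguments), so its eigenvectors $\{\mathbf{u}_k^{N,\epsilon}\}_{k=0}^{N-1}$ may be taken to form an orthonormal basis of $\mathbb{R}^N$, and $L_{N,\epsilon}$ is positive semidefinite, being an unnormalized graph Laplacian with nonnegative edge weights; hence $\lambda_k^{N,\epsilon}\geq 0$ for all $k$.

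The key observation is that, by its definition in \eqref{eqn: eigen approx heat}, the matrix $H_{N,\epsilon,\kappa,t}=\sum_{k=0}^\kappa e^{-\lambda_k^{N,\epsilon}t}\mathbf{u}_k^{N,\epsilon}(\mathbf{u}_k^{N,\epsilon})^T$ is symmetric and diagonalized by this same orthonormal basis: it acts as multiplication by $e^{-\lambda_k^{N,\epsilon}t}$ on $\mathbf{u}_k^{N,\epsilon}$ for $0\leq k\leq\kappa$, and as $0$ on the orthogonal complement of $\mathrm{span}\{\mathbf{u}_0^{N,\epsilon},\ldots,\mathbf{u}_\kappa^{N,\epsilon}\}$. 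Since $\lambda_k^{N,\epsilon}\geq 0$ and $t\geq 0$, each eigenvalue $e^{-\lambda_k^{N,\epsilon}t}$ lies in $(0,1]$, so every eigenvalue of $H_{N,\epsilon,\kappa,t}$ lies in $[0,1]$ and hence $\|H_{N,\epsilon,\kappa,t}\|_2\leq 1$. Taking $t=2^J$ and using $A_{J,N}=H_{N,\epsilon,\kappa,2^J}$ immediately gives $\|A_{J,N}\mathbf{x}-A_{J,N}\mathbf{y}\|_2=\|H_{N,\epsilon,\kappa,2^J}(\mathbf{x}-\mathbf{y})\|_2\leq\|\mathbf{x}-\mathbf{y}\|_2$, which is the first claim.

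Next I would apply the same spectral reasoning to the wavelets, each of which is simultaneously diagonalized by $\{\mathbf{u}_k^{N,\epsilon}\}$, so that it suffices to check that its eigenvalues lie in $[0,1]$. For $j=0$, the operator $W_{0,N}=I_N-H_{N,\epsilon,\kappa,1}$ has eigenvalue $1-e^{-\lambda_k^{N,\epsilon}}\in[0,1]$ on $\mathbf{u}_k^{N,\epsilon}$ for $k\leq\kappa$, and eigenvalue $1$ on the complement. For $1\leq j\leq J$, the operator $W_{j,N}=H_{N,\epsilon,\kappa,2^{j-1}}-H_{N,\epsilon,\kappa,2^j}$ has eigenvalue $e^{-\lambda_k^{N,\epsilon}2^{j-1}}-e^{-\lambda_k^{N,\epsilon}2^j}$ on $\mathbf{u}_k^{N,\epsilon}$ and eigenvalue $0$ on the complement; since $2^{j-1}\leq 2^j$ and $\lambda_k^{N,\epsilon}\geq 0$, this difference is nonnegative and bounded above by $e^{-\lambda_k^{N,\epsilon}2^{j-1}}\leq 1$. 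In every case the eigenvalues lie in $[0,1]$, so $\|W_{j,N}\|_2\leq 1$, yielding the rightmost inequality $\|W_{j,N}\mathbf{x}-W_{j,N}\mathbf{y}\|_2\leq\|\mathbf{x}-\mathbf{y}\|_2$.

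Finally, for the leftmost inequality I would invoke the pointwise nonexpansiveness of $\sigma$. Writing $U_N[j]\mathbf{x}=\sigma W_{j,N}\mathbf{x}$ and using $|\sigma(a)-\sigma(b)|\leq|a-b|$ coordinatewise gives $\|U_N[j]\mathbf{x}-U_N[j]\mathbf{y}\|_2^2=\sum_i|\sigma((W_{j,N}\mathbf{x})_i)-\sigma((W_{j,N}\mathbf{y})_i)|^2\leq\sum_i|(W_{j,N}\mathbf{x})_i-(W_{j,N}\mathbf{y})_i|^2=\|W_{j,N}\mathbf{x}-W_{j,N}\mathbf{y}\|_2^2$, completing the chain. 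There is essentially no serious obstacle here: the only points requiring care are the bookkeeping of the truncated (zero) eigenvalues on the orthogonal complement and the monotonicity argument showing $e^{-\lambda 2^{j-1}}-e^{-\lambda 2^j}\in[0,1]$, while everything else follows directly from simultaneous diagonalizability and the nonexpansiveness of $\sigma$.
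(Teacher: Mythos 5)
Your proposal is correct and follows essentially the same route as the paper: both arguments diagonalize $A_{J,N}$ and $W_{j,N}$ in the orthonormal eigenbasis $\{\mathbf{u}_k^{N,\epsilon}\}$, observe that the resulting scalar multipliers lie in $[0,1]$ so the operator norm is at most one, and then invoke the pointwise nonexpansiveness of $\sigma$ for the $U_N[j]$ bound. Your version is slightly more careful than the paper's in making explicit the positive semidefiniteness of $L_{N,\epsilon}$ (hence $\lambda_k^{N,\epsilon}\geq 0$) and the action on the orthogonal complement of $\mathrm{span}\{\mathbf{u}_0^{N,\epsilon},\ldots,\mathbf{u}_\kappa^{N,\epsilon}\}$, but this is a matter of bookkeeping rather than a different argument.
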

\begin{proof}
By construction we have, for $1\leq j\leq J$
\begin{align*}
    W_{j,N}\mathbf{x}-W_{j,N}\mathbf{y} &= (H_{N,\epsilon,\kappa,2^{j-1}}-H_{N,\epsilon,\kappa,2^j})(\mathbf{x}-\mathbf{y})\\&=\sum_{k=0}^\kappa (e^{-{\lambda_k^{N,\epsilon}2^{j-1}}}-e^{-{\lambda_k^{N,\epsilon}2^{j}}}) \mathbf{u}_k\mathbf{u}_k^T(\mathbf{x}-\mathbf{y}).
\end{align*}
Therefore, the fact that $ \|W_{j,N}\mathbf{x}-W_{j,N}\mathbf{y}\|_2\leq \|\mathbf{x}-\mathbf{y}\|_2$ follows from the fact that the vectors $\{\mathbf{u}_k\}_{k=0}^\kappa$ are an orthonormal basis for their span and the fact that $$|e^{-{\lambda_k^{N,\epsilon}2^{j-1}}}-e^{-{\lambda_k^{N,\epsilon}2^{j}}}|\leq 1.$$ The bounds for $W_{0,N}$ and $A_{J,N}$ follow similarly and the bound for $U_{N}[j]$ follows from the fact that $\sigma$ is nonexpansive.
\end{proof}

\begin{proof}[Proof of Theorem \ref{thm: discretize U}]
We argue by induction on $m$. To establish the base case, we let $p=(j_1)$ and observe that $\sigma$ commutes with $\rho$. Therefore, we have 
\begin{align*}
    \|U_{N}[j_1]\rho f - \rho U[j_1] f\|_2^2 &=
    \|\sigma W_{j_1,N}\rho f - \rho \sigma W_j f\|_2^2\\
    &=\|\sigma W_{j_1,N}\rho f - \sigma\rho W_j f\|_2^2\\
    &\leq\|W_{j_1,N}\rho f - \rho W_j f\|_2^2,
\end{align*}
where the final inequality follows from the fact that $\sigma$ is non-expansive. Therefore, the case where $m=1$ now follows from from Theorem \ref{thm: wavelet discretization}.

Now suppose the theorem is true for $m-1$. Let $p=(j_1,\ldots,j_m)$ be a path of length $m$. Let $p_{m-1}=(j_1,\ldots,j_{m-1})$ so that $U[p]=U[j_m]U[p_{m-1}]$ and $U_N[p]=U_N[j_m]U_N[p_{m-1}]$. Then,
\begin{align*}
    &\|U_{N}[\pathvar]\rho f - \rho U[\pathvar] f\|_2^2\\
    =&\|U_{N}[j_m]U_{N}[\pathvar_{m-1}]\rho f - \rho U[j_m]U[\pathvar_{m-1}] f\|_2^2\\
    =&\|U_{N}[j_m]U_{N}[\pathvar_{m-1}]\rho f -
    U_{N}[j_m]\rho U[\pathvar_{m-1}]f +
    U_{N}[j_m]\rho U[\pathvar_{m-1}]f-\rho U[j_m]U[\pathvar_{m-1}] f\|_2^2\\
        \leq&2\|U_{N}[j_m]U_{N}[\pathvar_{m-1}]\rho f -
    U_{N}[j_m]\rho U[\pathvar_{m-1}]f\|^2_2 +
2\|    U_{N}[j_m]\rho U[\pathvar_{m-1}]f-\rho U[j_m]U[\pathvar_{m-1}] f\|_2^2\\
    \leq&2\|U_{N}[\pathvar_{m-1}]\rho f -
    \rho U[\pathvar_{m-1}]f\|^2_2 +
2\|    U_{N}[j_m]\rho U[\pathvar_{m-1}]f-\rho U[j_m]U[\pathvar_{m-1}] f\|_2^2,
\end{align*}
where in the final inequality we used Lemma \ref{lem: nonexpansive discrete}. 
The term $\|U_{N}[\pathvar_{m-1}]\rho f -   \rho U[\pathvar_{m-1}]f\|_2^2 
 $ may be immediately bounded by the inductive hypothesis. 
%The inductive hypothesis implies that  
%\begin{align*} &\|U_{N}[\pathvar_{m-1}]\rho f -
 %   \rho U[\pathvar_{m-1}]f\rho\|^2 
  %  \\
%\\ 
%&\leq \|f\|_{\mathbf{L}^2(\mathcal{X})}^2\left(\mathcal{O}(N^{-\frac{2}{d+6}})2^j+K\mathcal{O}\left(N^{-\frac{4}{d+6}}\log N\right)+Ce^{-t\mathbf{u}_k}\right)\\
%&\quad+\|f\|_\infty\left(\mathcal{O}\left(N^{-\frac{1}{2} -\frac{2}{d+6}}\sqrt{\log N}\right)2^j+K\mathcal{O}(\sqrt{\log N)}N^{-\frac{1}{2} -\frac{4}{d+6}})+K^{\frac{3d-1}{2d}}\mathcal{O}\left(\frac{\log N}{N}\right)+\mathcal{O}\left(\frac{\log N}{N}\right)\right).
%\end{align*}
Moreover, we may also apply
the inductive hypothesis with 
$U[\pathvar_{m-1}]f$ in place of $f$ to see 
\begin{align*}
    &\|    U_{N}[j_m]\rho U[\pathvar_{m-1}]f-\rho U[j_m]U[\pathvar_{m-1}] f\|_2^2
    \\&\leq 2^{2j_{\max}}\left(\left(\mathcal{O}\left(\frac{\log N}{N^{\frac{4}{d+6}}}\right)+\mathcal{O}(e^{-\mu_{\kappa+1}})\right)\|U[\pathvar_{m-1}]f\|^2_{\mathbf{L}^2(\mathcal{X})}+ \mathcal{O}\left(\sqrt{\frac{\log{N}}{N}}\right)\|U[\pathvar_{m-1}]f\|^2_\infty\right)%\\
%&\leq \left(\mathcal{O}\left(\frac{1}{N^{\frac{2}{d+6}}}\right)2^{j_{\max}}+C_me^{-2t\mu_k}\right)\|U[\pathvar_{m-1}]f\|_{\mathbf{L}^2(\mathcal{X})}+ \mathcal{O}\left(\sqrt{\frac{\log{N}}{N}}\right)\|U[\pathvar_{m-1}]f\|_\infty.
\end{align*}
Iteratively applying Proposition \ref{prop: frame} implies  that $\|U[p_{m-1}]f\|_{\mathbf{L}^2(\mathcal{X})}\leq \|f\|_{\mathbf{L}^2(\mathcal{X})}$ and   Lemma \ref{lem: U infinity} implies$\|U[p_{m-1}]f\|_{\infty}\leq 2^{m-1}\|f\|_{\infty}$. Therefore, the result follows.
\end{proof}

\section{The Proofs of Theorems \getrefnumber{thm: convergence windowed} and \getrefnumber{thm: convergence nonwindowed}}\label{sec: scat conv proofs}

\begin{proof}[The Proof of Theorem \ref{thm: convergence windowed}]
\begin{align*}
    &\|S_{J,N}[\pathvar]\rho f - \rho S_J[\pathvar] f\|_2^2\\ =& \|A_{J,N}U_{J,N}[\pathvar]\rho f - \rho A_JU[\pathvar] f\|_2^2\\
    \leq& 2\|A_{J,N}U_{J,N}[\pathvar]\rho f - A_{J,N}\rho U[\pathvar] f\|_2^2 + 2\|A_{J,N}\rho U[\pathvar] f - \rho A_JU[\pathvar] f\|_2^2\\
    \leq& 2\|A_{J,N}\|_2\|U_{J,N}[\pathvar]\rho f - \rho U[\pathvar] f\|_2^2 + 2\|A_{J,N}\rho U[\pathvar] f - \rho A_JU[\pathvar] f\|_2^2\\
    \leq&2\|U_{J,N}[\pathvar]\rho f - \rho U[\pathvar] f\|_2^2 + 2\|A_{J,N}\rho U[\pathvar] f - \rho A_JU[\pathvar] f\|_2^2,
    \end{align*}
    where the last inequality uses Lemma  \ref{lem: nonexpansive discrete}.  To bound 
    $\|U_{J,N}[j]\rho f - \rho U[j] f\|_2^2$, we may apply Theorem \ref{thm: discretize U}.
    To bound the second term, we apply Corollary \ref{cor: heat kernel discretization} with $t=2^J$  to obtain
\begin{align*}
    &\|A_{J,N}\rho U[\pathvar] f - \rho A_JU[\pathvar]\|^2_2\\ \leq&2^{2J}\left(\left(\mathcal{O}\left(\frac{\log N}{N^{\frac{4}{d+6}}}\right)+\mathcal{O}(e^{-2^{J+1}\mu_{\kappa+1}})\right)\|f\|^2_{\mathbf{L}^2(\mathcal{X})}+ \mathcal{O}\left(\sqrt{\frac{\log{N}}{N}}\right)\|f\|^2_\infty\right). \end{align*}
Iteratively applying Proposition \ref{prop: frame} implies  that $\|U[p_{m-1}]f\|_{\mathbf{L^2(\mathcal{X})}}\leq \|f\|_{\mathbf{L^2(\mathcal{X})}}$ and   Lemma \ref{lem: U infinity} implies$\|U[p_{m-1}]f\|_{\infty}\leq 2^{m-1}\|f\|_{\infty}$. Therefore, the result follows.
\end{proof}

\begin{proof}[The Proof of Theorem \ref{thm: convergence nonwindowed}]Let $\alpha_0$ be the scalar from Theorem \ref{thm: 5.4 of Chen and Wu} with $k=0$. By Remark \ref{rem: independent of eigenbasis}, and the definition of the non-windowed scattering coefficients, we may assume without loss of generality that $\alpha_0$ is non-negative (since $-\varphi_0$ is also an eigenfunction). Thus,
recalling that $\mathbf{v}_0=\rho \varphi_0$, we see that by the definition of the non-windowed scattering coefficients, the triangle inequality, and the Cauchy-Schwarz inequality we have 
\begin{align}
    &|\overline{S}_{N}[\pathvar]\rho f -  \overline{S}[\pathvar] f|\\\leq&|\langle U_N[p]\rho f,\mathbf{u}_0\rangle_2-\langle U[p]f,\varphi_0\rangle_{\mathbf{L}^2(\mathcal{X})}|\nonumber\\
    \leq&|\langle U_N[p]\rho f,\mathbf{u}_0\rangle_2-\langle \rho U[p]f,\mathbf{v}_0\rangle_2|+|\langle \rho U[p]f,\mathbf{v}_0\rangle_2-\langle U[p]f,\varphi_0\rangle_{\mathbf{L}^2(\mathcal{X})}|\nonumber\\
    =&|\langle U_N[p]\rho f,\mathbf{u}_0\rangle_2-\langle \frac{1}{\alpha_0}\rho U[p]f,\alpha_0\mathbf{v}_0\rangle_2|+|\langle \rho U[p]f,\mathbf{v}_0\rangle_2-\langle U[p]f,\varphi_0\rangle_{\mathbf{L}^2(\mathcal{X})}|\nonumber\\
    \leq&|\langle U_N[p]\rho f,\mathbf{u}_0-\alpha_0\mathbf{v}_0 \rangle_2|+|\langle U_N[p]\rho f - \frac{1}{\alpha_0}\rho U[p]f,\alpha_0\mathbf{v}_0\rangle_2|+|\langle \rho U[p]f,\mathbf{v}_0\rangle_2-\langle U[p]f,\varphi_0\rangle_{\mathbf{L}^2(\mathcal{X})}|\nonumber\\
    \leq& \|U_N\rho f\|_2\|\mathbf{u}_0-\alpha_0\mathbf{v}_0\|_2+ \|U_N[p]\rho f - \frac{1}{\alpha_0}\rho U[p]f\|_2\|\alpha_0\mathbf{v}_0\|_2 + |\langle \rho U[p]f,\rho\varphi_0\rangle_2-\langle U[p]f,\varphi_0\rangle_{\mathbf{L}^2(\mathcal{X})}|.\label{eqn: bigalign no window}
\end{align}
 Lemmas \ref{lem: apply Hoeffding} and  \ref{lem: nonexpansive discrete} together with the inequality $\sqrt{a^2+b^2}\leq |a|+|b|$ imply
$$\|U_N\rho f\|_2\leq \|\rho f\|_2\leq \|f\|_{\mathbf{L}^2(\mathcal{X})} + \left(\frac{18\log N}{N}\right)^{1/4} \|f\|_\infty $$
with probability at least $1-\mathcal{O}\left(\frac{1}{N^9}\right)$ and Theorem \ref{thm: 5.4 of Chen and Wu} implies that 
$$ \|\mathbf{u}_0-\alpha_0\mathbf{v}_0\|_2 =\mathcal{O}\left(N^{-\frac{2}{d+6}}\sqrt{\log N}\right), $$
again with probability at least $1-\mathcal{O}\left(\frac{1}{N^9}\right)$. Therefore,
\begin{align}\label{eqn:  nowindowT1}
    \|U_N\rho f\|_2\|\mathbf{u}_0-\alpha_0\mathbf{v}_0\|_2\leq \mathcal{O}\left(N^{-\frac{2}{d+6}}\sqrt{\log N}\right)\|f\|_{\mathbf{L}^2(\mathcal{X})} + \mathcal{O}\left(N^{-\frac{2}{d+6}-\frac{1}{4}}(\log N)^{3/4}\right)\|f\|_\infty.
\end{align}
Theorem \ref{thm: 5.4 of Chen and Wu} shows that $|\alpha_0|=1+o(1)$, and \eqref{eqn: normalized} implies that $\|\varphi_0\|_{\mathbf{L}^2(\mathcal{X})}=\|\varphi_0\|_\infty=1$. Therefore, Lemma \ref{lem: apply Hoeffding} implies
\begin{equation}
    \|\alpha_0 \mathbf{v}_0\|_2\leq (1+o(1)) \|\rho \varphi_0\|_2\leq (1+o(1))\left(  \|\varphi_0\|_{\mathbf{L}^2(\mathcal{X})}^2 + \sqrt{\frac{\log N}{N}} \|\varphi_0\|^2_\infty\right)=\mathcal{O}(1).
\label{eqn: NOTWORKING}\end{equation}
 Proposition \ref{prop: frame} and a simple induction argument implies $\|U[p]f\|_{\mathbf{L}^2(\mathcal{X})}\leq \|f\|_{\mathbf{L}^2(\mathcal{X})}$, and Remark \ref{rem: alpha rate} implies $$\left|\frac{1}{\alpha_k}-1\right|\leq  \mathcal{O}\left(\sqrt{\frac{\log N}{N}}\right)+\mathcal{O}\left(\frac{\log(N)}{N^{4/(d+6)}}\right).$$ Therefore, by Theorem \ref{thm: discretize U}, Lemma \ref{lem: apply Hoeffding}, and Lemma \ref{lem: U infinity}, we have 
\begin{align}\|U_N[p]\rho f - \frac{1}{\alpha_0}\rho U[p]f\|_2&\leq\|U_N[p]\rho f - \rho U[p]f\|_2+ \bigg|\frac{1}{\alpha_0}-1\bigg|\left\|\rho U[p]f\right\|_2\nonumber\\
%&=\|U_N[p]\rho f - \rho U[p]f\|_2+ o(1)\|\rho U[p]f\|_2\nonumber\\
&\leq 2^{J}\left [\left(\mathcal{O}\left(\frac{\sqrt{\log N}}{N^{\frac{2}{d+6}}}\right)+\mathcal{O}(e^{-\mu_{\kappa+1}/2})\right)\|f\|_{\mathbf{L}^2(\mathcal{X})}+ \mathcal{O}\left(\left(\frac{\log{N}}{N}\right)^{1/4}\right)\|f\|_\infty\right]\nonumber\\
&\quad+\left(\mathcal{O}\left(\sqrt{\frac{\log N}{N}}\right)+\mathcal{O}\left(\frac{\log(N)}{N^{4/(d+6)}}\right)\right)\|f\|_{\mathbf{L}^2(\mathcal{X})}\nonumber\\
&\quad+ \left(\mathcal{O}\left(\left(\frac{\log N}{N}\right)^{3/4}\right)+\mathcal{O}\left(\frac{\log^{5/4}(N)}{N^{4/(d+6)+1/4}}\right)\right)\|f\|_\infty\nonumber\\
%&\leq\|U_N[p]\rho f - \rho U[p]f\|_2+ o(1)\|\rho U[p]f\|_2\nonumber\\
&= \left(\mathcal{O}\left(\frac{\sqrt{\log N}}{N^{\frac{2}{d+6}}}\right)2^{J}+\mathcal{O}\left(\sqrt{\frac{\log N}{N}}\right)+\mathcal{O}(e^{-\mu_{\kappa+1}/2})2^{J}\right)\|f\|_{\mathbf{L}^2(\mathcal{X})}\nonumber\\&\quad+ \mathcal{O}\left(\left(\frac{\log{N}}{N}\right)^{1/4}\right)2^{J}\|f\|_\infty.\label{eqn: nowindow alpha term}
\end{align}
Lastly, we again apply Lemma \ref{lem: apply Hoeffding} and Lemma \ref{lem: U infinity} to see that 
\begin{align}
    |\langle \rho U[p]f,\rho\varphi_0\rangle_2-\langle U[p]f,\varphi_0\rangle_{\mathbf{L}^2(\mathcal{X})}| &= \mathcal{O}\left(\sqrt{\frac{\log N}{N}}\right) \|U[p]f\varphi_0\|_\infty\nonumber\\
    &= \mathcal{O}\left(\sqrt{\frac{\log N}{N}}\right) \|f\|_\infty\label{eqn: small term}
\end{align}
with probability at least $1-\mathcal{O}\left(\frac{1}{N^9}\right).$
Combining \eqref{eqn: bigalign no window} with \eqref{eqn:  nowindowT1},   \eqref{eqn: NOTWORKING}, \eqref{eqn: nowindow alpha term}, and \eqref{eqn: small term} yields
\begin{align*}
    &|\overline{S}_{N}[\pathvar]\rho f -  \overline{S}[\pathvar] f|\\
    \leq & \|U_N\rho f\|_2\|\mathbf{u}_0-\alpha_0\mathbf{v}_0\|_2+ \|U_N[p]\rho f - \frac{1}{\alpha_0}\rho U[p]f\|_2\|\alpha_0\mathbf{v}_0\|_2 + |\langle \rho U[p]f,\rho\varphi_0\rangle_2-\langle U[p]f,\varphi_0\rangle_{\mathbf{L}^2(\mathcal{X})}|
    \\
    \leq & \mathcal{O}\left(N^{-\frac{2}{d+6}}\sqrt{\log N}\right)\|f\|_{\mathbf{L}^2(\mathcal{X})} + \mathcal{O}\left(N^{-\frac{2}{d+6}-\frac{1}{4}}(\log N)^{3/4}\right)\|f\|_\infty\\
&\quad+ \left(\mathcal{O}\left(\frac{\sqrt{\log N}}{N^{\frac{2}{d+6}}}\right)2^{J}+\mathcal{O}\left(\sqrt{\frac{\log N}{N}}\right)+\mathcal{O}(e^{-\mu_{\kappa+1}/2})2^{J}\right)\|f\|_{\mathbf{L}^2(\mathcal{X})}\nonumber\\&\quad+ \mathcal{O}\left(\left(\frac{\log{N}}{N}\right)^{1/4}\right)2^{J}\|f\|_\infty+ \mathcal{O}\left(\sqrt{\frac{\log N}{N}}\right) \|f\|_\infty\\\leq& 2^{J}\left [\left(\mathcal{O}\left(\frac{\sqrt{\log N}}{N^{\frac{2}{d+6}}}\right)+\mathcal{O}\left(e^{-\mu_{\kappa+1}/2}\right)\right)\|f\|_{\mathbf{L}^2(\mathcal{X})}+ \mathcal{O}\left(\left(\frac{\log N}{N}\right)^{1/4}\right)\|f\|_\infty\right].
\end{align*}
\end{proof}

\section{ Details on the Baseline Method}\label{sec: kmeans details}
For both biomedical datasets, in our baseline classification method, we  first performed  $k$-means clustering on all cells from all patients (modeled as points in either $\mathbb{R}^{30}$ or $\mathbb{R}^{14}$). The value of $k$ was based on expected subsets of immune cells: for the melanoma data we set $k=3$ based on expected subsets of CD4+ T helper cells, CD8+ killer T cells, and FOXP3+ T regulatory cells, and in COVID data we again set $k=3$ based on expected subsets of CD14+CD16++ non-classical monocytes, CD14++CD16 intermediate monocytes, and CD14++CD16- classical monocytes. Then, for each patient, we identified the proportion of cells corresponding to that patient lying within each cluster. We then used these features as input to a decision tree classifier.

\section{Training Details for Section \ref{sec: results digraph}}\label{sec: hyperparams} The results for baseline methods presented in Table \ref{tab: dsbm} are taken directly from \cite{zhang2021magnet}. Therefore, for a fair comparison, we use the same validation procedure when training our method as was used in \cite{zhang2021magnet}. For each of the three meta-graphs, we independently, randomly generated 5 realizations of the DSBM. %and used one to tune hyperparameters, then used those parameters on the remaining 4 datasets.
For each of these realizations, we randomly generated 10 training/test/validation splits. To tune our hyperparameters, $J$, $q$, $c$ and $\gamma$ (the latter two of which are hyperparameters of the SVM), we picked a single realization of each model and performed a grid search, choosing the parameters with the best average validation accuracy over the $10$ splits. We then used these hyperparameters for all five realizations of each model (following the standard procedure of training on the training set and testing on the test set, holding out the validation set). The results reported in Table \ref{tab: dsbm} are the test accuracies averaged over both the 5 realizations of each model and the 10 training/test/validation splits (i.e., over all 50 of the test sets).  In our search, we selected $J$ from a pool of $\{2,3,\ldots,12\}$, magnetic Laplacian charge parameter $q$ from a pool of $\{0,.05,.10,.15,.20,.25\}$, and SVM parameters from pools of $c\in\{25,100,250,500,1000\}$ and $\gamma \in \{10^{-5},10^{-4},10^{-3},10^{-2},10^{-1}\}$.  

\bibliographystyle{plain}
\bibliography{main}
\end{document}